\documentclass{article}


\usepackage[preprint,nonatbib]{nips_2018}




\usepackage[utf8]{inputenc} 
\usepackage[T1]{fontenc}    
\usepackage{hyperref}       
\usepackage{url}            
\usepackage{booktabs}       
\usepackage{amsfonts}       
\usepackage{nicefrac}       
\usepackage{microtype}      

\usepackage{amsmath}
\usepackage{url}
\usepackage{caption}
\usepackage{subcaption}
\usepackage{float}
\usepackage{graphicx}
\usepackage{mathabx}
\usepackage{longtable}
\usepackage{cases}

\usepackage{multirow}
\usepackage[utf8]{inputenc}
\usepackage[english]{babel}
\usepackage{tikz}

\usepackage{amsmath,amsfonts,amssymb,amsthm}

\newtheorem{theorem}{Theorem}

\newtheorem{lemma}{\hspace{0.0cm}Lemma}

\newtheorem{assum}{Assumption}

\usepackage[shortlabels]{enumitem}

\DeclareMathOperator*{\argmin}{arg\,min}
 
\usepackage{todonotes}

\usepackage{xcolor,colortbl}


\usepackage{forloop}
\newcounter{loopcntr}

\definecolor{orange}{HTML}{FF7F00}

\title{Robust Implicit Backpropagation}

%

\author{
   Francois Fagan \\
   Columbia University \\
   New York, NY, 10027 \\
   \texttt{ff2316@columbia.edu} \\
   \And
   Garud Iyengar \\
   Columbia University \\
   New York, NY, 10027 \\
   \texttt{garud@ieor.columbia.edu} \\
}

\begin{document}

\maketitle

\begin{abstract}
Arguably the biggest challenge in applying neural networks is tuning the hyperparameters, in particular the learning rate. The sensitivity to the learning rate is due to the reliance on backpropagation to train the network. In this paper we present the first application of Implicit Stochastic Gradient Descent (ISGD) to train neural networks, a method known in convex optimization to be \emph{unconditionally stable} and \emph{robust to the learning rate}. Our key contribution is a novel layer-wise approximation of ISGD which makes its updates tractable for neural networks. Experiments show that our method is more robust to high learning rates and generally outperforms standard backpropagation on a variety of tasks. 
\end{abstract}

\section{Introduction}
Despite decades of research, most neural networks are still optimized using minor variations on the backpropagation method proposed by Rumelhart, Hinton and Williams in 1986~\cite{rumelhart1986learning}. Since backpropagation is a stochastic first order method, its run time per iteration is independent of the number of training datapoints. It is this key property that makes it able to ingest the vast quantities of data required to train neural networks on complex tasks like speech and image recognition. 

A serious limitation of backpropagation being a first order method is its inability to use higher order information. This leads to multiple problems: the
need to visit similar datapoints multiple times in
order to converge to a good solution, instability
due to ``exploding'' gradients \cite[Sec. 3]{pascanu2013difficulty},
and high sensitivity to the learning rate
\cite[Sec. 11.4.1]{Goodfellow-et-al-2016}. 
A number of different approaches have been suggested to deal with these problems.
Adaptive learning rate methods, like Adam~\cite{kingma2014adam} and
Adagrad~\cite{duchi2011adaptive}, estimate appropriate per-parameter
learning rates; and momentum accelerates backpropagation in a common
direction of descent~\cite{zhang2017yellowfin}. Gradient clipping is a
heuristic which ``clips'' the gradient magnitude at a pre-specified
threshold and has been shown to help deal with exploding
gradients~\cite[Sec. 3]{pascanu2013difficulty}. Although these approaches
partially address the problems of backpropagation, neural network training
remains unstable and highly sensitive to the learning
rate \cite[Sec. 11]{Goodfellow-et-al-2016}.

The key research question here is how to add higher order information to
stabilize backpropagation while keeping the per iteration run time independent of the number of
datapoints.
A technique that has recently emerged that addresses this same question in the context of convex optimization is Implicit Stochastic Gradient Descent (ISGD).
ISGD is known  in convex optimization
to be \emph{robust to the learning rate} and \emph{unconditionally stable} for
convex optimization problems \cite[Sec. 5]{ryu2014stochastic}\cite[Sec. 3.1]{toulis2015scalable}. A natural question is whether ISGD can be used to improve the stability of neural network optimization.

In this paper, we show how ISGD can be applied to neural network training. To
the best of our knowledge this is the first time ISGD has been applied to
this problem.\footnote{\cite{toulisstable} recently remarked that ISGD hasn't yet been applied to neural networks and is an open research~question.} The main challenge in applying ISGD is solving its implicit update
equations. This step is difficult even for most
convex optimization problems. We leverage the special structure of neural
networks by constructing  a novel layer-wise
approximation for the ISGD updates. The resulting algorithm,
\emph{Implicit Backpropagation} (IB), is a good trade-off: it has almost the same run time as
the standard ``Explicit'' Backpropagation (EB), and yet enjoys many of the
desirable features of exact ISGD. IB is compatible with many activation
functions such as the relu, arctan, hardtanh and smoothstep; however, in its
present form, it cannot be
applied to convolutional layers. It is possible to use IB for some 
layers 
and EB for the other layers; thus, IB is partially applicable to virtually all neural network
architectures.  

Our numerical experiments demonstrate that IB is stable for much higher
learning rates as compared 
to EB on classification, autoencoder and music prediction tasks. In all of
these examples the learning rate at which IB begins to diverge is
20\%-200\% higher than for EB. We note that for small-scale classification
tasks EB and IB have similar performance.
IB performs particularly well for
RNNs, where exploding gradients are most troublesome. 
We also investigate IB's compatibility with clipping. We find that IB outperforms EB with clipping on RNNs, where clipping is most commonly used, and that clipping benefits both IB and EB for classification and autoencoding tasks.
Overall, IB is clearly beneficial for RNN training and shows promise for
classification and autoencoder feedforward neural networks. We believe
that more refined implementations of ISGD to neural networks than IB are likely to lead to even better results --- a topic for future research. 

The rest of this paper is structured as follows. Section \ref{ib_sec:ISGD_related} reviews the literature on ISGD and related methods. Section \ref{ib_sec:IB} develops IB as approximate ISGD, with Section \ref{ib_sec:ISGD_updates} deriving IB updates for multiple activation functions. The empirical performance of IB is investigated in Section \ref{ib_sec:experiments} and we conclude with mentions of further work in Section \ref{ib_sec:conclusion}.

\section{ISGD and related methods}\label{ib_sec:ISGD_related}
\subsection{ISGD method}
The standard objective in most machine learning models, including neural
networks, is the ridge-regularized loss
\vspace{-0.2cm}
\begin{equation*}
\ell(\theta) = \frac{1}{N} \sum_{i=1}^N \ell_i(\theta) + \frac{\mu}{2}\|\theta\|_2^2,
\end{equation*}
where $\ell_i(\theta) = \ell_\theta(x_i,y_i)$ is the loss associated with $i^{th}$ datapoint $(x_i,y_i)$ and $\theta$ comprises the weight and bias parameters in the neural network.

 The method ISGD uses to minimize $\ell(\theta)$ is similar to that of
 standard ``Explicit'' SGD (ESGD).   
In each iteration of ESGD, we first sample a random datapoint $i$ and then
update the parameters as $\theta^{(t+1)} = \theta^{(t)} - \eta_t
(\nabla_\theta \ell_i(\theta^{(t)}) + \mu \theta^{(t)})$, where $\eta_t$ is the learning rate
at time~$t$. ISGD also samples a random datapoint $i$ but employs the update $\theta^{(t+1)} = \theta^{(t)} - \eta_t
(\nabla_\theta \ell_i(\theta^{(t+1)}) + \mu \theta^{(t+1)})$, or equivalently,
\begin{equation}\label{ib_eq:Implicit_SGD_formula}
\theta^{(t+1)} = \argmin_\theta \left\{2\eta_t (\ell_i(\theta) +
  \frac{\mu}{2} \|\theta\|_2^2)+ \|\theta - 
  \theta^{(t)}\|_2^2\right\}. 
\end{equation} 

%


The main motivation of ISGD over ESGD is its \emph{robustness to learning rates}, \emph{numerical stability} and \emph{transient convergence behavior}~\cite{bertsekas2011incremental,patrascu2017nonasymptotic,ryu2014stochastic}.
The increased robustness of ISGD over ESGD can be illustrated with a simple quadratic loss $\ell(\theta) = \frac{1}{2}\|\theta\|_2^2$, as displayed in Figure~\ref{fig:SGD_vs_Implicit_SGD}. Here the ISGD step $\theta^{(t+1)} = \theta^{(t)}/(1+ \eta_t)$ is stable for any learning rate whereas the ESGD step $\theta^{(t+1)} = \theta^{(t)}(1 - \eta_t)$ diverges when $\eta_t>2$.

Since ISGD becomes equivalent to ESGD when the learning rate is small, there is no difference in their asymptotic convergence rate for decreasing learning rates. However, it is often the case that in the initial iterations, when the learning rate is still large, ISGD outperforms ESGD.

\begin{figure}[t!]
\centering
\begin{minipage}{.49\textwidth}
  \centering
  \includegraphics[width=.99\linewidth]{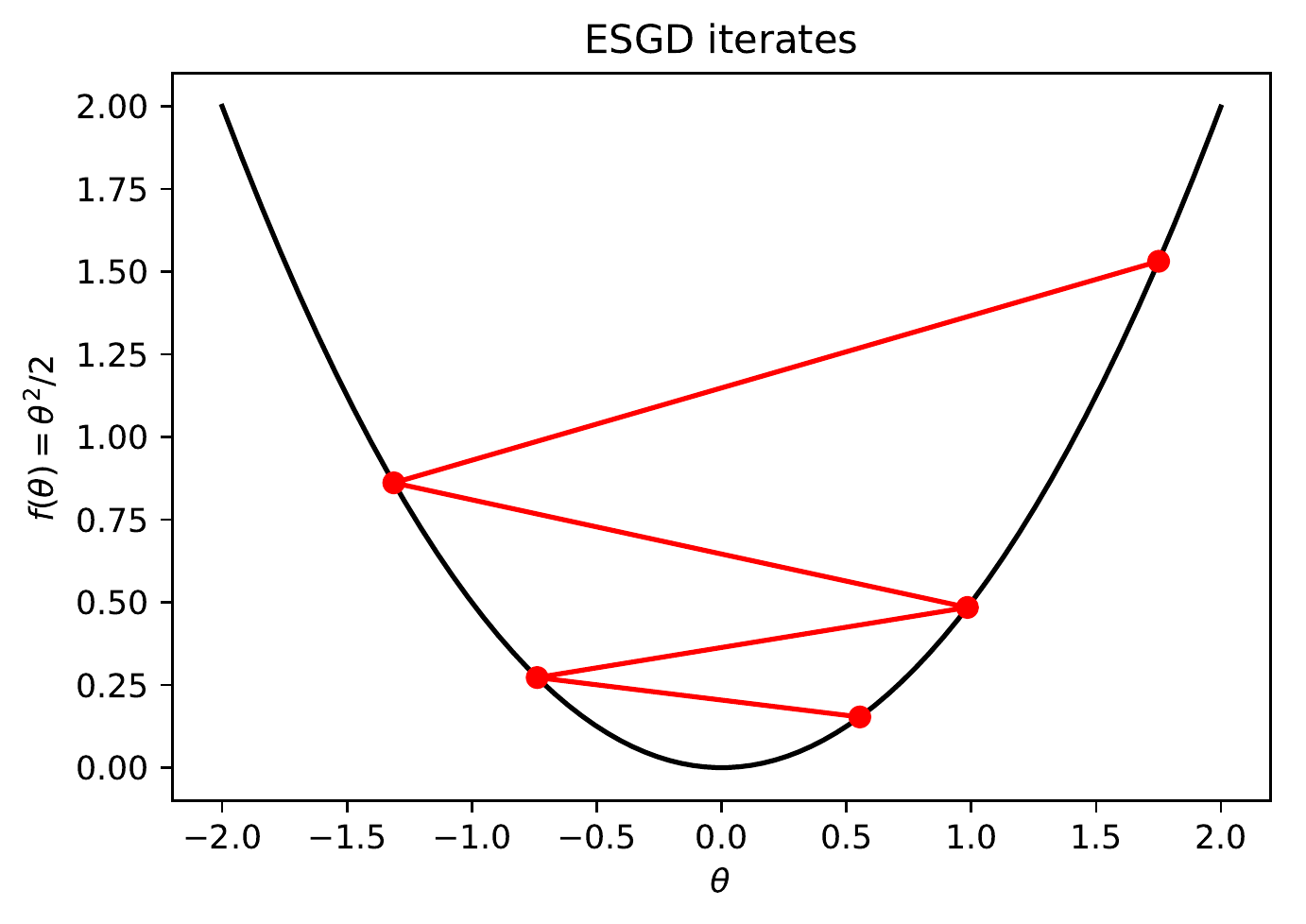}
\end{minipage}%
\hfill
\begin{minipage}{.49\textwidth}
  \centering
  \includegraphics[width=.99\linewidth]{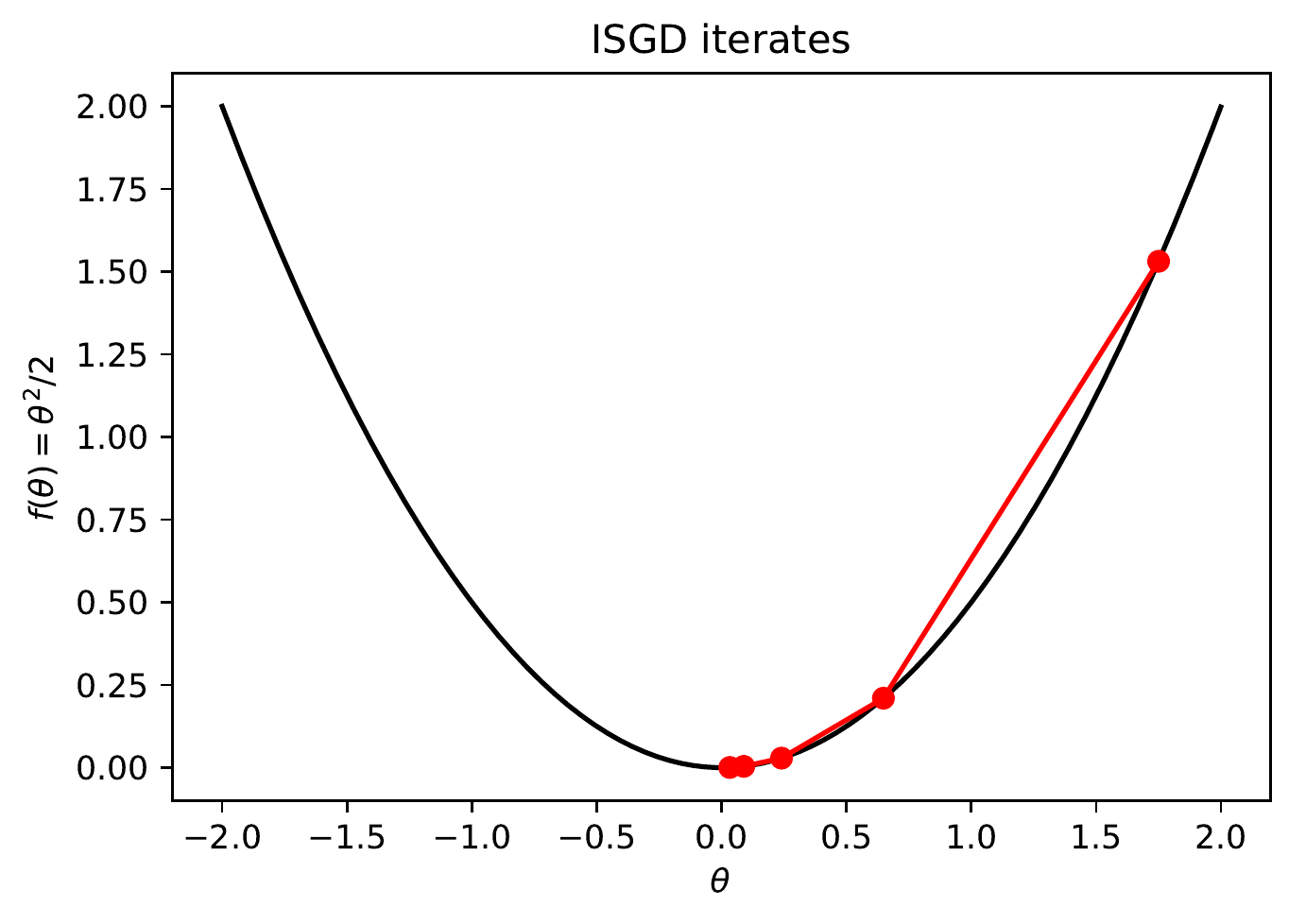}
\end{minipage}
\caption{Illustration of the difference between ESGD and ISGD in optimizing ${f(\theta)=\theta^2/2}$. The learning rate is $\eta=1.75$ in both cases.}
\label{fig:SGD_vs_Implicit_SGD}
\end{figure}

The main drawback of ISGD is that the implicit update 
(\ref{ib_eq:Implicit_SGD_formula}) can be expensive to compute, whereas the
update for ESGD is usually trivial. If this update is expensive, then ESGD 
may converge faster than ISGD in terms of wall clock time, even if ISGD 
converges faster per epoch. Thus, in order for ISGD to be
effective, one needs to be able to solve the
update~\eqref{ib_eq:Implicit_SGD_formula} efficiently. 
Indeed, 
the focus of this paper is to develop a methodology for efficiently
approximating the ISGD update for neural networks.

\subsection{Related methods}
ISGD has been successfully applied to
several machine
learning tasks. Cheng et al. \cite{cheng2007implicit} applied ISGD to learning online kernels and He \cite{he2014stochastic} to SVMs, while Kulis and Bartlett  \cite{kulis2010implicit} consider a range of problems including online metric learning. ISGD has also been used to improve the stability of temporal difference algorithms in reinforcement learning \cite{iwaki2017implicit,tamar2014implicit}.  
For more recent advances in ISGD see \cite{bertsekas2015incremental,lin2017catalyst,paquette2017catalyst,toulis2016towards, wang2017memory}.

Although ISGD has never been applied to neural networks, closely related
methods have been investigated. ISGD may be viewed as a trust-region
method, where $1/2\eta$ 
is the optimal dual variable in the Langrangian, 
\begin{equation*}
\argmin_\theta \left\{\ell_i(\theta) + \frac{\mu}{2}\|\theta\|_2^2: ~ \|\theta - \theta^{(t)}\|_2^2 \leq
  r\right\} = \argmin_\theta \left\{\ell_i(\theta) + \frac{\mu}{2}\|\theta\|_2^2 +
  \frac{1}{2\eta(r)}\|\theta - \theta^{(t)}\|_2^2\right\}. 
\end{equation*}
Trust-region methods for optimizing neural networks have been effectively
used to stabilize policy optimization in reinforcement learning
\cite{schulman2015trust, wu2017scalable}. Clipping, which truncates the
gradient at a pre-specified threshold, may also be viewed as a
computationally efficient approximate trust-region
method~\cite{pascanu2013difficulty}. It was explicitly designed to address
the exploding gradient problem and achieves this by truncating
the exploded step. In the experiments section we will investigate the difference in the effect of IB and clipping. 

An example of a non-trust region method for optimizing neural networks 
using higher order information is Hessian-Free optimization
\cite{martens2010deep,martens2011learning}. These methods directly
estimate second order information of a neural network. They have been
shown to make training more stable and require fewer epochs for
convergence. However they come at a much higher per iteration cost than
first order methods, which offsets this benefit
\cite{bengio2013advances}.

\section{Implicit Backpropagation}\label{ib_sec:IB}
In this section we develop Implicit Backpropagation (IB) as an approximate
ISGD implementation for neural networks. It retains many of the desirable
characteristics of ISGD, while being virtually as fast as the standard
``Explicit'' Backpropagation (EB). 

Consider a $d$-layered neural network 
$f_\theta(x) = f_{\theta_d}^{(d)} \circ f_{\theta_{d-1}}^{(d-1)} \circ
... \circ f_{\theta_1}^{(1)}(x)$  
where $f_{\theta_k}^{(k)}:\mathbb{R}^{D_k} \to  \mathbb{R}^{D_{k+1}}$
represents the $k^{th}$ layer with parameters
$\theta_k\in\mathbb{R}^{P_k}$, ~ $\theta=(\theta_1,...,\theta_d)$,
$x\in\mathbb{R}^{D_1}$ is the input to the neural network, and $\circ$
denotes composition. Let the loss associated with a datapoint $(x,y)$ be
$\ell_\theta(x,y) =~ \ell(y, \cdot) \circ  f_\theta(x)$ for some $\ell(y,
\cdot):\mathbb{R}^{D_{d+1}} \to \mathbb{R}$. Later in this section, we
will want to extract the effect of the $k^{th}$ layer on the loss. To this
end, we can rewrite the loss as 
$\ell_\theta(x,y)  =~ \ell_{\theta_{d:k+1},y}^{(d:k+1)} \circ f_{\theta_{k}}^{(k)}\circ f_{\theta_{k-1:1}}^{(k-1:1)}(x)$
where $f_{\theta_{i:j}}^{(i:j)} = f_{\theta_i}^{(i)} \circ
f_{\theta_{i-1}}^{(i-1)} \circ ... f_{\theta_j}^{(j)}:
\mathbb{R}^{D_{i}}\!\to\!\mathbb{R}^{D_{j+1}}$ and
$\ell_{\theta_{d:j},y}^{(d:j)} = \ell(y, \cdot) \circ
f_{\theta_{d:j}}^{(d:j)}: \mathbb{R}^{D_{j}}\!\to\!\mathbb{R}$. 

The complexity of computing the ISGD update depends on the functional form of the loss
$\ell_\theta(x,y)$. Although it is possible in some cases to compute the
ISGD update explicitly, this is not the case for neural networks. Even
computing the solution numerically is very expensive. Hence,
it is necessary to approximate the ISGD update in order for it to be 
computationally tractable. We introduce the following two approximations in IB:
\begin{enumerate}[(a)]
\item  We update parameters layer by layer. 
When
  updating parameter $\theta_k$ associated with 
  layer $k$, all the parameters $\theta_{-k}= \{\theta_i:~ i\neq k\}$
  corresponding to the other layers are kept fixed. Under this approximation the loss
  when updating the $k^{th}$ layer is  
  \begin{equation}
    \ell_k(\theta_k; x,y,\theta^{(t)}_{-k}) :=
    \ell_{\theta_{d:k+1}^{(t)},y}^{(d:k+1)} \circ
    f_{\theta_{k}}^{(k)}\circ
    f_{\theta^{(t)}_{k-1:1}}^{(k-1:1)}(x).\label{ib_eq:exact_ISGD_layer_loss} 
  \end{equation}
\item Building on (a), 
we linearize the  higher layers $\ell_{\theta^{(t)}_{d:k+1},y}^{(d:k+1)}$, but keep the layer being updated $f_{\theta_{k}}^{(k)}$ as non-linear.
  The loss from  (\ref{ib_eq:exact_ISGD_layer_loss})
  reduces to\footnote{Note that the derivative $\nabla
    \ell_{\theta^{(t)}_{d:k+1},y}^{(d:k+1)}$ is taken with respect to its
    argument $f_{\theta^{(t)}_{k:1}}^{(k:1)}(x)\in\mathbb{R}^{D_{k+1}}$,
    not $\theta_{d:k+1}$.} 
  \begin{equation}
    \tilde{\ell}_k(\theta_k; x,y,\theta^{(t)}_{-k}) :=  \ell_{\theta^{(t)}}(x,y) +  \nabla \ell_{\theta^{(t)}_{d:k+1},y}^{(d:k+1)\top}
    ( f_{\theta_{k}}^{(k)}\circ f_{\theta^{(t)}_{k-1:1}}^{(k-1:1)}(x) - f_{\theta^{(t)}_{k:1}}^{(k:1)}(x)). \label{ib_eq:Taylor_series}
  \end{equation}
  This approximation can be validated via a Taylor series expansion where
  the error in (\ref{ib_eq:Taylor_series}) compared to
  (\ref{ib_eq:exact_ISGD_layer_loss}) is $O(\|\theta_k -
  \theta^{(t)}_k\|_2^2)$. 
\end{enumerate}
The IB approximation to the ISDG update is, thus, given by 
\begin{equation}\label{ib_eq:ISGD_NN_update_equation}
\theta_k^{(t+1)} = \argmin_{\theta_k} \left\{2\eta \left(\tilde{\ell}_k(\theta_k; x,y,\theta^{(t)}_{-k})  + \frac{\mu}{2}\|\theta_k\|_2^2 \right) + \|\theta_k - \theta_k^{(t)}\|_2^2\right\}.
\end{equation}
In Appendix \ref{app:convergence} we present a simple theorem, which
leverages the fact that IB converges to EB in the limit of small learning rates,
to show that IB converges to a stationary point of $\ell(\theta)$ for
appropriately decaying learning rates. 

In the next section we show that the IB update can be efficiently computed for a variety of activation functions. 
The IB approximations thus make ISGD practically feasible to implement.
However, the approximation is not without drawbacks.
The layer-by-layer update from (a) removes all higher order information
 along directions perpendicular to the parameter space of the layer being updated, and the
linearization in (b) loses information about the non-linearity in the
higher layers. The hope is that IB retains enough of the beneficial properties of
ISGD to have noticeable benefits over EB. In our experiments we show
that this is indeed the case.  

Our IB formulation is, to our knowledge, novel. The most similar update in the literature is for composite convex optimization with just two layers,
where the lower, not higher, layer is linearized~\cite{duchi2017stochastic}. 

\section{Implicit Backpropagation updates for various activation functions}\label{ib_sec:ISGD_updates}
Since neural networks are applied to extremely large datasets, it is
important that the IB updates can be computed efficiently.  In this
section we show that the IB update
(\ref{ib_eq:ISGD_NN_update_equation}) can be greatly simplified, resulting in
fast analytical updates for activation functions such as the relu and
arctan. For those activation functions that do not have an analytical IB
update, IB can easily be applied on a piecewise-cubic approximation of the
activation function. This makes IB practically applicable to virtually any
element-wise acting activation function. 

Although it is possible to apply IB to layers with weight sharing or
non-element-wise acting activation functions, the updates tend to be
complex and expensive to compute. For example, the IB update for a
convolutional layer with max-pooling and a relu activation function
involves solving a quadratic program with binary variables (see
Appendix~\ref{app:CNNs} for the derivation). Thus, we will only focus on
updates for activation functions that are applied element-wise and have no
shared weights. 

\subsection{Generic updates}\label{ib_sec:generic_update_statements}
Here we derive the IB updates for a generic layer $k$ with element-wise
acting activation function $\sigma$. Let the parameters in the
$k^{th}$ layer be $\theta_k = (W_k, B_k)\in \mathbb{R}^{D_{k+1} \times
  (1+D_k)}$ where $W_k \in \mathbb{R}^{D_{k+1} \times D_k}$ is the weight
matrix and $B_k \in \mathbb{R}^{D_{k+1}}$ is the bias. We'll use the
shorthand notation $z^{ki} = (f^{(k-1:1)}_{\theta^{(t)}_{k-1:1}}(x_i),~1)
\in \mathbb{R}^{1+D_{k}}$ for the input to the $k^{th}$ layer and $b^{ki}
= \nabla \ell_{\theta^{(t)}_{d:k+1},y}^{(d:k+1)} 
\in \mathbb{R}^{D_{k+1}}$ for the backpropagated gradient. The output of
the $k^{th}$ layer is thus $\sigma(\theta_k
z^{ki})$ where $\sigma$ is applied element-wise and $\theta_k z^{ki}$ is a
matrix-vector product. Using this notation the IB update from  (\ref{ib_eq:ISGD_NN_update_equation}) becomes:
\begin{equation}\label{ib_eq:ISGD_main_starting_point}
\theta^{(t+1)}_k = \argmin_{\theta_k}\left\{2\eta_t b^{ki\top} \sigma(\theta_k z^{ki}) +\eta_t\mu\|\theta_k\|_2^2 + \|\theta_k - \theta_k^{(t)}\|_2^2 \right\},
\end{equation}
where we have dropped the terms $\ell_{\theta^{(t)}}(x,y)$ and $b^{ki\top}
f_{\theta^{(t)}_{k:1}}^{(k:1)}(x)$ from (\ref{ib_eq:Taylor_series}) as they
are constant with respect to $\theta_k$. 
Now that we have written the IB update in more convenient notation, we can
begin to simplify it. Due to the fact that $\sigma$ is applied
element-wise, (\ref{ib_eq:ISGD_main_starting_point}) breaks up into $D_{k+1}$
separate optimization problems, one for each output node
$j\in\{1,...,D_{k+1}\}$: 
\begin{equation}\label{ib_eq:node_j_update}
\theta_{kj}^{(t+1)}  = \argmin_{\theta_{kj}}\left\{2\eta_t b_j^{ki}
  \sigma(\theta_{kj}^\top z^{ki}) +\eta_t\mu\|\theta_{kj}\|_2^2 +
  \|\theta_{kj} - \theta_{kj}^{(t)}\|_2^2 \right\}, 
\end{equation}
where $\theta_{kj} = (W_{kj},B_{kj}) \in\mathbb{R}^{1+D_k}$ are the parameters corresponding to the 
$j^{th}$ output node. Using simple calculus we can write the solution
to $\theta_{kj}^{(t+1)}$ as 
\begin{equation}\label{ib_eq:theta_k_solution}
\theta_{kj}^{(t+1)}  = \frac{\theta^{(t)}_{kj}}{1+\eta_t\mu} 
- \eta_t \alpha^{ki}_jz^{ki}
\end{equation}
where $\alpha^{ki}_j\in\mathbb{R}$ is the solution to the one-dimensional optimization problem
\begin{equation}\label{ib_eq:alpha_original}
\alpha^{ki}_j = \argmin_\alpha \left\{b^{ki}_j\cdot \sigma\left(\frac{\theta^{(t)\top}_{kj} z^{ki}}{1+\eta_t\mu} - \alpha\cdot \eta_t\|z^{ki}\|_2^2 \right) + \eta_t(1+\eta_t\mu) \|z^{ki}\|_2^2 \frac{\alpha^2}{2} \right\}.
\end{equation}
See Appendix~\ref{app:derivation_generic_update} for the derivation. 

To connect the IB update to EB, observe that if we do a first order Taylor expansion of (\ref{ib_eq:theta_k_solution}) and (\ref{ib_eq:alpha_original}) in $\eta_t$ we recover the EB update:
\begin{equation*}\label{ib_eq:EB_first_order_update}
\theta_{kj}^{(t+1)}  = \theta^{(t)}_{kj}(1-\eta_t\mu) 
- \eta_t \sigma'\left(\theta^{(t)\top}_{kj} z^{ki} \right) b^{ki}_j z^{ki} + O(\eta_t^2),
\end{equation*}
where $\sigma'$ denotes the derivative of $\sigma$. Thus we can think of IB as a higher order update than EB.

In summary, the original $D_{k+1}\times D_k$ dimensional IB update from (\ref{ib_eq:ISGD_main_starting_point}) has been reduced to $D_{k+1}$ separate one-dimensional optimization problems in the form of (\ref{ib_eq:alpha_original}). 
The difficulty of solving (\ref{ib_eq:alpha_original}) depends on the
activation function $\sigma$. Since  (\ref{ib_eq:alpha_original}) is a
one-dimensional problem, an optimal $\alpha$ can always be computed numerically using
the bisection method, although this may be slow. Fortunately there are
certain important activation functions for which $\alpha$ can be computed
analytically. In the subsections below we derive analytical updates for
$\alpha$ when $\sigma$ is the relu and arctan functions as well as a
general formula for piecewise-cubic functions. 

Before proceeding to these updates, we can observe directly from
(\ref{ib_eq:theta_k_solution}) and (\ref{ib_eq:alpha_original}) that IB will be
robust to high learning rates. Unlike EB, in which the step size
increases linearly with the learning rate, IB has a bounded step size even
for infinite learning rates. As the learning rate increases
(\ref{ib_eq:theta_k_solution}) becomes 
\begin{equation*}
\theta_{kj}^{(t+1)}  \xrightarrow[]{\eta_t \to \infty} \argmin_\beta \left\{b^{ki}_j\cdot \sigma\left(\beta\cdot \|z^{ki}\|_2^2 \right) + \mu \|z^{ki}\|_2^2 \frac{\beta^2}{2} \right\} z^{ki}
\end{equation*}
where $\beta = -\eta_t \alpha$. This update is finite as long as $\mu >0$
and $\sigma$ is asymptotically linear. 

\subsection{Relu update}\label{ib_sec:Relu_alpha}
Here we give the solution to $\alpha$ from (\ref{ib_eq:alpha_original}) for
the relu activation function, $\sigma(x) = \max\{x,0\}$. We will drop the
super- and sub-scripts from (\ref{ib_eq:alpha_original}) for notational convenience. When $sign(b)=+1$ there are three cases and when $sign(b)=+1$ there are two cases for the solution to $\alpha$. The updates are given in Table \ref{tb:relu}.

\begin{table}[!h!]
\centering
\caption{IB relu updates}
\setlength{\tabcolsep}{12pt} 
\label{tb:relu}
\begin{tabular}{ll}
\toprule
$Sign(b)=+1$ & $Sign(b)=-1$ \tabularnewline
\midrule
$\alpha = \begin{cases}
0 &\mbox{if }\theta^{\top} z\leq 0 \\
\frac{\theta^{\top} z}{1+\eta\mu} \frac{1}{\eta \|z\|_2^2} &\mbox{if }0< \theta^{\top} z\leq \eta \|z\|_2^2b \\
\frac{b}{1+\eta\mu} &\mbox{if } \theta^{\top} z > \eta \|z\|_2^2 b
\end{cases}$ & $\alpha = \begin{cases}
0 &\mbox{if } \theta^{\top} z\leq \frac{1}{2}\eta \|z\|_2^2b \\
\frac{b}{1+\eta\mu} &\mbox{if } \theta^{\top} z > \frac{1}{2}\eta \|z\|_2^2b
\end{cases}$
\tabularnewline
\bottomrule
\end{tabular}
\end{table}


\begin{figure}[b]
\centering
\begin{minipage}{.49\textwidth}
  \centering
  \includegraphics[width=.99\linewidth]{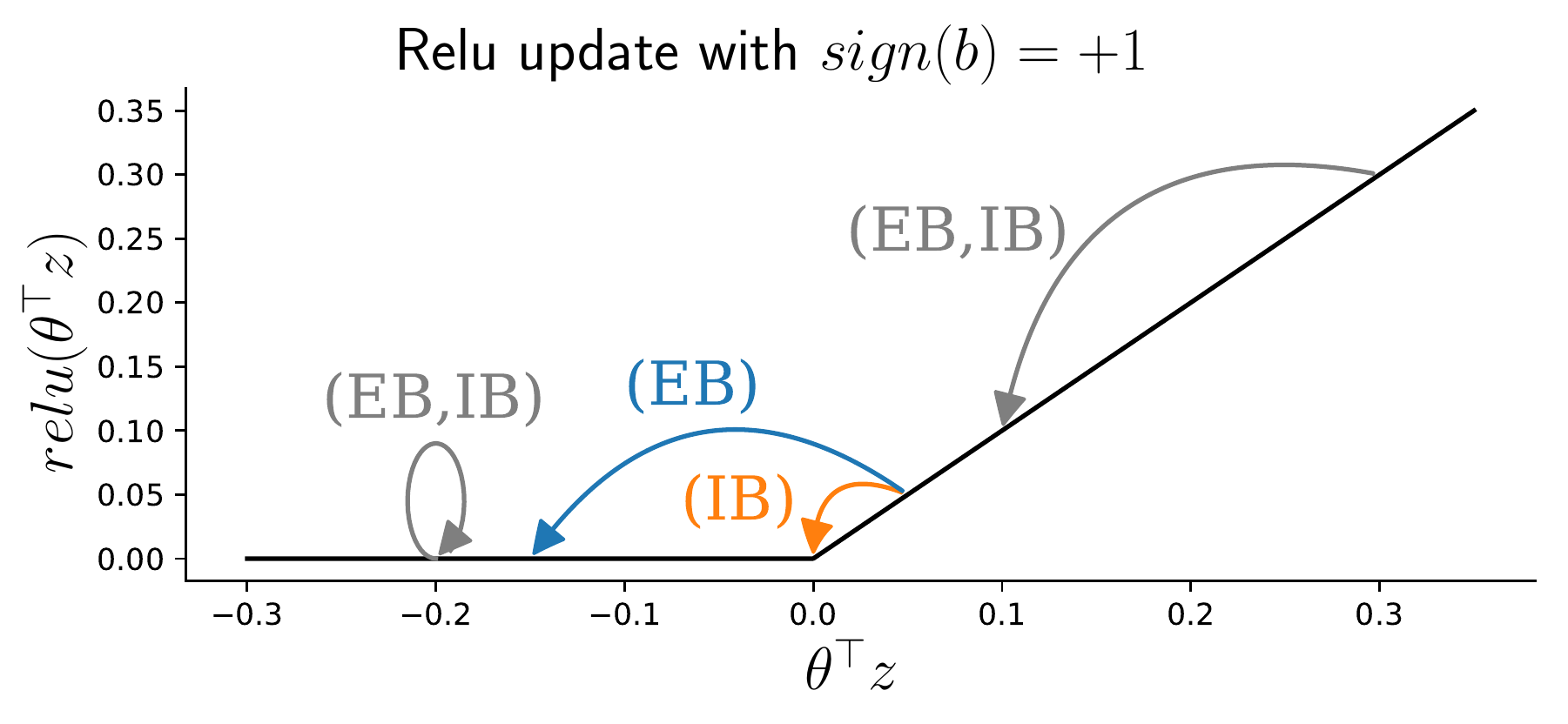}
\end{minipage}%
\hfill
\begin{minipage}{.49\textwidth}
  \centering
  \includegraphics[width=.99\linewidth]{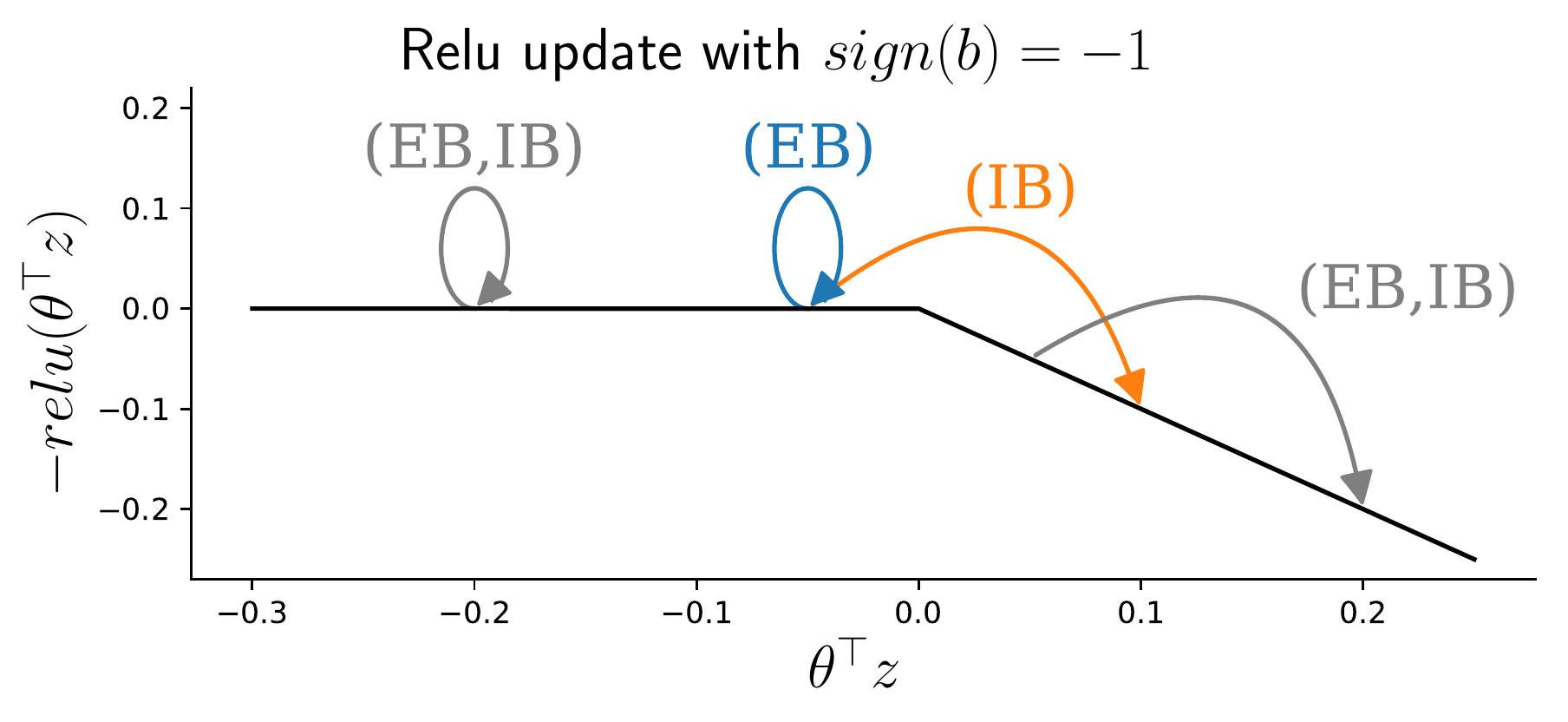}
\end{minipage}
\caption{Relu updates for EB and IB with $\mu=0$. Lower values are better.}
\label{fig:Relu_updates}
\end{figure}

The difference between the EB and IB updates is illustrated in Figure
\ref{fig:Relu_updates}. When $sign(b)=+1$ and $\theta^{\top} z$ is on the
slope but close to the hinge, the EB step overshoots the hinge point,
making a far larger step than is necessary to reduce the relu to 0. IB, on
the other hand, stops at the hinge. The IB update is better from two
perspectives. First, it is able to improve the loss on the given
datapoint just as much as EB, but without taking as large a step. Assuming
that the current value of $\theta$ is close to a minimizer of the average
loss of the other datapoints, an unnecessarily large step will likely take
$\theta$ away its (locally) optimum value. 
An example of where this property might be particularly important is for the
``cliff'' of ``exploding gradient'' problem in RNNs \cite{pascanu2013difficulty}.
And second, the IB step size is a
continuous function of $\theta$, unlike in EB where the step size has a
discontinuity at the origin. This should make the IB update more robust to
perturbations in the data. 
 
When $sign(b)=-1$ and $\theta^{\top} z$ is on the flat, EB isn't able to
descend as the relu has ``saturated'' (i.e. is flat). IB, on the other
hand, can look past the hinge and is still able to descend down the slope,
thereby decreasing the loss. IB thus partially solves the saturating gradient problem \cite{pascanu2013difficulty}.

\subsection{Arctan update}\label{ib_sec:arctan_alpha}
Although the IB update is not analytically available for all sigmoidal
activation functions, it is available when $\sigma$ is the arctan. 
For the arctan
the value of $\alpha$ becomes the root of a
cubic equation which can be solved for analytically. Since the arctan
function is non-convex there may be up to three real solutions for
$\alpha$. Under the axiom that smaller step sizes that achieve the same
decrease in the objective are better (as argued in Section
\ref{ib_sec:Relu_alpha}), we always choose the value of $\alpha$ closest
to zero. 

\subsection{Piecewise-cubic function  update}\label{ib_sec:piecewise}
Many activation functions are piecewise-cubic, including the
hardtanh and smoothstep. Furthermore, all common activation functions can be approximated arbitrarily well with a piecewise-cubic. Being able to do IB updates for piecewise-cubic functions thus extends its applicability to virtually all activation functions. 

Let $\sigma(x) = \sum_{m=1}^MI[B_m\leq x < B_{m+1}]\cdot \sigma_m(x)$ where $\sigma_m(x)$ is a cubic function and $B_m\in[-\infty,\infty]$ defines the bounds on each piece. 
The optimal value of $\alpha$ for each $m$ can be found by evaluating
$\sigma_m$ at its boundaries and stationary points (which may be found by
solving a quadratic equation). The value of $\alpha$ can then be solved for by
iterating over $m=1,...,M$ and taking the minimum over all the
pieces. Since there are $M$ pieces, the time to calculate $\alpha$ scales
as $O(M)$. 

\subsection{Relative run time difference of IB vs EB measured in flops}\label{ib_sec:runtime_flop_bounds}
A crucial aspect of any neural network algorithm is not only its
convergence rate per epoch, but also its run time. Since IB does
extra calculations, it is slower than EB. Here we show that the difference
in floating point operations (flops) between EB and IB is typically small,
on the order of 30\% or less. 

For any given layer, let the input dimension be denoted as $n$ and the output
dimension as $m$. The run time of both EB and IB are dominated by three
operations costing $nm$ flops: multiplying the weight matrix and input in
the forward propagation, multiplying the backpropagated gradient and input
for the weight matrix gradient, and multiplying the backpropagated
gradient and weight matrix for the backpropagated gradient in the lower
layer. IB has two extra calculations as compared to  EB: calculating
$\|z^{ki}\|_2^2$, costing $2n$ flops, and calculating $\alpha_j^{ki}$ a
total of $m$ times, once for each output node. Denoting the number of
flops to calculate each $\alpha_j^{ki}$ with activation function $\sigma$
as $c_\sigma$, the relative increase in run time of IB over EB is upper
bounded by $(2n + c_\sigma m)/(3nm)$. 

The relative run time increase of IB over EB depends on the values of $n,m$
and $c_\sigma$. When $\sigma$ is the relu, then $c_\sigma$ is small, no
more than $10$ flops; whereas when $\sigma$ is the arctan $c_\sigma$ is
larger, costing just less than $100$ flops.\footnote{The number of flops for
  arctan was counted using Cardano's method for optimizing the cubic
  equation.} Taking these flop values as upper bounds, if $n=m=100$ then
the relative run time increase is upper bounded by $4$\% for relu and $34$\%
for arctan. These bounds diminish as $n$ and $m$ grow. If $n=m=1000$, then the bounds become just $0.4$\% for relu and $3.4$\% for
arctan. Thus, IB's run time is virtually the same as EB's for large neural network tasks.

If $n$ and $m$ are small then the arctan IB update might be too slow relative to EB for the IB update to be worthwhile.
In this case simpler sigmoidal activation functions, such as the hardtanh or smoothstep, may be preferable for IB.
The hardtanh has been used before in neural networks, mainly in the context of binarized networks \cite{courbariaux2015binaryconnect}.
It has the form
\begin{equation*}
\sigma(x) = 
\begin{cases}
-1 &\mbox{ if }x<-1\\
x &\mbox{ if }-1\leq x \leq 1\\
1 &\mbox{ if }x>1
\end{cases}
\end{equation*}
for which $c_\sigma$ is no more than 15 flops (using the piecewise-cubic function update from Section~\ref{ib_sec:piecewise}).
The smoothstep is like the hardtanh but is both continuous and has continuous first derivatives,
\begin{equation*}
\sigma(x) = 
\begin{cases}
-1 &\mbox{ if }x<-1\\
\frac{3}{2}x - \frac{1}{2}x^3 &\mbox{ if }-1\leq x \leq 1\\
1 &\mbox{ if }x>1,
\end{cases}
\end{equation*}
with $c_\sigma$ being no more than 25 flops.
The relative increase in run time of IB over EB for the hardtanh and smoothstep is about the same as for the relu.
This makes the IB update with the hardtanh or smoothstep practical even if $n$ and $m$ are small.

\section{Experiments}\label{ib_sec:experiments}
This section details the results of three sets of experiments where the robustness to the learning rate of IB is compared to that of EB.\footnote{A more extensive description of the experimental setup and results are given in Appendices~\ref{app:experimental_setup} and~\ref{app:results}. }
Since IB is equivalent to EB when the learning rate is
small, we expect  little difference between the methods in the limit of small
learning rates. However, we expect that IB will be more stable and have lower loss than EB for larger learning~rates.

\paragraph{Classification, autoencoding and music prediction tasks.}  For the first set of experiments, we applied IB and EB to three different but common machine learning tasks. 
The first task was image classification on the MNIST dataset \cite{lecun1998mnist} with an architecture consisting of two convolutional layers, an arctan layer and a relu
layer. The second task also uses the MNIST dataset, but for an 8 layer relu
autoencoding architecture. The third task involves music prediction
on four music datasets, JSB Chorales, MuseData, Nottingham and Piano-midi.de \cite{boulanger2012modeling}, for which a simple RNN architecture is used with an arctan activation function.

For each dataset-architecture pair we investigated the performance of EB and IB over a range of learning rates where EB performs well (see Appendix \ref{ib_sec:learning_rates} for more details on how these learning rates were chosen). Since IB and EB have
similar asymptotic convergence rates, the difference between the methods will be most
evident in the initial epochs of training. In our experiments we 
only focus on the performance after the first epoch of training for the
MNIST datasets, and after the fifth epoch for the music
datasets.\footnote{The music datasets have fewer training examples and so
  more epochs are needed to see convergence.} Five random seeds are used
for each configuration in order to understand the variance of the performance
of the methods.\footnote{The same seeds are used for both EB and IB. Five seeds are used for all experiments, except MNIST-classification where twenty seeds are used.}

\begin{figure}[t]
\centering
\begin{minipage}{.33\textwidth}
  \centering
  \includegraphics[width=.95\linewidth]{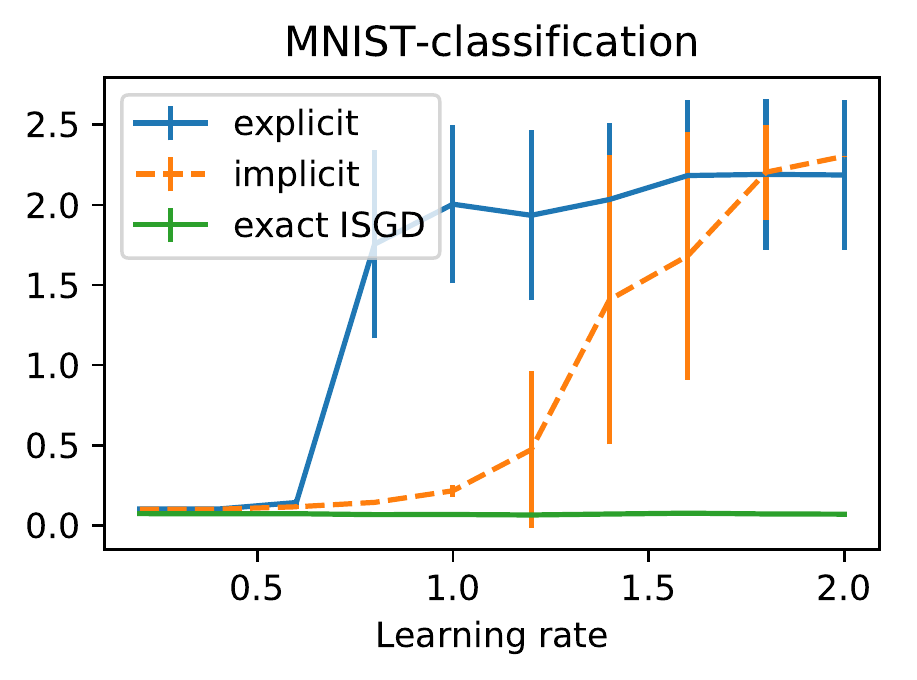}
\end{minipage}%
\hfill
\begin{minipage}{.33\textwidth}
  \centering
  \includegraphics[width=.99\linewidth]{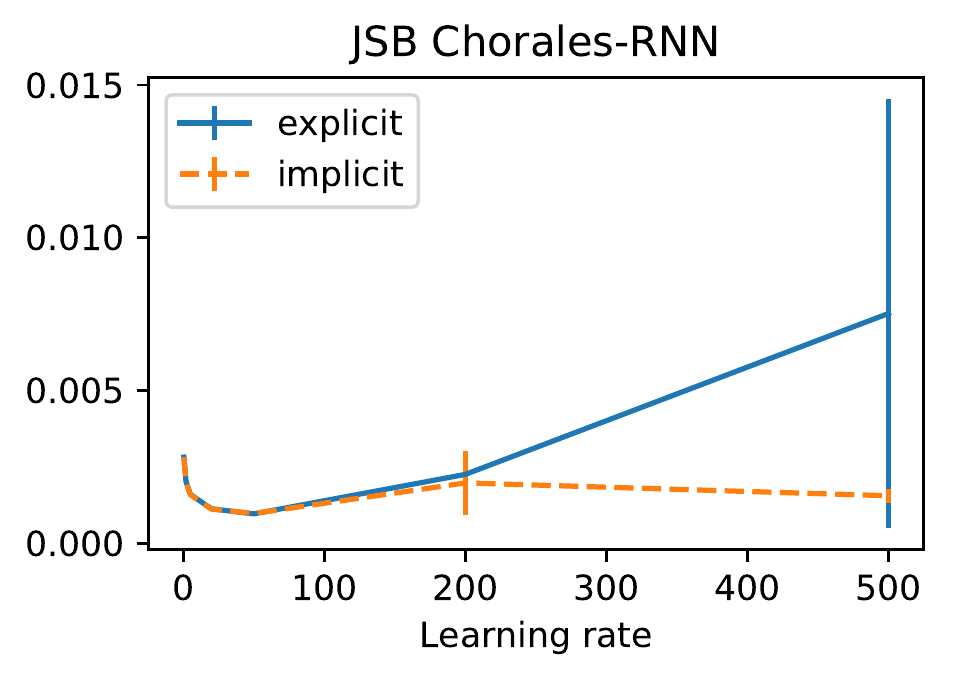}
\end{minipage}
\begin{minipage}{.33\textwidth}
  \centering
  \includegraphics[width=.99\linewidth]{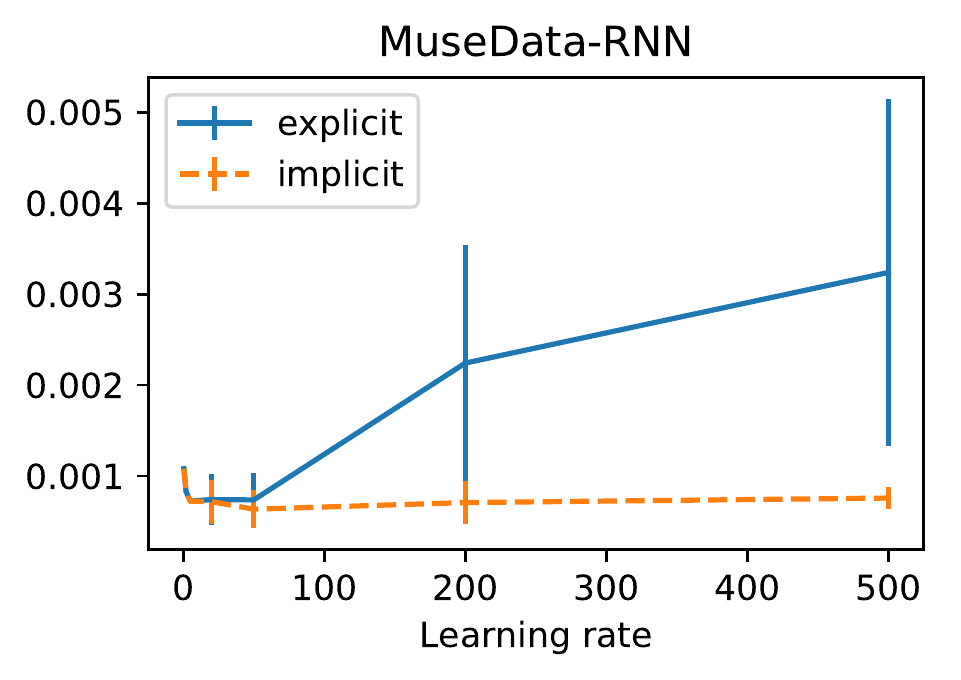}
\end{minipage}%
\hfill
\begin{minipage}{.33\textwidth}
  \centering
  \includegraphics[width=.9\linewidth]{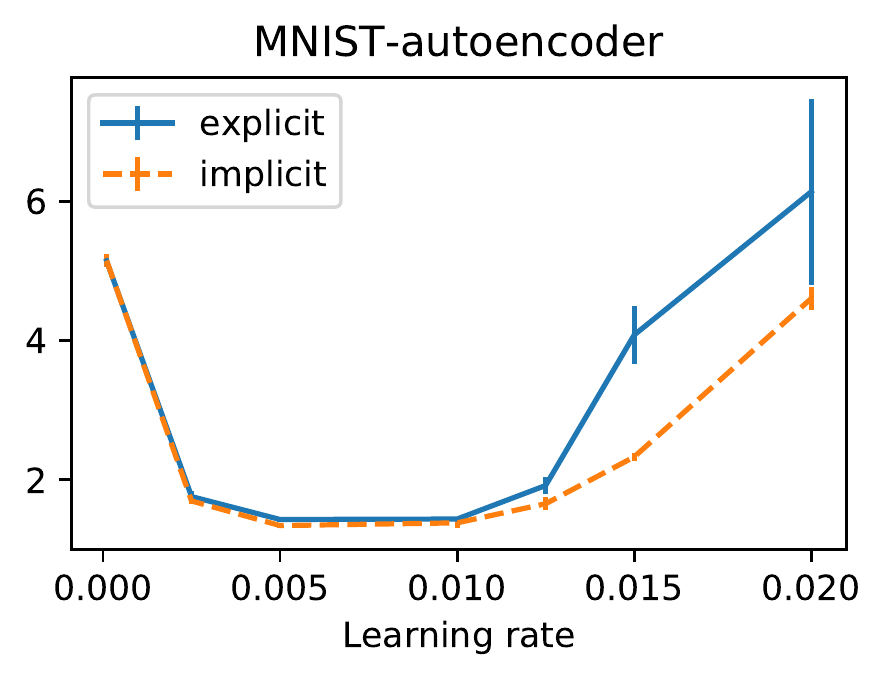}
\end{minipage}
\begin{minipage}{.33\textwidth}
  \centering
  \includegraphics[width=.99\linewidth]{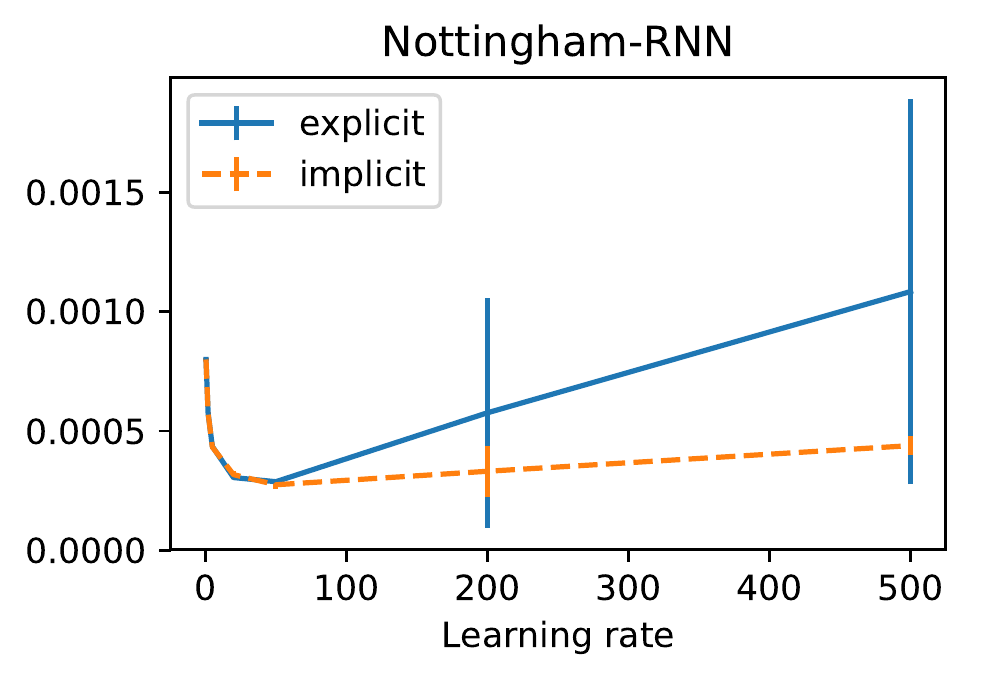}
\end{minipage}%
\hfill
\begin{minipage}{.33\textwidth}
  \centering
  \includegraphics[width=.99\linewidth]{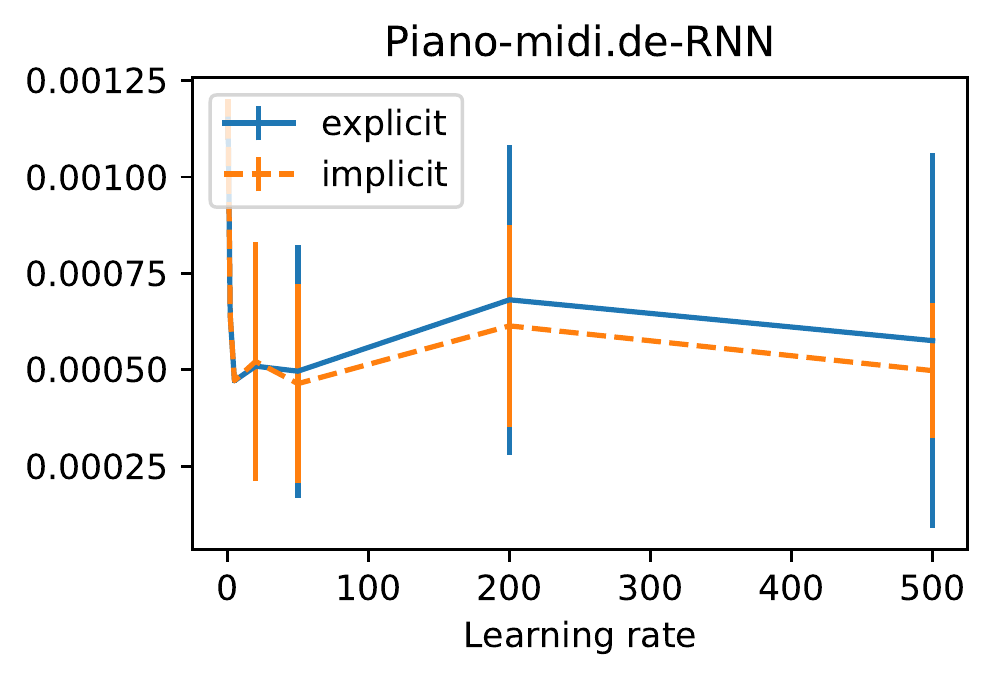}
\end{minipage}
\caption{Training loss of EB and IB (lower is better). The plots display the mean performance and one standard deviation errors. The MNIST-classification plot also shows an ``exact'' version of ISGD with an inner gradient descent optimizer.}
\label{fig:main_results_no_clipping}
\end{figure}

Figure~\ref{fig:main_results_no_clipping} displays the results for the
experiments. EB and IB have near-identical
performance when the learning rate is small. However, as the learning rate
increases, the performance of EB deteriorates far more quickly as compared
to IB. Over the six datasets, the learning rate at which IB starts to
diverge is at least 20\% higher than that of EB.\footnote{The threshold
  for divergence that we use is when the mean loss exceeds the average of
  EB's minimum and maximum mean losses measured on that dataset 
  over the various learning rates.} The
benefit of IB over EB is most noticeable for the music prediction problems
where for high learning rates IB has much better mean performance and
lower variance than EB. A potential explanation for this behaviour is that
IB is better 
able to deal with exploding
gradients, which are more prevalent in RNN training. 

\paragraph{Exact ISGD.} For MNIST-classification we also investigate the potential performance of
\emph{exact} ISGD. Instead of using our IB approximation for the ISGD update, we
directly optimize (\ref{ib_eq:Implicit_SGD_formula}) using gradient
descent. For each ISGD update we take a total of $100$ gradient descent
steps of (\ref{ib_eq:Implicit_SGD_formula}) at a learning rate $10$ times
smaller than the ``outer'' learning rate $\eta_t$. It is evident from
Figure \ref{fig:main_results_no_clipping} that this method achieves the
best performance and is remarkably robust to the learning rate. Since
exact ISGD uses $100$ extra gradient descent steps per iteration, it is 
100 times slower than the other methods, and is thus impractically
slow. However, its impressive performance indicates that 
ISGD-based methods have great potential for neural networks. 

\paragraph{Run times.} According to the bounds derived in Section \ref{ib_sec:runtime_flop_bounds}, the run time of IB should be no more than 12\% longer per epoch than EB on any of our experiments. With our basic Pytorch implementation IB took between 16\% and 216\% longer in practice, depending on the architecture used (see Appendix \ref{app:runtimes} for more details). With a more careful implementation of IB we expect these run times to decrease to at least the levels indicated by the bounds. Using activation functions with more efficient IB updates, like the smoothstep instead of arctan, would further reduce the run time.

\paragraph{UCI datasets.} Our second set of experiments is on $121$ classification datasets from the UCI database \cite{fernandez2014we}. We consider a 4 layer feedforward neural network run for 10 epochs on each dataset. In contrast to the above experiments, we use the same coarse grid of 10 learning rates between 0.001 and 50 for all datasets. 
For each algorithm and dataset the best performing learning rate was found on the training set (measured by the performance on the training set). The neural network trained with this learning rate was then applied to the test set.
Overall we found IB to have a $0.13\%$ higher average accuracy on the test set. 
The similarity in performance of IB and EB is likely due to the small size of the datasets (some datasets have as few as 4 features) and relatively shallow architecture making the network relatively easy to train, as well as the coarseness of the learning rate grid.

\paragraph{Clipping.} In our final set of experiments we investigated the
effect of clipping on IB and EB. Both IB and clipping can be
interpreted as approximate trust-region methods. Consequently, we expect IB
to be less influenced by clipping than EB. 
This was indeed observed in our experiments. A
total of $9$ experiments were run with different clipping thresholds applied
to RNNs on the music datasets (see Appendix \ref{app:experimental_setup} for details). Clipping improved EB's performance for higher learning rates in $7$ out of the $9$ experiments, whereas IB's performance was only improved in $2$. IB without clipping had an equal or lower loss than EB with clipping for all learning rates in all experiments except
for one (Piano-midi.de with a clipping threshold of~0.1). This suggests
that IB is a more effective method for training RNNs than EB with clipping. 

The effect of clipping on IB and EB applied to MNIST-classification and
MNIST-autoencoder is more complicated. In both cases clipping enabled IB
and EB to have lower losses for higher learning rates. For
MNIST-classification it is still the case that IB has uniformly superior
performance to EB, but for MNIST-autoencoder this is reversed. It is not
unsurprising that EB  with  clipping may outperform IB  with  clipping. If the
clipping threshold is small enough then the clipping induced trust region will be
smaller than that induced by IB. This makes EB  with  clipping and IB  with 
clipping act the same for large gradients; however, below the clipping
threshold EB's unclipped steps may be able to make more progress than IB's
dampened steps. See Figure \ref{fig:main_results_with_clipping} for plots
of EB and IB's performance with clipping. 

\begin{figure}[t]
\centering
\begin{minipage}{.3175\textwidth}
  \centering
  \includegraphics[width=.99\linewidth]{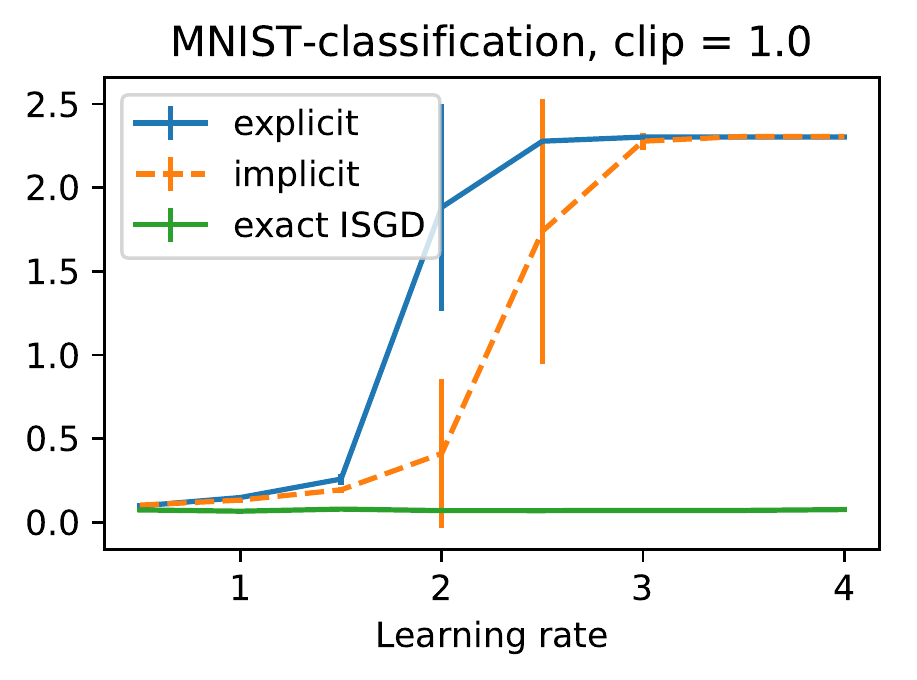}
\end{minipage}%
\hfill
\begin{minipage}{.3175\textwidth}
  \centering
  \includegraphics[width=.99\linewidth]{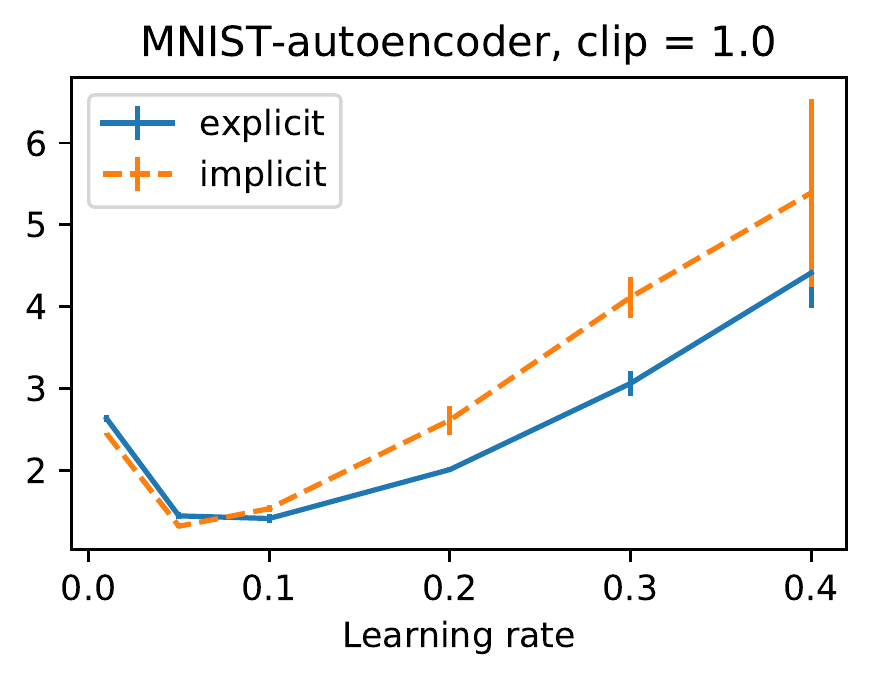}
\end{minipage}
\hfill
\begin{minipage}{.355\textwidth}
  \centering
  \includegraphics[width=.99\linewidth]{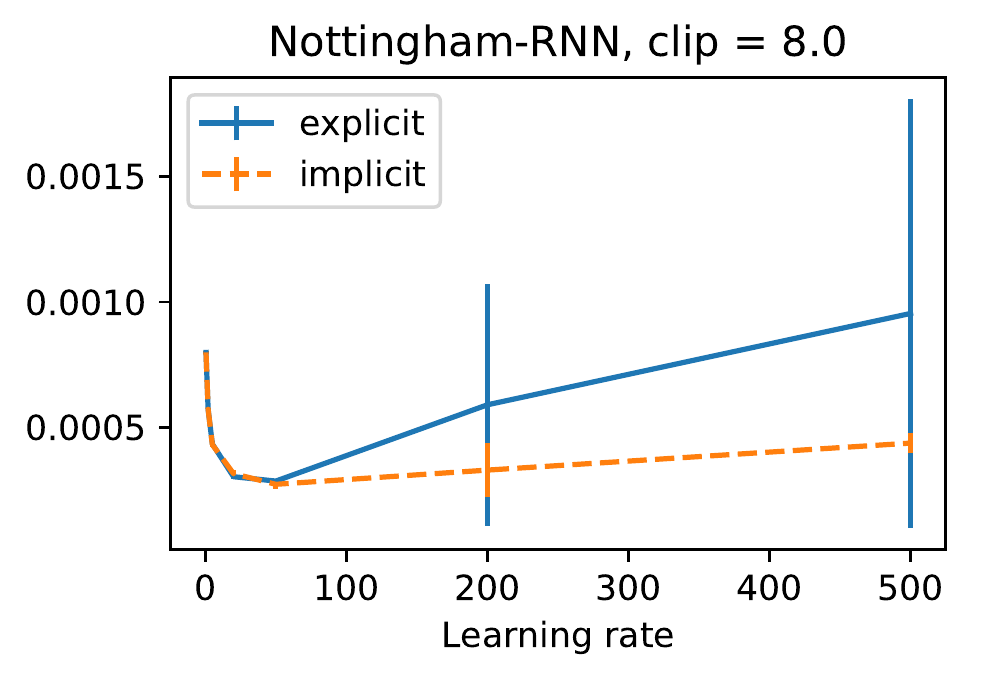}
\end{minipage}
\caption{Training loss of EB  with  clipping and IB  with  clipping on three dataset-architecture pairs. }
\label{fig:main_results_with_clipping}
\end{figure}

\paragraph{Summary.} We ran a total of 17 experiments on the MNIST and music datasets\footnote{MNIST-classification with and without clipping, MNIST-autoencoder with and without clipping, the four music datasets without clipping and nine experiments on the music datasets with clipping.}. IB outperformed EB in 15 out of these. On the UCI datasets IB had slightly better performance than EB on average. The advantage of IB is most pronounced for RNNs, where for large learning rates IB has much lower losses and even consistently outperforms EB with clipping. Although IB takes slightly longer to train, this is offset by its ability to get good performance with higher learning rates, which should enable it to get away with less hyperparameter tuning.

%

\section{Conclusion}\label{ib_sec:conclusion}
In this paper we developed the first method for applying ISGD to neural networks. We showed that, through careful approximations, ISGD can be made to run nearly as quickly as standard backpropagation while still retaining the
property of being more robust to high learning rates. The resulting
method, which we call Implicit Backpropagation, consistently matches or
outperforms standard backpropagation on image recognition, autoencoding
and music prediction tasks; and is particularly effective for robust RNN
training. 

The success of IB demonstrates the potential of ISGD methods
to improve neural network training. It may be the case that there are
better ways to approximate ISGD than IB, which could produce even better results. 
For example, the techniques behind Hessian-Free methods could be used to make a quadratic approximation of the higher layers in IB (opposed to the linear approximation currently used); or a second order approximation could be made directly to the ISGD formulation in~(\ref{ib_eq:Implicit_SGD_formula}).
Developing and testing such methods is a ripe area for future research. 


\clearpage
\bibliographystyle{plain}
\bibliography{isgdnn_bib}

\begin{thebibliography}{10}

\bibitem{bengio2013advances}
Yoshua Bengio, Nicolas Boulanger-Lewandowski, and Razvan Pascanu.
\newblock Advances in optimizing recurrent networks.
\newblock In {\em Acoustics, Speech and Signal Processing (ICASSP), 2013 IEEE
  International Conference on}, pages 8624--8628. IEEE, 2013.

\bibitem{bertsekas2011incremental}
Dimitri~P Bertsekas.
\newblock Incremental proximal methods for large scale convex optimization.
\newblock {\em Mathematical programming}, 129(2):163, 2011.

\bibitem{bertsekas2015incremental}
Dimitri~P Bertsekas.
\newblock Incremental aggregated proximal and augmented lagrangian algorithms.
\newblock {\em arXiv preprint arXiv:1509.09257}, 2015.

\bibitem{boulanger2012modeling}
Nicolas Boulanger-Lewandowski, Yoshua Bengio, and Pascal Vincent.
\newblock Modeling temporal dependencies in high-dimensional sequences:
  Application to polyphonic music generation and transcription.
\newblock {\em arXiv preprint arXiv:1206.6392}, 2012.

\bibitem{cheng2007implicit}
Li~Cheng, Dale Schuurmans, Shaojun Wang, Terry Caelli, and Svn Vishwanathan.
\newblock Implicit online learning with kernels.
\newblock In {\em Advances in neural information processing systems}, pages
  249--256, 2007.

\bibitem{courbariaux2015binaryconnect}
Matthieu Courbariaux, Yoshua Bengio, and Jean-Pierre David.
\newblock Binaryconnect: Training deep neural networks with binary weights
  during propagations.
\newblock In {\em Advances in neural information processing systems}, pages
  3123--3131, 2015.

\bibitem{duchi2011adaptive}
John Duchi, Elad Hazan, and Yoram Singer.
\newblock Adaptive subgradient methods for online learning and stochastic
  optimization.
\newblock {\em Journal of Machine Learning Research}, 12(Jul):2121--2159, 2011.

\bibitem{duchi2017stochastic}
John Duchi and Feng Ruan.
\newblock Stochastic methods for composite optimization problems.
\newblock {\em arXiv preprint arXiv:1703.08570}, 2017.

\bibitem{fernandez2014we}
Manuel Fern{\'a}ndez-Delgado, Eva Cernadas, Sen{\'e}n Barro, and Dinani Amorim.
\newblock Do we need hundreds of classifiers to solve real world classification
  problems.
\newblock {\em J. Mach. Learn. Res}, 15(1):3133--3181, 2014.

\bibitem{Goodfellow-et-al-2016}
Ian Goodfellow, Yoshua Bengio, and Aaron Courville.
\newblock {\em Deep Learning}.
\newblock MIT Press, 2016.
\newblock \url{http://www.deeplearningbook.org}.

\bibitem{he2014stochastic}
Kun He.
\newblock {\em Stochastic functional descent for learning Support Vector
  Machines}.
\newblock PhD thesis, 2014.

\bibitem{iwaki2017implicit}
Ryo Iwaki and Minoru Asada.
\newblock Implicit incremental natural actor critic.
\newblock In {\em International Conference on Neural Information Processing},
  pages 749--758. Springer, 2017.

\bibitem{kingma2014adam}
Diederik~P Kingma and Jimmy Ba.
\newblock Adam: A method for stochastic optimization.
\newblock {\em arXiv preprint arXiv:1412.6980}, 2014.

\bibitem{klambauer2017self}
G{\"u}nter Klambauer, Thomas Unterthiner, Andreas Mayr, and Sepp Hochreiter.
\newblock Self-normalizing neural networks.
\newblock In {\em Advances in Neural Information Processing Systems}, pages
  972--981, 2017.

\bibitem{kulis2010implicit}
Brian Kulis and Peter~L Bartlett.
\newblock Implicit online learning.
\newblock In {\em Proceedings of the 27th International Conference on Machine
  Learning (ICML-10)}, pages 575--582. Citeseer, 2010.

\bibitem{lecun1998mnist}
Yann LeCun.
\newblock The mnist database of handwritten digits.
\newblock {\em http://yann. lecun. com/exdb/mnist/}, 1998.

\bibitem{lin2017catalyst}
Hongzhou Lin, Julien Mairal, and Zaid Harchaoui.
\newblock Catalyst acceleration for first-order convex optimization: from
  theory to practice.
\newblock {\em arXiv preprint arXiv:1712.05654}, 2017.

\bibitem{martens2010deep}
James Martens.
\newblock Deep learning via hessian-free optimization.
\newblock In {\em ICML}, volume~27, pages 735--742, 2010.

\bibitem{martens2011learning}
James Martens and Ilya Sutskever.
\newblock Learning recurrent neural networks with hessian-free optimization.
\newblock In {\em Proceedings of the 28th International Conference on Machine
  Learning (ICML-11)}, pages 1033--1040. Citeseer, 2011.

\bibitem{paquette2017catalyst}
Courtney Paquette, Hongzhou Lin, Dmitriy Drusvyatskiy, Julien Mairal, and Zaid
  Harchaoui.
\newblock Catalyst acceleration for gradient-based non-convex optimization.
\newblock {\em arXiv preprint arXiv:1703.10993}, 2017.

\bibitem{pascanu2013difficulty}
Razvan Pascanu, Tomas Mikolov, and Yoshua Bengio.
\newblock On the difficulty of training recurrent neural networks.
\newblock In {\em International Conference on Machine Learning}, pages
  1310--1318, 2013.

\bibitem{patrascu2017nonasymptotic}
Andrei Patrascu and Ion Necoara.
\newblock Nonasymptotic convergence of stochastic proximal point algorithms for
  constrained convex optimization.
\newblock {\em arXiv preprint arXiv:1706.06297}, 2017.

\bibitem{rumelhart1986learning}
David~E Rumelhart, Geoffrey~E Hinton, and Ronald~J Williams.
\newblock Learning representations by back-propagating errors.
\newblock {\em nature}, 323(6088):533, 1986.

\bibitem{ryu2014stochastic}
Ernest~K Ryu and Stephen Boyd.
\newblock Stochastic proximal iteration: a non-asymptotic improvement upon
  stochastic gradient descent.
\newblock {\em Author website, early draft}, 2014.

\bibitem{schulman2015trust}
John Schulman, Sergey Levine, Pieter Abbeel, Michael Jordan, and Philipp
  Moritz.
\newblock Trust region policy optimization.
\newblock In {\em International Conference on Machine Learning}, pages
  1889--1897, 2015.

\bibitem{tamar2014implicit}
Aviv Tamar, Panos Toulis, Shie Mannor, and Edoardo~M Airoldi.
\newblock Implicit temporal differences.
\newblock {\em arXiv preprint arXiv:1412.6734}, 2014.

\bibitem{toulis2015scalable}
Panos Toulis and Edoardo~M Airoldi.
\newblock Scalable estimation strategies based on stochastic approximations:
  classical results and new insights.
\newblock {\em Statistics and computing}, 25(4):781--795, 2015.

\bibitem{toulisstable}
Panos Toulis, Thibaut Horel, and Edoardo~M Airoldi.
\newblock Stable robbins-monro approximations through stochastic proximal
  updates.

\bibitem{toulis2016towards}
Panos Toulis, Dustin Tran, and Edo Airoldi.
\newblock Towards stability and optimality in stochastic gradient descent.
\newblock In {\em Artificial Intelligence and Statistics}, pages 1290--1298,
  2016.

\bibitem{wang2017memory}
Jialei Wang, Weiran Wang, and Nathan Srebro.
\newblock Memory and communication efficient distributed stochastic
  optimization with minibatch prox.
\newblock {\em arXiv preprint arXiv:1702.06269}, 2017.

\bibitem{wu2017scalable}
Yuhuai Wu, Elman Mansimov, Roger~B Grosse, Shun Liao, and Jimmy Ba.
\newblock Scalable trust-region method for deep reinforcement learning using
  kronecker-factored approximation.
\newblock In {\em Advances in neural information processing systems}, pages
  5285--5294, 2017.

\bibitem{zhang2017yellowfin}
Jian Zhang, Ioannis Mitliagkas, and Christopher R{\'e}.
\newblock Yellowfin and the art of momentum tuning.
\newblock {\em arXiv preprint arXiv:1706.03471}, 2017.

\end{thebibliography}

\clearpage
\appendix

\section{Derivation of generic update equations}\label{app:derivation_generic_update}
In this section we will derive the generic IB update equations, starting from equation (\ref{ib_eq:ISGD_main_starting_point}) and ending at equation (\ref{ib_eq:alpha_original}). 
For notational simplicity we will drop superscripts and subscripts where they are clear from the context. Let a tilde denote the current iterate, i.e. $\tilde{\theta}_k =\theta_k^{(t)}$. 
With this notation, the IB update from (\ref{ib_eq:ISGD_main_starting_point}) becomes
\begin{align}
&\argmin_{\theta}\left\{2\eta b^\top \sigma(\theta z) +\eta\mu\|\theta\|_2^2 + \|\theta - \tilde{\theta}\|_2^2 \right\} \label{ib_eq:ISGD_starting_point} \\
&=\argmin_{\theta}\left\{\sum_{j=1}^{D_{k+1}}2\eta b_j \sigma(\theta_j^\top z) +\eta\mu\|\theta_j\|_2^2 + \|\theta_j - \tilde{\theta}_j\|_2^2 \right\}\nonumber
\end{align}
where $\theta_j$ is the $j^{th}$ row of $\theta$ corresponding to the $j^{th}$ node in the layer.
The minimization splits into separate minimization problems, one for each $j$:
\begin{equation}\label{ib_eq:split_eq}
\argmin_{\theta_j}\left\{2\eta b_j \sigma(\theta_j^\top z) +\eta\mu\|\theta_j\|_2^2 + \|\theta_j - \tilde{\theta}_j\|_2^2 \right\}.
\end{equation}
Since $\theta_j\in\mathbb{R}^{1+D_k}$ this is a $1+D_k$-dimensional problem. However, we will be able to reduce it to just a one-dimensional problem. We begin by introducing an auxiliary variable $q_j =\theta_j^\top z$ and rewriting (\ref{ib_eq:split_eq}) as
\begin{equation}\label{ib_eq:regularized_q_equation_original}
\min_{q_j}\left\{2\eta b_j \sigma( q_j) + \min_{\theta_j}\{\eta\mu\|\theta_j\|_2^2 + \|\theta_j - \tilde{\theta}_j\|_2^2:~q_j =\theta_j^\top z\}  \right\}.
\end{equation}
We will first solve the inner minimization over $\theta_j$ as a function of $q_j$, and then solve the outer minimization over $q_j$.

\subsubsection*{Inner minimization}
The inner minimization can be solved by taking the dual:
\begin{align}
&\min_{\theta_j}\{\eta\mu\|\theta_j\|_2^2 + \|\theta_j - \tilde{\theta}_j\|_2^2:~q_j =\theta_j^\top z\}\nonumber\\
&=\max_{\lambda_j\in\mathbb{R}}\min_{\theta_j} \{\eta\mu\|\theta_j\|_2^2 + \|\theta_j - \tilde{\theta}_j\|_2^2 + 2\lambda_j(q_j -\theta_j^\top z)\}\nonumber\\
&=\max_{\lambda_j\in\mathbb{R}} \left\{2\lambda_j q_j +  \min_{\theta_j}\{ \eta\mu\|\theta_j\|_2^2 + \|\theta_j - \tilde{\theta}_j\|_2^2 - 2\lambda_j\theta_j^\top z\}\right\}.\label{ib_eq:mu_theta}
\end{align}
The solution for $\theta_j$ is 
\begin{equation}\label{ib_eq:theta_as_fn_of_u}
\theta_j = \frac{\tilde{\theta}_j + \lambda_j z}{1+\eta\mu}.
\end{equation}
Substituting (\ref{ib_eq:theta_as_fn_of_u}) into (\ref{ib_eq:mu_theta}) and simplifying yields
\begin{align*}
&-\frac{1}{{1+\eta\mu}}\min_{\lambda_j\in\mathbb{R}} \left\{\lambda_j^2\|z\|_2^2 + 2\lambda_j(\tilde{\theta}_j^\top z-(1+\eta\mu) q_j) -\eta\mu\|\tilde{\theta}_j\|_2^2\right\}.
\end{align*}
This is a quadratic in $\lambda_j$, which is easily minimized. The value for $\lambda_j$ at the minimum is, 
\begin{equation}\label{ib_eq:u_solution}
\lambda_j = \frac{(1+\eta\mu) q_j - \tilde{\theta}_j^\top z}{\|z\|_2^2}.
\end{equation}
Substituting (\ref{ib_eq:u_solution}) into (\ref{ib_eq:mu_theta}) yields the minimal value of the inner minimization problem as a function of $q_j$,
\begin{equation}\label{ib_eq:inner_min_solution}
\frac{1+\eta\mu}{\|z\|_2^2} \left( q_j -\frac{\tilde{\theta}_j^\top z}{1+\eta\mu}\right)^2  +\frac{\eta\mu}{1+\eta\mu}\|\tilde{\theta}_j\|_2^2.
\end{equation} 

\subsubsection*{Outer minimization} 
Replacing the inner minimization with (\ref{ib_eq:inner_min_solution}), dropping the constant $\frac{\eta\mu}{1+\eta\mu}\|\tilde{\theta}_j\|_2^2$ term and dividing everything by $2\eta$,  (\ref{ib_eq:regularized_q_equation_original}) becomes 
\begin{align*}
&\argmin_{q_j}\left\{b_j \sigma( q_j) + \frac{1+\eta\mu}{2\eta\|z\|_2^2} \left( q_j -\frac{\tilde{\theta}_j^\top z}{1+\eta\mu}\right)^2   \right\}.
\end{align*}
Reparameterizing $q_j$ as 
\begin{equation}\label{ib_eq:alpha_regularized_def}
\alpha_j = \frac{1}{\eta\|z\|_2^2} \left( \frac{\tilde{\theta}_j^\top z}{1+\eta\mu} - q_j\right),
\end{equation}
we arrive at our simplified update from (\ref{ib_eq:alpha_original}):
\begin{equation*}
\alpha_j = \argmin_\alpha \left\{b_j\cdot \sigma\left(\frac{\tilde{\theta}^{\top}_j z}{1+\eta\mu} - \alpha\cdot \eta\|z\|_2^2 \right) + \eta(1+\eta\mu) \|z\|_2^2 \frac{\alpha^2}{2} \right\}.
\end{equation*}
Once we have solved for $\alpha_j$ we can recover the optimal $\theta_j$ by using (\ref{ib_eq:alpha_regularized_def}) to find $q_j$, (\ref{ib_eq:u_solution}) to find $\lambda_j$ and (\ref{ib_eq:theta_as_fn_of_u}) to find $\theta_j$. The resulting formula is
\begin{equation*}
\theta_{j}  = \frac{\tilde{\theta}_{j}}{1+\eta\mu} - \eta \alpha_j z,
\end{equation*}
as was stated in (\ref{ib_eq:theta_k_solution}).

\section{IB for convolutional neural networks}\label{app:CNNs}
Here we consider applying IB to a Convolutional Neural Network (CNNs). As each filter is applied independently in a CNN, the IB updates decouple into separate updates for each filter. Since a filter uses shared weights, we cannot use the generic update equations in Section~\ref{ib_sec:generic_update_statements}. Instead we have to derive the updates starting from (\ref{ib_eq:ISGD_starting_point})
\begin{equation}\label{ib_eq:CNN_abstract}
\argmin_{\theta}2\eta b^\top \sigma(Z \theta) +\eta\mu\|\theta\|_2^2 + \|\theta - \tilde{\theta}\|_2^2 .
\end{equation}
Here $Z$ is a matrix where each row corresponds to one patch over which the convolution vector $\theta$ is multiplied.\footnote{Note that $Z$ in general will have repeated entries and may have a column of ones appended to it to account for a bias.} The activation function $\sigma$ has components $[\sigma(x)]_m = \max\{B_mx\}$ where $B_m$ is a pooling matrix with elements in $\{0,1\}$. 
We will assume that $B_m$ has a row of all zeros so that the max-pooling effectively includes a relu non-linearity, since $\max\{(B_m^\top,0)^\top x\} = \max\{\mbox{relu}(B_mx)\}$. 

We can expand (\ref{ib_eq:CNN_abstract}) into a quadratic program:
\begin{align*}
&\argmin_{\theta}2\eta \sum_{m=1}^M b_m \max\{B_m Z\theta\} +\eta\mu\|\theta\|_2^2 + \|\theta - \tilde{\theta}\|_2^2 \\
&=\argmin_{a,\theta}2\eta \sum_{m=1}^M |b_m|a_m +\eta\mu\|\theta\|_2^2 + \|\theta - \tilde{\theta}\|_2^2 \\
&~~~~~~s.t.~~~~~~~~a_m \geq 
\begin{cases}
\max\{B_m Z\theta\} &\mbox{if }b_m\geq 0\\
\min\{-B_m Z\theta\} &\mbox{if }b_m< 0
\end{cases}\\
&=\argmin_{a,y,\theta}2\eta \sum_{m=1}^M |b_m|a_m +\eta\mu\|\theta\|_2^2 + \|\theta - \tilde{\theta}\|_2^2 \\
&~~~~~~s.t.~~~~~~~~a_m \geq 
\begin{cases}
B_m Z_j\theta &\mbox{if }b_m\geq 0\\
-B_m Z_j\theta - M(1-y_{mj}) &\mbox{if }b_m< 0
\end{cases}\qquad\mbox{for all }j\\
&~~~~~~~~~~~~~~~~~~\sum_j y_{mj}=1\\
&~~~~~~~~~~~~~~~~~~y_{mj}\in\{0,1\},
\end{align*}
where $M> 0$ is a large constant. This is clearly an expensive problem to solve each iteration. 

Note that if the convolution did not include max-pooling, but just used shared weights, then the problem would become a quadratic program with continuous variables and no constraints, which could be solved analytically. On the other hand if the convolution had max-pooling, but no shared weights, then the generic IB updates from (\ref{ib_eq:theta_k_solution}) would apply.
Thus the difficulty in solving the IB convolutional update comes from doing the max-pooling and weight sharing in the \emph{same} layer.

\section{Convergence theorem}\label{app:convergence}
In this section we present a simple theorem that leverages the fact that IB converges to EB in the limit of small learning rates, to show that IB converges to a stationary point of the loss function for
appropriately decaying learning rates. First we will introduce some useful notation, after which we state the conditions under which the convergence theorem holds. After a few lemmas, we prove the desired result.

\paragraph{Notation.} Let $g_i(\theta^{(t)};\eta_t)$ denote the gradient used to take a step in IB when datapoint $i$ is sampled with learning rate $\eta$, i.e. $\theta^{(t+1)} = \theta^{(t)} - \eta_t g_i(\theta^{(t)};\eta_t)$. 
Let $\ell$ be the loss function from Sections \ref{ib_sec:ISGD_related} and \ref{ib_sec:IB}.
Define the level set $$C= \{\theta: \|\theta\|_2^2 \leq \frac{2}{\mu}\ell(0)\}$$
and the restarting function
\begin{equation*}
R(\theta) = 
\begin{cases}
\theta &\mbox{if }\theta \in C\\
0 &\mbox{otherwise}.
\end{cases}
\end{equation*}
The set $C$ depends on the value of $\ell(0)$. When $\theta=0$ the output of the neural network is independent of its input and so $\ell(0) = \frac{1}{N}\sum_{i=1}^N\ell(y_i,f_0(x_i)) = \frac{1}{N}\sum_{i=1}^N\ell(y_i,f_0(0))$ can be quickly calculated.
Finally define the extended level-set $$\bar{C}(\eta) = \left\{\theta:~\|\theta\|_2\leq \sqrt{\frac{2}{\mu}\ell(0)} + \eta\cdot \max_{i,\theta \in C}\{\|g_i(\theta;\eta)\|_2\}\right\}$$ to contain all points that can be reached from $C$ in one IB iteration (without restarting).

We will assume the following conditions.

\begin{assum}\label{ass:main}
The objective function $\ell(\theta) = \frac{1}{N}\sum_{i=1}^N \ell_\theta(x_i,y_i) + \frac{\mu}{2}\|\theta\|_2^2$, IB gradients $g_i(\theta^{(t)};\eta)$ and learning rate sequence $\{\eta_t\}_{t=1}^\infty$ satisfy the following:
\begin{enumerate}[(a)]

\item The loss at each datapoint is non-negative, i.e. $\ell_\theta(x,y)\geq 0$ for all $x,y,\theta$.

\item The gradient function is Lipschitz continuous with a Lipschitz constant $0<L(\eta)<\infty$ that is monotonically decreasing in $\eta$. That is
\begin{equation*}
\|\nabla \ell(\theta) - \nabla \ell(\bar{\theta})\|_2 \leq L(\eta) \|\theta - \bar{\theta}\|_2,
\end{equation*}
for all $\{\theta, \bar{\theta}\}\subset \bar{C}(\eta)$ and $L(\eta)\leq L(\bar{\eta})$ if $\eta\leq\bar{\eta}$.

\item The gradients of the stochastic functions $\tilde{\ell}_k(\theta_k; x,y,\theta^{(t)}_{-k})$ in (\ref{ib_eq:Taylor_series}) are Lipschitz continuous with a Lipschitz constant $0<\tilde{L}(\eta)<\infty$ that is monotonically decreasing in~$\eta$. That is
\begin{equation*}
\|\nabla \tilde{\ell}_k(\theta_k; x,y,\theta^{(t)}_{-k}) - \nabla \tilde{\ell}_k(\bar{\theta}_k; x,y,\theta^{(t)}_{-k})\|_2 \leq \tilde{L}(\eta) \|\theta_k - \bar{\theta}_k\|_2,
\end{equation*}
for all $k\in\{1,...,d\}$, $(x,y) \in\mathcal{D}$, $\theta^{(t)}\in C$ and $\{\theta, \bar{\theta}\}\subset \bar{C}(\eta)$;  and $\tilde{L}(\eta)\leq \tilde{L}(\bar{\eta})$ if~$\eta\leq\bar{\eta}$.

\item The learning rate sequence is monotonically decreasing with $\eta_1 \tilde{L}(\eta_1)<1$, $\sum_{t=1}^\infty \eta_t = \infty$ and $\sum_{t=1}^\infty \eta_t^2 < \infty$.

\end{enumerate}
\end{assum}
A few comments on the assumptions. Assumption (a) is effectively equivalent to the loss being lower bounded by a deterministic constant $\ell_\theta(x,y)\geq B$, as one can always define an equivalent loss $\ell_\theta'(x,y) = \ell_\theta(x,y)- B\geq 0$. Most standard loss functions, such as the square loss, quantile loss, logistic loss and multinomial loss, satisfy assumption (a). Assumptions (b) and (c) will be valid for any neural network whose activation functions have Lipschitz gradients. Assumption (d) is standard in SGD proofs (except for the $\eta_1 \tilde{L}(\eta_1)<1$ assumption which is particular to us).

Let ``restarting IB'' refer to IB where the restarting operator $R$ is applied each iteration.  We now state the IB convergence theorem:
\begin{theorem}\label{thm:main}
Under Assumptions~\ref{ass:main}, restarting IB converges to a stationary point in the sense that
\begin{equation*}
\lim_{T\to \infty}\frac{\sum_{t=1}^T \eta_t \mathbb{E}[\|\nabla \ell(\theta^{(t)})\|_2^2]}{\sum_{t=1}^T \eta_t} = 0.
\end{equation*}
\end{theorem}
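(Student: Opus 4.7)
The strategy is to view IB as a biased stochastic gradient method on $\ell$, with bias of order $O(\eta_t)$, and then apply a standard Robbins--Monro-style telescoping argument. Writing the first-order optimality conditions of (\ref{ib_eq:ISGD_NN_update_equation}) layer-wise gives $\theta_k^{(t+1)} = \theta_k^{(t)} - \eta_t g_i^k(\theta^{(t)};\eta_t)$ with
\[
g_i^k(\theta^{(t)};\eta_t) := \nabla \tilde{\ell}_k\bigl(\theta_k^{(t+1)};x,y,\theta_{-k}^{(t)}\bigr) + \mu\,\theta_k^{(t+1)}.
\]
The IB subproblem (\ref{ib_eq:ISGD_NN_update_equation}) is strongly convex whenever $\eta_t \tilde{L}(\eta_t) < 1$, which holds for all $t$ by assumption (d); hence $g_i^k$ is well defined and $\|g_i^k\|_2$ is uniformly bounded on the pre-restart set $\bar{C}(\eta_t)$. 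Concatenating layers yields $\theta^{(t+1)} = R(\theta^{(t)} - \eta_t g_i)$.

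Next I establish three ingredients. (i) A chain-rule computation on (\ref{ib_eq:Taylor_series}) shows $\nabla_{\theta_k} \tilde{\ell}_k|_{\theta_k = \theta_k^{(t)}} = \nabla_{\theta_k}\ell_i(\theta^{(t)})$, so by the Lipschitz assumption (c), $\|g_i^k - (\nabla_{\theta_k}\ell_i(\theta^{(t)}) + \mu \theta_k^{(t)})\|_2 \le (\tilde{L}(\eta_t) + \mu)\eta_t \|g_i^k\|_2$, yielding an aggregate bias of $O(\eta_t)$ after using the uniform bound on $\|g_i\|_2$. (ii) Restarting cannot hurt the descent analysis: any $\theta \notin C$ satisfies $\ell(\theta) \geq \tfrac{\mu}{2}\|\theta\|_2^2 > \ell(0)$ by assumption (a) and the definition of $C$, so $\ell(\theta^{(t+1)}) \leq \ell(\theta^{(t)} - \eta_t g_i)$ in every iteration. (iii) Lipschitz smoothness (assumption (b)) applied to the unrestarted step on $\bar{C}(\eta_t)$ gives
\[
\ell(\theta^{(t)} - \eta_t g_i) \leq \ell(\theta^{(t)}) - \eta_t \langle \nabla\ell(\theta^{(t)}), g_i\rangle + \tfrac{L(\eta_t)}{2}\eta_t^2 \|g_i\|_2^2.
\]
Taking expectation over $i$, using $\mathbb{E}_i[\nabla\ell_i(\theta) + \mu\theta] = \nabla\ell(\theta)$ together with the bias bound and Cauchy--Schwarz, yields
\[
\mathbb{E}[\ell(\theta^{(t+1)})] \leq \ell(\theta^{(t)}) - \eta_t \|\nabla\ell(\theta^{(t)})\|_2^2 + c_2 \eta_t^2(\|\nabla\ell(\theta^{(t)})\|_2 + 1).
\]
Since $\eta_t \to 0$ (by $\sum \eta_t^2 < \infty$), for $t$ past some finite $t_0$ the cross term is absorbed into the leading $-\eta_t\|\nabla\ell\|_2^2$ via $\|\nabla\ell\| \leq \tfrac{1}{2}(\|\nabla\ell\|^2 + 1)$, leaving $\mathbb{E}[\ell(\theta^{(t+1)})] \leq \ell(\theta^{(t)}) - \tfrac{\eta_t}{2}\|\nabla\ell(\theta^{(t)})\|_2^2 + O(\eta_t^2)$. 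Telescoping, using $\ell \geq 0$ and $\sum \eta_t^2 < \infty$, gives $\sum_t \eta_t \mathbb{E}[\|\nabla\ell(\theta^{(t)})\|_2^2] < \infty$, and dividing by $\sum_t \eta_t = \infty$ yields the claim.

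The main technical subtlety is the implicit nature of $g_i^k$: its bias bound involves $\|g_i^k\|_2$ itself, because $\theta_k^{(t+1)}$ depends on $g_i^k$. Breaking this closed loop requires rearranging an inequality of the form $\|g_i^k\|_2 \leq \|\nabla_{\theta_k}\ell_i(\theta^{(t)}) + \mu \theta_k^{(t)}\|_2 + \eta_t(\tilde{L}(\eta_t) + \mu)\|g_i^k\|_2$, which in turn needs $\eta_t(\tilde{L}(\eta_t) + \mu) < 1$ uniformly in $t$---exactly the role played by $\eta_1 \tilde{L}(\eta_1) < 1$ and the monotone decay of $\eta_t$ in assumption (d).
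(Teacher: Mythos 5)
Your proposal is correct and follows essentially the same path as the paper's proof: (i) restarting is harmless because $\theta \notin C$ implies $\ell(\theta) \geq \tfrac{\mu}{2}\|\theta\|_2^2 > \ell(0)$ (the paper's Lemma~\ref{lm:restarting}); (ii) the descent lemma from Assumption~\ref{ass:main}(b) on $\bar{C}(\eta_t)$; (iii) the $O(\eta_t\|g_i\|_2)$ bound on the EB--IB gradient gap via Assumption~\ref{ass:main}(c) (Lemma~\ref{lm:ell_g_difference}); (iv) the self-referential inequality on $\|g_i\|_2$ closed using $\eta_1\tilde L(\eta_1)<1$ and the monotone decay of $\eta_t$ (Lemma~\ref{lm:bounded_g}); and (v) telescoping against $\sum\eta_t^2<\infty$ and dividing by $\sum\eta_t=\infty$.

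The one place you diverge in detail is how the bias-induced cross term $\eta_t^2\,\|\nabla\ell(\theta^{(t)})\|_2$ is absorbed. You use the AM--GM inequality $\|\nabla\ell\|_2 \leq \tfrac12(\|\nabla\ell\|_2^2+1)$ and then invoke ``$t\geq t_0$'' so that $\eta_t$ is small enough to leave a $-\tfrac{\eta_t}{2}\|\nabla\ell\|_2^2$ term. The paper avoids this by observing that restarting confines $\theta^{(t)}$ to $C$, on which $\|\nabla\ell\|_2 \leq B_\ell(\eta_1)$ uniformly (Lemma~\ref{lm:bounded_gradients}), so the cross term is deterministically $O(\eta_t^2)$ with an explicit constant. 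Both routes are sound and deliver the same telescoping inequality; the paper's version is marginally cleaner because the constants are explicit and no ``for $t$ large enough'' splitting is needed, but your version requires slightly less bookkeeping of the bounding constants $B_\ell$ and $B_g$.

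Two small points to tighten if you write this up: you carry a $+\mu$ alongside $\tilde L(\eta_t)$ in the self-referential bound, which is correct but then requires $\eta_1(\tilde L(\eta_1)+\mu)<1$ rather than the stated $\eta_1\tilde L(\eta_1)<1$ (or you can fold the regularization into $\tilde\ell_k$ before invoking Assumption~\ref{ass:main}(c), as the paper implicitly does); and your unbiasedness claim $\mathbb{E}_i[\nabla\ell_i(\theta)+\mu\theta]=\nabla\ell(\theta)$ should be stated with the convention that $\nabla\ell_i$ excludes the ridge term, matching the decomposition $\ell(\theta)=\tfrac1N\sum_i\ell_i(\theta)+\tfrac\mu2\|\theta\|_2^2$.
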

The proof of Theorem \ref{thm:main} is given below, after a few helpful lemmas.

\begin{lemma}\label{lm:restarting}
The restarting operator $R$ does not increase the loss $\ell$.
\end{lemma}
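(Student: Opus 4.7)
The plan is a simple two-case analysis based directly on the definition of $R$ and on the non-negativity of the per-datapoint loss (Assumption~\ref{ass:main}(a)).

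First I would dispatch the easy case. If $\theta\in C$, then $R(\theta)=\theta$ by definition, so $\ell(R(\theta))=\ell(\theta)$ and the claim holds with equality. Nothing to prove here.

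The substantive case is $\theta\notin C$, where $R(\theta)=0$ and we must show $\ell(0)\leq \ell(\theta)$. By definition of $C$, failing to belong to $C$ means $\|\theta\|_2^2 > \frac{2}{\mu}\ell(0)$, i.e.\ $\frac{\mu}{2}\|\theta\|_2^2 > \ell(0)$. Now I use Assumption~\ref{ass:main}(a): each $\ell_{\theta}(x_i,y_i)\geq 0$, so the data-fit part of $\ell(\theta)=\frac{1}{N}\sum_{i=1}^N\ell_{\theta}(x_i,y_i)+\frac{\mu}{2}\|\theta\|_2^2$ is non-negative, whence
\begin{equation*}
\ell(\theta)\;\geq\;\frac{\mu}{2}\|\theta\|_2^2\;>\;\ell(0).
\end{equation*}
Therefore $\ell(R(\theta))=\ell(0)<\ell(\theta)$, completing this case.

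Combining the two cases gives $\ell(R(\theta))\leq \ell(\theta)$ for every $\theta$, which is the claim. There is no real obstacle here; the only thing to be careful about is to note that the threshold defining $C$ was chosen precisely so that the regularizer alone dominates $\ell(0)$ outside $C$, and then the non-negativity of the data loss makes the total objective exceed $\ell(0)$ as well. This is also where Assumption~\ref{ass:main}(a) is essential: without a nonnegative lower bound on the per-datapoint losses, $\ell(\theta)$ could in principle be smaller than $\ell(0)$ even for very large $\|\theta\|_2$.
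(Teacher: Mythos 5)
Your proof is correct and follows exactly the same two-case argument as the paper: the case $\theta\in C$ is an identity, and the case $\theta\notin C$ combines the definition of the level set $C$ with the non-negativity of the per-datapoint loss to get $\ell(\theta)\geq\frac{\mu}{2}\|\theta\|_2^2>\ell(0)$. Nothing to add.
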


\begin{proof}
If $\theta \in C$ then $\ell(R(\theta)) = \ell(\theta)$ and the loss stays the same, otherwise
\begin{equation*}
\ell(R(\theta)) = \ell(0) < \frac{\mu}{2}\|\theta\|_2^2 \leq \ell(\theta),
\end{equation*}
where the first inequality is by the definition of $C$ and the second is from the assumption that $\ell_\theta(x,y)\geq 0$.
\end{proof}

\begin{lemma}\label{lm:bounded_gradients}
The gradient of the loss function at any point in $C$ is bounded by $B_\ell(\eta) = L(\eta)\sqrt{\frac{2}{\mu}\ell(0)} + \|\nabla \ell(0)\|_2$.
\end{lemma}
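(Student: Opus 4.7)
The plan is a short triangle-inequality argument that simply combines the defining bound of the level set $C$ with the Lipschitz continuity from Assumption~\ref{ass:main}(b), anchored at the reference point $\theta=0$.

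First I would verify that Assumption~\ref{ass:main}(b) applies with the two points $\theta \in C$ and $0$. Since $\ell_\theta(x,y)\geq 0$ by Assumption~\ref{ass:main}(a), we have $\ell(0)\geq 0$, so $0\in C$ trivially (as $\|0\|_2^2=0\leq \tfrac{2}{\mu}\ell(0)$). Because $C \subseteq \bar C(\eta)$ by the definition of the extended level set, both $\theta$ and $0$ lie in $\bar C(\eta)$, and Assumption~\ref{ass:main}(b) yields
\begin{equation*}
\|\nabla \ell(\theta) - \nabla \ell(0)\|_2 \;\leq\; L(\eta)\,\|\theta - 0\|_2 \;=\; L(\eta)\,\|\theta\|_2.
\end{equation*}

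Second, I would invoke the definition of $C$ to bound $\|\theta\|_2\leq \sqrt{\tfrac{2}{\mu}\ell(0)}$, and then apply the triangle inequality:
\begin{equation*}
\|\nabla \ell(\theta)\|_2 \;\leq\; \|\nabla \ell(0)\|_2 + \|\nabla \ell(\theta) - \nabla \ell(0)\|_2 \;\leq\; \|\nabla \ell(0)\|_2 + L(\eta)\sqrt{\tfrac{2}{\mu}\ell(0)} \;=\; B_\ell(\eta),
\end{equation*}
which is exactly the stated bound. There is no real obstacle here: the only subtlety worth flagging is the bookkeeping check that $0\in C \subseteq \bar C(\eta)$ so that the Lipschitz estimate in Assumption~\ref{ass:main}(b) is legitimately applicable at the anchor point, and this follows immediately from the non-negativity of the per-datapoint losses in Assumption~\ref{ass:main}(a).
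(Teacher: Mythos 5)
Your proof is correct and follows essentially the same triangle-inequality-plus-Lipschitz argument as the paper, with the only addition being the explicit (and harmless) verification that $0\in C\subseteq\bar C(\eta)$ so Assumption~\ref{ass:main}(b) legitimately applies between $\theta$ and $0$.
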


\begin{proof}
By the triangle inequality and Lipschitz assumption
\begin{align*}
\|\nabla \ell(\theta)\|_2 &= \|\nabla \ell(\theta) - \nabla \ell(0) + \nabla \ell(0)\|_2\\
&\leq \|\nabla \ell(\theta) - \nabla \ell(0)\|_2 + \|\nabla \ell(0)\|_2\\
&\leq L(\eta)\|\theta - 0\|_2 + \|\nabla \ell(0)\|_2\\
&\leq L(\eta)\sqrt{\frac{2}{\mu}\ell(0)} + \|\nabla \ell(0)\|_2\\
&= B_\ell(\eta)
\end{align*}
for all $\theta\in C$.
\end{proof}

\begin{lemma}\label{lm:bound_inner_product}
If $\|x-y\|_2\leq z$ then for any $v$ we have $y^\top v\geq  x^\top v - z\|v\|_2$.
\end{lemma}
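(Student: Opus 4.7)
The claim rearranges to showing $(x - y)^\top v \leq z \|v\|_2$, so my plan is to reduce it directly to the Cauchy--Schwarz inequality. First I would rewrite the target inequality as
\begin{equation*}
x^\top v - y^\top v = (x-y)^\top v \leq z \|v\|_2.
\end{equation*}
Then I would bound the left-hand side by its absolute value and apply Cauchy--Schwarz:
\begin{equation*}
(x-y)^\top v \leq |(x-y)^\top v| \leq \|x-y\|_2 \cdot \|v\|_2.
\end{equation*}
Finally, using the hypothesis $\|x-y\|_2 \leq z$ and the non-negativity of $\|v\|_2$, the right-hand side is at most $z\|v\|_2$, which yields the desired bound after rearrangement.

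There is no real obstacle here: the statement is essentially a one-line consequence of Cauchy--Schwarz plus the triangle-style hypothesis $\|x-y\|_2\leq z$. The only thing to be careful about is the direction of the inequality — we need the upper bound on $(x-y)^\top v$ (not on its absolute value), but this follows immediately because a quantity is always at most its absolute value. No convexity, smoothness, or structural properties of the neural network or of $\ell$ are invoked, so this lemma can be stated and proved in complete generality for vectors $x, y, v$ in any inner product space.
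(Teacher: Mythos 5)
Your proof is correct, and it is noticeably more direct than the one in the paper. You rearrange the claim to $(x-y)^\top v \leq z\|v\|_2$ and apply Cauchy--Schwarz together with the hypothesis $\|x-y\|_2 \leq z$, which settles the matter in one line. The paper instead lower-bounds $y^\top v$ by the constrained minimum $\min_s\{s^\top v : \|x-s\|_2^2 \leq z^2\}$, passes to the Lagrangian $\max_{\lambda \geq 0}\min_s\{s^\top v + \lambda(\|x-s\|_2^2 - z^2)\}$, and evaluates that to $x^\top v - z\|v\|_2$. Both arguments are valid and give the same tight constant; the duality route makes the trust-region/ball interpretation of the bound explicit (the extremal $s$ is the point of the ball closest to $x$ in the $-v$ direction), which fits the paper's broader trust-region framing, whereas your Cauchy--Schwarz argument is shorter, requires no optimization machinery, and, as you note, works verbatim in any inner product space.
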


\begin{proof}
\begin{align*}
y^\top v &\geq \min_s \{s^\top v:~\|x-s\|_2^2\leq z^2\}\\
&= \max_{\lambda\geq0} \min_s \{s^\top v + \lambda(\|x-s\|_2^2- z^2)\}\\
& = x^\top v - z\|v\|_2
\end{align*}
where the final line follows from basic algebraic and calculus.
\end{proof}

\begin{lemma}\label{lm:ell_g_difference}
The 2-norm difference between the EB and IB gradients at $\theta^{(t)}$ with learning rate $\eta_t$ is bounded by $\eta_t\tilde{L}(\eta_t)\| g_i(\theta^{(t)};\eta_t)\|_2$.
\end{lemma}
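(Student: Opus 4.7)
The plan is to expose the IB gradient $g_i(\theta^{(t)};\eta_t)$ through the first-order optimality conditions of the layer-wise IB subproblem, and then bound the discrepancy with the EB gradient via Assumption \ref{ass:main}(c).

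\textbf{Step 1 (extract $g_i$ from the implicit update).} Writing the first-order stationarity condition for the minimization in (\ref{ib_eq:ISGD_NN_update_equation}), I would obtain, layer by layer,
\begin{equation*}
\theta_k^{(t+1)} - \theta_k^{(t)} = -\eta_t\bigl(\nabla \tilde{\ell}_k(\theta_k^{(t+1)};x,y,\theta_{-k}^{(t)}) + \mu\theta_k^{(t+1)}\bigr),
\end{equation*}
so the $k$-th block of the IB gradient is $g_{i,k}(\theta^{(t)};\eta_t) = \nabla\tilde{\ell}_k(\theta_k^{(t+1)};x,y,\theta_{-k}^{(t)}) + \mu\theta_k^{(t+1)}$. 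In particular, $\|\theta_k^{(t+1)}-\theta_k^{(t)}\|_2 = \eta_t\|g_{i,k}(\theta^{(t)};\eta_t)\|_2$, which is the quantitative link between the step we actually take and the norm of the IB gradient that drives the bound.

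\textbf{Step 2 (identify the EB gradient in the same coordinates).} Because $\tilde{\ell}_k$ is a Taylor linearization of the higher layers around $\theta^{(t)}$ that keeps $f_{\theta_k}^{(k)}$ intact, a direct chain-rule calculation shows $\nabla\tilde{\ell}_k(\theta_k^{(t)};x,y,\theta_{-k}^{(t)}) = \nabla_{\theta_k}\ell_i(\theta^{(t)})$. Thus the $k$-th block of the EB gradient (with the ridge term absorbed into the definitions so that it matches the regularized update in Section~2) is $\nabla\tilde{\ell}_k(\theta_k^{(t)};x,y,\theta_{-k}^{(t)}) + \mu\theta_k^{(t)}$, which has the same form as $g_{i,k}$ but evaluated at $\theta_k^{(t)}$ rather than $\theta_k^{(t+1)}$.

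\textbf{Step 3 (Lipschitz bound per layer, then Pythagoras).} Subtracting the two expressions and applying the Lipschitz bound of Assumption \ref{ass:main}(c) (with $\eta=\eta_t$) yields
\begin{equation*}
\|\nabla_{\theta_k}\ell_i(\theta^{(t)}) + \mu\theta_k^{(t)} - g_{i,k}(\theta^{(t)};\eta_t)\|_2 \leq \tilde{L}(\eta_t)\|\theta_k^{(t+1)}-\theta_k^{(t)}\|_2 = \eta_t\tilde{L}(\eta_t)\|g_{i,k}(\theta^{(t)};\eta_t)\|_2,
\end{equation*}
using Step 1 in the last equality. Squaring and summing over the layer index $k$ gives $\|\nabla\ell_i(\theta^{(t)})+\mu\theta^{(t)} - g_i(\theta^{(t)};\eta_t)\|_2^2 \leq \eta_t^2\tilde{L}(\eta_t)^2\|g_i(\theta^{(t)};\eta_t)\|_2^2$, and taking square roots delivers the claim.

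\textbf{Main obstacle.} The only place where care is needed is verifying that the EB and IB gradients differ only in the evaluation point of $\nabla\tilde{\ell}_k$ (plus a parallel $\mu(\theta_k^{(t)} - \theta_k^{(t+1)})$ contribution that combines cleanly with the Lipschitz term). That in turn hinges on the Taylor-expansion identity $\nabla\tilde{\ell}_k(\theta_k^{(t)};\cdot) = \nabla_{\theta_k}\ell_i(\theta^{(t)})$ from Step 2, which is the single nontrivial observation in the argument; everything else is a one-line Lipschitz estimate and a Pythagorean sum over the layers. I would also need to check that $\theta^{(t+1)}\in\bar{C}(\eta_t)$ so that Assumption \ref{ass:main}(c) applies, which follows from the definition of $\bar{C}(\eta)$ together with $\theta^{(t)}\in C$ (guaranteed by the restarting operator used throughout this section).
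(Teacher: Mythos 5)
Your argument follows the same route as the paper's proof: identify the layer-$k$ block of the IB gradient as $\nabla_{\theta_k}\tilde{\ell}_k$ evaluated at $\theta_k^{(t+1)}$ (via the first-order condition of the IB subproblem), identify the layer-$k$ block of the EB gradient as $\nabla_{\theta_k}\tilde{\ell}_k$ evaluated at $\theta_k^{(t)}$ (via the Taylor-construction identity $\nabla_{\theta_k}\tilde{\ell}_k(\theta_k^{(t)};\cdot)=\nabla_{\theta_k}\ell_i(\theta^{(t)})$), then apply Assumption~\ref{ass:main}(c) per layer, sum the squares over $k$, and use $\|\theta^{(t+1)}-\theta^{(t)}\|_2=\eta_t\|g_i\|_2$ to close. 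One small point of imprecision: you explicitly carry the ridge terms $\mu\theta_k^{(t)}$ and $\mu\theta_k^{(t+1)}$ and claim they ``combine cleanly'' with the Lipschitz estimate, but in fact the map $\theta_k\mapsto\nabla\tilde{\ell}_k(\theta_k)+\mu\theta_k$ is only $(\tilde{L}(\eta_t)+\mu)$-Lipschitz in general, which would inflate the constant in the lemma to $\eta_t(\tilde{L}(\eta_t)+\mu)$; the paper's own proof simply drops the ridge contributions from both gradients, which is what makes the stated constant $\eta_t\tilde{L}(\eta_t)$ come out. So your proposal matches the paper modulo that bookkeeping of $\mu$.
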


\begin{proof}
Let $\nabla \ell_{ik}(\theta)$ denote the components of $\nabla \ell_i(\theta)$ corresponding to the parameters of the $k^{th}$ layer $\theta_k$, i.e. $\nabla \ell_i(\theta) = (\nabla \ell_{i1}(\theta), ..., \nabla \ell_{id}(\theta))$. 
By construction 
\begin{align*}
\nabla \ell_{ik}(\theta^{(t)}) &= \nabla_{\theta_k} \tilde{\ell}_k(\theta_k; x,y,\theta^{(t)}_{-k})\big|_{\theta_k = \theta_k^{(t)}}\\
g_i(\theta^{(t)};\eta_t) &= \nabla_{\theta_k} \tilde{\ell}_k(\theta_k; x,y,\theta^{(t)}_{-k})\big|_{\theta_k = \theta_k^{(t+1)}}
\end{align*} 
where, in a slight abuse of notation, $\theta^{(t+1)}$ refers to the value of the next IB iterate \emph{before} the application of the restarting operator. 
By the Lipschitz assumption on $\tilde{\ell}_k$ we have
\begin{align*}
\|\nabla \ell_i(\theta^{(t)}) - g_i(\theta^{(t)};\eta_t)\|_2^2 &= \sum_{k=1}^d\|\nabla \ell_{ik}(\theta^{(t)}) - g_{ik}(\theta^{(t)};\eta_t)\|_2^2\\
&= \sum_{k=1}^d\|\nabla_{\theta_k} \tilde{\ell}_k(\theta_k; x,y,\theta^{(t)}_{-k})\big|_{\theta_k = \theta_k^{(t)}} - \nabla_{\theta_k} \tilde{\ell}_k(\theta_k; x,y,\theta^{(t)}_{-k})\big|_{\theta_k = \theta_k^{(t+1)}}\|_2^2\\
& \leq \sum_{k=1}^d\tilde{L}(\eta_t)^2\|\theta_k^{(t)} - \theta_k^{(t+1)}\|_2^2\\
& = \tilde{L}(\eta_t)^2\|\theta^{(t)} - \theta^{(t+1)}\|_2^2\\
& = \tilde{L}(\eta_t)^2\|-\eta_t g_i(\theta^{(t)};\eta_t)\|_2^2\\
& = \eta_t^2\tilde{L}(\eta_t)^2\| g_i(\theta^{(t)};\eta_t)\|_2^2.
\end{align*}

\end{proof}

\begin{lemma}\label{lm:bounded_g}
The 2-norm of the IB gradient $g_i(\theta;\eta_t)$ at any point in $C$ is bounded by $B_g(\eta_t) = \frac{B_\ell(\eta_t)}{1-\eta_t \tilde{L}(\eta_t)}$.
\end{lemma}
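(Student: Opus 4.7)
The plan is to bound the IB gradient by the corresponding EB gradient via a one-line triangle-inequality manipulation, and then invoke a per-sample version of Lemma~\ref{lm:bounded_gradients}. Concretely, I would first write
$$\|g_i(\theta;\eta_t)\|_2 \leq \|\nabla \ell_i(\theta)\|_2 + \|g_i(\theta;\eta_t) - \nabla \ell_i(\theta)\|_2,$$
then invoke Lemma~\ref{lm:ell_g_difference} to replace the second term by $\eta_t \tilde{L}(\eta_t)\|g_i(\theta;\eta_t)\|_2$. Moving this to the left-hand side gives
$$(1 - \eta_t \tilde{L}(\eta_t))\,\|g_i(\theta;\eta_t)\|_2 \leq \|\nabla \ell_i(\theta)\|_2,$$
after which dividing through and bounding $\|\nabla \ell_i(\theta)\|_2 \leq B_\ell(\eta_t)$ for $\theta \in C$ yields exactly $\|g_i(\theta;\eta_t)\|_2 \leq B_\ell(\eta_t)/(1-\eta_t\tilde{L}(\eta_t)) = B_g(\eta_t)$.

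The per-sample bound $\|\nabla \ell_i(\theta)\|_2 \leq B_\ell(\eta_t)$ is obtained by repeating the argument of Lemma~\ref{lm:bounded_gradients} for $\ell_i$ rather than $\ell$. Using the block decomposition $\nabla \ell_i(\theta) = (\nabla_{\theta_1} \tilde{\ell}_1, \ldots, \nabla_{\theta_d} \tilde{\ell}_d)$ established in the proof of Lemma~\ref{lm:ell_g_difference}, the per-layer Lipschitz bound from Assumption~\ref{ass:main}(c) gives a per-sample Lipschitz constant (summed over layers), which combined with the triangle inequality around $\theta = 0$ produces the same form of bound as in Lemma~\ref{lm:bounded_gradients}; absorbing the per-sample constant $\|\nabla \ell_i(0)\|_2$ into $B_\ell(\eta_t)$ keeps the notation consistent. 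No new assumption is needed beyond what Assumption~\ref{ass:main} already provides.

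The only subtle point, and the main obstacle to address, is verifying that the denominator $1 - \eta_t \tilde{L}(\eta_t)$ is strictly positive, so that the division is valid and $B_g(\eta_t)$ is a well-defined non-negative quantity. This is a direct consequence of Assumption~\ref{ass:main}(d): the learning rate sequence satisfies $\eta_1 \tilde{L}(\eta_1) < 1$ and is monotonically decreasing, while $\tilde{L}(\eta_t) \leq \tilde{L}(\eta_1)$ by the monotonicity of $\tilde{L}$ in $\eta$, so $\eta_t \tilde{L}(\eta_t) \leq \eta_1 \tilde{L}(\eta_1) < 1$ for every $t \geq 1$. Once this positivity is noted, the rest of the lemma is a routine algebraic rearrangement, and the resulting bound $B_g(\eta_t)$ is precisely the quantity needed to make the extended level-set $\bar{C}(\eta)$ well-defined and feed into the subsequent convergence analysis.
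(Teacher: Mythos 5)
Your argument is essentially the paper's: the same triangle-inequality decomposition of $\|g_i(\theta;\eta_t)\|_2$ around $\nabla\ell_i(\theta)$, the same invocation of Lemma~\ref{lm:ell_g_difference} to bound the difference term by $\eta_t\tilde{L}(\eta_t)\|g_i(\theta;\eta_t)\|_2$, the same rearrangement, and the same observation that Assumption~\ref{ass:main}(d) together with the monotonicity of $\tilde{L}$ yields $\eta_t\tilde{L}(\eta_t)<1$ so the division is valid. The subtlety you flag --- that Lemma~\ref{lm:bounded_gradients} bounds the averaged gradient $\nabla\ell$ rather than the per-sample $\nabla\ell_i$ --- is real and is silently elided in the paper's proof, which simply writes $B_\ell(\eta_t)$ in place of $\|\nabla\ell_i(\theta)\|_2$; your proposed patch via Assumption~\ref{ass:main}(c) is not itself watertight (that assumption is a block-coordinate Lipschitz condition in $\theta_k$ with $\theta_{-k}$ frozen, so summing over $k$ does not produce a joint Lipschitz constant for $\nabla\ell_i$ on all of $\theta$; a clean fix would add a per-sample analogue of Assumption~(b)), but as the paper leaves the same gap unaddressed, your proof is no less complete than the original.
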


\begin{proof}
By Lemmas~\ref{lm:bounded_gradients},~\ref{lm:ell_g_difference} and the triangle inequality,
\begin{align}
\|g_i(\theta;\eta_t)\|_2 &= \|g_i(\theta;\eta_t) - \nabla \ell_i(\theta) + \nabla \ell_i(\theta)\|_2\nonumber\\
&\leq \|g_i(\theta;\eta_t) - \nabla \ell_i(\theta)\|_2 + \|\nabla \ell_i(\theta)\|_2\nonumber\\
&\leq \eta_t \tilde{L}(\eta_t)\| g_i(\theta;\eta_t)\|_2 + B_\ell(\eta_t).\label{ib_eq:lemma_subtract}
\end{align}
Note that $\eta_1 \tilde{L}(\eta_1)<1$ by assumption. Since $\eta_t\leq \eta_1$ and $\tilde{L}(\eta)$ is monotonically decreasing in $\eta$, we have that $\eta_t \tilde{L}(\eta_t)<1$ for all $t\geq 1$. Thus subtracting $\eta_t \tilde{L}(\eta_t)\| g_i(\theta;\eta_t)\|_2$ from both sides of (\ref{ib_eq:lemma_subtract}) and dividing by $1-\eta_t \tilde{L}(\eta_t)>0$ yields the desired result.
\end{proof}


\begin{proof}[Proof of Theorem~\ref{thm:main}]
We upper bound the loss of restarting-IB as
\begin{align}
\mathbb{E}[\ell(\theta^{(t+1)})] &= \mathbb{E}[\ell(R(\theta^{(t)} - \eta_t g_i(\theta^{(t)};\eta_t)))] \nonumber\\
&\leq \mathbb{E}[\ell(\theta^{(t)} - \eta_t g_i(\theta^{(t)};\eta_t))] \nonumber\\
&\leq \mathbb{E}[\ell(\theta^{(t)}) - \eta_t g_i(\theta^{(t)};\eta_t)^\top \nabla \ell(\theta^{(t)}) + \frac{1}{2}\eta_t^2 L(\eta) \|g_i(\theta^{(t)};\eta_t)\|_2^2] \nonumber\\
&= \mathbb{E}[\ell(\theta^{(t)})] - \eta_t \mathbb{E}[g_i(\theta^{(t)};\eta_t)^\top \nabla \ell(\theta^{(t)})] + \frac{1}{2}\eta_t^2 L(\eta) \mathbb{E}[\|g_i(\theta^{(t)};\eta_t)\|_2^2].\label{ib_eq:ell_taylor_proof}
\end{align}
Lets focus on the second term in (\ref{ib_eq:ell_taylor_proof}), $\mathbb{E}[g_i(\theta^{(t)};\eta_t)^\top \nabla \ell(\theta^{(t)})]$. 
By Lemmas~\ref{lm:bound_inner_product} and \ref{lm:ell_g_difference} we have that
\begin{align*}
g_i(\theta^{(t)};\eta_t)^\top \nabla \ell(\theta^{(t)}) \geq \nabla \ell_{i}(\theta^{(t)})^\top \nabla \ell(\theta^{(t)}) - \eta_t \tilde{L}(\eta)\| g_i(\theta^{(t)};\eta_t)\|_2\|\nabla \ell(\theta^{(t)})\|_2
\end{align*}
and so 
\begin{align*}
\mathbb{E}[g_i(\theta^{(t)};\eta_t)^\top \nabla \ell(\theta^{(t)})] &\geq  \|\nabla \ell(\theta^{(t)})\|_2^2- \eta_t \tilde{L}(\eta) \mathbb{E}[\| g_i(\theta^{(t)};\eta_t)\|_2]\|\nabla \ell(\theta^{(t)})\|_2\\
&\geq  \|\nabla \ell(\theta^{(t)})\|_2^2- \eta_t \tilde{L}(\eta) B_g(\eta_t)B_\ell(\eta_t)
\end{align*}
where $B_\ell(\eta_t)$ is as defined in Lemma~\ref{lm:bounded_gradients} and $B_g(\eta_t)$ in Lemma~\ref{lm:bounded_g}.
Moving onto the third term in (\ref{ib_eq:ell_taylor_proof}), we have 
\begin{equation*}
\mathbb{E}[\|g_i(\theta^{(t)};\eta_t)\|_2^2]  \leq B^2_g(\eta_t)
\end{equation*}
by Lemma~\ref{lm:bounded_g}. 
Putting all the terms in (\ref{ib_eq:ell_taylor_proof}) together
\begin{align*}
\mathbb{E}[\ell(\theta^{(t+1)})] &\leq \mathbb{E}[\ell(\theta^{(t)})] - \eta_t (\|\nabla \ell(\theta^{(t)})\|_2^2- \eta_t \tilde{L}(\eta_t) B_g(\eta_t)B_\ell(\eta_t)) + \frac{1}{2}\eta_t^2 L(\eta_t) B^2_g(\eta_t)\\
&= \mathbb{E}[\ell(\theta^{(t)})] - \eta_t \|\nabla \ell(\theta^{(t)})\|_2^2+ \eta_t^2(\tilde{L}(\eta_t)B_g(\eta_t) B_\ell(\eta_t)  + \frac{1}{2}L(\eta_t)B^2_g(\eta_t) )\\
&\leq \mathbb{E}[\ell(\theta^{(t)})] - \eta_t \|\nabla \ell(\theta^{(t)})\|_2^2+ \eta_t^2(\tilde{L}(\eta_1)B_g(\eta_1) B_\ell(\eta_1)  + \frac{1}{2}L(\eta_1)B_g(\eta_1)^2 )
\end{align*}
where we have used the assumption that $L$ and $\tilde{L}$ are monotonically decreasing in $\eta$ (that $B_\ell$ and $B_g$ are also monotonically decreasing in $\eta$ follows from the assumptions on $L$ and $\tilde{L}$). Using a telescoping sum and rearranging yields
\begin{align}
\sum_{t=1}^T \eta_t \mathbb{E}[\|\nabla \ell(\theta^{(t)})\|_2^2] &\leq \mathbb{E}[\ell(\theta^{(1)})] - \mathbb{E}[\ell(\theta^{(T+1)})]\nonumber\\
&\qquad+(\tilde{L}(\eta_1)B_g(\eta_1) B_\ell(\eta_1)  + \frac{1}{2}L(\eta_1)B_g(\eta_1)^2 )\sum_{t=1}^T \eta_t^2 \nonumber\\
&\leq \mathbb{E}[\ell(\theta^{(1)})]+(\tilde{L}(\eta_1)B_g(\eta_1) B_\ell(\eta_1)  + \frac{1}{2}L(\eta_1)B_g(\eta_1)^2 )\sum_{t=1}^T \eta_t^2 \label{ib_eq:bounded_terms}
\end{align}
where the second inequality follows from the assumption that  $\ell(\theta) \geq 0$. Both of the terms in (\ref{ib_eq:bounded_terms}) are deterministically bounded for all $T$ and so it must be the case that 
\begin{equation*}
\sum_{t=1}^\infty \eta_t \mathbb{E}[\|\nabla \ell(\theta^{(t)})\|_2^2] < \infty.
\end{equation*}
Finally, by the assumption that $\sum_{t=1}^\infty \eta_t = \infty$ we have
\begin{equation*}
\lim_{T\to \infty}\frac{\sum_{t=1}^T \eta_t \mathbb{E}[\|\nabla \ell(\theta^{(t)})\|_2^2]}{\sum_{t=1}^T \eta_t} = 0 .
\end{equation*}

\end{proof}

\section{Experimental setup}\label{app:experimental_setup}
In this section we describe the datasets, hyperparameters, run times and results of the experiments from Section \ref{ib_sec:experiments} in greater detail. 

\subsection{Datasets}
The experiments use three types of dataset: MNIST for image classification and autoencoding, 4 polyphonic music dataset for RNN prediction and 121 UCI datasets for classification. These are standard benchmark datasets that are often used in the literature: MNIST is arguably the most used dataset in machine learning \cite{lecun1998mnist}, the polyphonic music datasets are a standard for testing real world RNNs \cite{martens2011learning, pascanu2013difficulty, boulanger2012modeling} and the  UCI datasets are an established benchmark for comparing the performance of classification algorithms \cite{fernandez2014we, klambauer2017self}.

The sources of the datasets are given in Table \ref{tb:data_sources} along with a basic description of their characteristics in Table \ref{tb:characteristics}. The MNIST and UCI datasets were pre-split into training and test sets. For the music datasets we used a random 80\%-20\% train-test split.

\begin{table}[h]
\centering
\caption{Data sources}
\label{tb:data_sources}
\begin{tabular}{p{0.3\textwidth}p{0.7\textwidth}}
\toprule
{Dataset} & {Source url} \tabularnewline
\midrule
MNIST & \url{http://yann.lecun.com/exdb/mnist/}
\tabularnewline
Polyphonic music & \url{http://www-etud.iro.umontreal.ca/~boulanni/icml2012}. \tabularnewline
UCI classification & \url{https://github.com/bioinf-jku/SNNs}. \tabularnewline
\bottomrule
\end{tabular}
\end{table}

\begin{table}[h]
\centering
\caption{Data characteristics. An asterisk $^\ast$ indicates the average length of the musical piece. 88 is the number of potential notes in each chord. For information on the UCI classification datasets, see \cite[Appendix 4.2]{klambauer2017self}.\\}
\label{tb:characteristics}
\begin{tabular}{lllll}
\toprule
{Dataset} & {Train} & {Test} & {Dimension} & {Classes} \tabularnewline
\midrule
MNIST & 60,000 & 10,000 & 28x28 & 10 \tabularnewline
JSB Chorales & 229 ($60^\ast$) & 77 ($61^\ast$) &  88 &  88  \tabularnewline
MuseData & 524 ($468^\ast$) & 124 ($519^\ast$) &  88 & 88
\tabularnewline
Nottingham & 694 ($254^\ast$) & 170 ($262^\ast$) &  88 & 88  \tabularnewline
Piano-midi.de & 87 ($873^\ast$) & 25 ($761^\ast$) &  88 & 88  \tabularnewline
\bottomrule
\end{tabular}
\end{table}

\subsection{Loss metrics}
For all of the experiments we used standard loss metrics. For image classification we used the cross-entropy and for autoencoders the mean squared error. As suggested by Boulanger-Lewandowski et al.  \cite{boulanger2012modeling} on their website \url{http://www-etud.iro.umontreal.ca/~boulanni/icml2012}, we used the expected frame level accuracy as the loss for the music prediction problems. That is, at each time step the RNN outputs a vector in $[0,1]^{88}$ representing the probability of each of the 88 notes in the next chord being played or not. This naturally defines a likelihood of the next chord. We use the log of this likelihood as our loss. Also as suggested by Boulanger-Lewandowski et al.   \cite{boulanger2012modeling}, the loss for each piece is divided by its length, so that the loss is of similar magnitude for all pieces.

\subsection{Architectures}
We endeavored to use standard architectures so that the experiments would be a fair representation how the algorithms might perform in practical applications. The architecture for each type of dataset is given below.

\paragraph{MNIST classification. } The architecture is virtually identical to that given in the MNIST classification Pytorch tutorials available at \url{https://github.com/pytorch/examples/blob/master/mnist/main.py}.
The input is a $28 \times 28$ dimensional image. We first apply two 2d-convolutions with a relu activation function and max-pooling. The convolution kernel size is 5 with 10 filters for the first convolution and 20 filters for the second, and the max-pooling kernel size is 2. After that we apply one arctan layer with input size 320 and output size 50, followed by dropout at a rate of 0.5 and a relu layer with output size 10. This is fed into the softmax function to get probabilities over the 10 classes.

Since IB is not applicable to convolutional layers, for the IB implementation we use EB updates for the convolutional layers and IB for the arctan and relu layers. For the EB implementation, we use EB updates for all of the layers.

\paragraph{MNIST autoencoder. } The autoencoder architecture just involves relus and has a 784:500:300:100:30:100:300:500:784 structure. This is similar to, but deeper than, the structure used in \cite{martens2010deep}.

\paragraph{Music prediction. } The simple RNN architecture used is identical to that in \cite{pascanu2013difficulty}, except for the fact that we use an arctan activation function instead of tanh (since IB is compatible with arctan but not with tanh). 300 hidden units are used.

\paragraph{UCI datasets. } The architecture used for the UCI datasets consists of three arctan layers followed by a linear layer and the softmax. The number of nodes in the arctan layers are equal to the number of features in the dataset, with the final linear layer having input size equal to the number of features and output size equal to the number of classes. This architecture is based on \cite{klambauer2017self}, who also use the same UCI datasets with a similar architecture.

\subsection{Hyperparameters and initialization details}
The hyperparameters and parameter initializations schemes used are as follows:
\begin{itemize}
\item Batch size: 100 (except for RNNs which had a batch-size of 1)
\item Dropout: None (unless otherwise stated)
\item Momentum: None
\item Ridge-regularization ($\mu$): 0
\item Weight matrix initialization:  Each element is sampled independently from \\$unif\left(-\sqrt{\frac{6}{n+m}},\sqrt{\frac{6}{n+m}}\right)$, where $n$ = input size, $m$ = output size to layer. This follows advice of \cite[p. 299]{Goodfellow-et-al-2016} 
\item Bias initialization: 0. Again, this follows advice of \cite[p. 299]{Goodfellow-et-al-2016}
\end{itemize}

\subsection{Learning rates}\label{ib_sec:learning_rates}
The process used to decide on the learning rates for the MNIST and music experiments is as follows. First a very coarse grid of learning rates was tested using EB to ascertain the range of learning rates where EB performed reasonably well. We then constructed a finer grid of learning rates around where EB performed well. The finer grid was constructed so that EB demonstrated in a U-shape of losses, with lower and higher learning rates having higher losses than learning rates in the middle, as in \cite[Fig 11.1, p. 425]{Goodfellow-et-al-2016}. It was this finer grid that was used to generate the final results. Note that at no stage was IB involved in constructing the grid and thus, if anything, the grid is biased in favor of~EB.

For the UCI datasets the following set of learning rates were used: 0.001, 0.01, 0.1, 0.5, 1.0, 3.0, 5.0, 10.0, 30.0, 50.0. This is quite a coarse grid over a very large range of values.

\subsection{Clipping}
Clipping was not used except for those experiments where the effect of clipping was explicitly being investigated. When clipping was used,  the gradients were clipping according to their norm (opposed to each component being clipped separately). For EB we applied clipping in the usual way: first calculating the gradient, clipping it and then taking the step using the clipped gradient. To implement clipping for IB we used an alternative definition of the IB step from  (\ref{ib_eq:Implicit_SGD_formula}): $\theta^{(t+1)} = \theta^{(t)} - \eta_t (\nabla_\theta \ell_i(\theta^{(t+1)}) +\mu \theta^{(t+1)})$, where we highlight that the gradient is evaluated at the next, not current, value of $\theta$. The IB gradient can be inferred using
\begin{equation}\label{ib_eq:IB_gradient}
\nabla_\theta \ell_i(\theta^{(t+1)})  +\mu \theta^{(t+1)} = (\theta^{(t)} - \theta^{(t+1)})/\eta_t
\end{equation}
where $\theta^{(t+1)}$ is calculated using (\ref{ib_eq:ISGD_NN_update_equation}). When applying clipping to IB we first calculate $\theta^{(t+1)}$ using the IB update, infer the IB gradient using (\ref{ib_eq:IB_gradient}), clip it, and finally take a step equal to the learning rate multiplied by the clipped gradient.


\section{Results}\label{app:results}
Here we will give the full results of all of the experiments. We begin with giving the run times of each experiment after which we present  the performance on the MNIST and music datasets. Finally we give results on the UCI datasets.

\subsection{Run times}\label{app:runtimes}
In Section \ref{ib_sec:runtime_flop_bounds} upper bounds on the relative increase in run time for IB as compared to EB were derived. The bounds  for each experiment are displayed in Table \ref{tbl:runtimes} along with the empirically measured relative run times of our basic Pytorch implementation. The Pytorch run times are higher than in the theoretical upper bounds. This shows that IB could be more efficiently implemented.

\begin{table}[t]
\centering
\caption{Theoretical and empirical relative run time of IB vs EB. Empirical measured on AWS p2.xlarge with our basic Pytorch implementation.\\}
\label{tbl:runtimes}
\begin{tabular}{lcc}
{Dataset} & {Theoretical upper bound} & {Empirical}\tabularnewline
\midrule
MNIST classification & 6.27\% &16.82\%\tabularnewline
MNIST autoencoder & 0.60\% &99.68\% \tabularnewline
JSB Chorals & 11.33\% &152.51\% \tabularnewline
MuseData & 11.33\% &207.48\% \tabularnewline
Nottingham & 11.33\% &215.85\% \tabularnewline
Piano-midi.de & 11.33\% &213.98\% \tabularnewline
UCI classification sum & - &58.99\% \tabularnewline
\bottomrule
\end{tabular}
\end{table}



\subsection{Results from MNIST and music experiments}\label{app:all_plots}
In this section the plots for all of the experiments are presented. For each experiment we have multiple plots: a line plot showing the mean performance along with 1 standard deviation error bars; a scatter plot showing the performance for each random seed; and, where applicable, line and scatter plots showing the difference between the experiment with and without clipping. In general 5 seeds are used per learning rate, although for MNIST-classification we use 20 seeds.

\vspace{3cm}

The experiments are presented in the following order
\begin{enumerate}
\item MNIST classification without clipping
\item MNIST classification with clipping threshold = 1.0
\item MNIST autoencoder without clipping
\item MNIST autoencoder with clipping threshold = 1.0
\item JSB Chorales  without clipping
\item JSB Chorales with clipping threshold = 8.0
\item JSB Chorales with clipping threshold = 1.0
\item JSB Chorales with clipping threshold = 0.1
\item MuseData without clipping
\item MuseData with clipping threshold = 8.0
\item Nottingham without clipping
\item Nottingham with clipping threshold = 8.0
\item Piano-midi.de without clipping
\item Piano-midi.de with clipping threshold = 8.0
\item Piano-midi.de with clipping threshold = 1.0
\item Piano-midi.de with clipping threshold = 0.1
\item Piano-midi.de with clipping threshold = 0.01
\end{enumerate}

For the MNIST experiments we only used a clipping threshold of 1.0.
 For the music datasets we first use a clipping threshold of 8.0 as suggested by the authors of \cite{pascanu2013difficulty}. As this clipping threshold didn't much affect the performance of either EB or IB, we also considered lower clipping thresholds.

It is evident from the plots without clipping that on Piano-midi.de the algorithms are less well converged than the other datasets (which is probably due to Piano-midi.de having only 87 training datapoints). It was therefore of interest to see if further clipping could help stabilize the algorithms on Piano-midi.de. We can see from the random seed scatter plots that the effect of clipping helped a little with stabilization, but not to the extent that the results were significantly better. As JSB Chorales has the second fewest number of datapoints, we also tried lower clipping thresholds on it, and found it to often hurt the performance of EB as much as it helped (depending on the random seed).


\clearpage
\subsection{MNIST classification}
\begin{figure}[h]
\centering
\begin{minipage}{.49\textwidth}
  \centering
  \includegraphics[width=.79\linewidth]{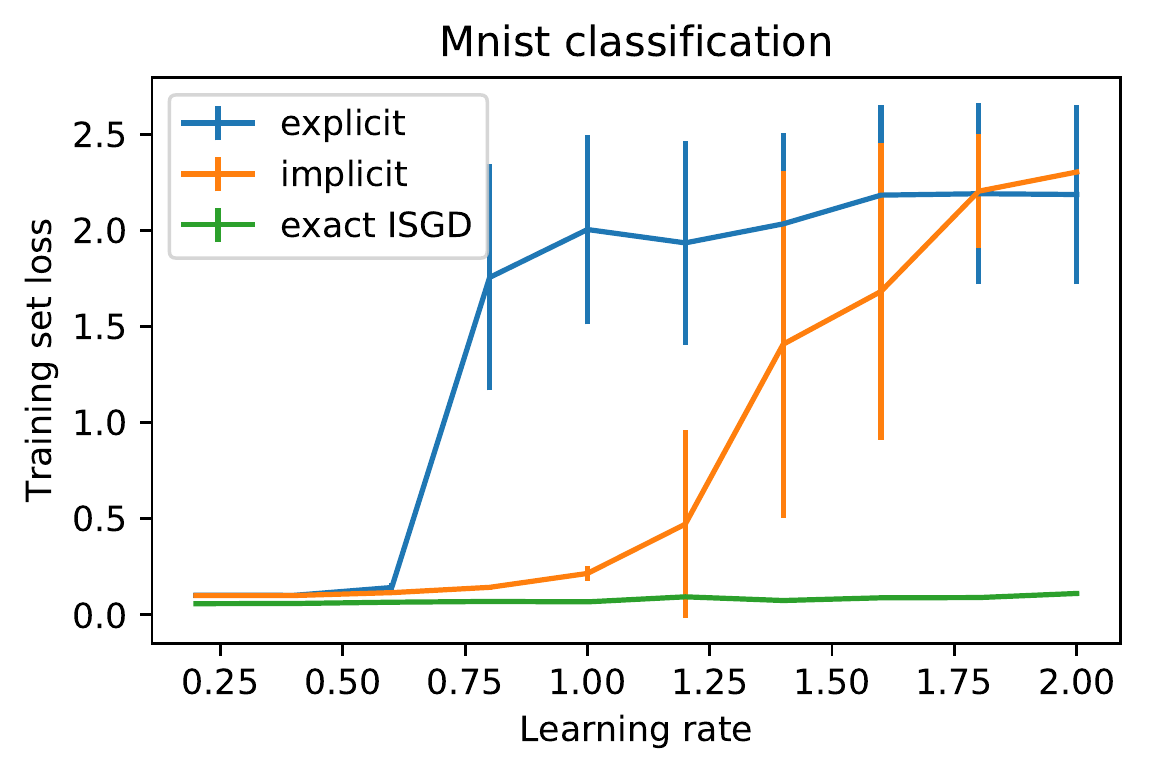}
\end{minipage}%
\hfill
\begin{minipage}{.49\textwidth}
  \centering
  \includegraphics[width=.79\linewidth]{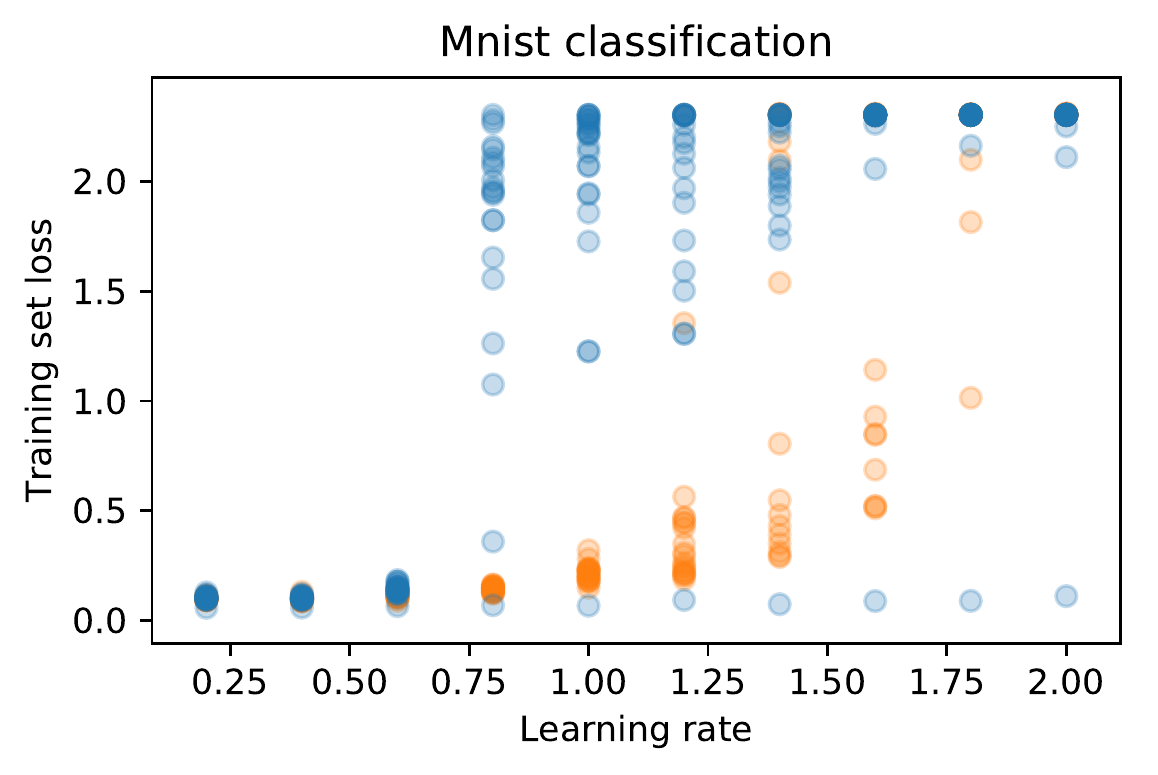}
\end{minipage}
\begin{minipage}{.49\textwidth}
  \centering
  \includegraphics[width=.79\linewidth]{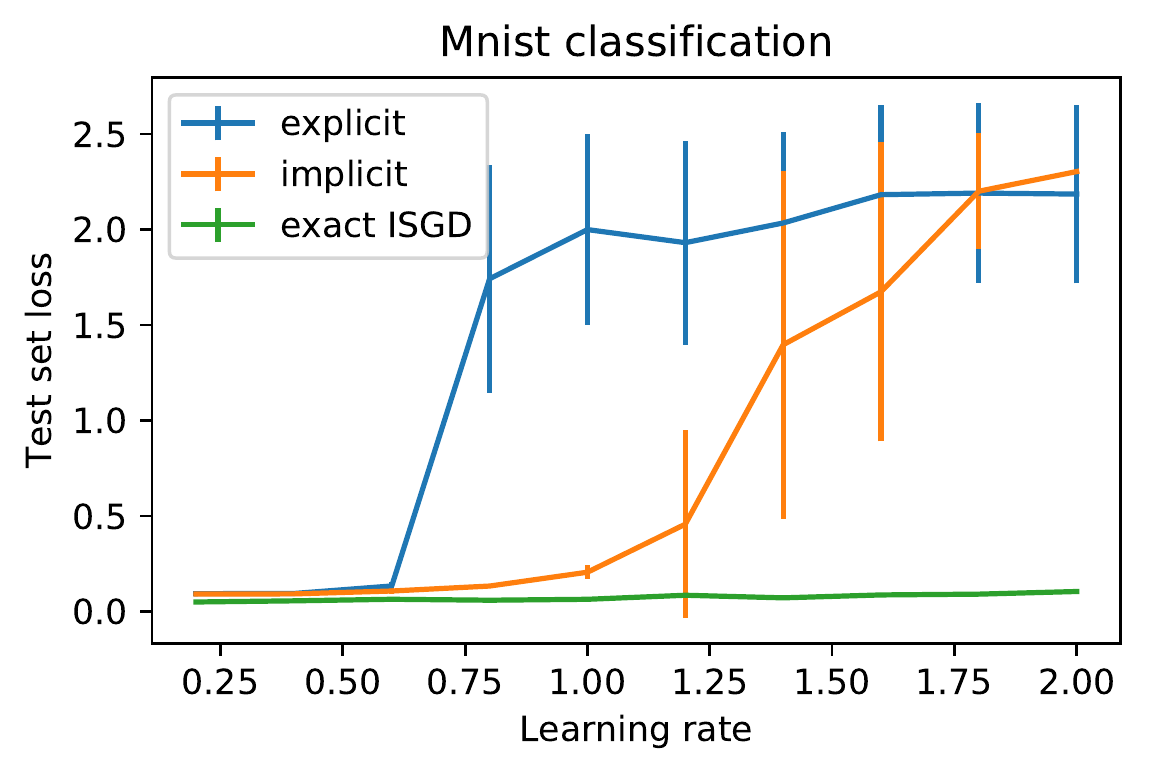}
\end{minipage}%
\hfill
\begin{minipage}{.49\textwidth}
  \centering
  \includegraphics[width=.79\linewidth]{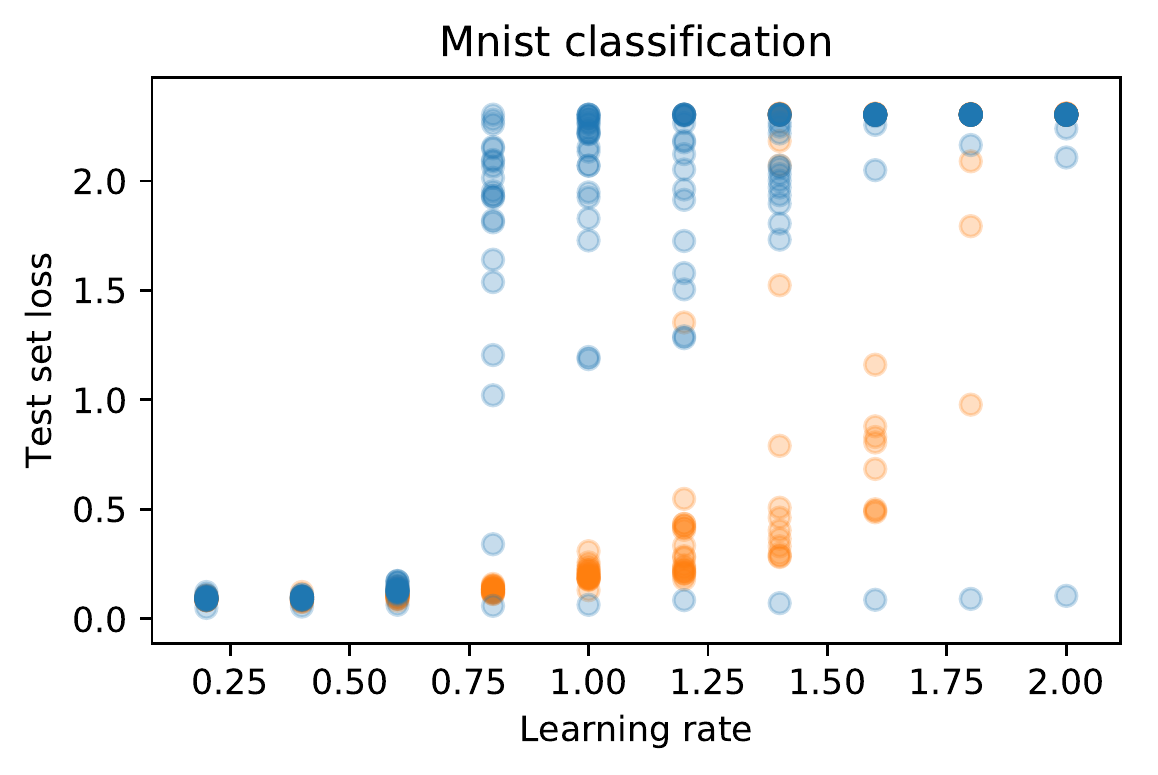}
\end{minipage}
\begin{minipage}{.49\textwidth}
  \centering
  \includegraphics[width=.79\linewidth]{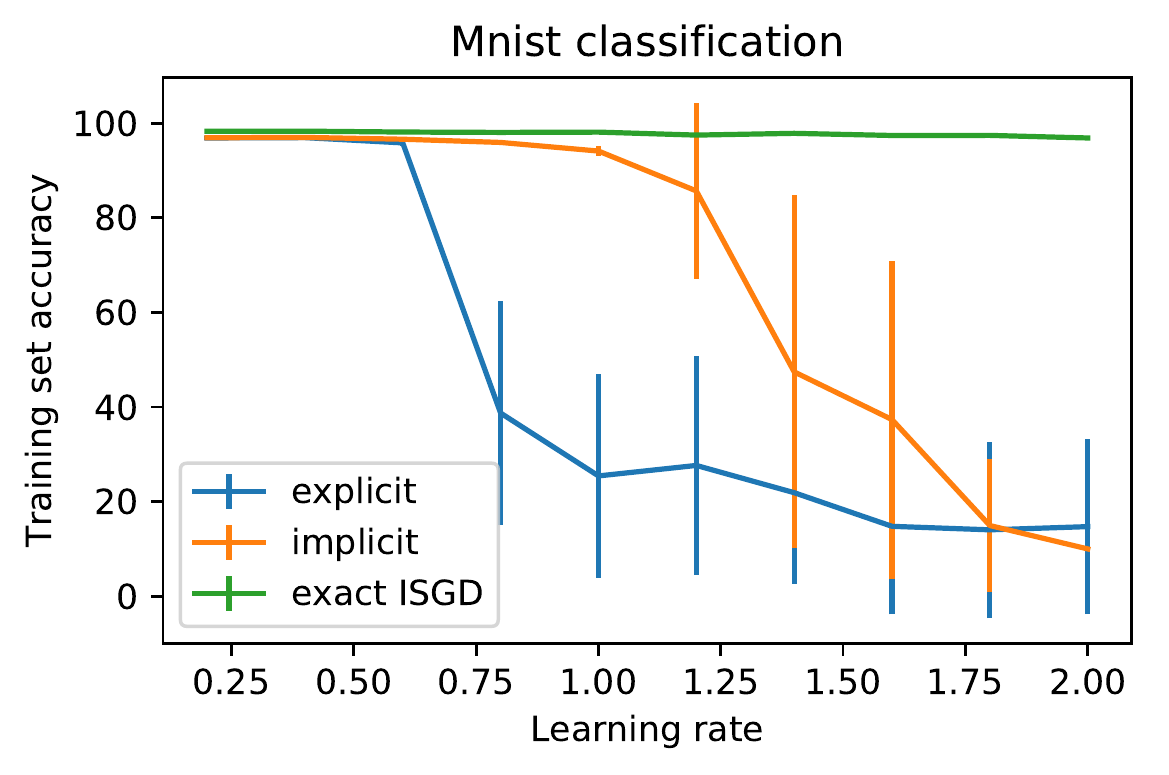}
\end{minipage}%
\hfill
\begin{minipage}{.49\textwidth}
  \centering
  \includegraphics[width=.79\linewidth]{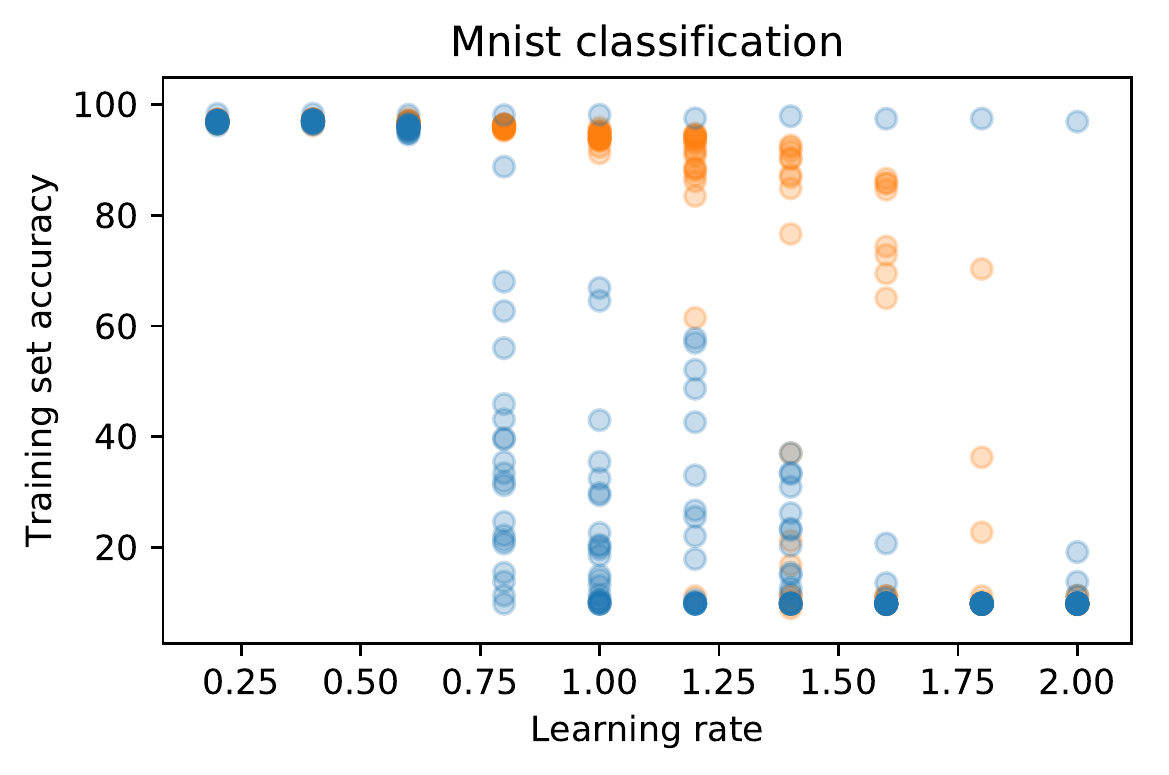}
\end{minipage}
\begin{minipage}{.49\textwidth}
  \centering
  \includegraphics[width=.79\linewidth]{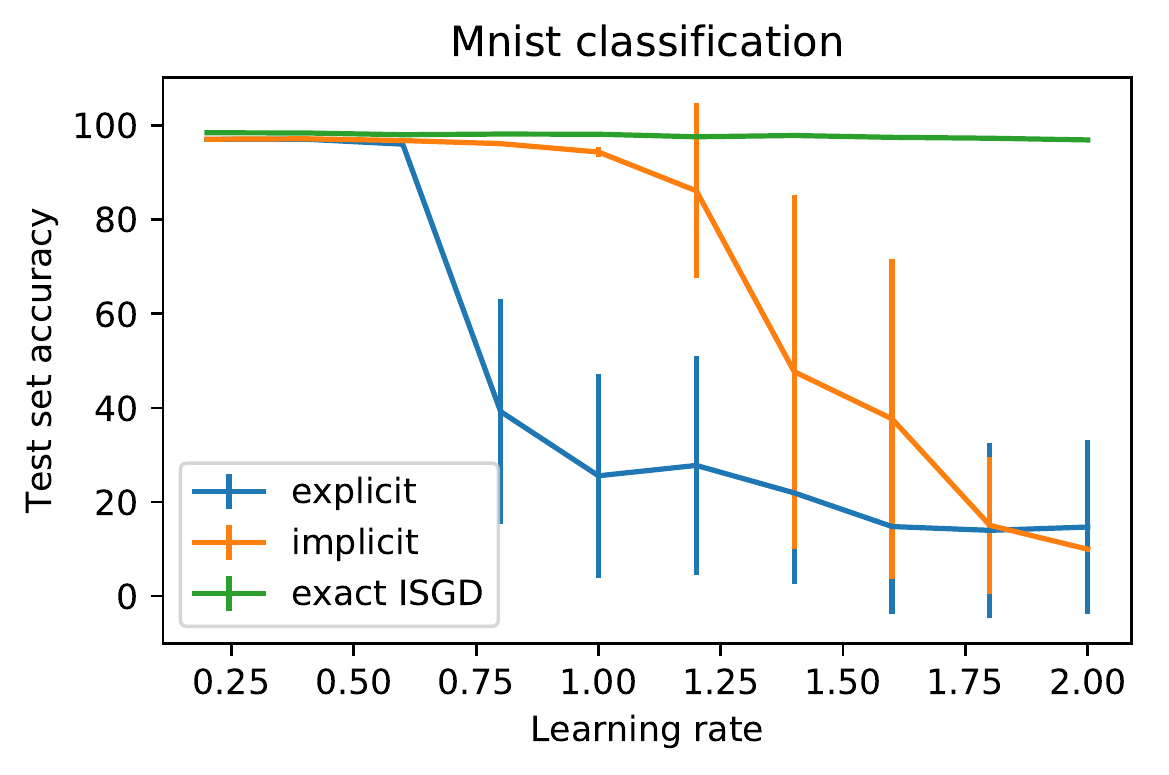}
\end{minipage}%
\hfill
\begin{minipage}{.49\textwidth}
  \centering
  \includegraphics[width=.79\linewidth]{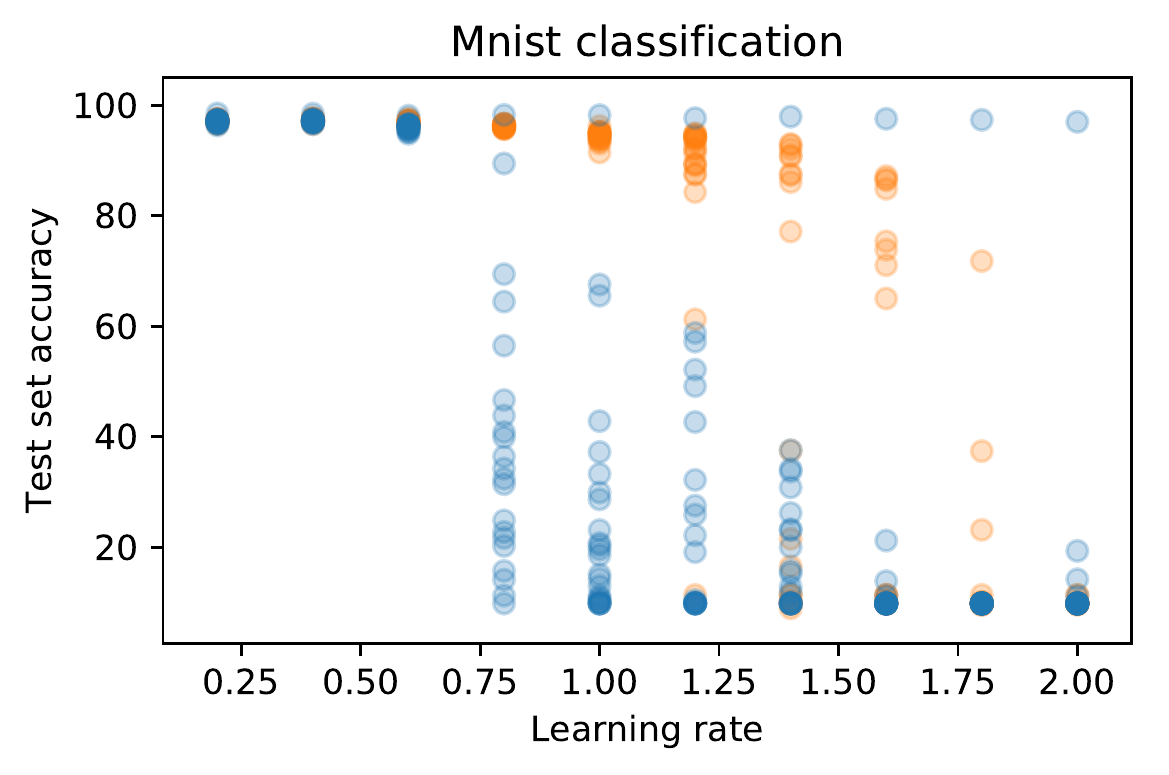}
\end{minipage}
\end{figure}

%

\begin{figure}[h]
\centering
\begin{minipage}{.49\textwidth}
  \centering
  \includegraphics[width=.79\linewidth]{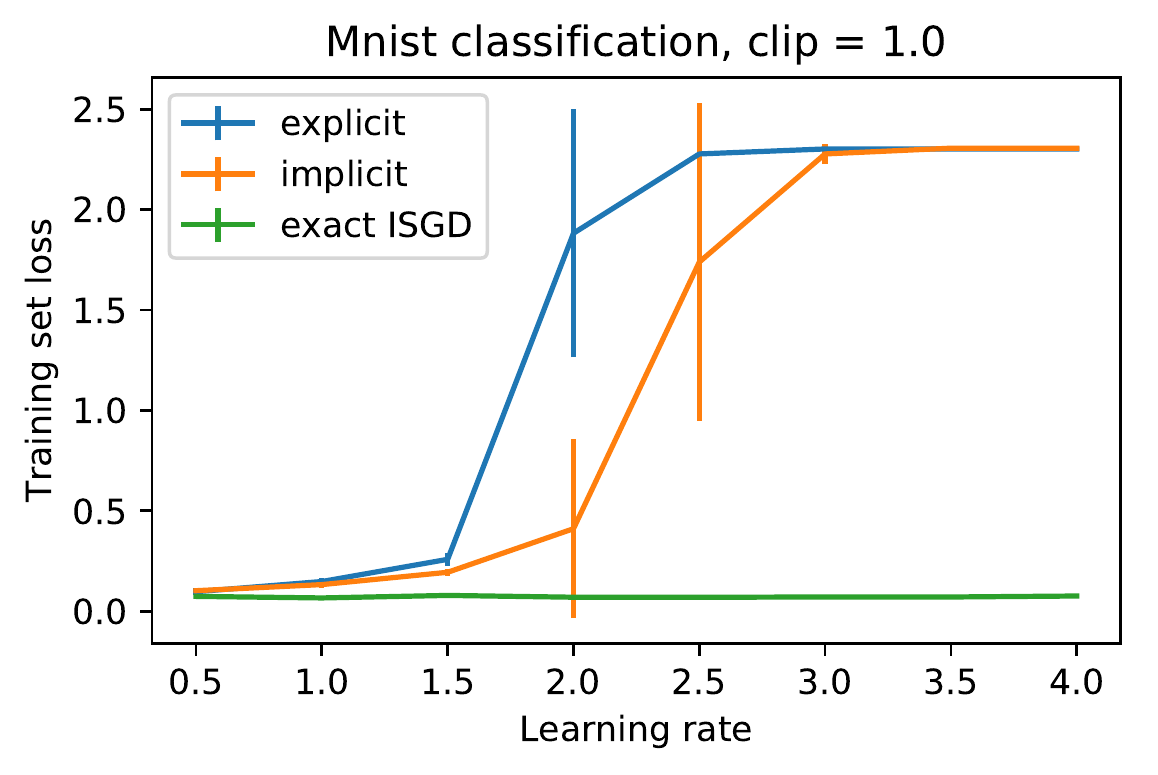}
\end{minipage}%
\hfill
\begin{minipage}{.49\textwidth}
  \centering
  \includegraphics[width=.79\linewidth]{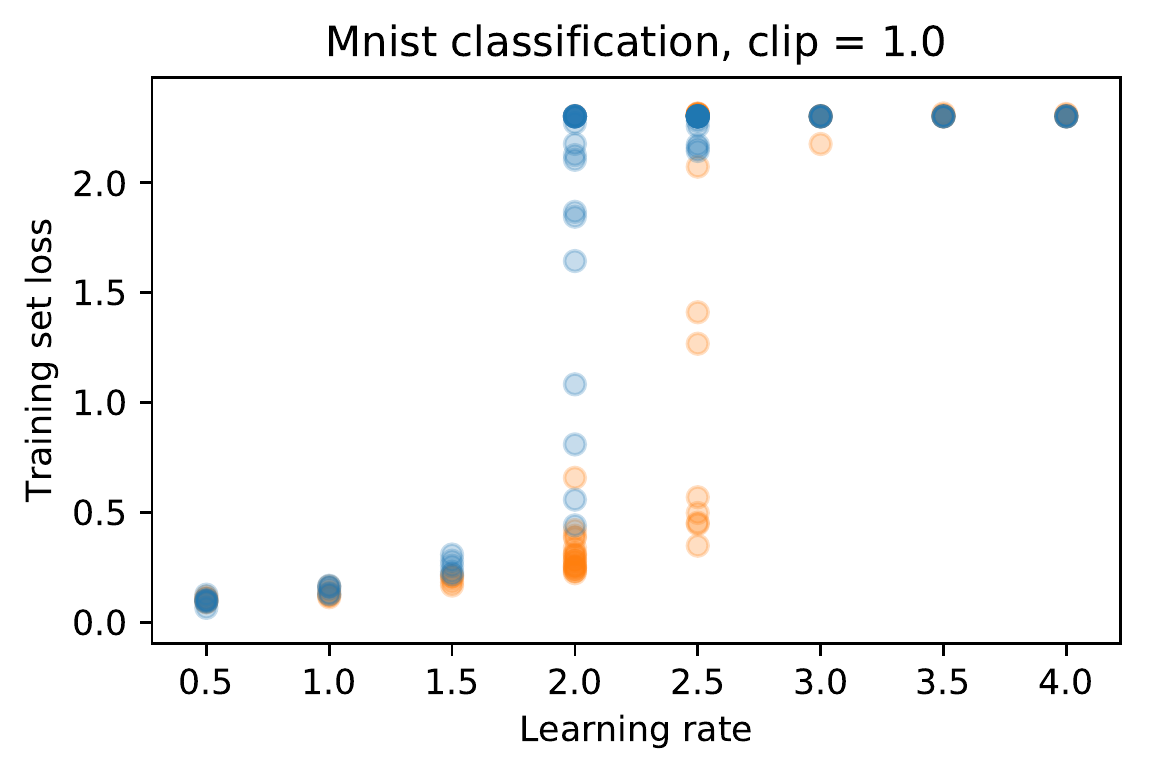}
\end{minipage}
\begin{minipage}{.49\textwidth}
  \centering
  \includegraphics[width=.79\linewidth]{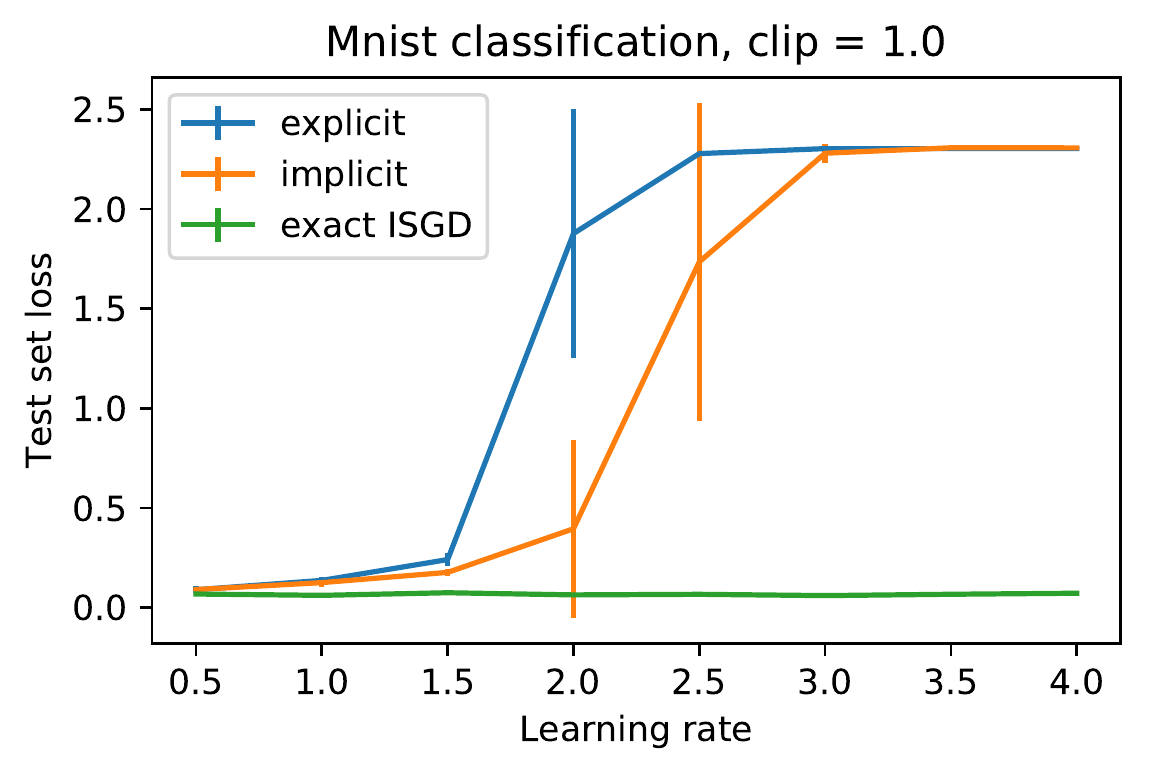}
\end{minipage}%
\hfill
\begin{minipage}{.49\textwidth}
  \centering
  \includegraphics[width=.79\linewidth]{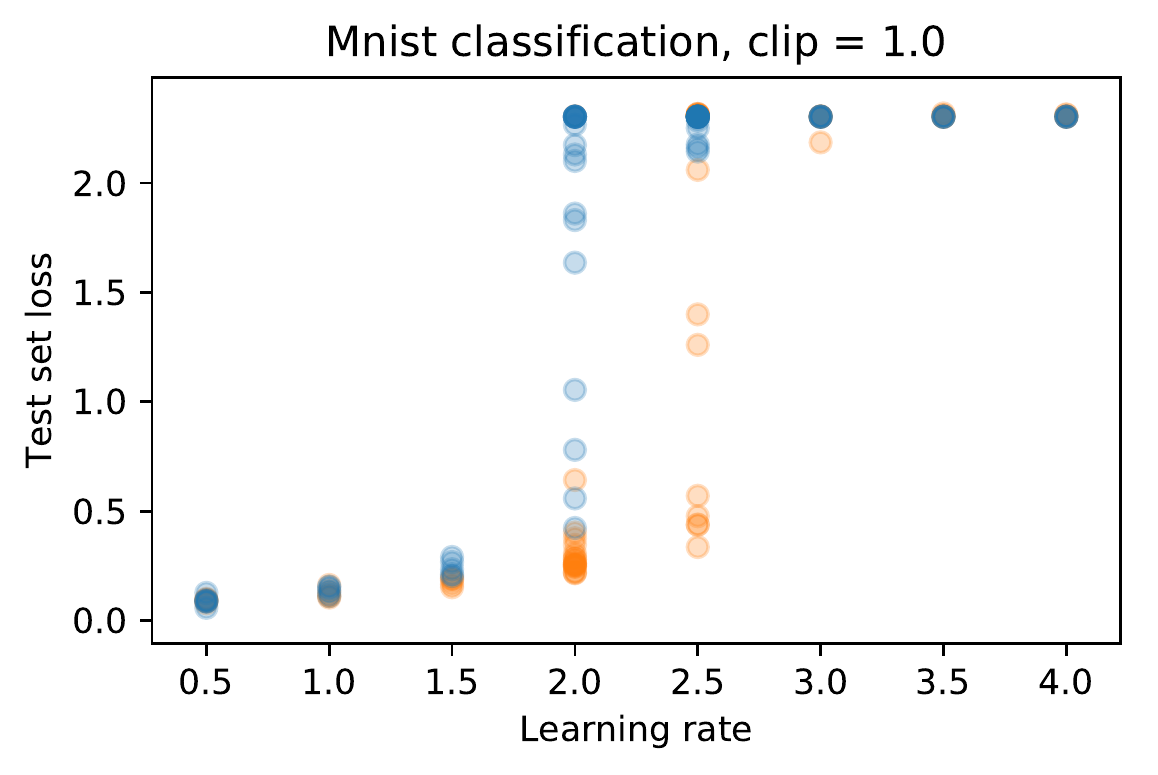}
\end{minipage}

\vspace{0cm}

\begin{minipage}{.49\textwidth}
  \centering
  \includegraphics[width=.79\linewidth]{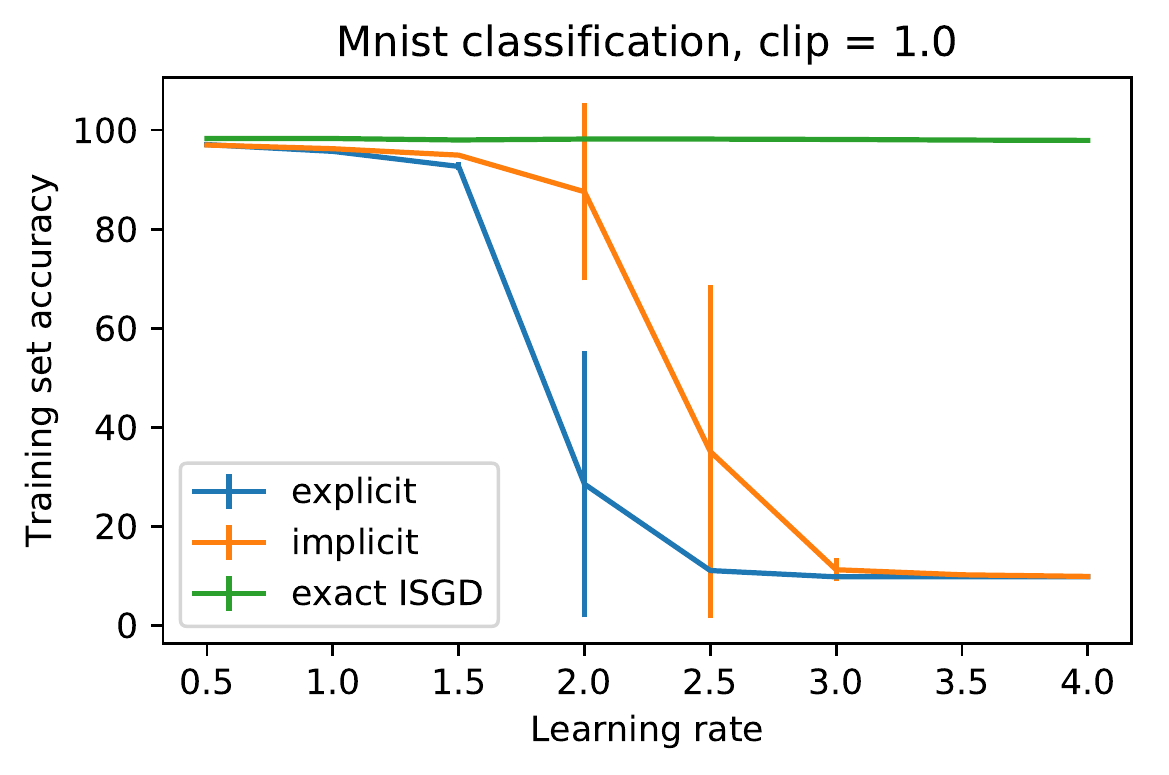}
\end{minipage}%
\hfill
\begin{minipage}{.49\textwidth}
  \centering
  \includegraphics[width=.79\linewidth]{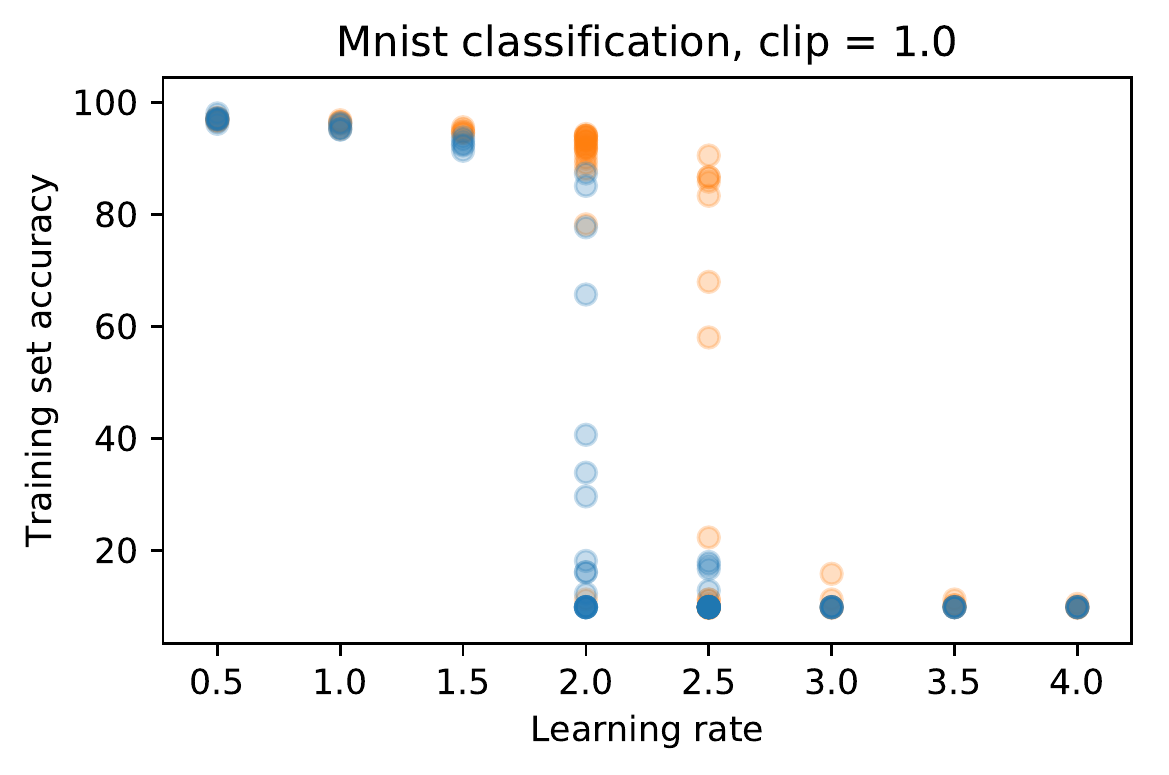}
\end{minipage}
\begin{minipage}{.49\textwidth}
  \centering
  \includegraphics[width=.79\linewidth]{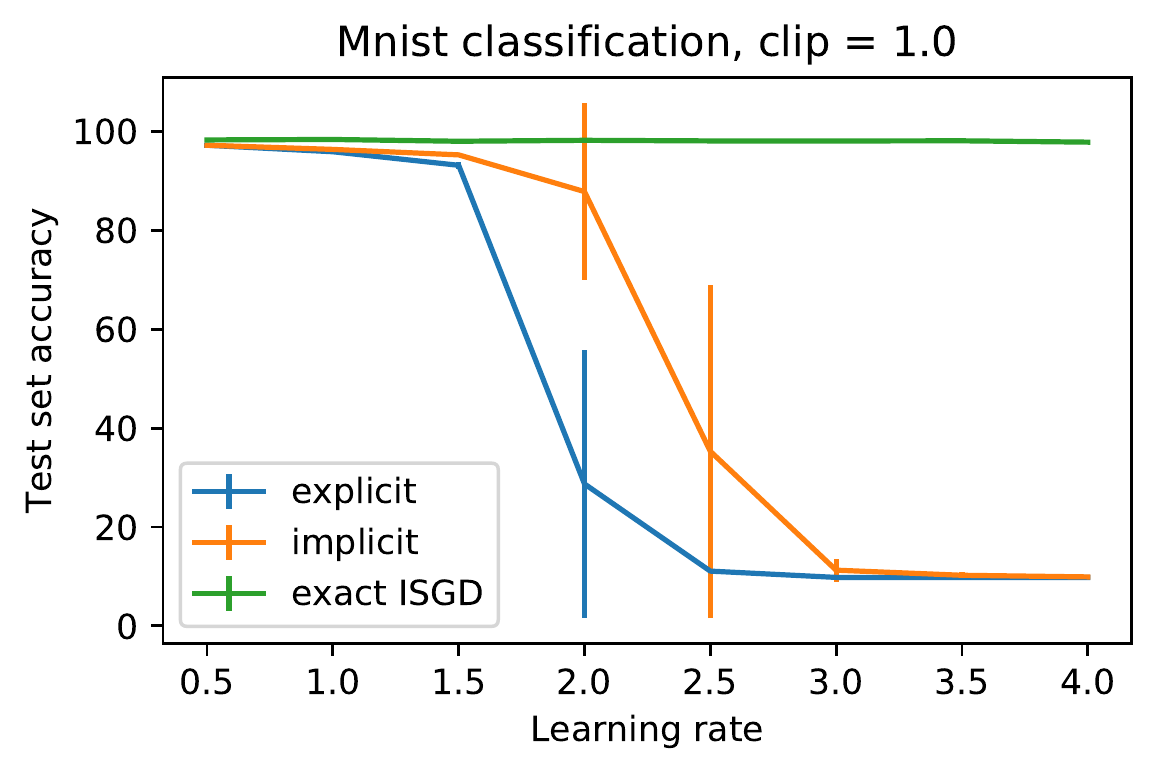}
\end{minipage}%
\hfill
\begin{minipage}{.49\textwidth}
  \centering
  \includegraphics[width=.79\linewidth]{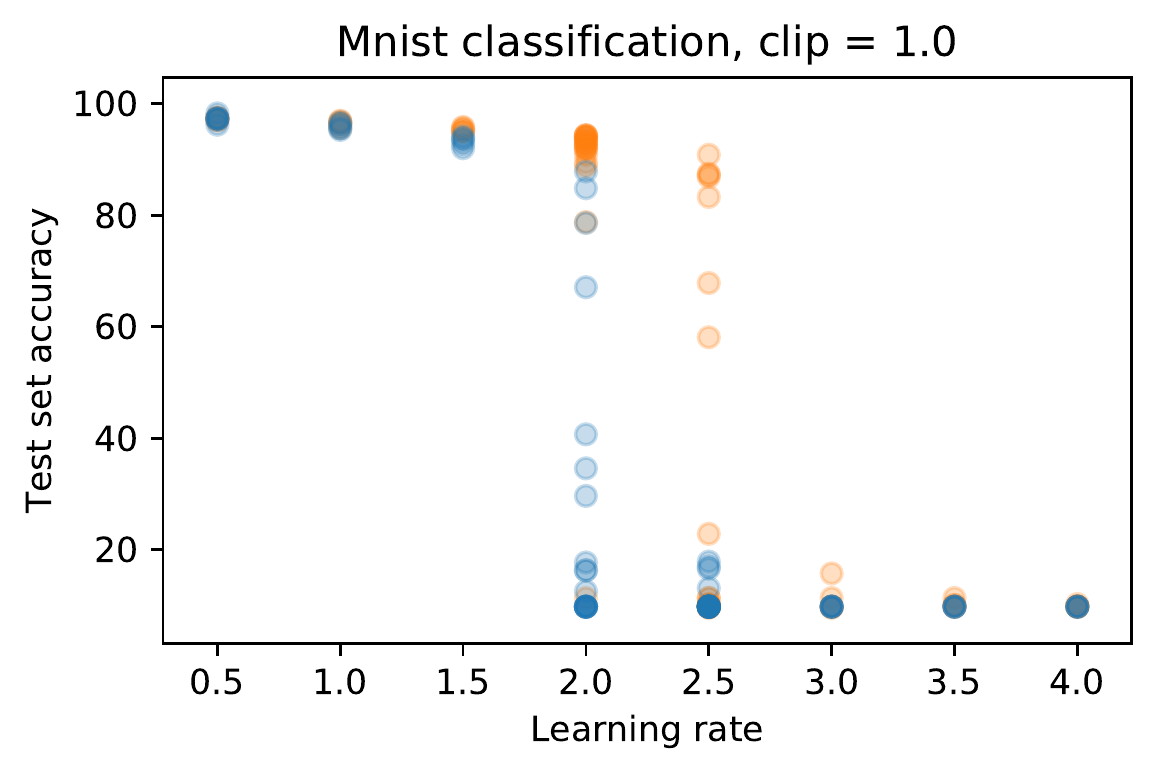}
\end{minipage}
\end{figure}

\clearpage
\subsection{MNIST autoencoder}

\begin{figure}[h]
\centering
\begin{minipage}{.49\textwidth}
  \centering
  \includegraphics[width=.79\linewidth]{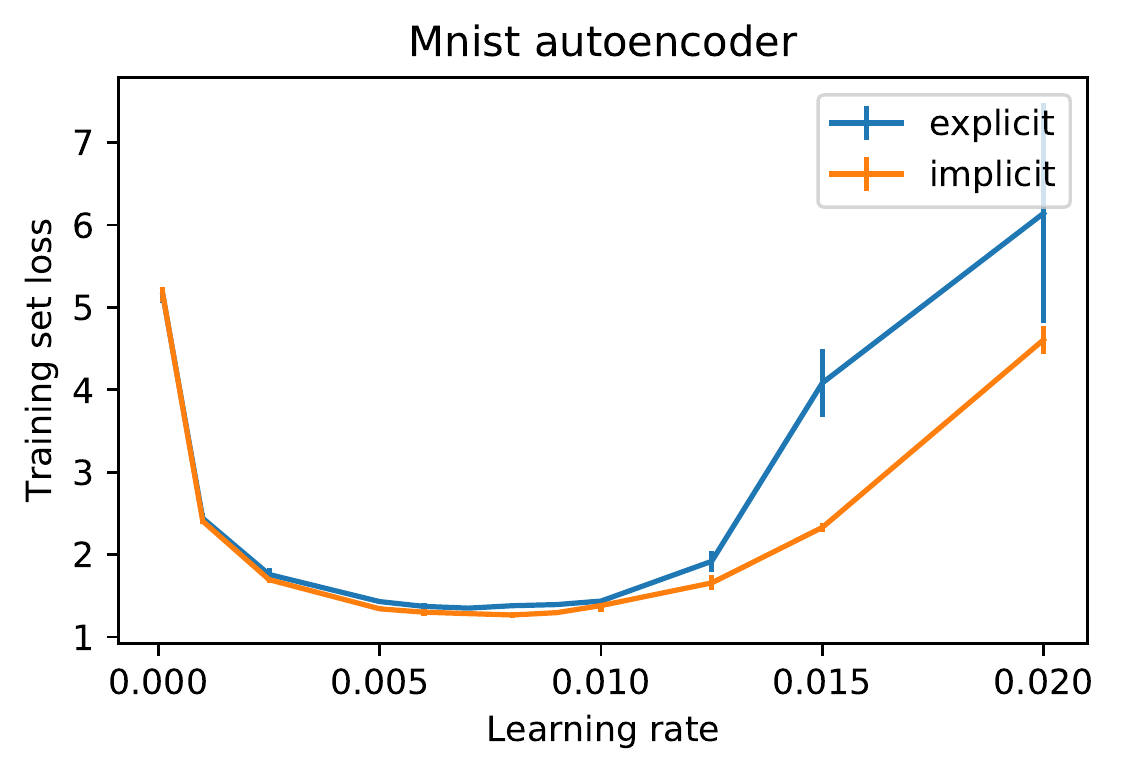}
\end{minipage}%
\hfill
\begin{minipage}{.49\textwidth}
  \centering
  \includegraphics[width=.79\linewidth]{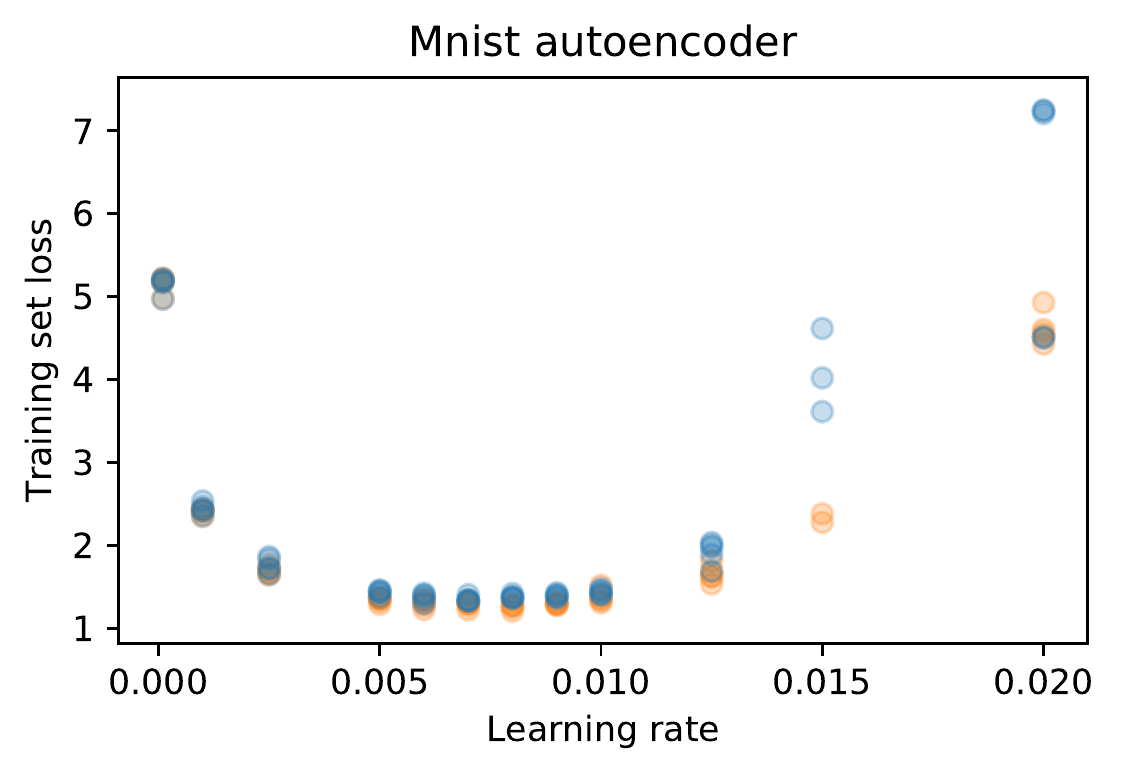}
\end{minipage}
\begin{minipage}{.49\textwidth}
  \centering
  \includegraphics[width=.79\linewidth]{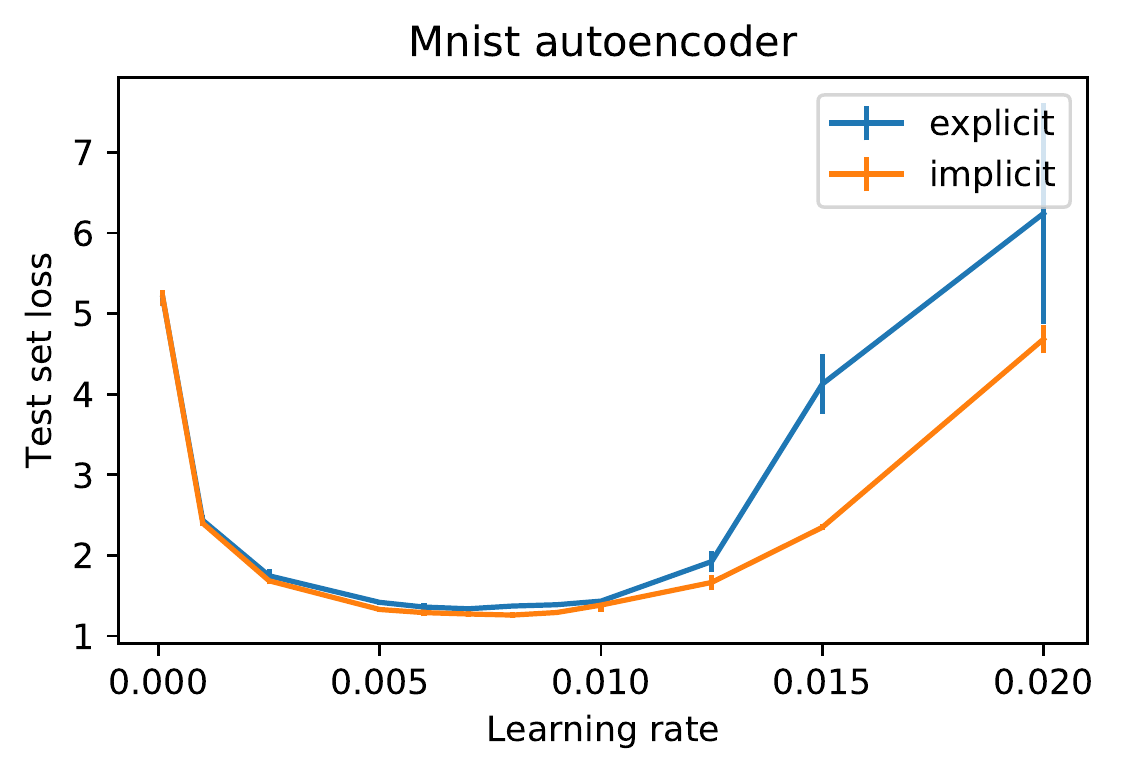}
\end{minipage}%
\hfill
\begin{minipage}{.49\textwidth}
  \centering
  \includegraphics[width=.79\linewidth]{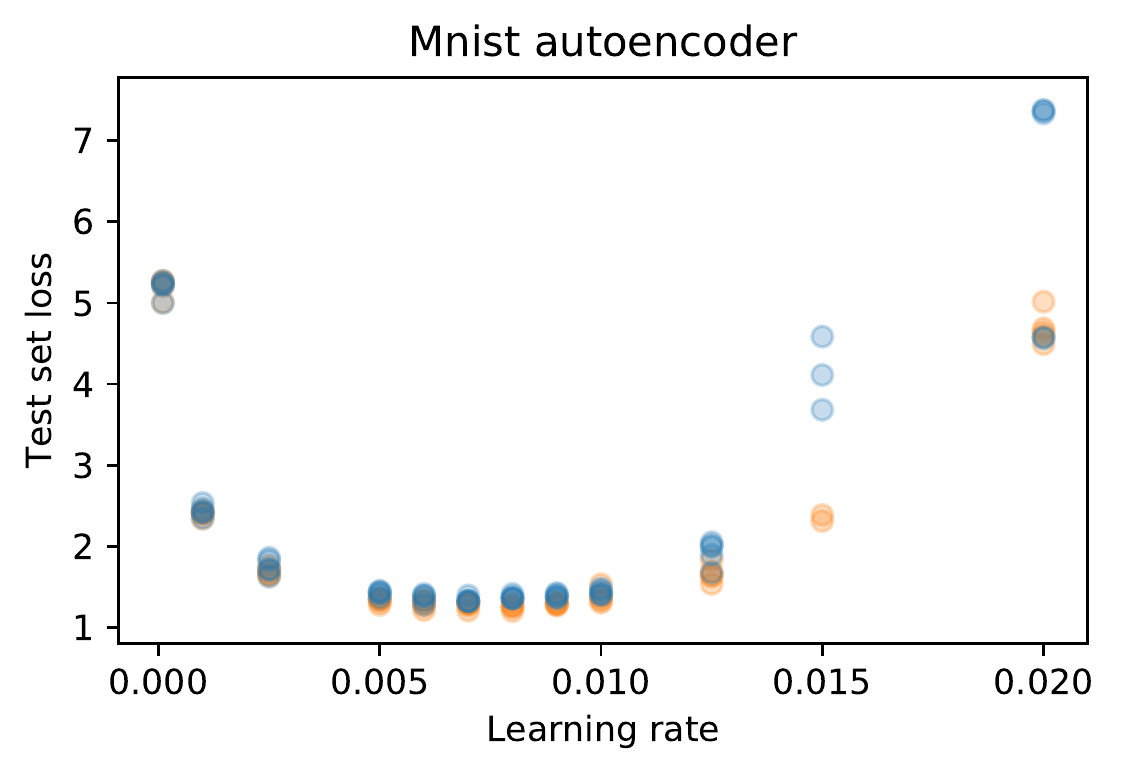}
\end{minipage}

\vspace{0cm}

\begin{minipage}{.49\textwidth}
  \centering
  \includegraphics[width=.79\linewidth]{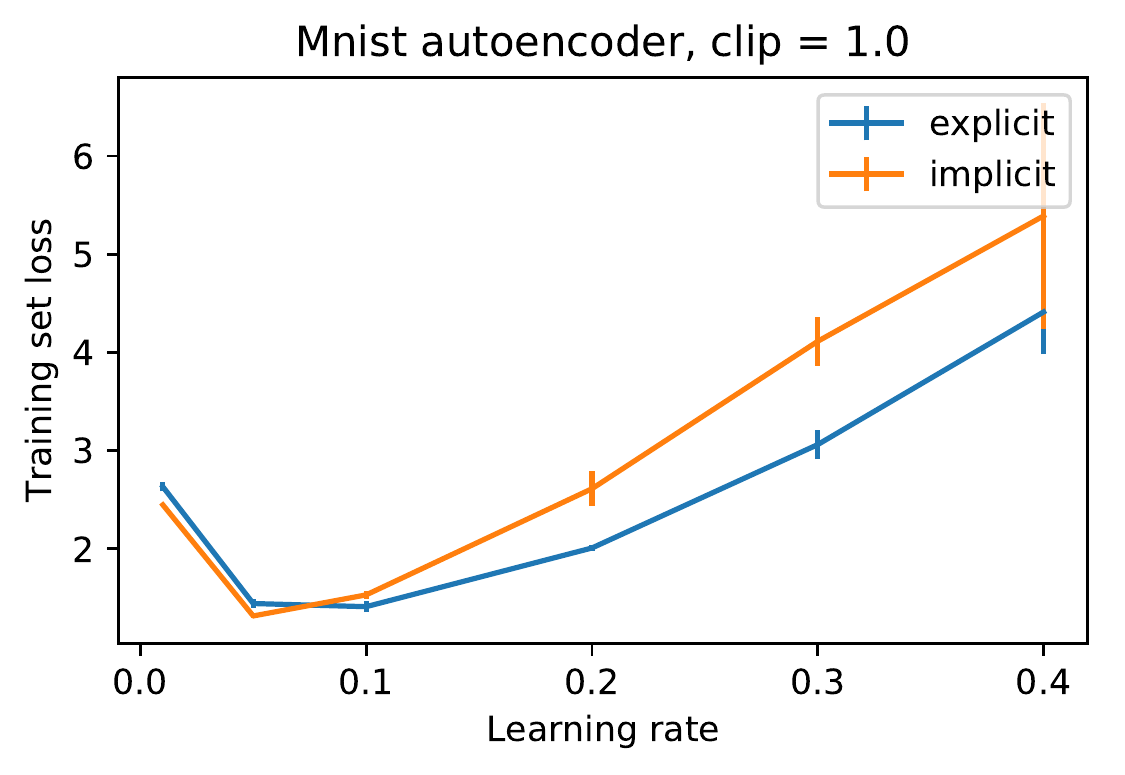}
\end{minipage}%
\hfill
\begin{minipage}{.49\textwidth}
  \centering
  \includegraphics[width=.79\linewidth]{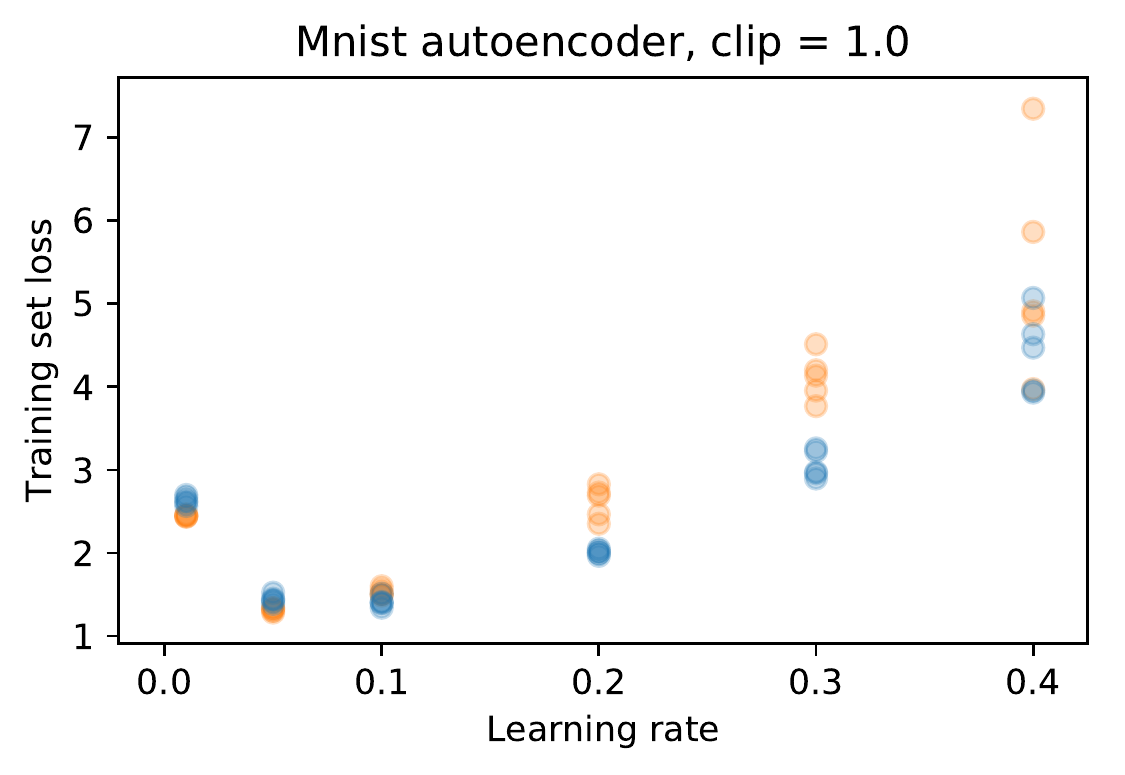}
\end{minipage}
\begin{minipage}{.49\textwidth}
  \centering
  \includegraphics[width=.79\linewidth]{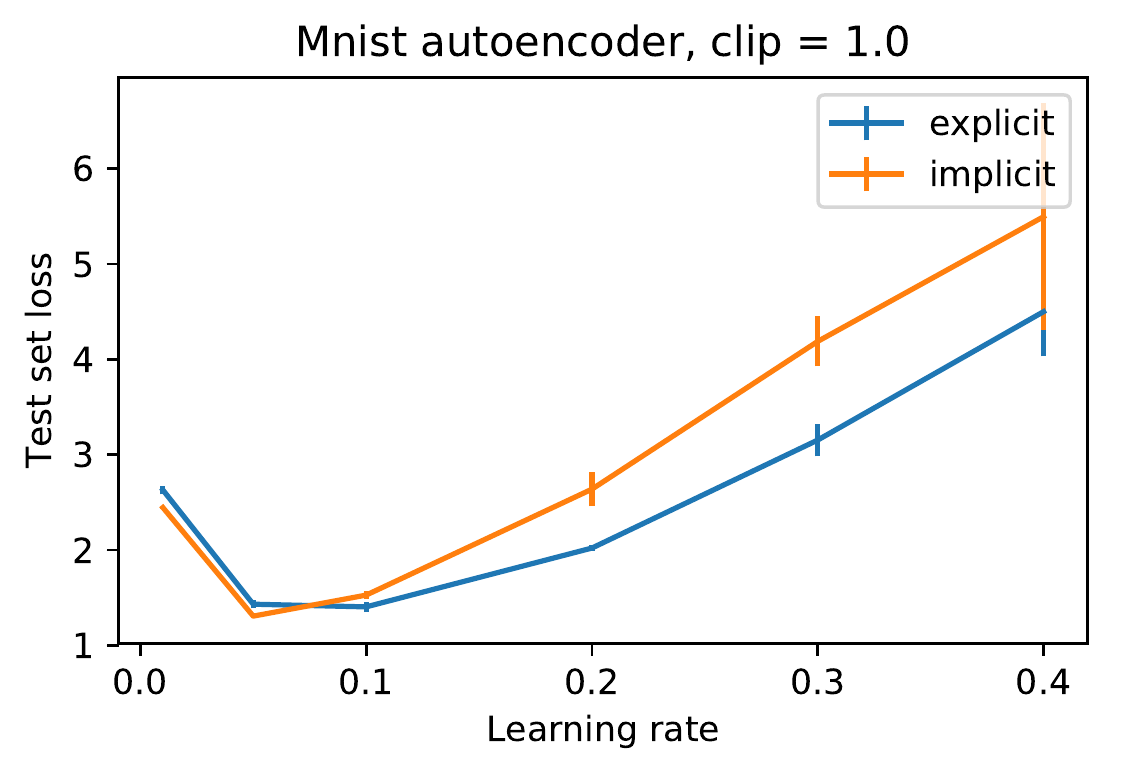}
\end{minipage}%
\hfill
\begin{minipage}{.49\textwidth}
  \centering
  \includegraphics[width=.79\linewidth]{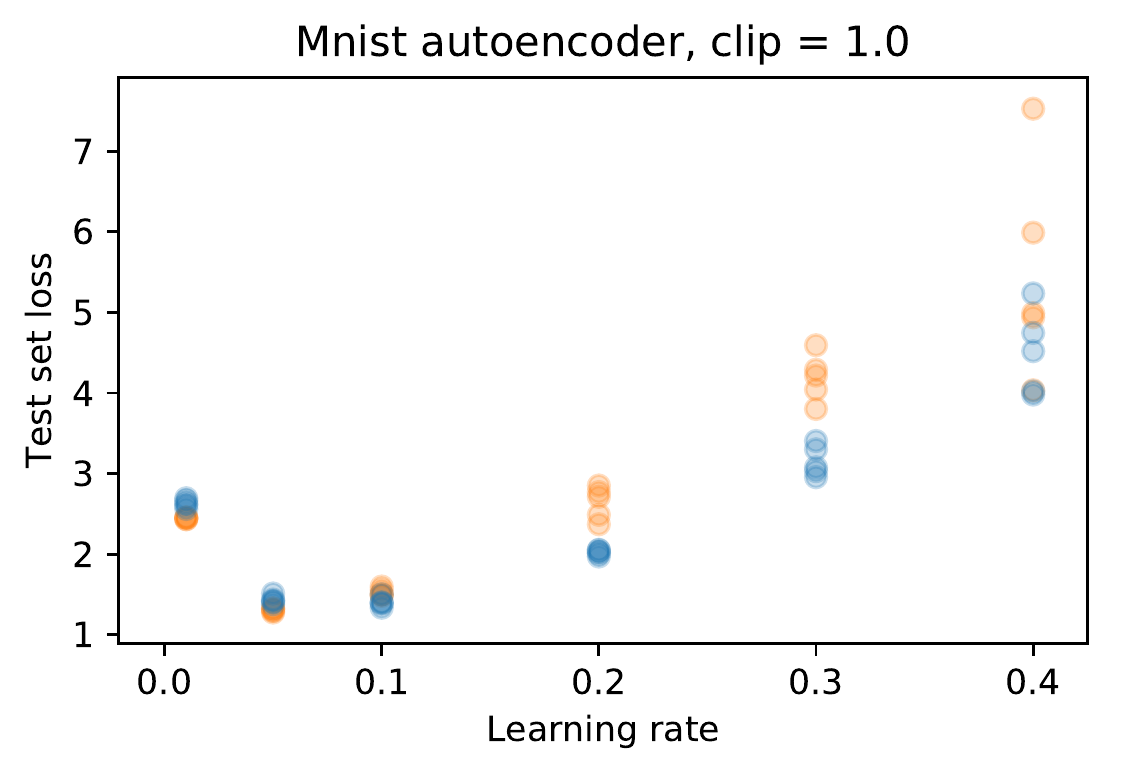}
\end{minipage}
\end{figure}

\clearpage
\subsection{JSB Chorales}

\begin{figure}[h]
\centering
\begin{minipage}{.49\textwidth}
  \centering
  \includegraphics[width=.79\linewidth]{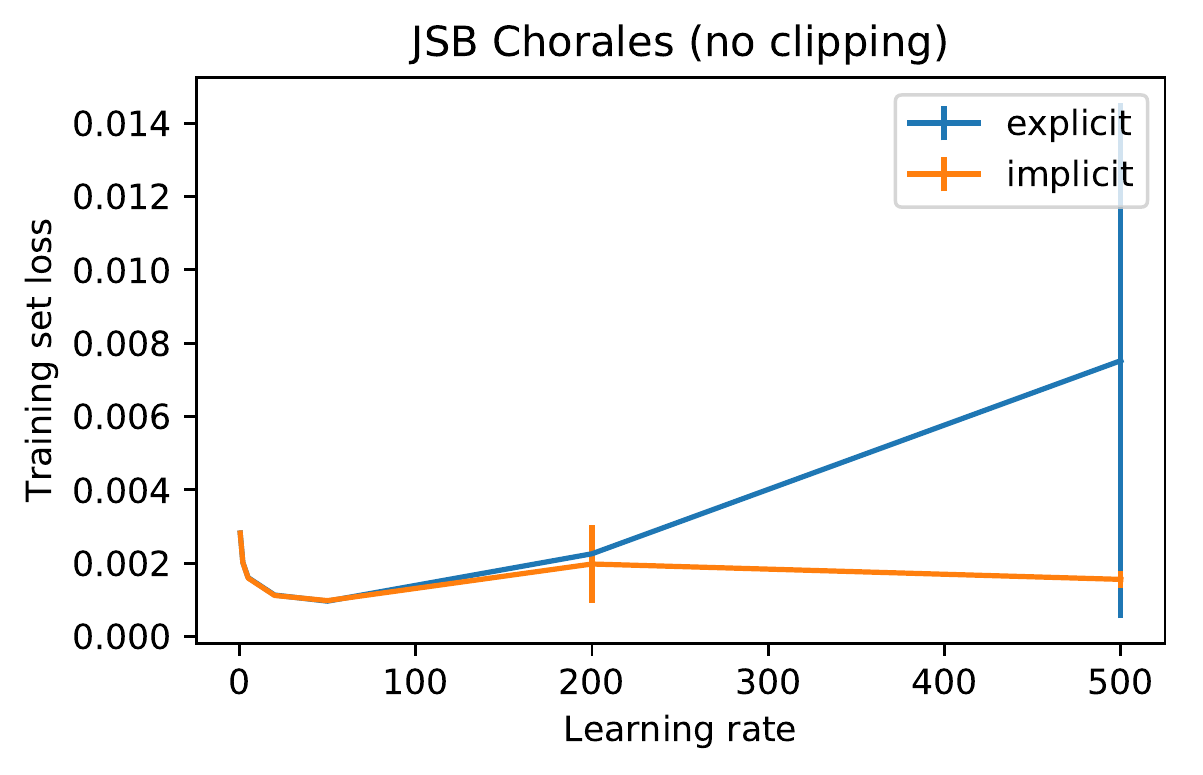}
\end{minipage}%
\hfill
\begin{minipage}{.49\textwidth}
  \centering
  \includegraphics[width=.79\linewidth]{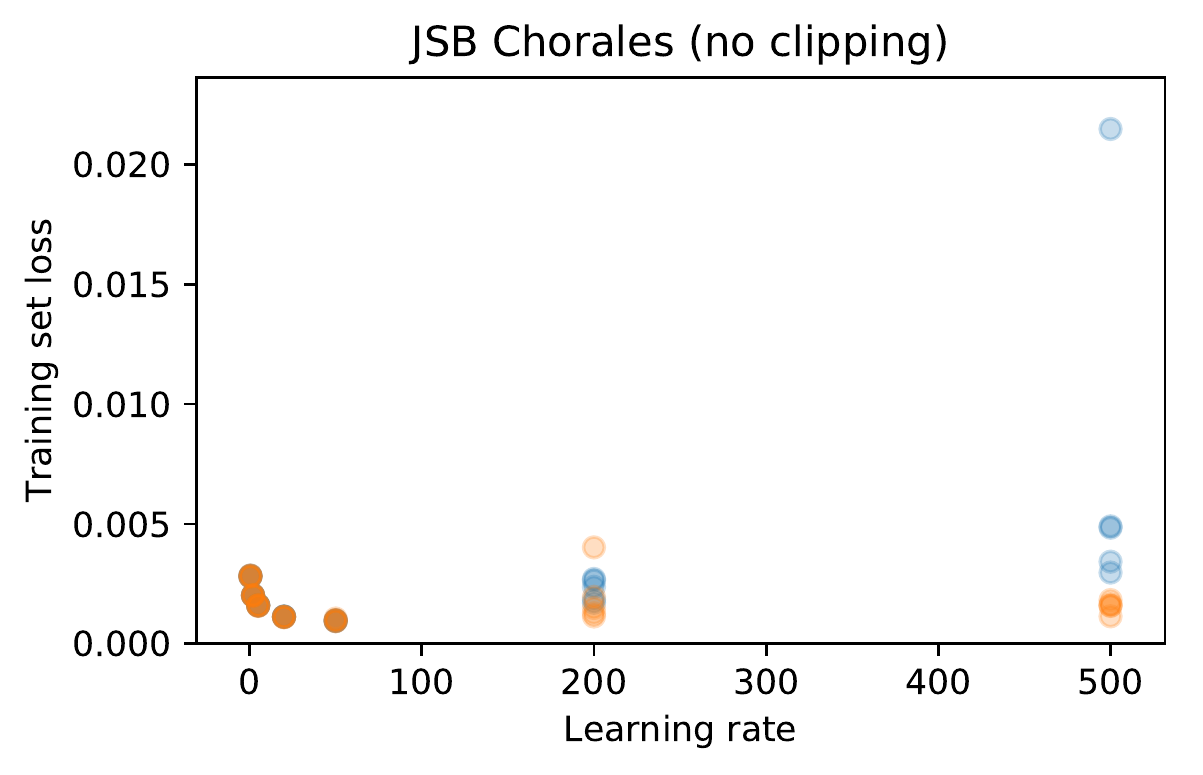}
\end{minipage}
\begin{minipage}{.49\textwidth}
  \centering
  \includegraphics[width=.79\linewidth]{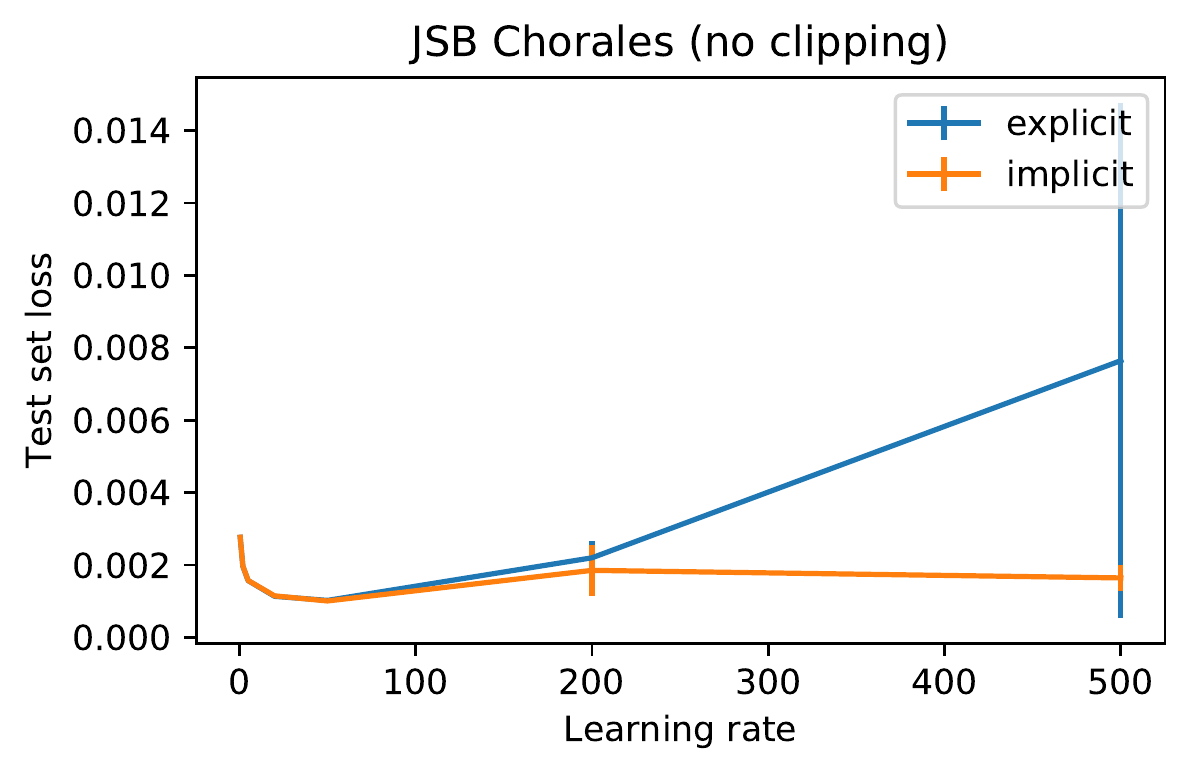}
\end{minipage}%
\hfill
\begin{minipage}{.49\textwidth}
  \centering
  \includegraphics[width=.79\linewidth]{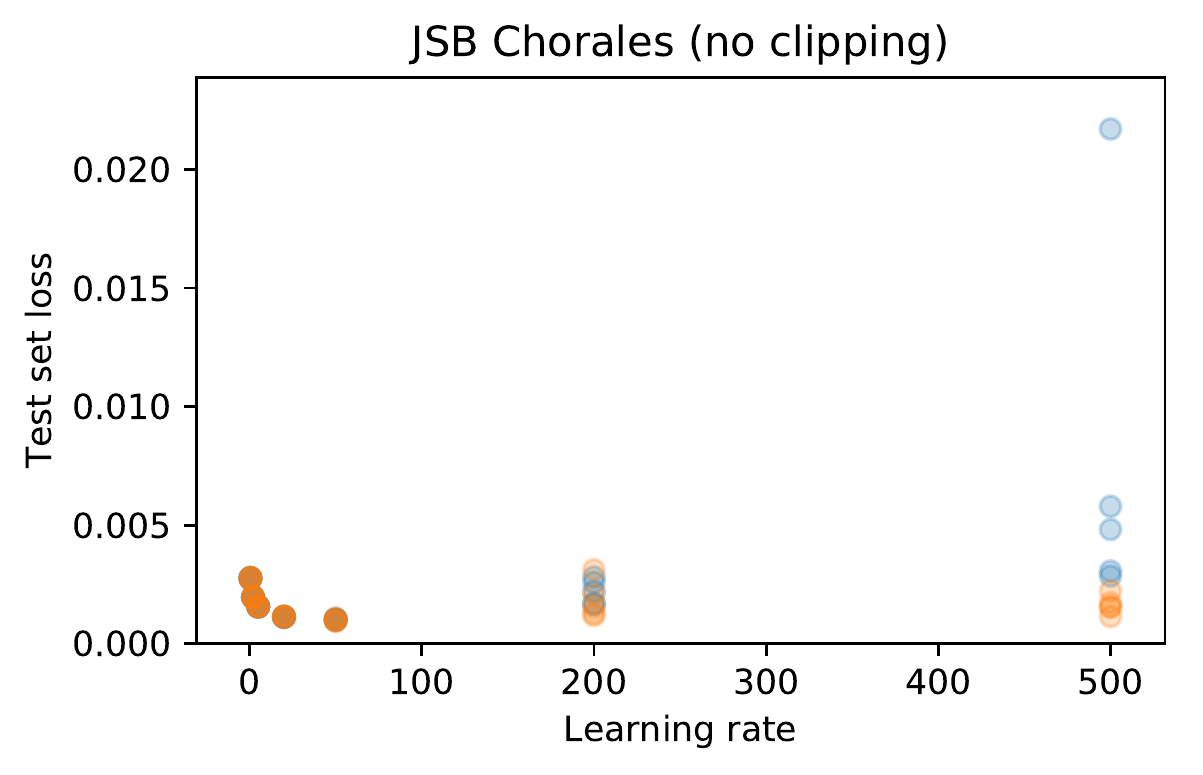}
\end{minipage}
\end{figure}

\begin{figure}[h]
\centering
\begin{minipage}{.49\textwidth}
  \centering
  \includegraphics[width=.79\linewidth]{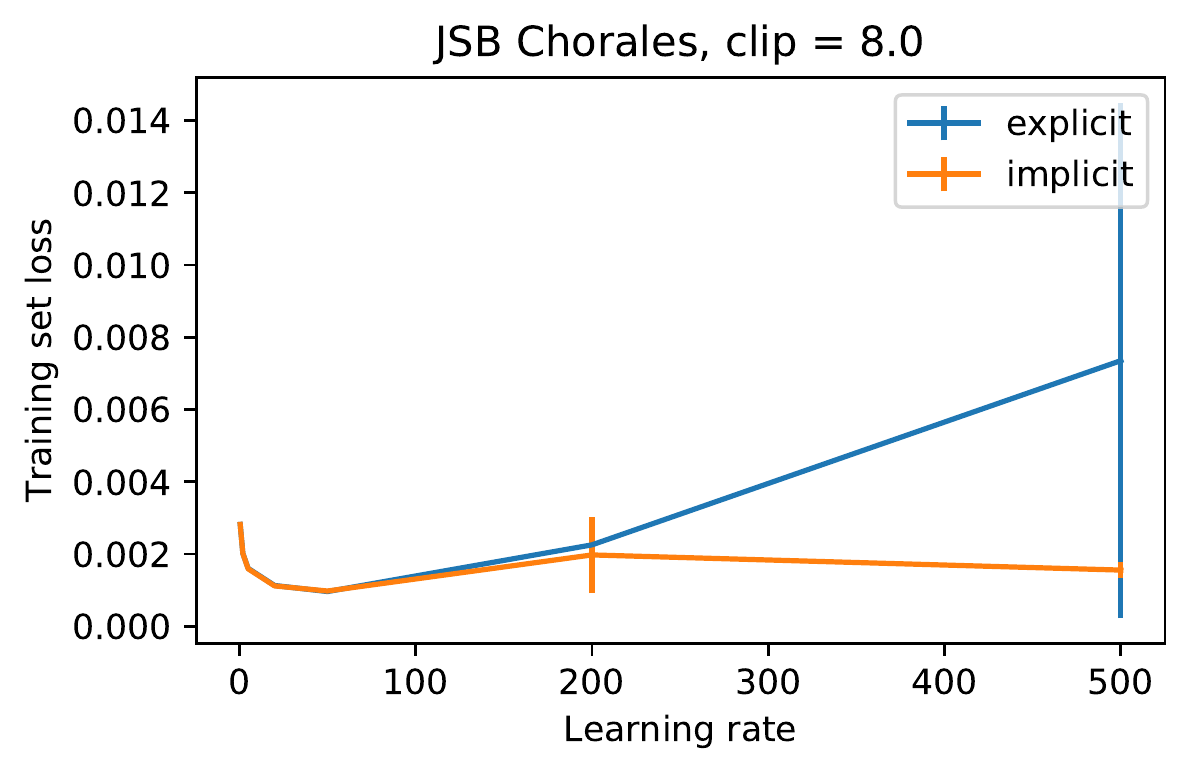}
\end{minipage}%
\hfill
\begin{minipage}{.49\textwidth}
  \centering
  \includegraphics[width=.79\linewidth]{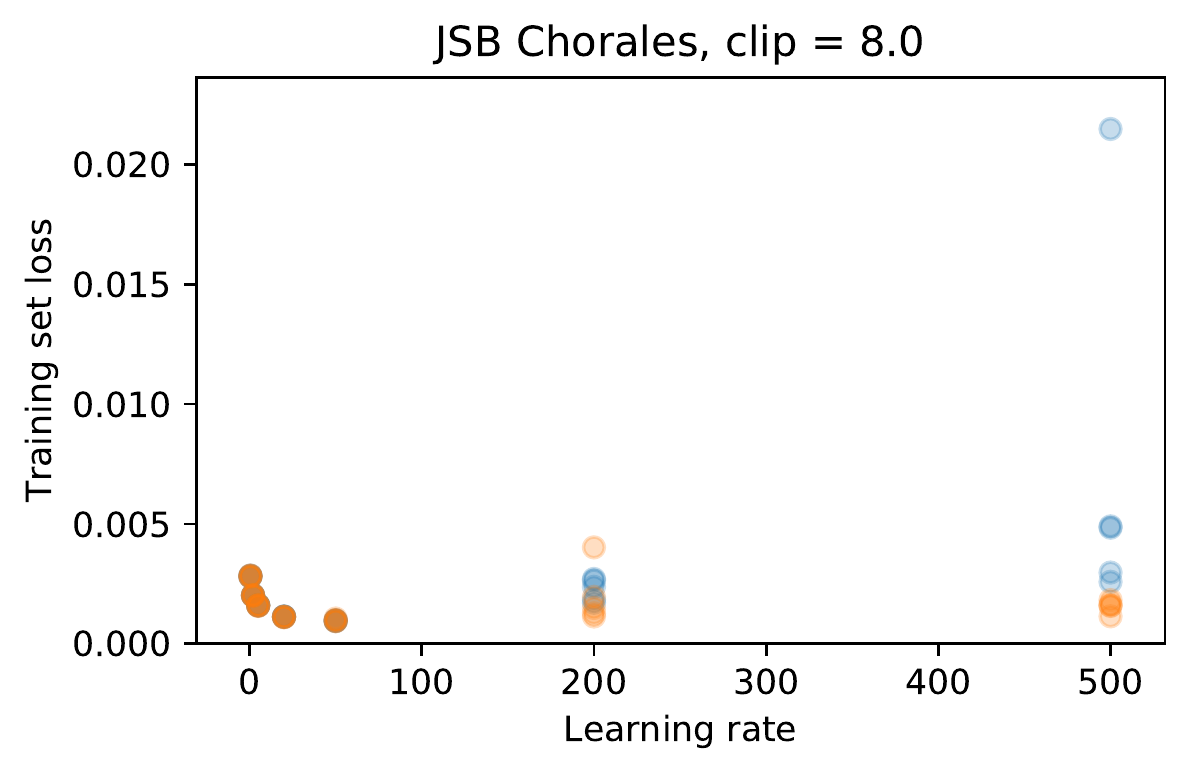}
\end{minipage}
\begin{minipage}{.49\textwidth}
  \centering
  \includegraphics[width=.79\linewidth]{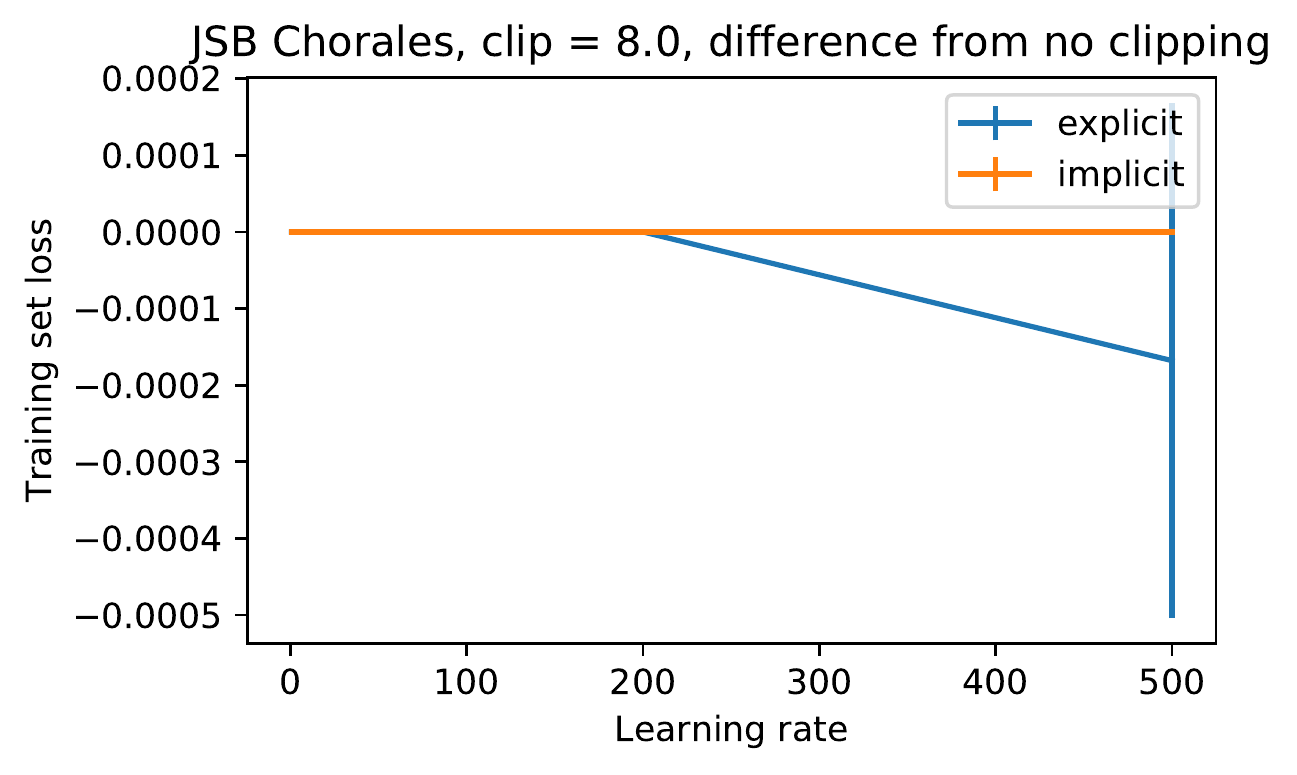}
\end{minipage}%
\hfill
\begin{minipage}{.49\textwidth}
  \centering
  \includegraphics[width=.79\linewidth]{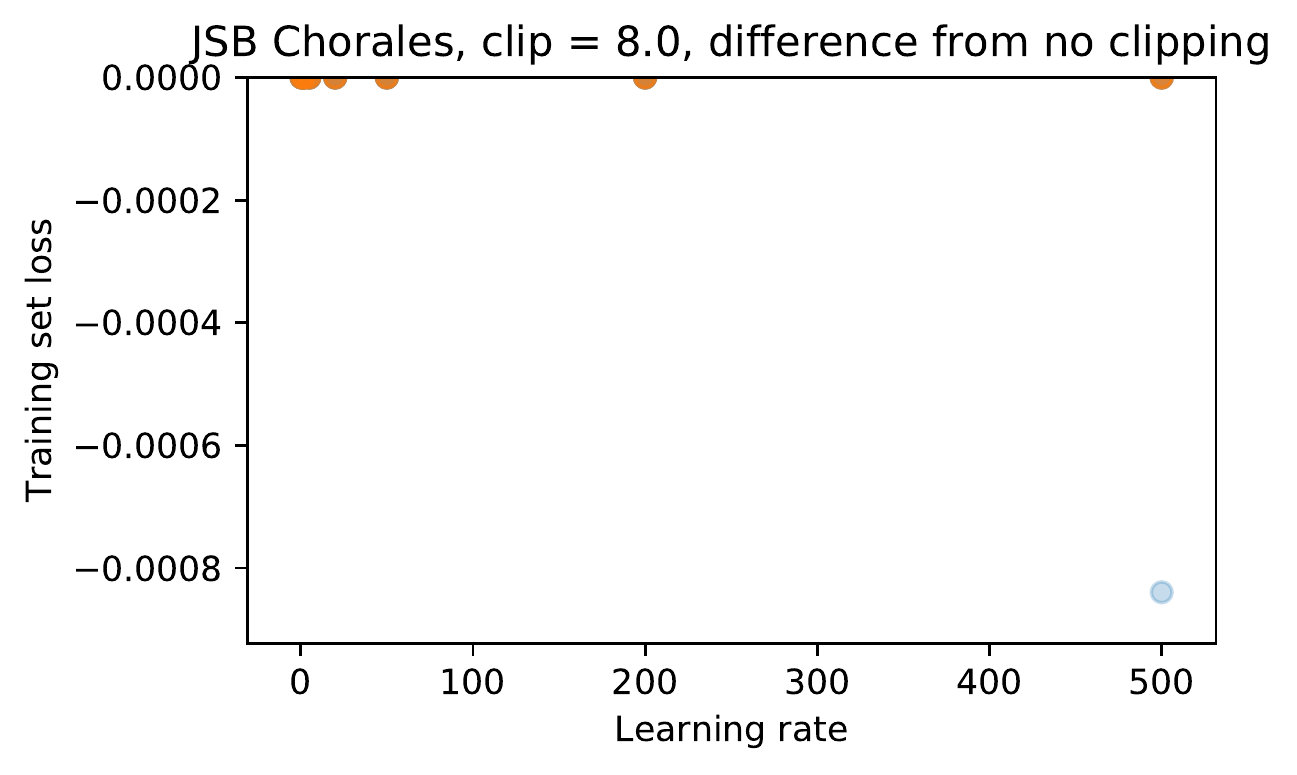}
\end{minipage}
\end{figure}

\vspace{0cm}

\begin{figure}[h]
\begin{minipage}{.49\textwidth}
  \centering
  \includegraphics[width=.79\linewidth]{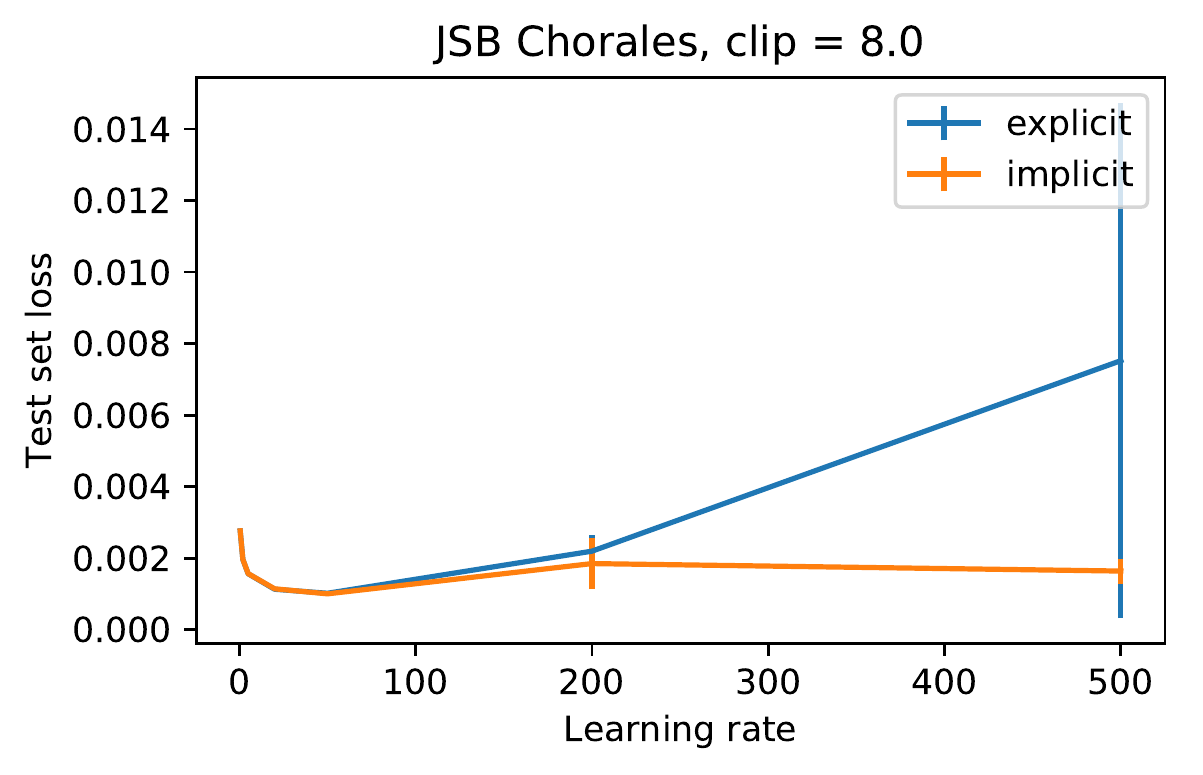}
\end{minipage}%
\hfill
\begin{minipage}{.49\textwidth}
  \centering
  \includegraphics[width=.79\linewidth]{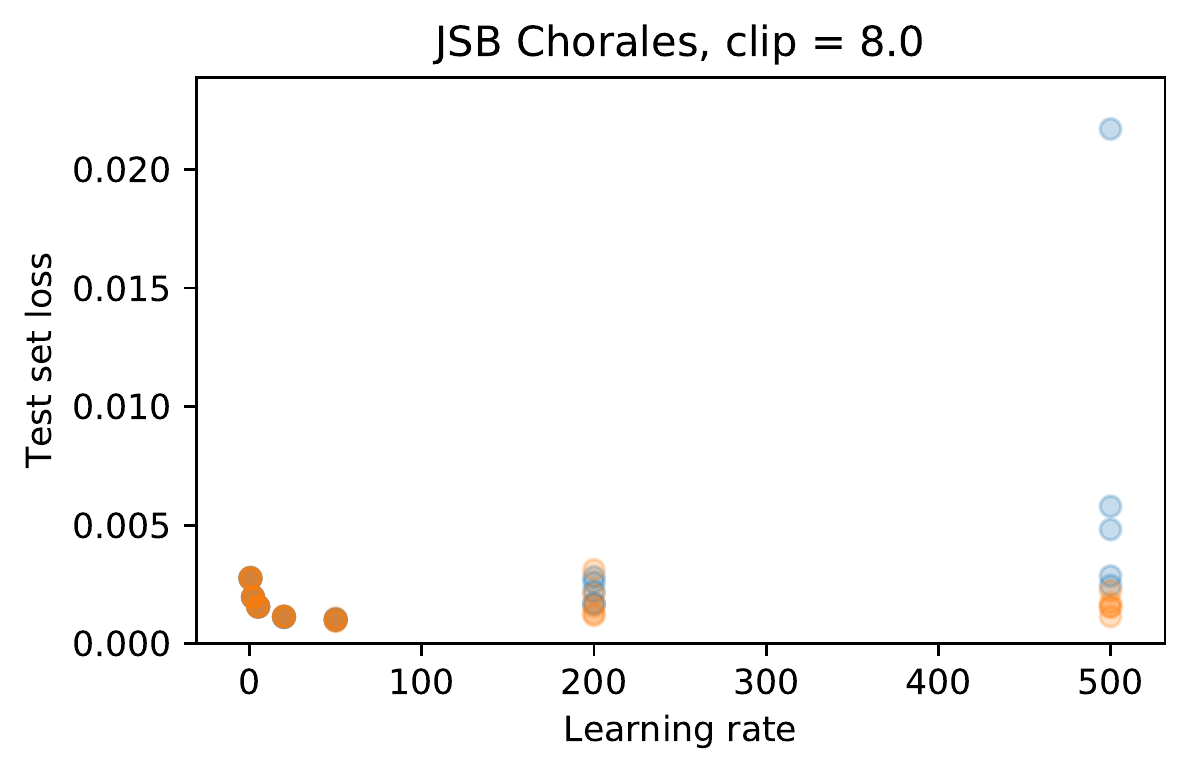}
\end{minipage}
\begin{minipage}{.49\textwidth}
  \centering
  \includegraphics[width=.79\linewidth]{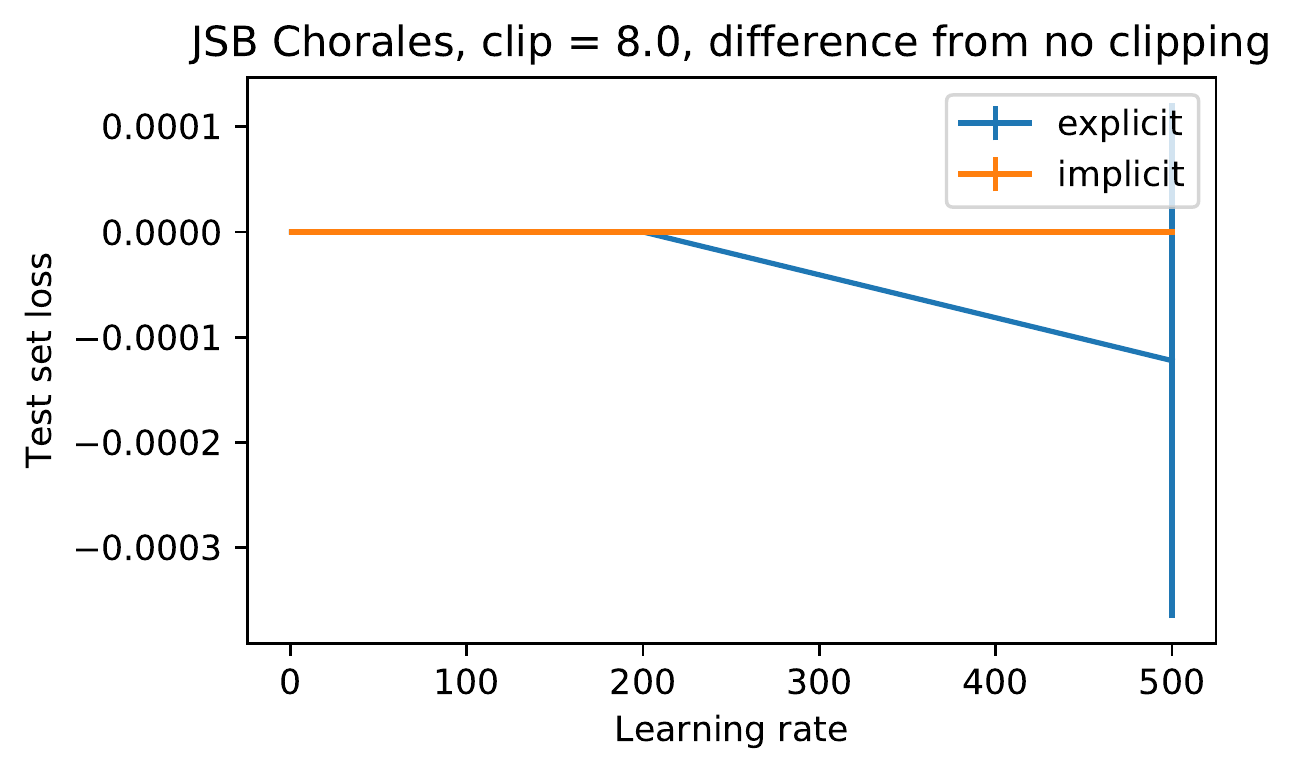}
\end{minipage}%
\hfill
\begin{minipage}{.49\textwidth}
  \centering
  \includegraphics[width=.79\linewidth]{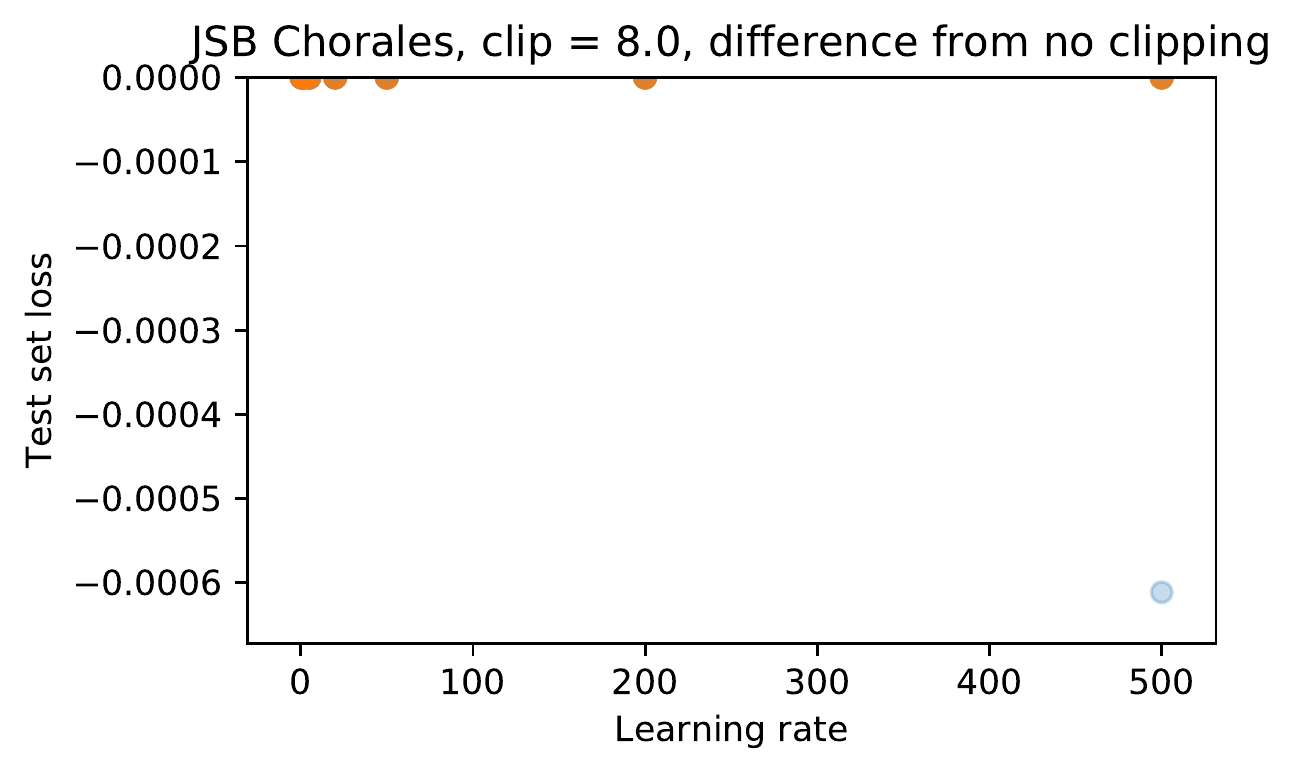}
\end{minipage}
\end{figure}

\begin{figure}[h]
\centering
\begin{minipage}{.49\textwidth}
  \centering
  \includegraphics[width=.79\linewidth]{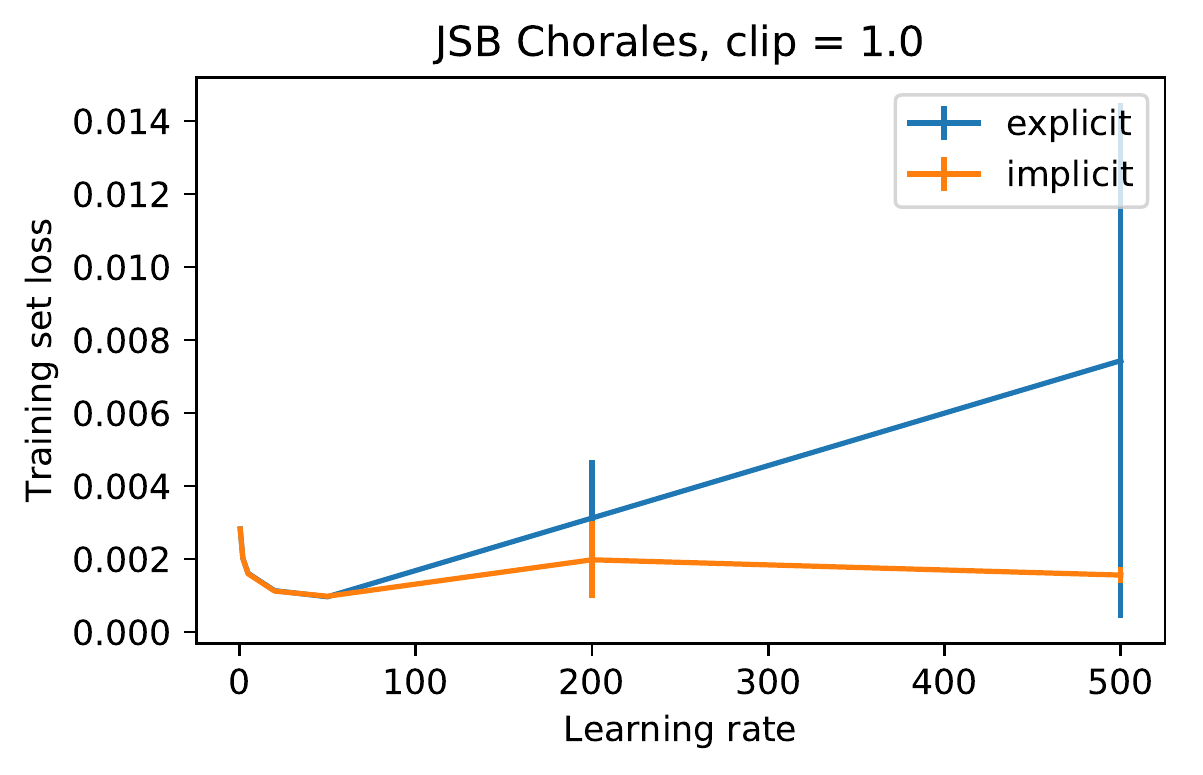}
\end{minipage}%
\hfill
\begin{minipage}{.49\textwidth}
  \centering
  \includegraphics[width=.79\linewidth]{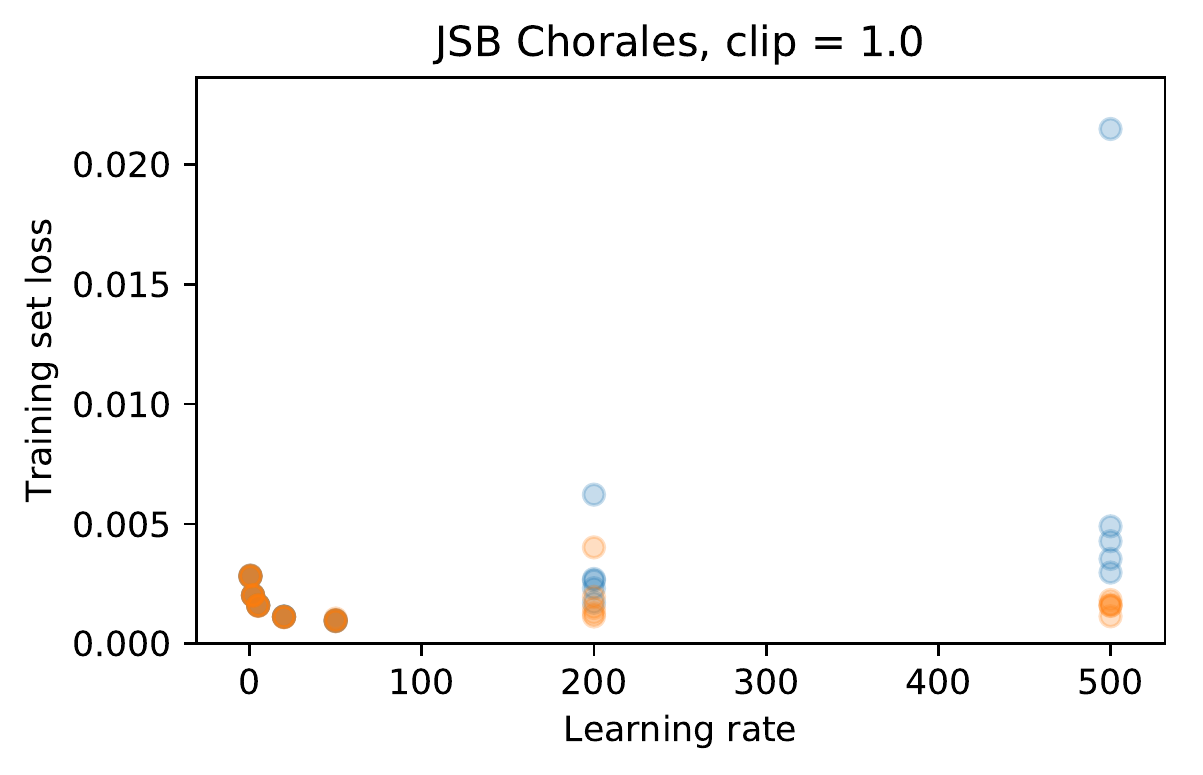}
\end{minipage}
\begin{minipage}{.49\textwidth}
  \centering
  \includegraphics[width=.79\linewidth]{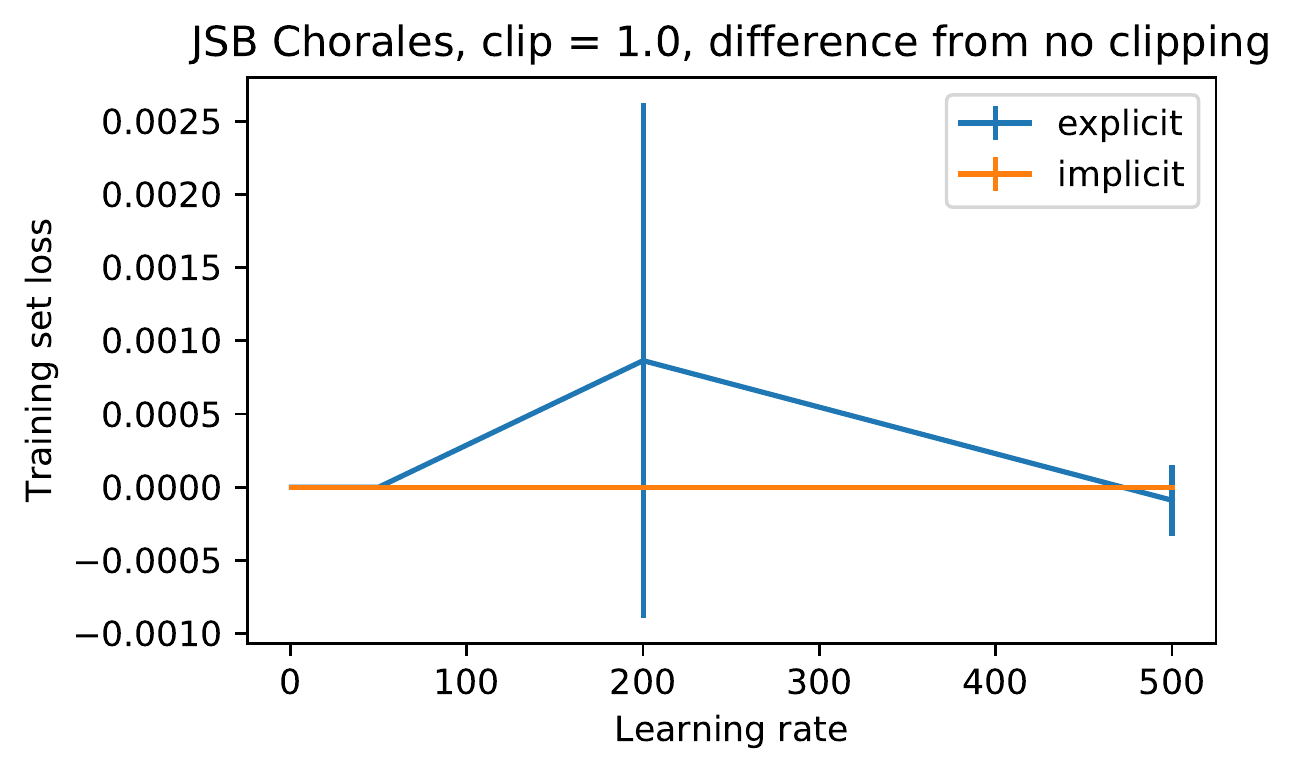}
\end{minipage}%
\hfill
\begin{minipage}{.49\textwidth}
  \centering
  \includegraphics[width=.79\linewidth]{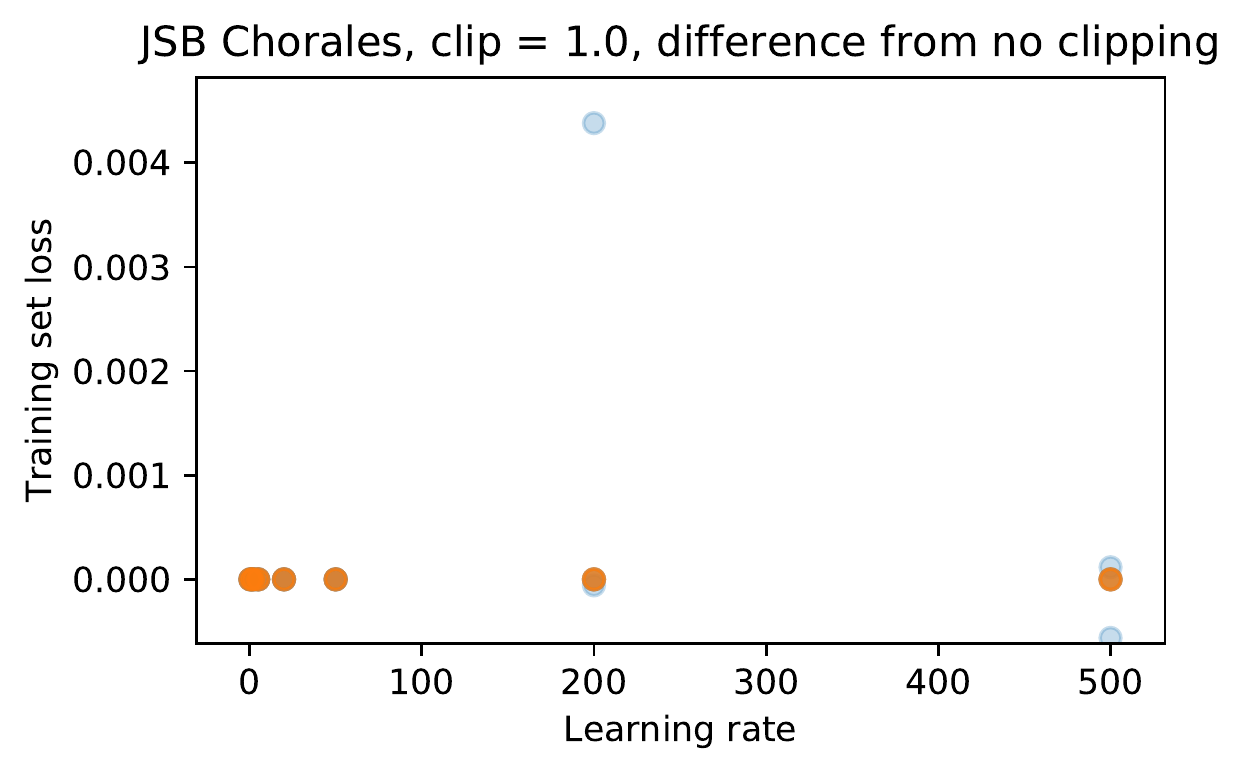}
\end{minipage}
\end{figure}

\vspace{0cm}

\begin{figure}[h]
\begin{minipage}{.49\textwidth}
  \centering
  \includegraphics[width=.79\linewidth]{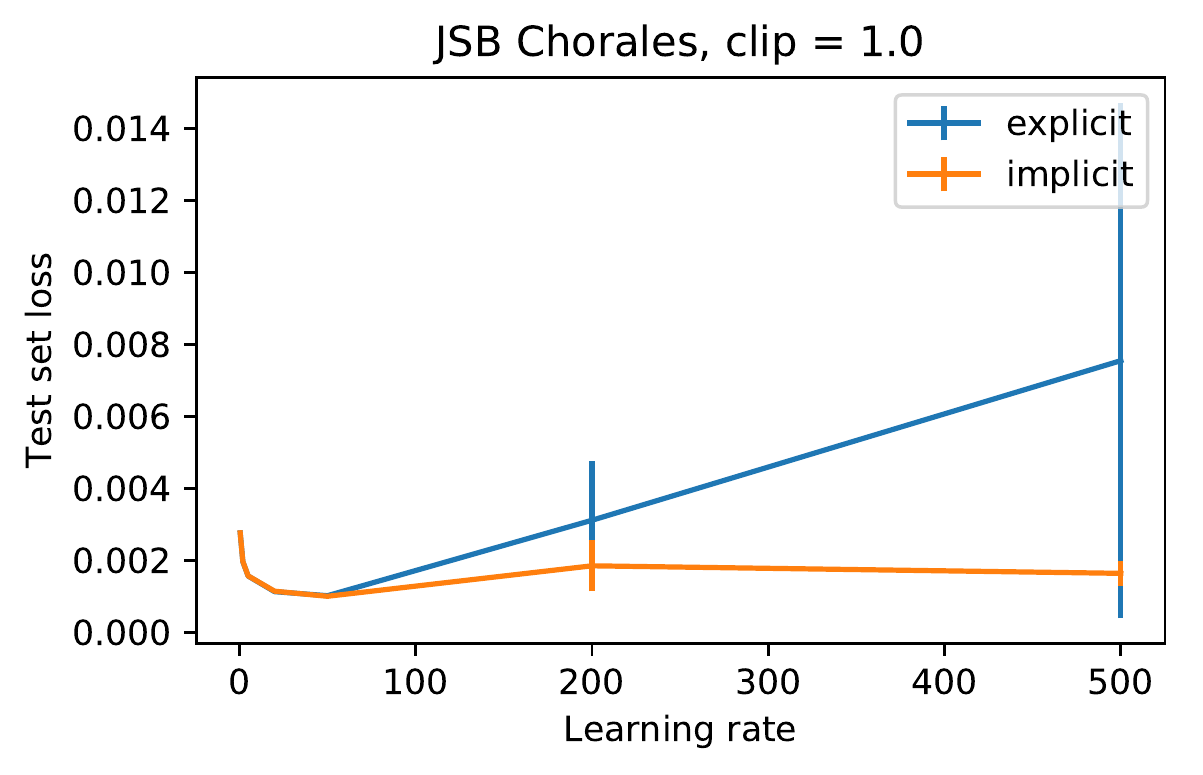}
\end{minipage}%
\hfill
\begin{minipage}{.49\textwidth}
  \centering
  \includegraphics[width=.79\linewidth]{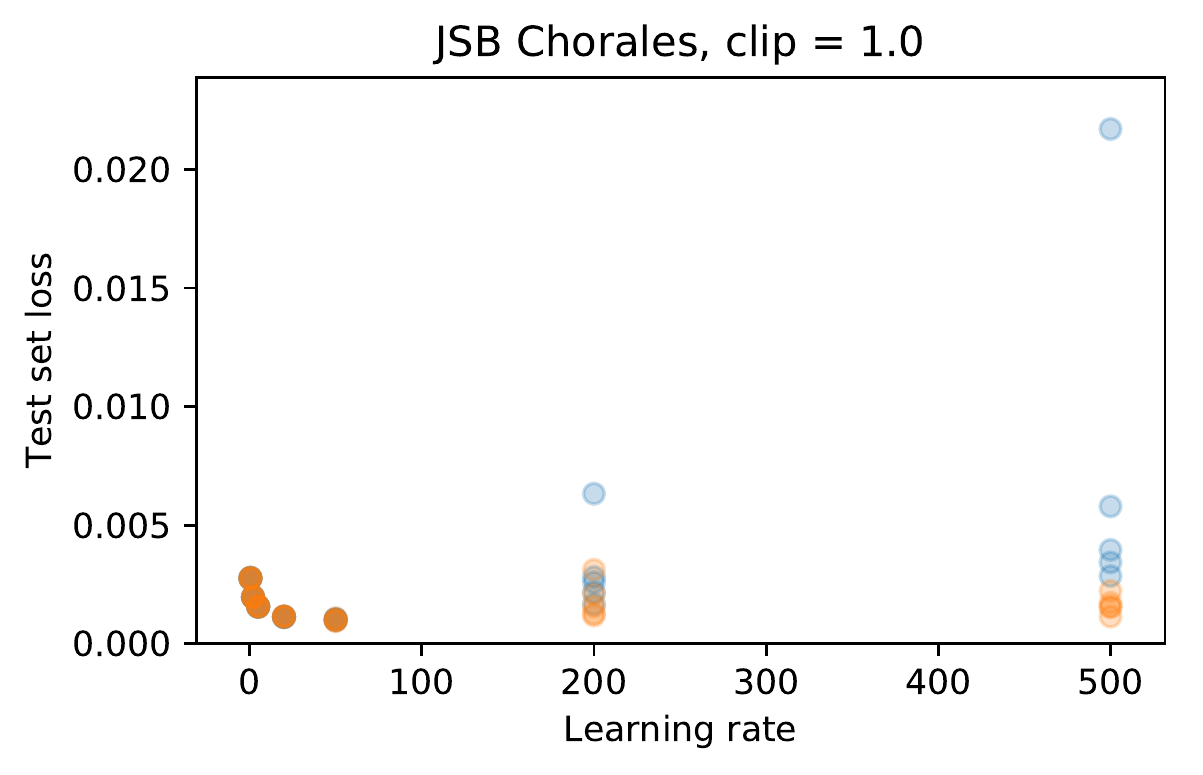}
\end{minipage}
\begin{minipage}{.49\textwidth}
  \centering
  \includegraphics[width=.79\linewidth]{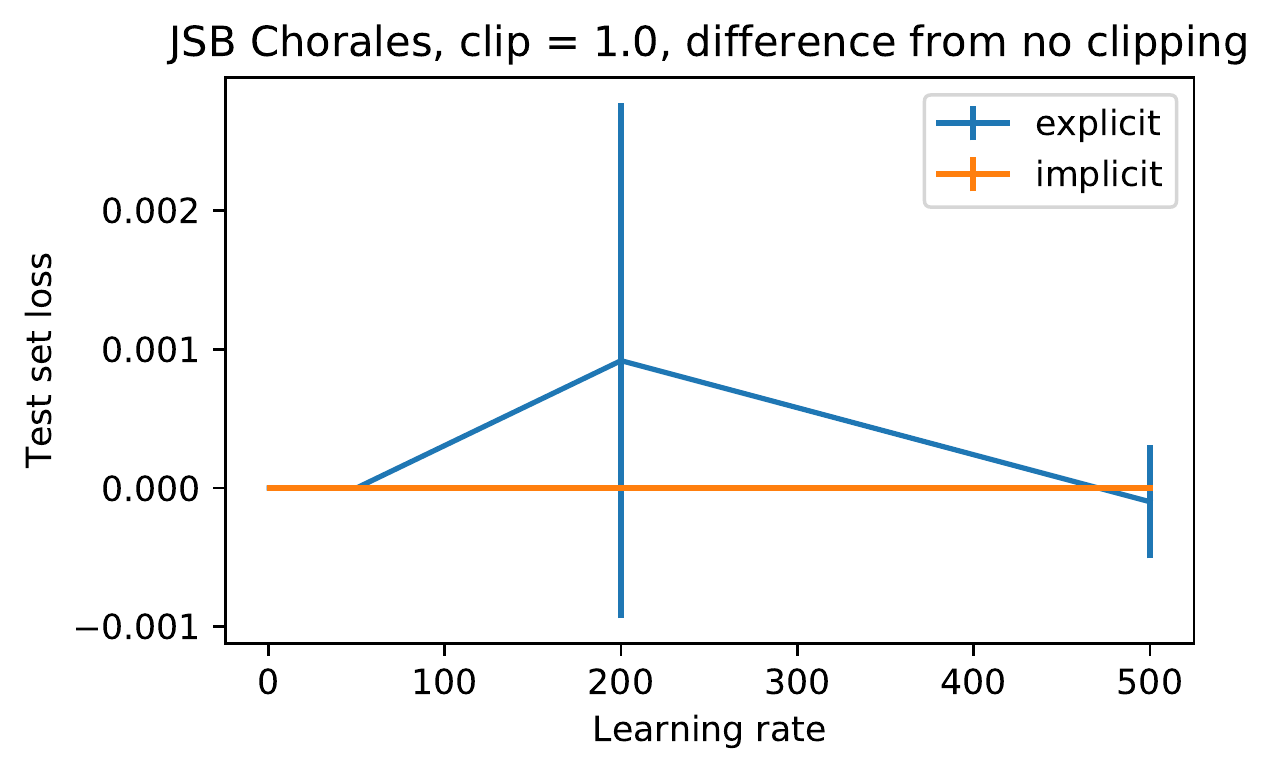}
\end{minipage}%
\hfill
\begin{minipage}{.49\textwidth}
  \centering
  \includegraphics[width=.79\linewidth]{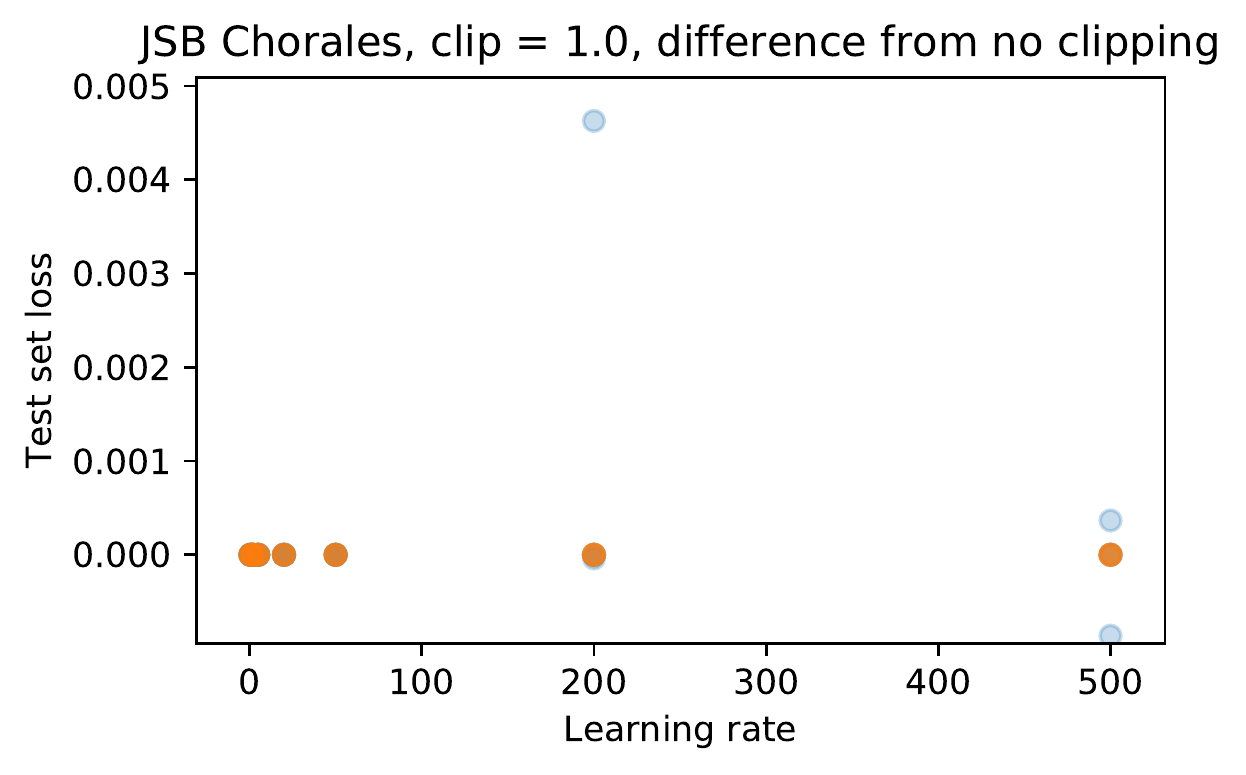}
\end{minipage}
\end{figure}

\begin{figure}[h]
\centering
\begin{minipage}{.49\textwidth}
  \centering
  \includegraphics[width=.79\linewidth]{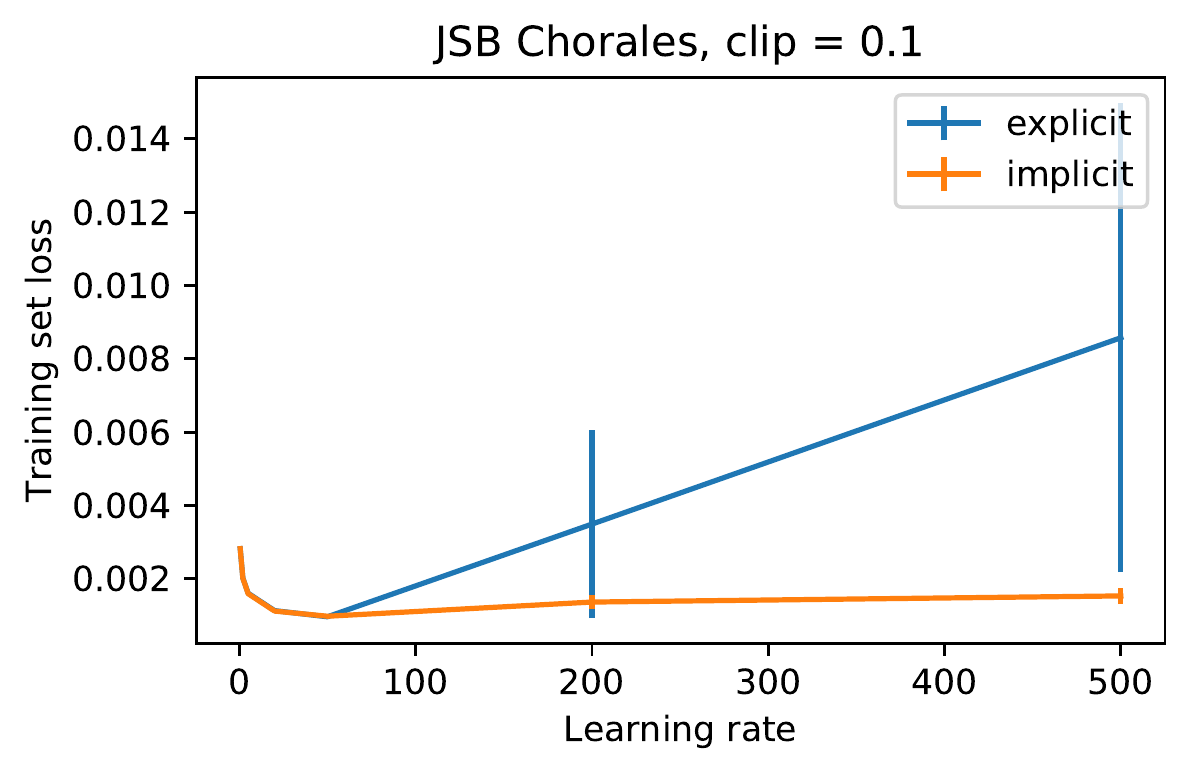}
\end{minipage}%
\hfill
\begin{minipage}{.49\textwidth}
  \centering
  \includegraphics[width=.79\linewidth]{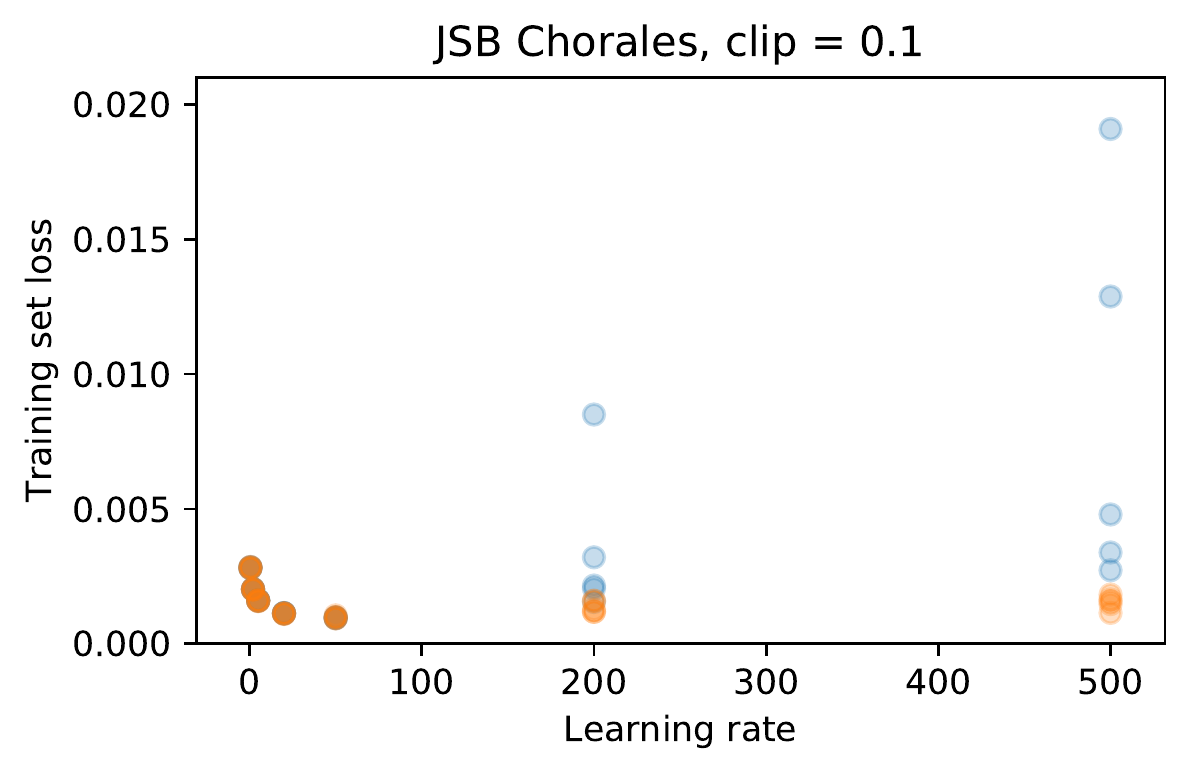}
\end{minipage}
\begin{minipage}{.49\textwidth}
  \centering
  \includegraphics[width=.79\linewidth]{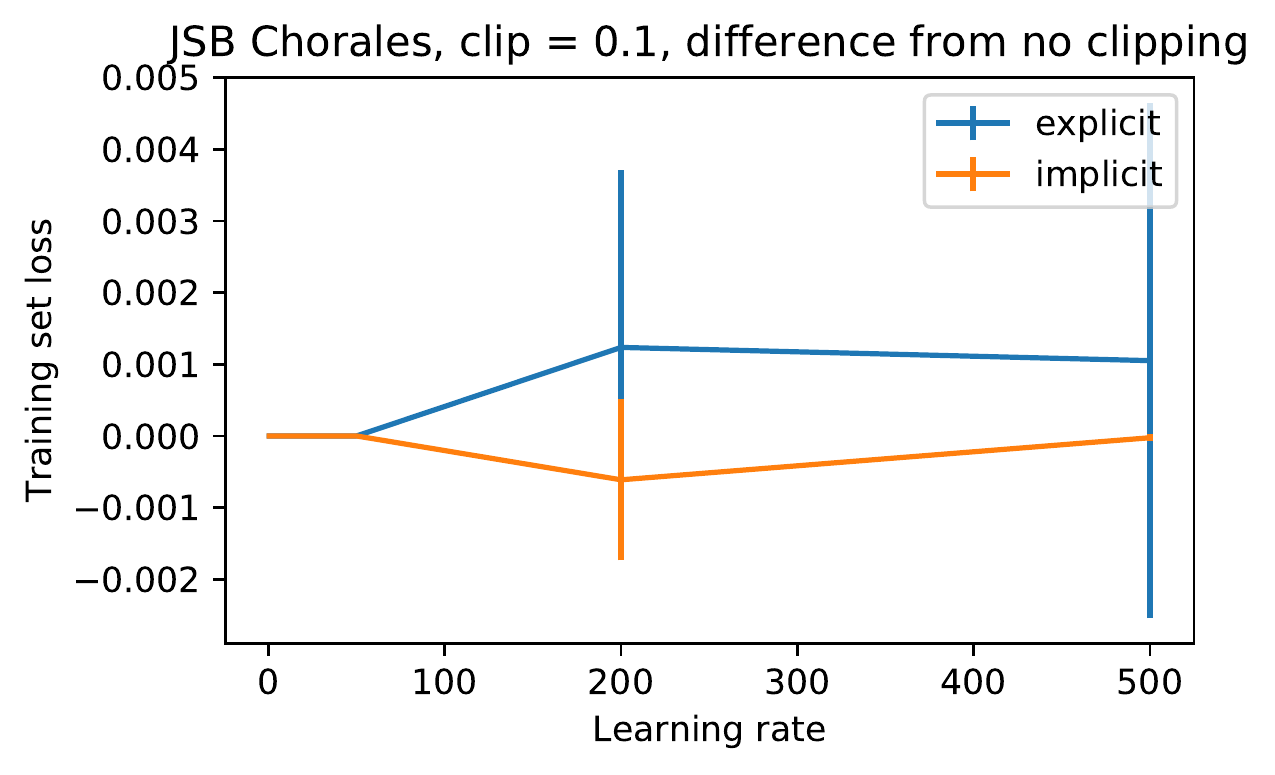}
\end{minipage}%
\hfill
\begin{minipage}{.49\textwidth}
  \centering
  \includegraphics[width=.79\linewidth]{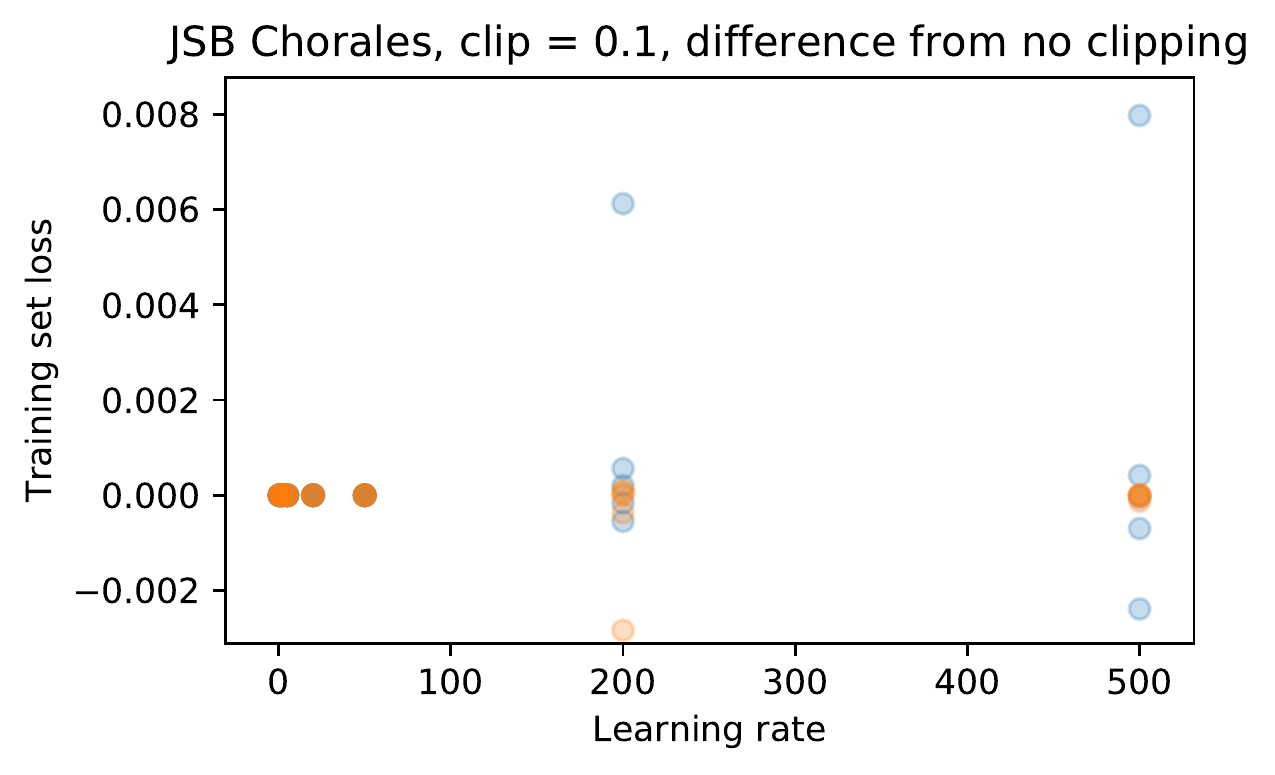}
\end{minipage}
\end{figure}

\vspace{0cm}

\begin{figure}[h]
\begin{minipage}{.49\textwidth}
  \centering
  \includegraphics[width=.79\linewidth]{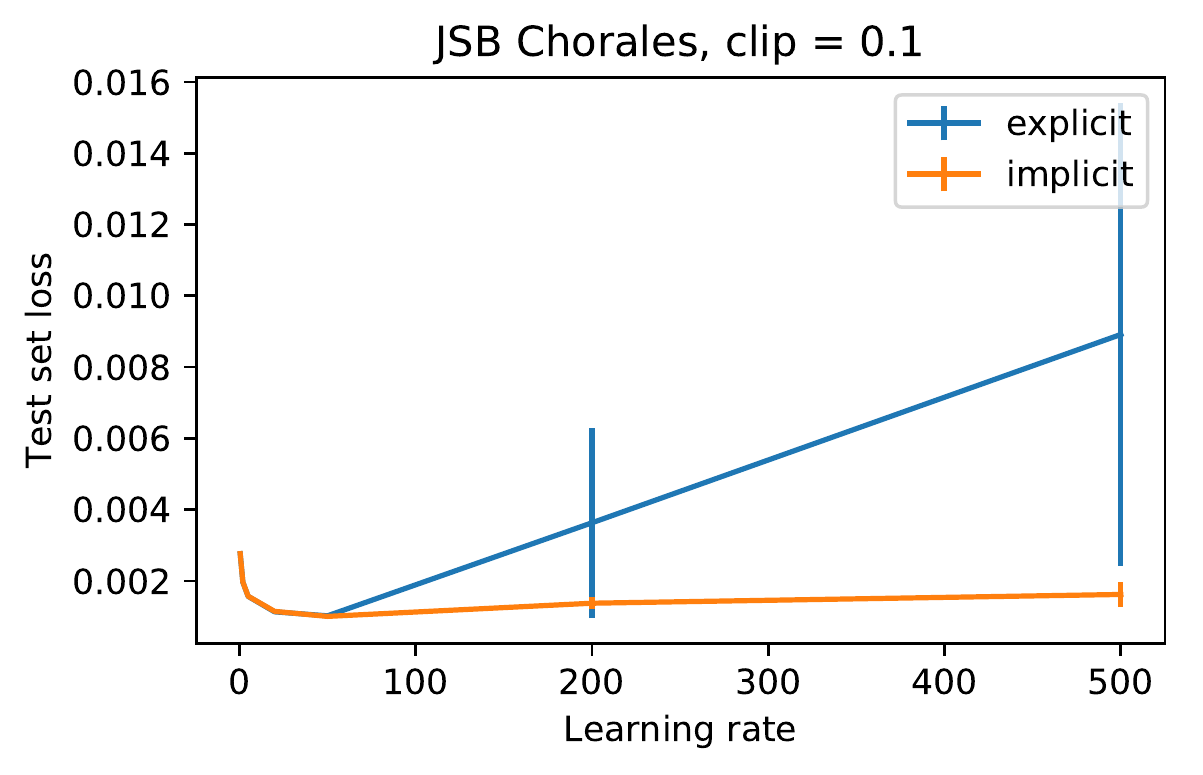}
\end{minipage}%
\hfill
\begin{minipage}{.49\textwidth}
  \centering
  \includegraphics[width=.79\linewidth]{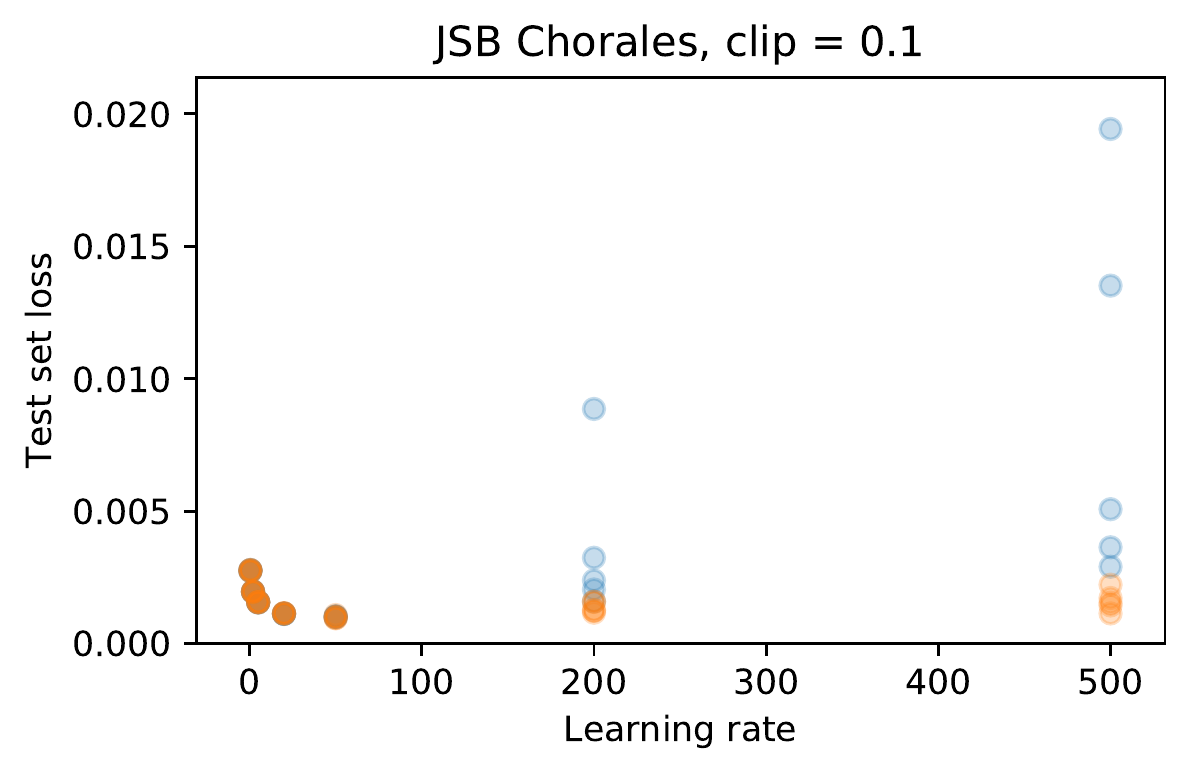}
\end{minipage}
\begin{minipage}{.49\textwidth}
  \centering
  \includegraphics[width=.79\linewidth]{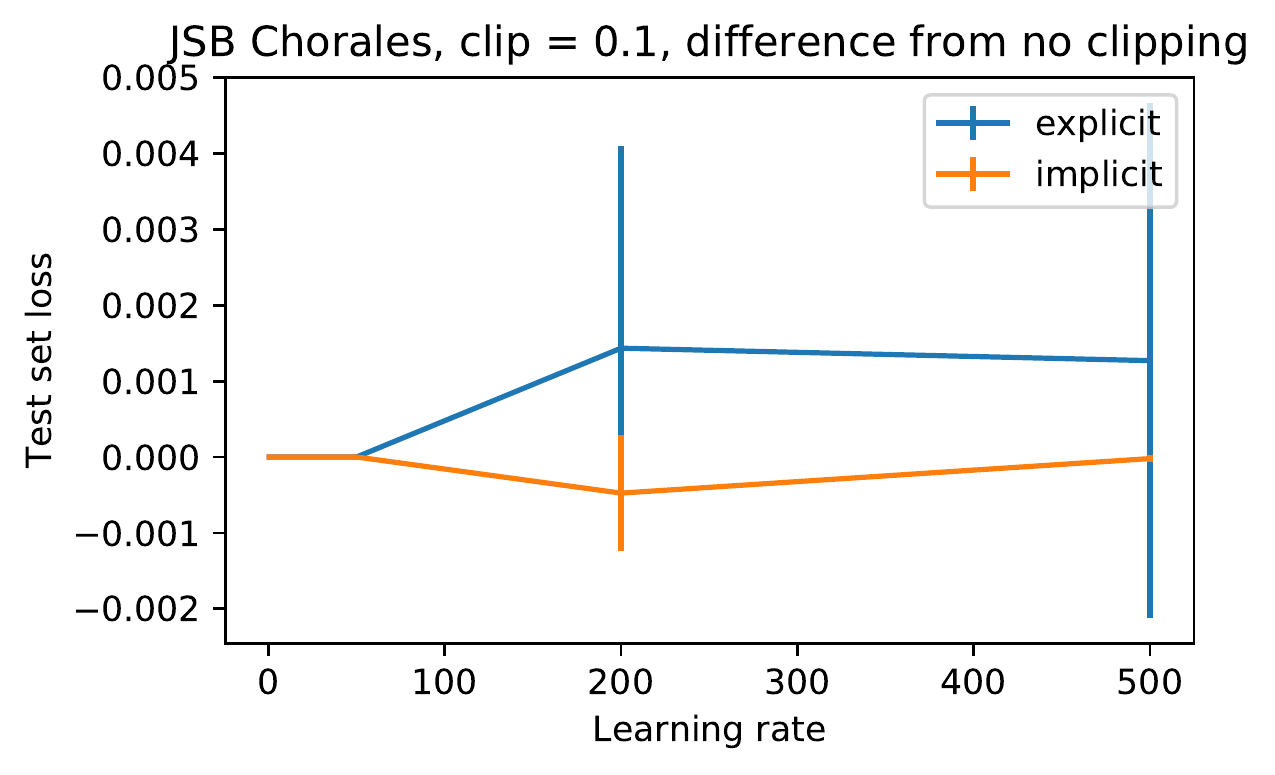}
\end{minipage}%
\hfill
\begin{minipage}{.49\textwidth}
  \centering
  \includegraphics[width=.79\linewidth]{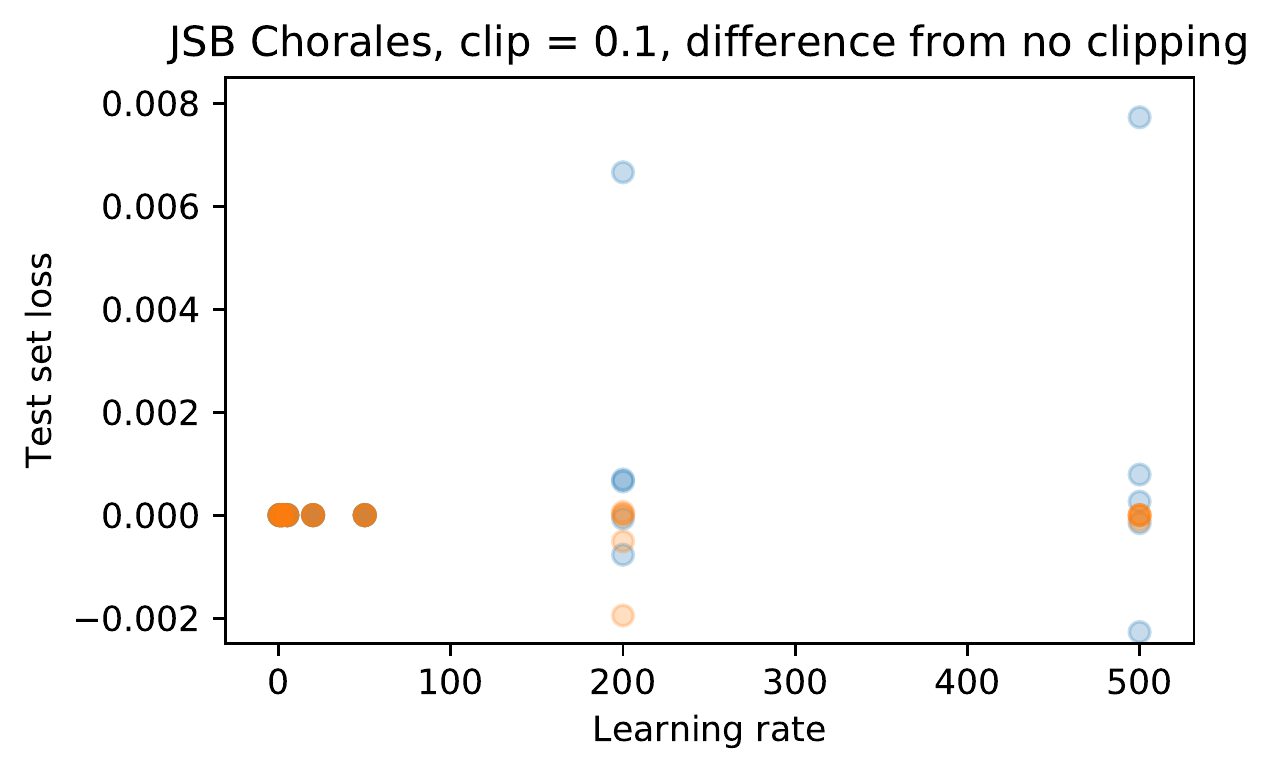}
\end{minipage}
\end{figure}

\clearpage
\subsection{MuseData}

\begin{figure}[h]
\centering
\begin{minipage}{.49\textwidth}
  \centering
  \includegraphics[width=.79\linewidth]{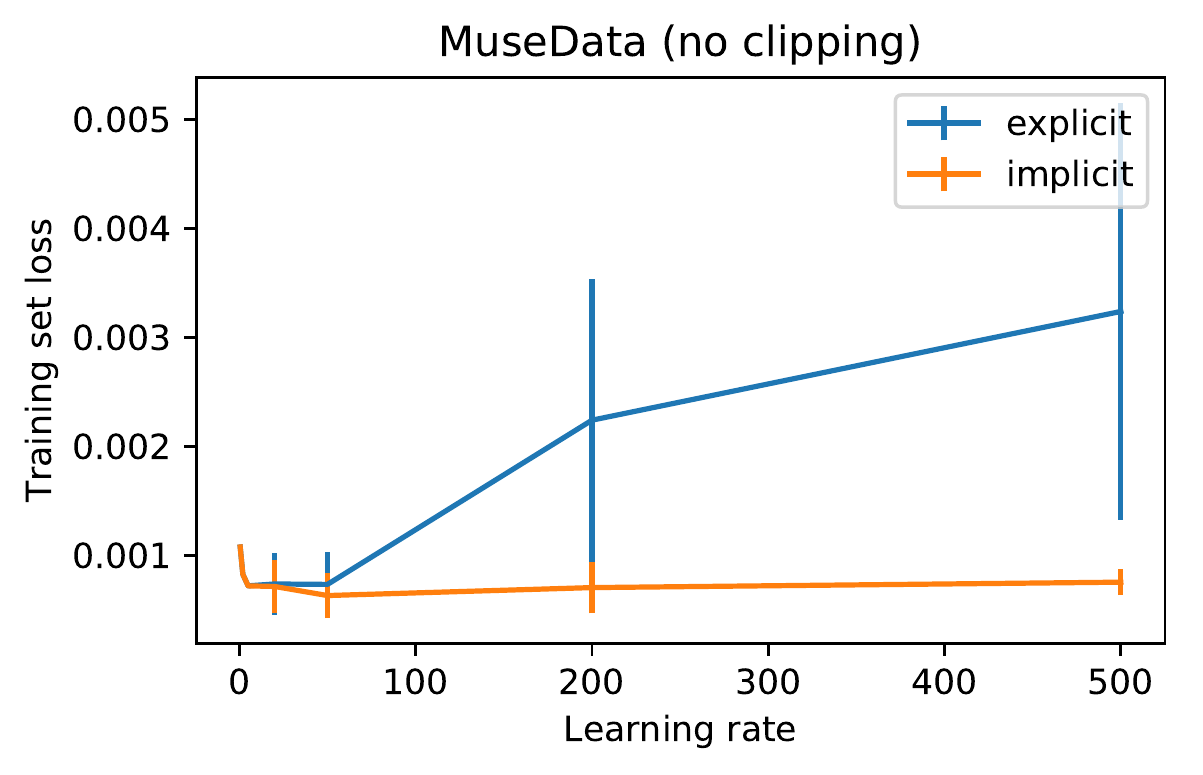}
\end{minipage}%
\hfill
\begin{minipage}{.49\textwidth}
  \centering
  \includegraphics[width=.79\linewidth]{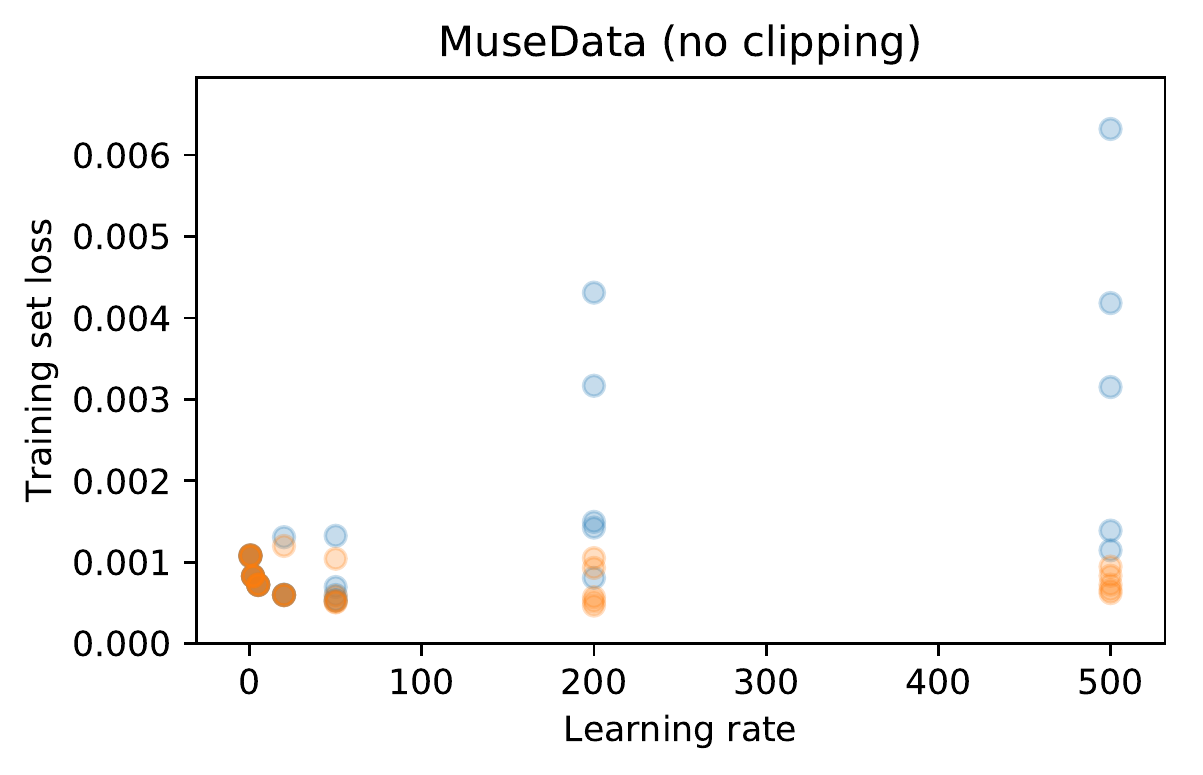}
\end{minipage}
\begin{minipage}{.49\textwidth}
  \centering
  \includegraphics[width=.79\linewidth]{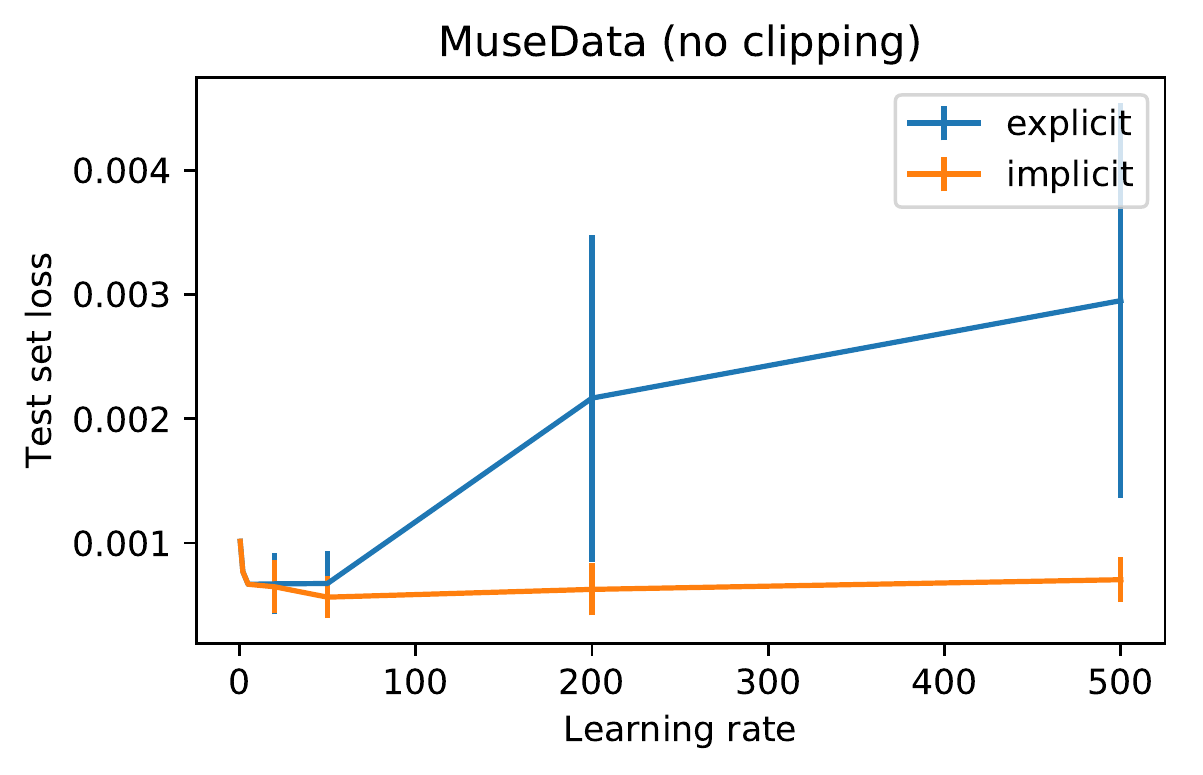}
\end{minipage}%
\hfill
\begin{minipage}{.49\textwidth}
  \centering
  \includegraphics[width=.79\linewidth]{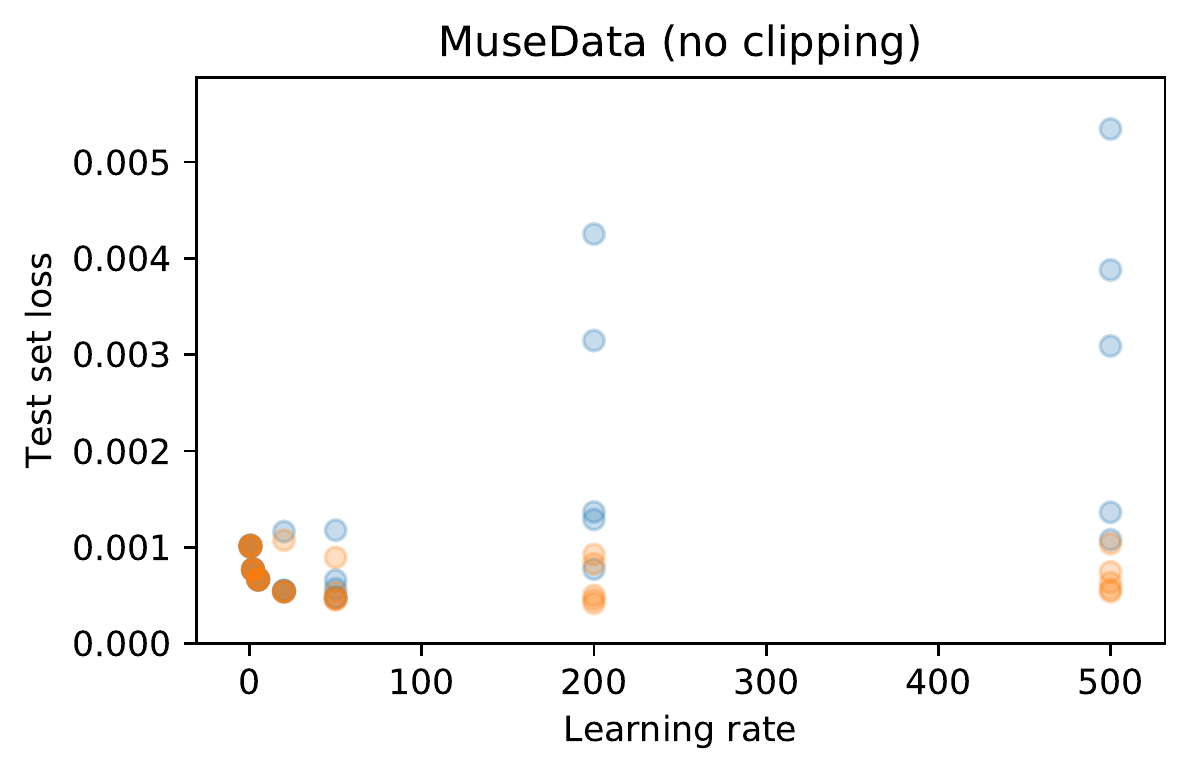}
\end{minipage}
\end{figure}

\begin{figure}[h]
\centering
\begin{minipage}{.49\textwidth}
  \centering
  \includegraphics[width=.79\linewidth]{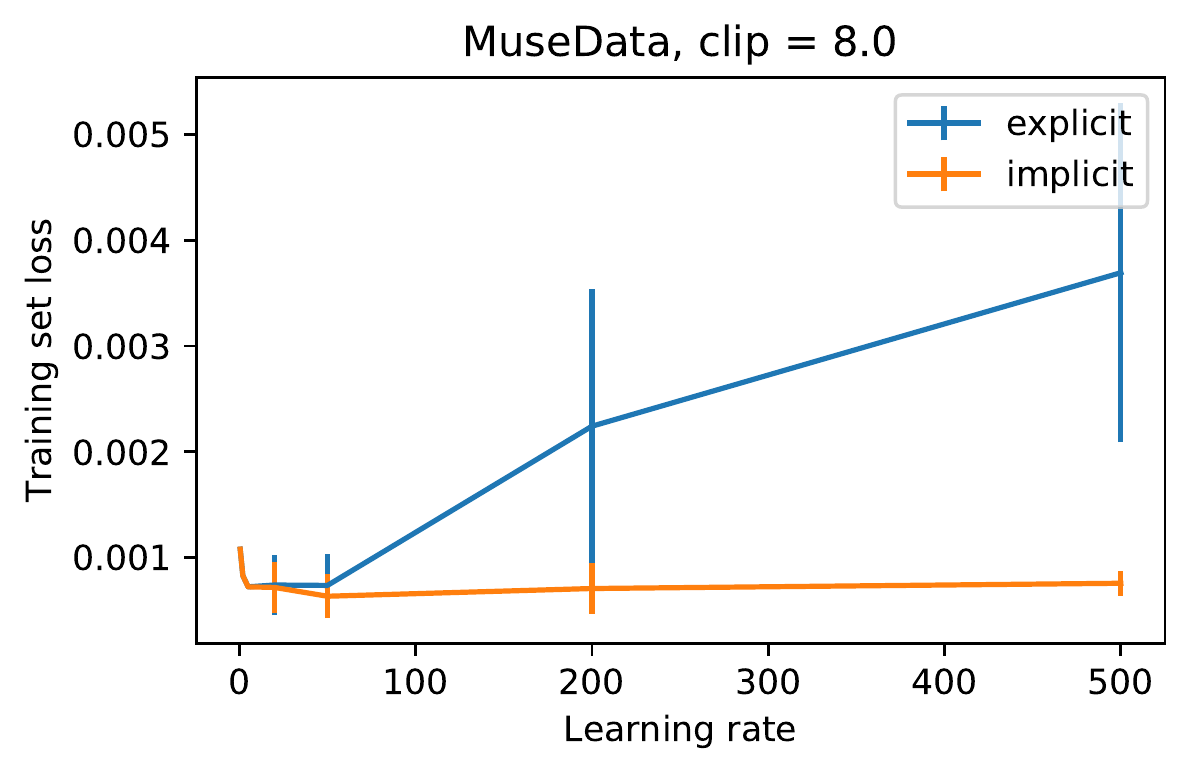}
\end{minipage}%
\hfill
\begin{minipage}{.49\textwidth}
  \centering
  \includegraphics[width=.79\linewidth]{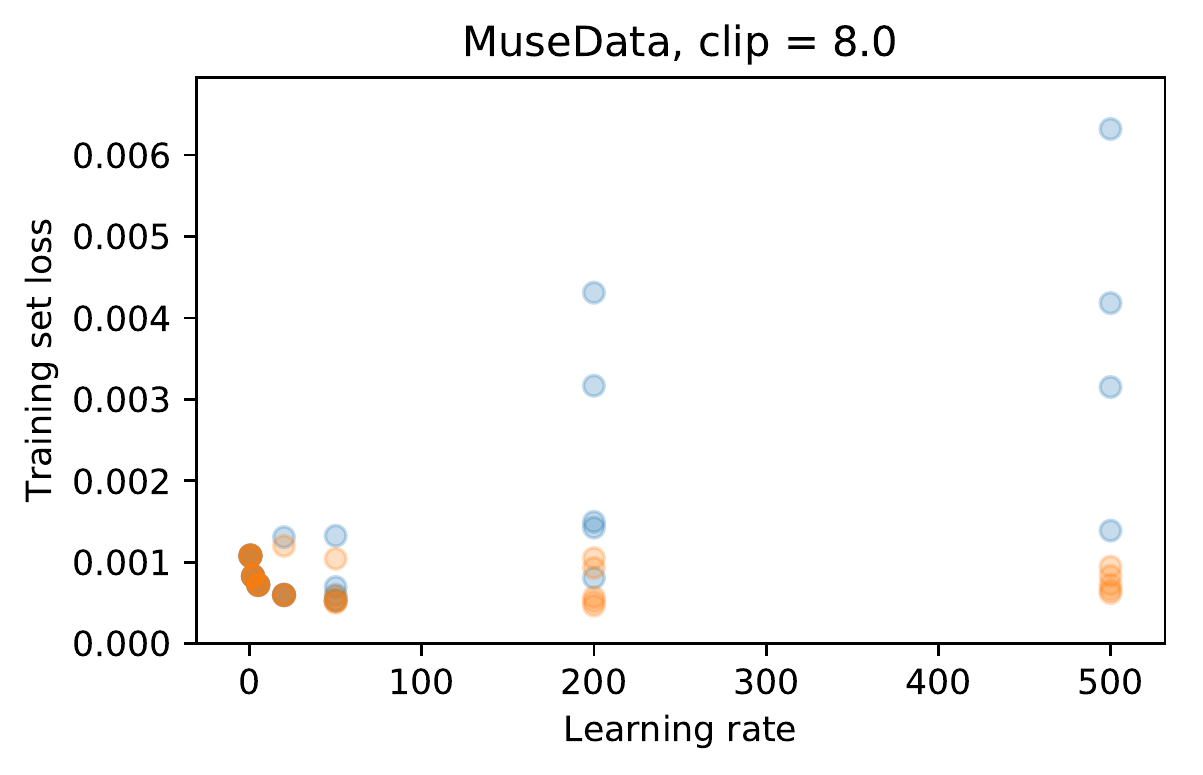}
\end{minipage}
\begin{minipage}{.49\textwidth}
  \centering
  \includegraphics[width=.79\linewidth]{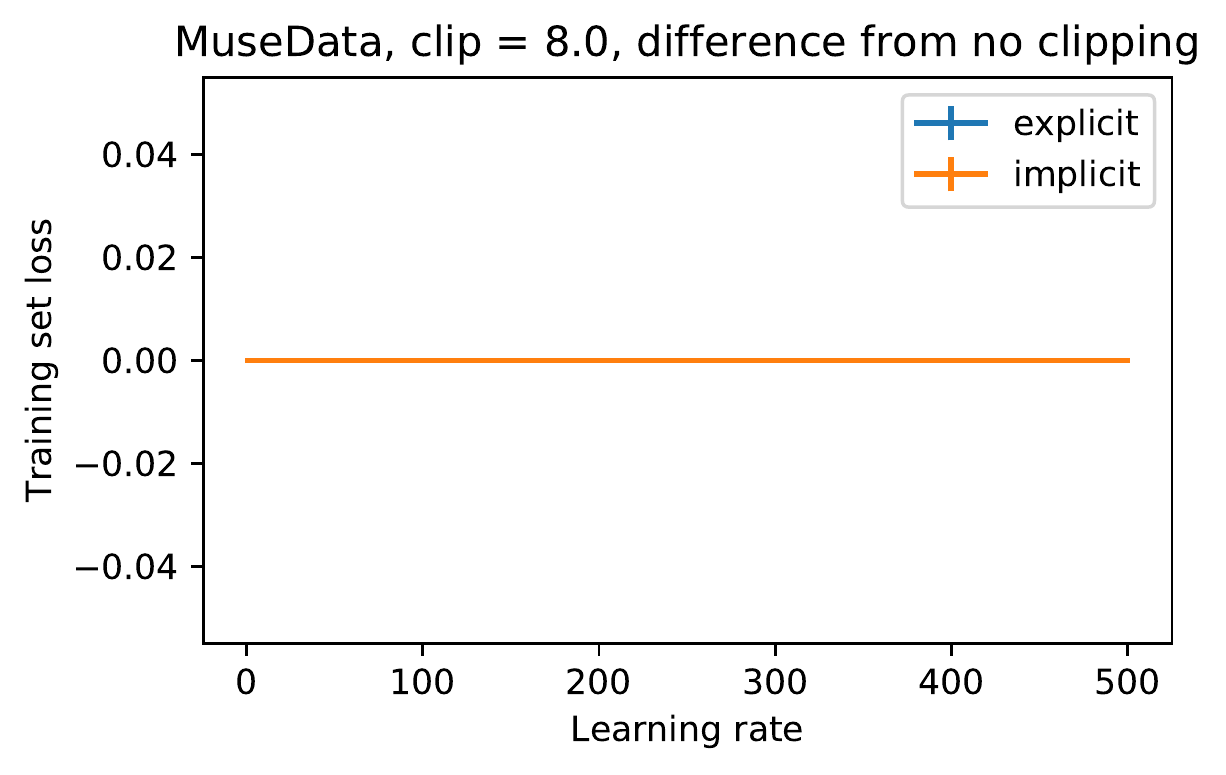}
\end{minipage}%
\hfill
\begin{minipage}{.49\textwidth}
  \centering
  \includegraphics[width=.79\linewidth]{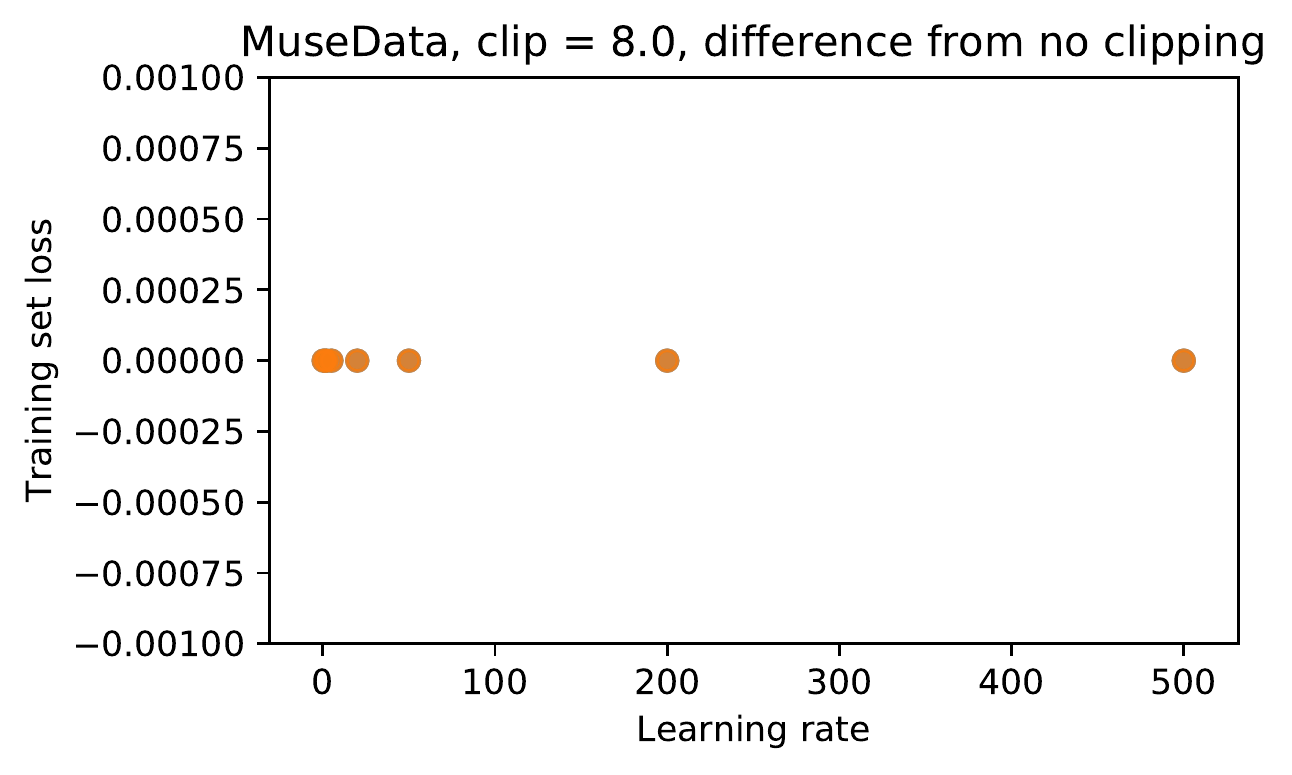}
\end{minipage}
\end{figure}

\vspace{0cm}

\begin{figure}[h]
\begin{minipage}{.49\textwidth}
  \centering
  \includegraphics[width=.79\linewidth]{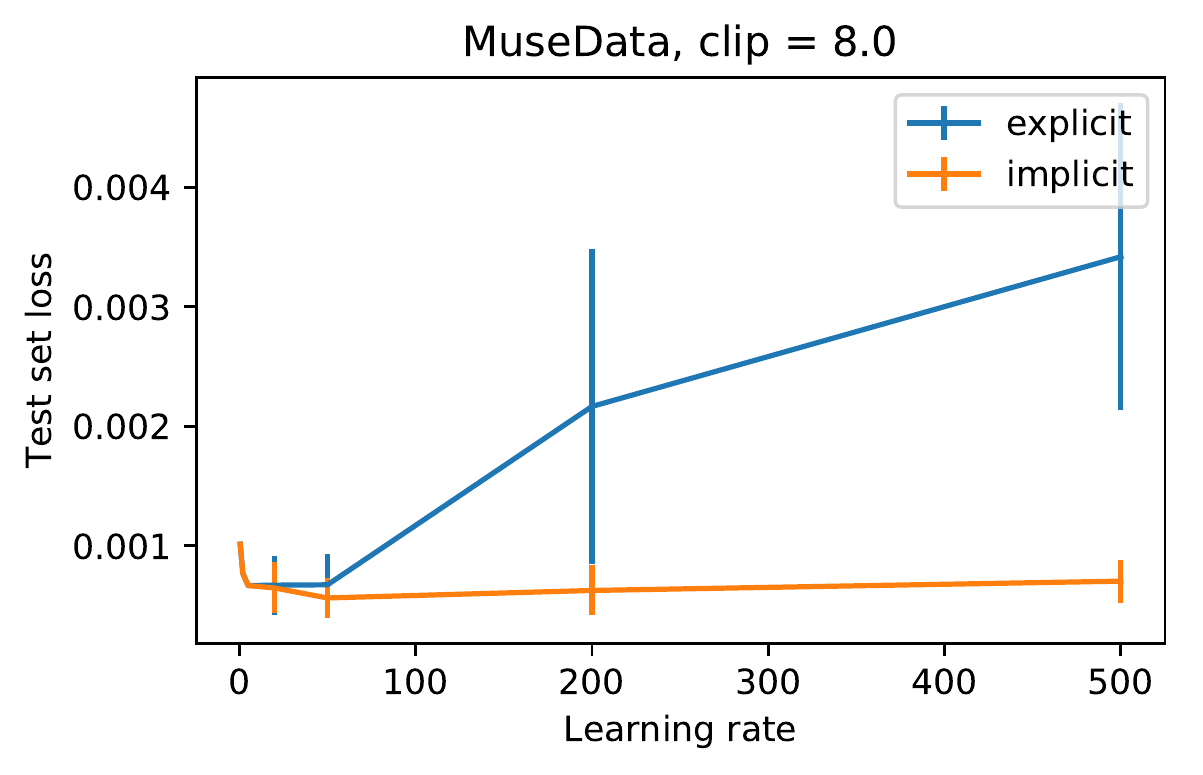}
\end{minipage}%
\hfill
\begin{minipage}{.49\textwidth}
  \centering
  \includegraphics[width=.79\linewidth]{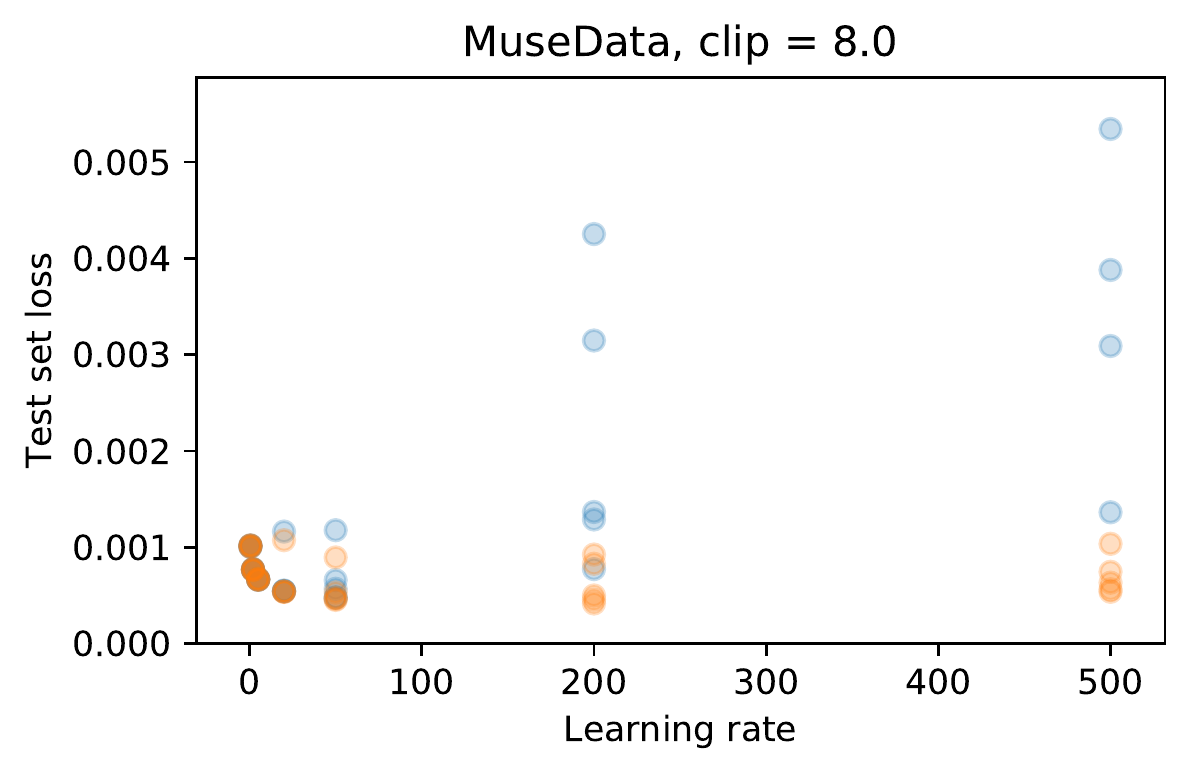}
\end{minipage}
\begin{minipage}{.49\textwidth}
  \centering
  \includegraphics[width=.79\linewidth]{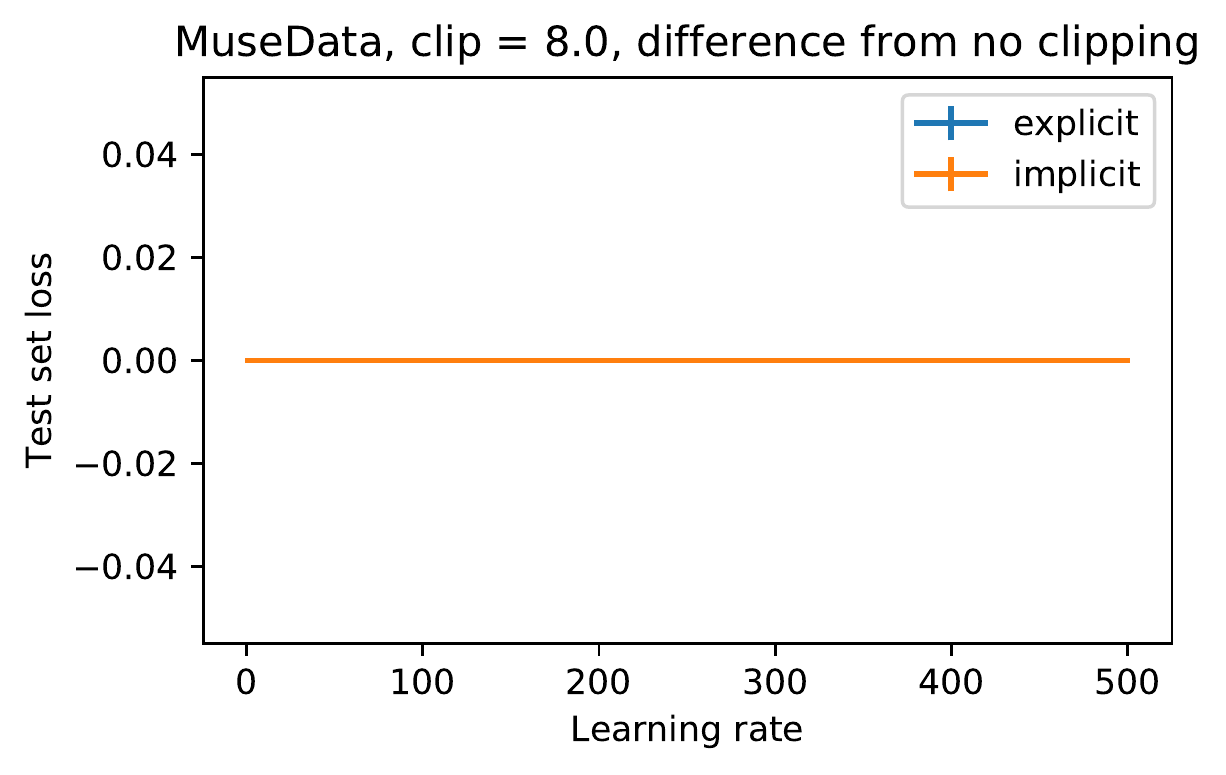}
\end{minipage}%
\hfill
\begin{minipage}{.49\textwidth}
  \centering
  \includegraphics[width=.79\linewidth]{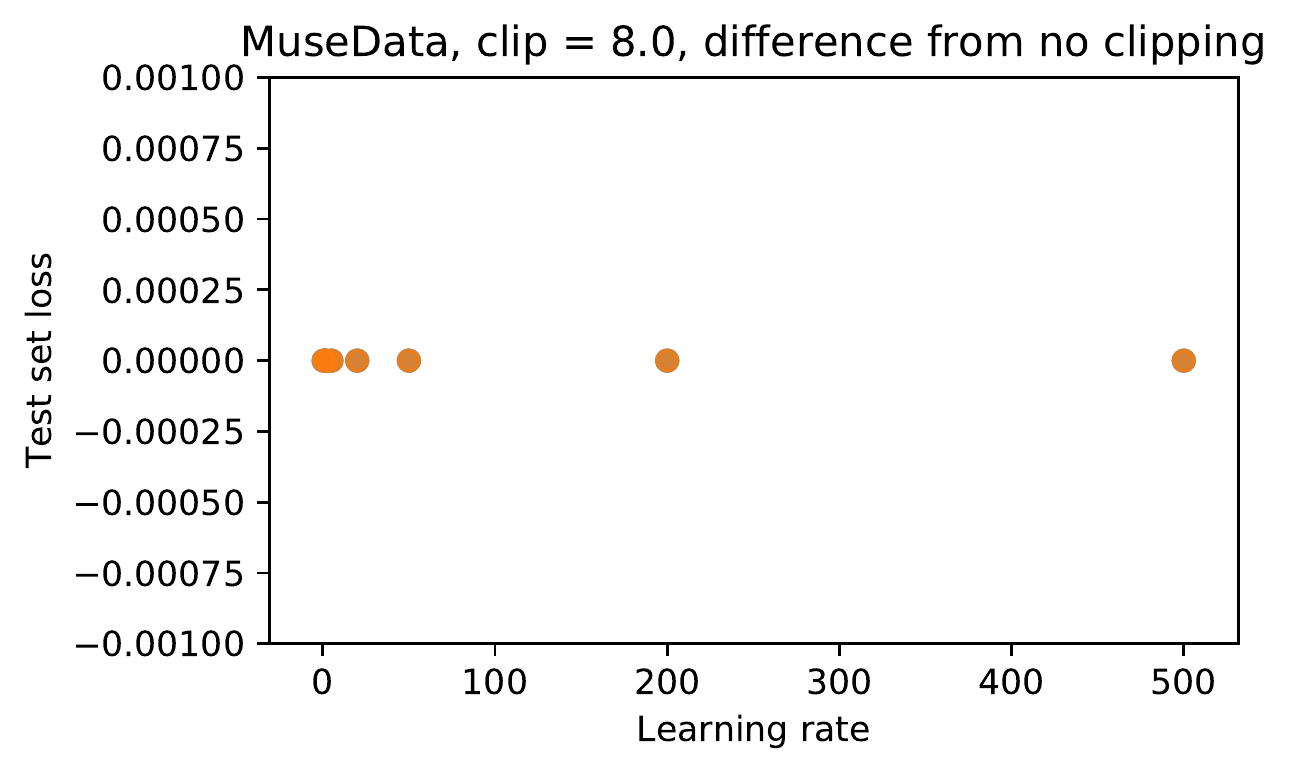}
\end{minipage}
\end{figure}

\clearpage
\subsection{Nottingham}

\begin{figure}[h]
\centering
\begin{minipage}{.49\textwidth}
  \centering
  \includegraphics[width=.79\linewidth]{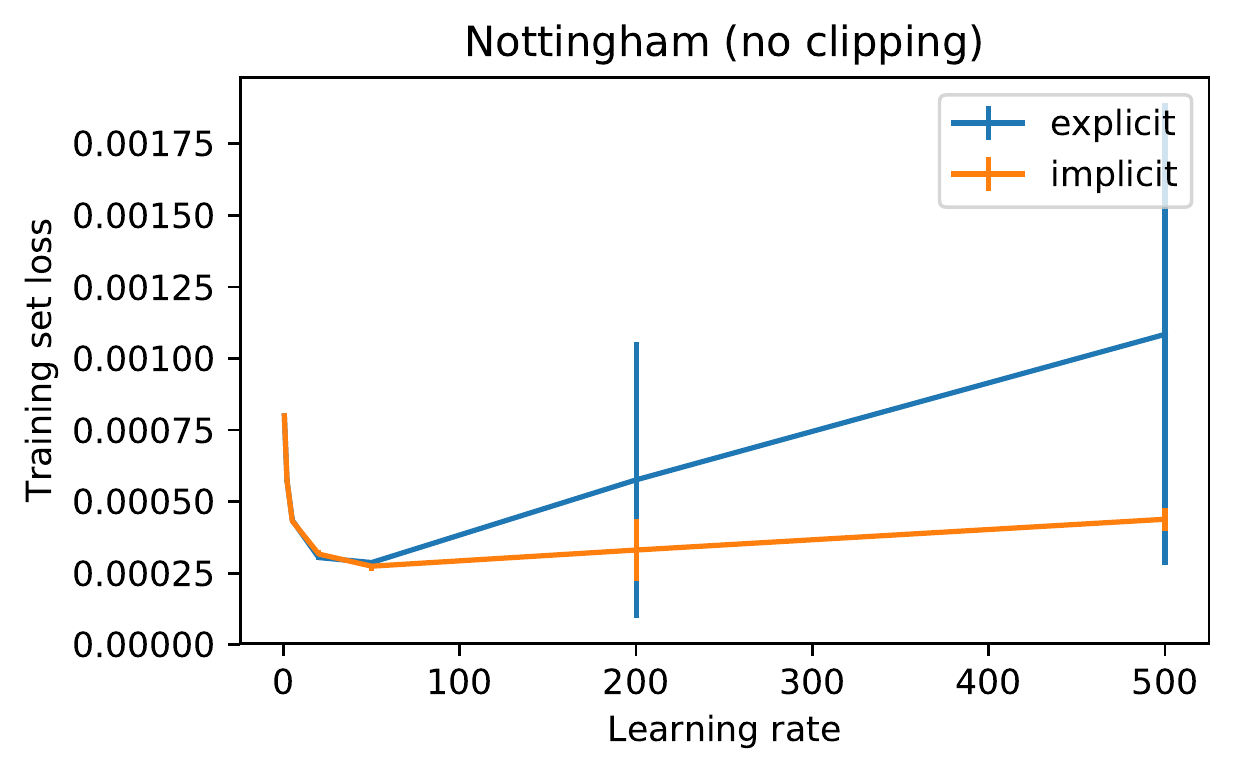}
\end{minipage}%
\hfill
\begin{minipage}{.49\textwidth}
  \centering
  \includegraphics[width=.79\linewidth]{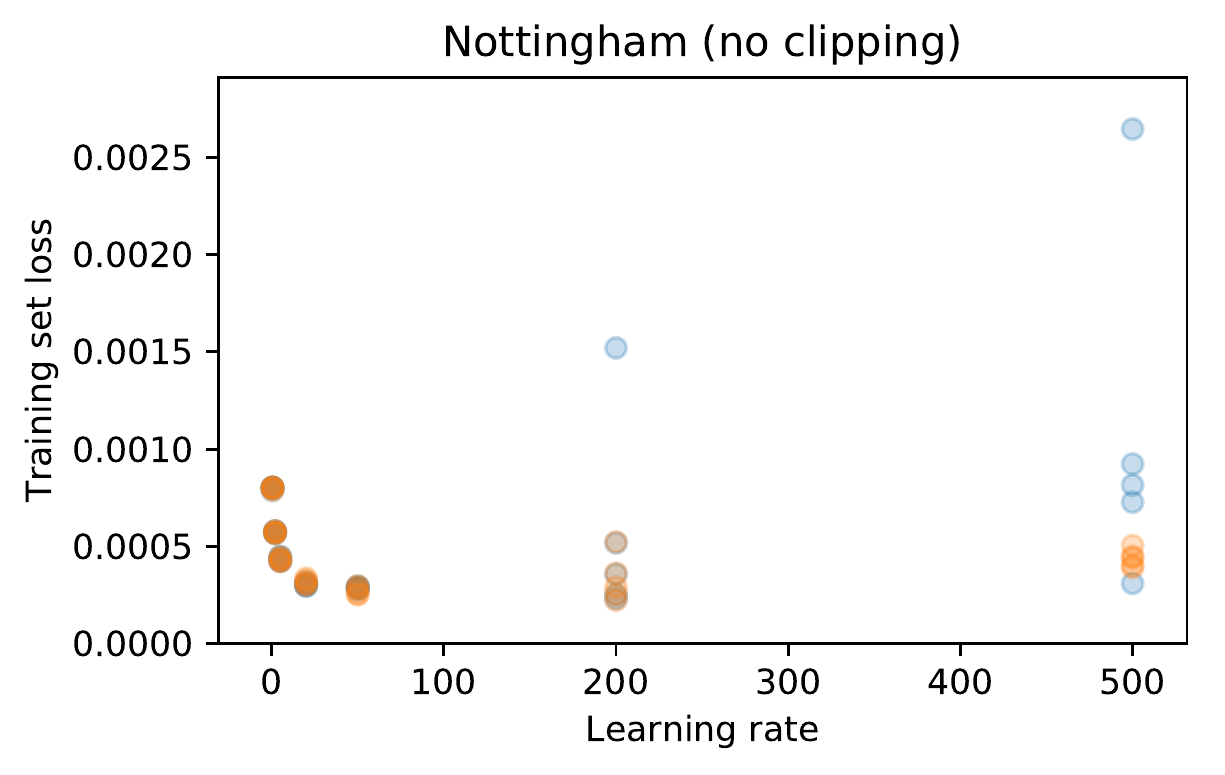}
\end{minipage}
\begin{minipage}{.49\textwidth}
  \centering
  \includegraphics[width=.79\linewidth]{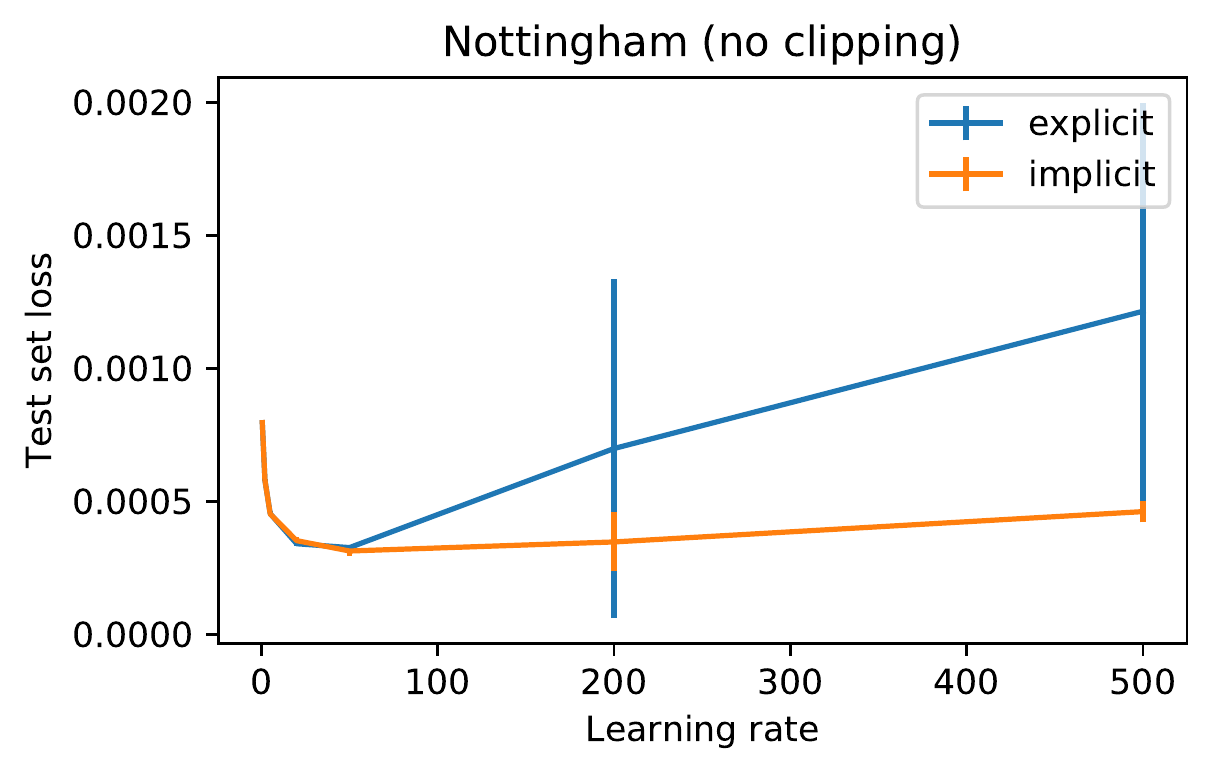}
\end{minipage}%
\hfill
\begin{minipage}{.49\textwidth}
  \centering
  \includegraphics[width=.79\linewidth]{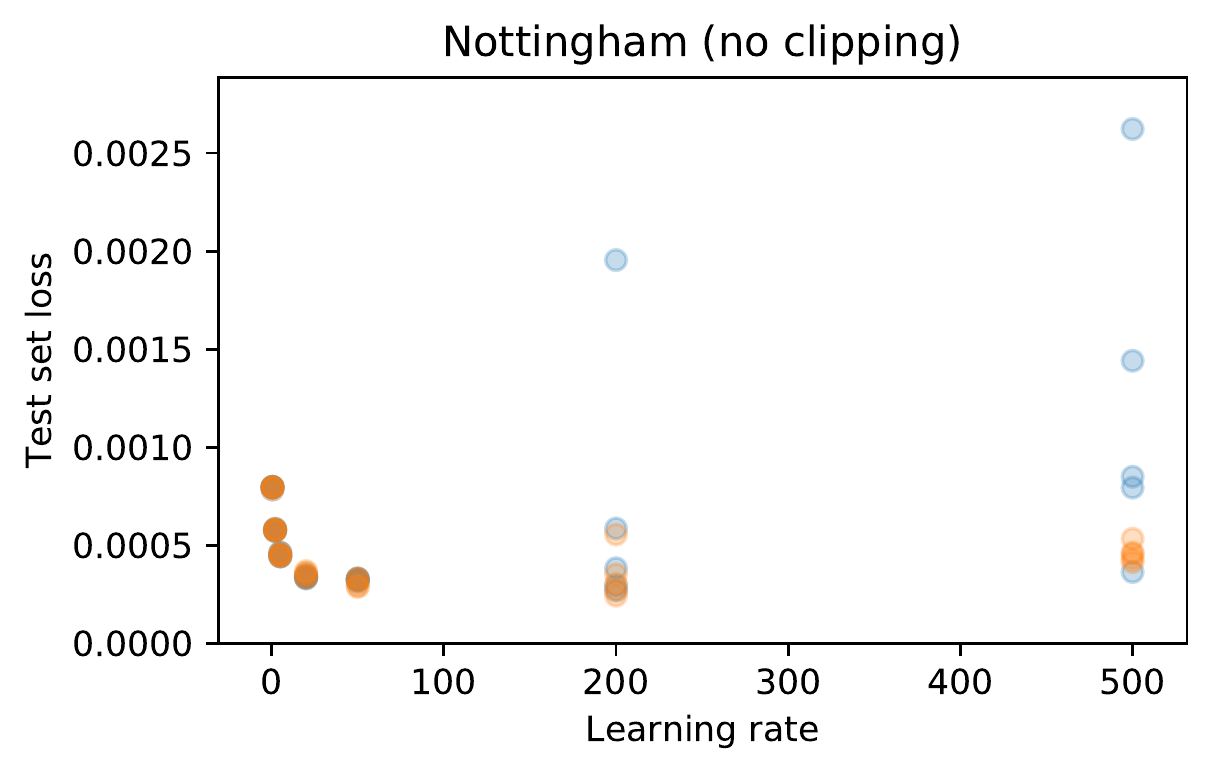}
\end{minipage}
\end{figure}

\begin{figure}[h]
\centering
\begin{minipage}{.49\textwidth}
  \centering
  \includegraphics[width=.79\linewidth]{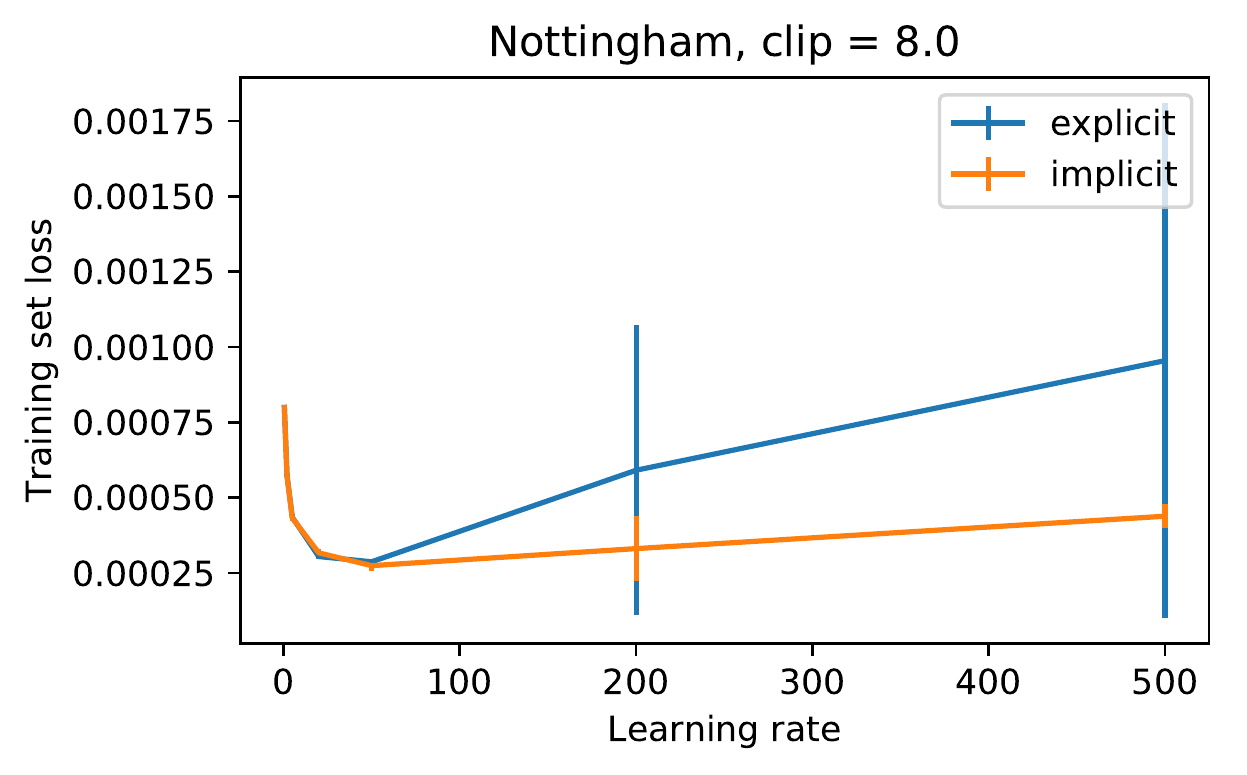}
\end{minipage}%
\hfill
\begin{minipage}{.49\textwidth}
  \centering
  \includegraphics[width=.79\linewidth]{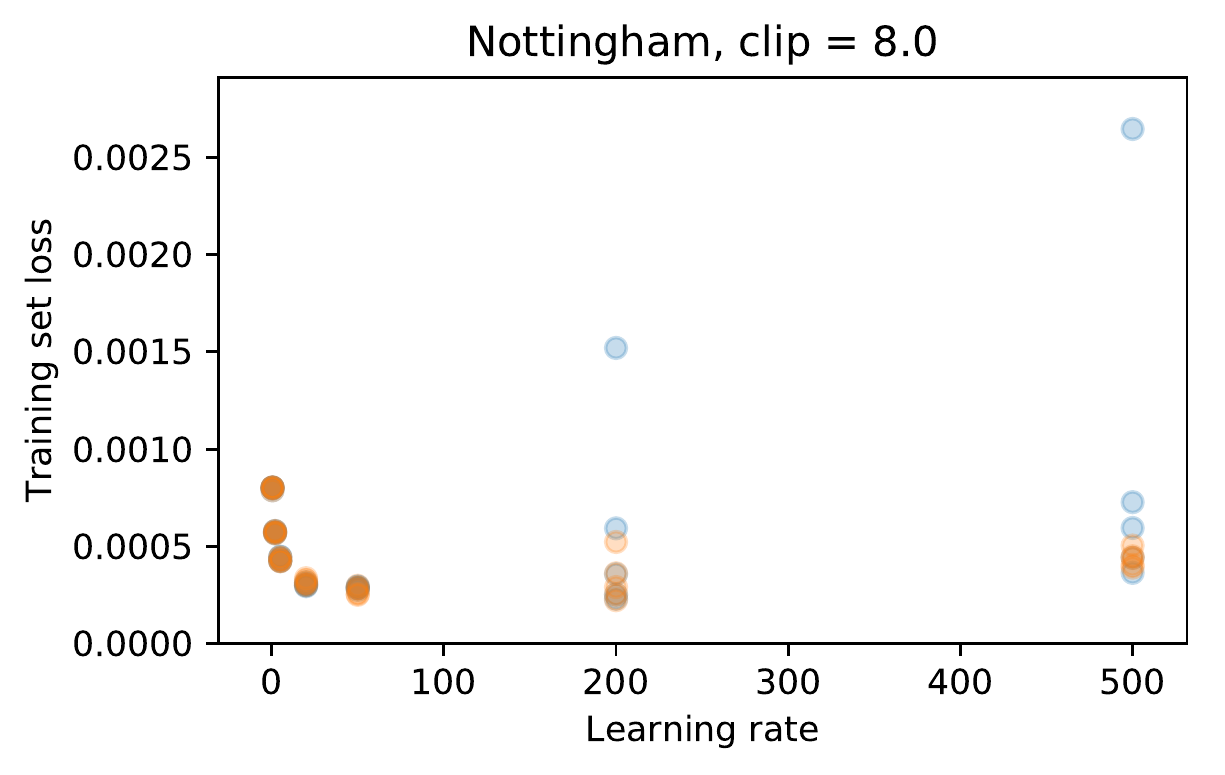}
\end{minipage}
\begin{minipage}{.49\textwidth}
  \centering
  \includegraphics[width=.79\linewidth]{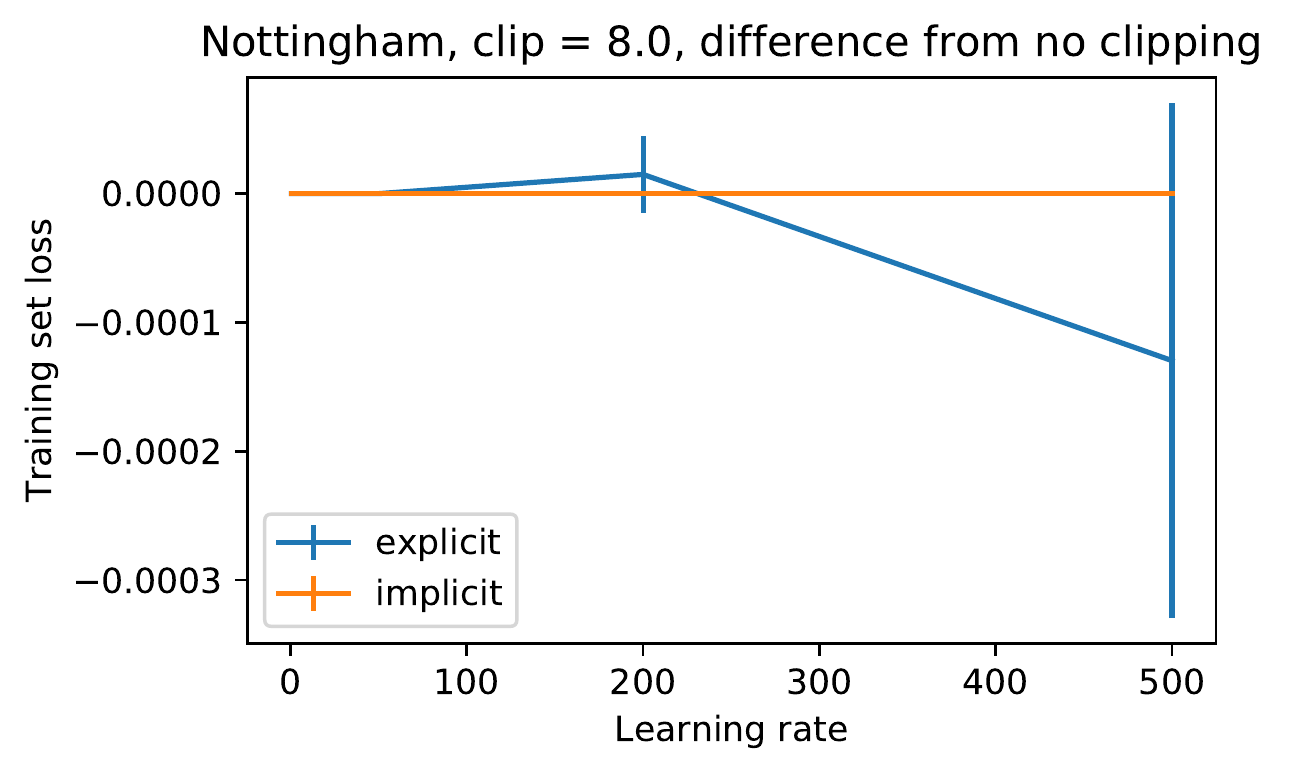}
\end{minipage}%
\hfill
\begin{minipage}{.49\textwidth}
  \centering
  \includegraphics[width=.79\linewidth]{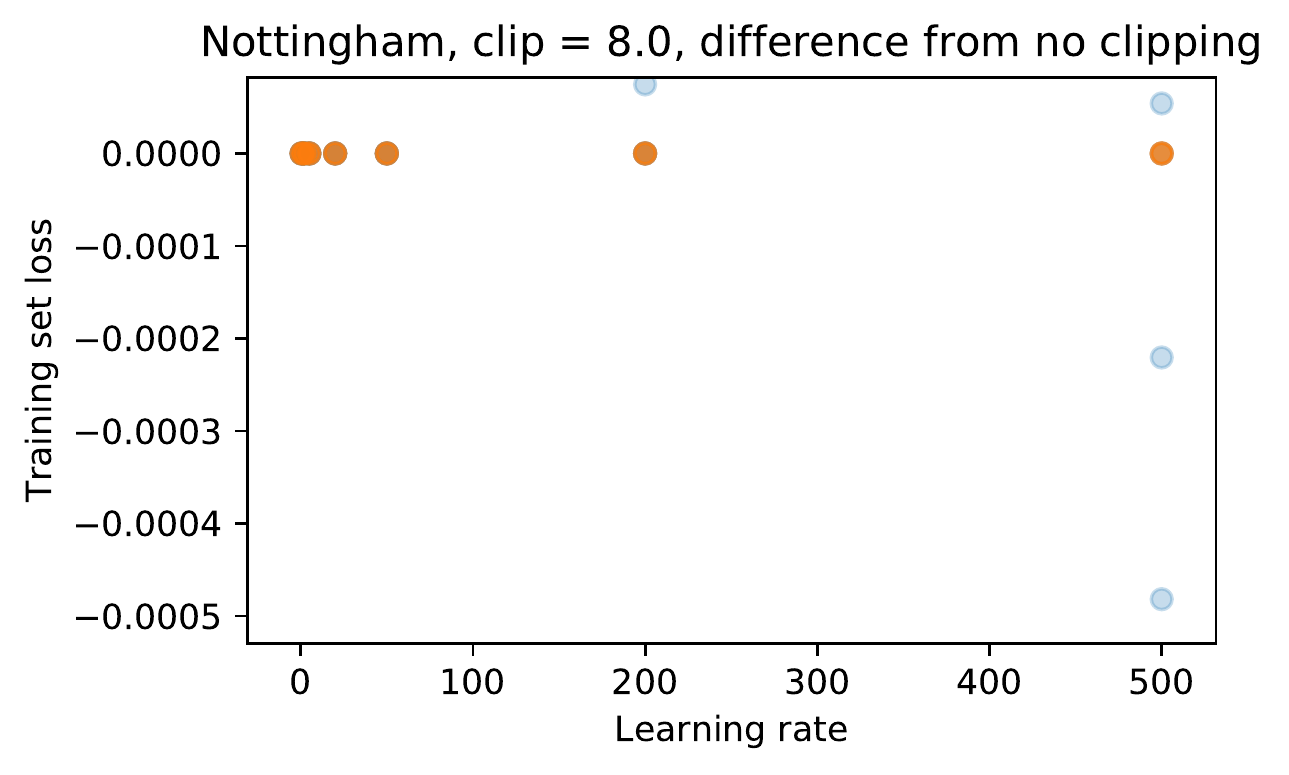}
\end{minipage}
\end{figure}

\vspace{0cm}

\begin{figure}[h]
\begin{minipage}{.49\textwidth}
  \centering
  \includegraphics[width=.79\linewidth]{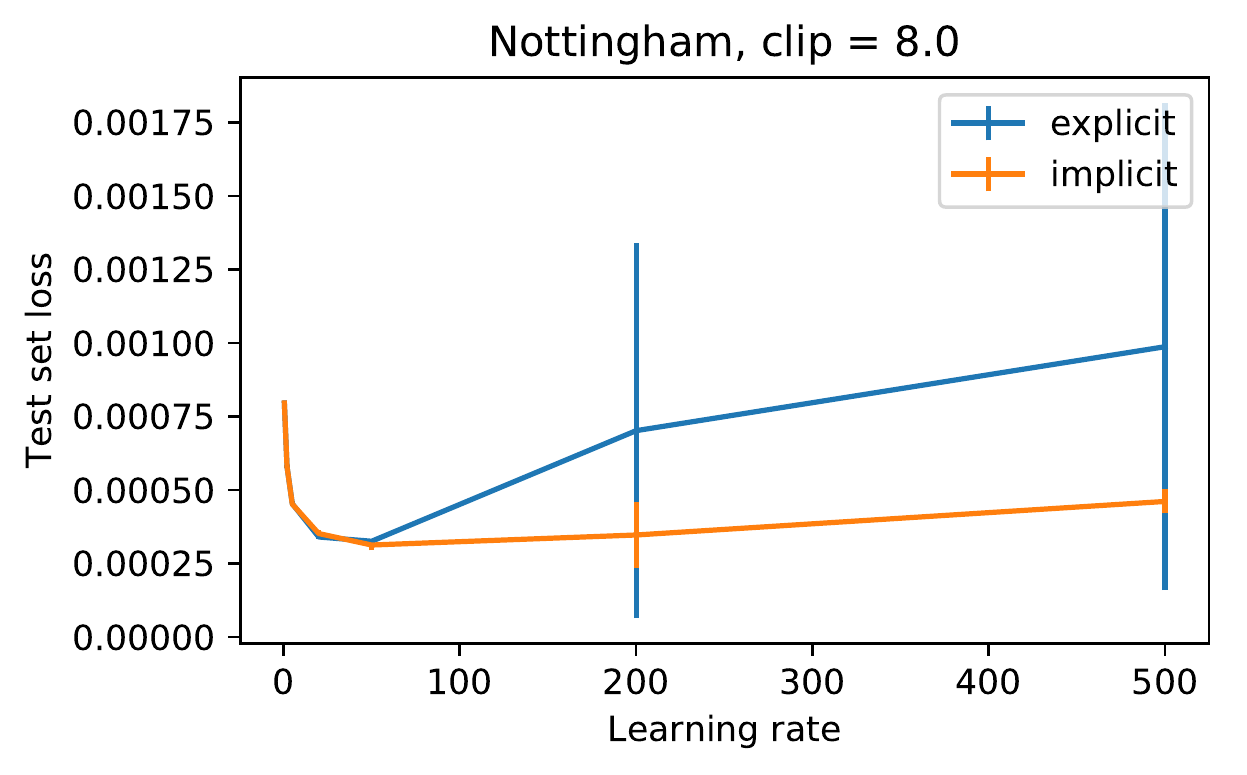}
\end{minipage}%
\hfill
\begin{minipage}{.49\textwidth}
  \centering
  \includegraphics[width=.79\linewidth]{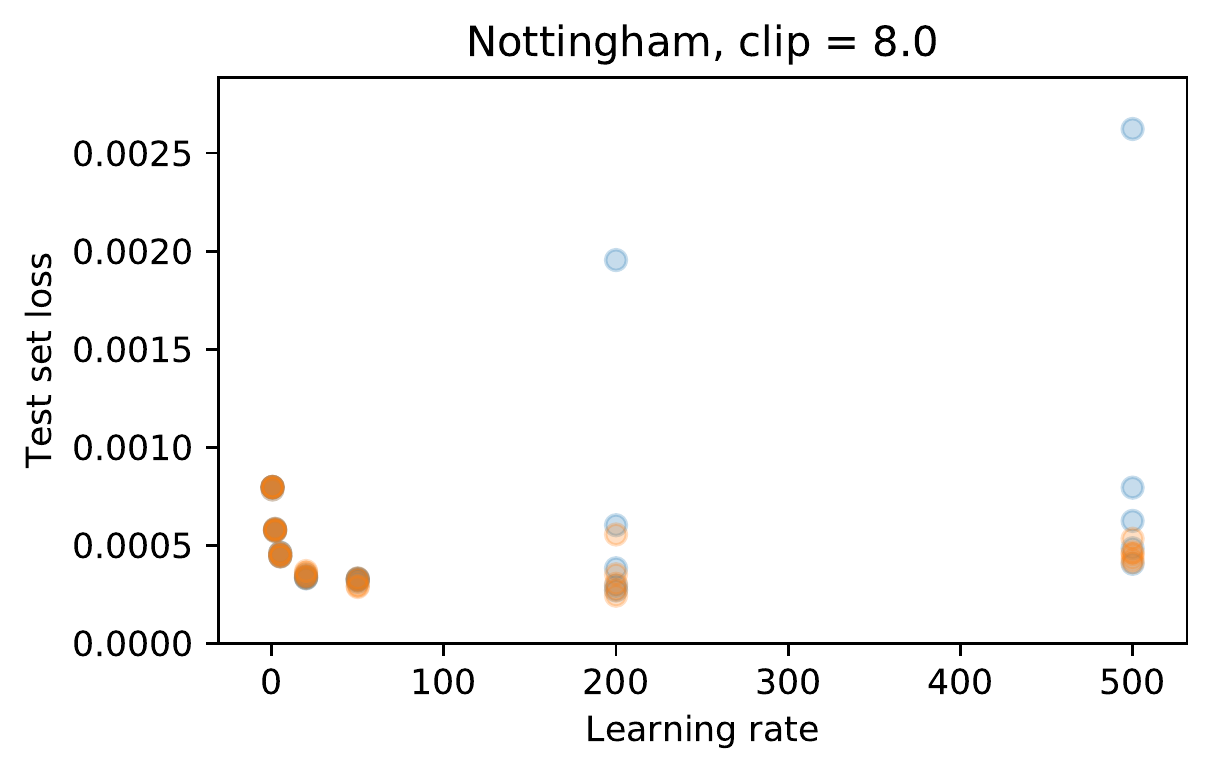}
\end{minipage}
\begin{minipage}{.49\textwidth}
  \centering
  \includegraphics[width=.79\linewidth]{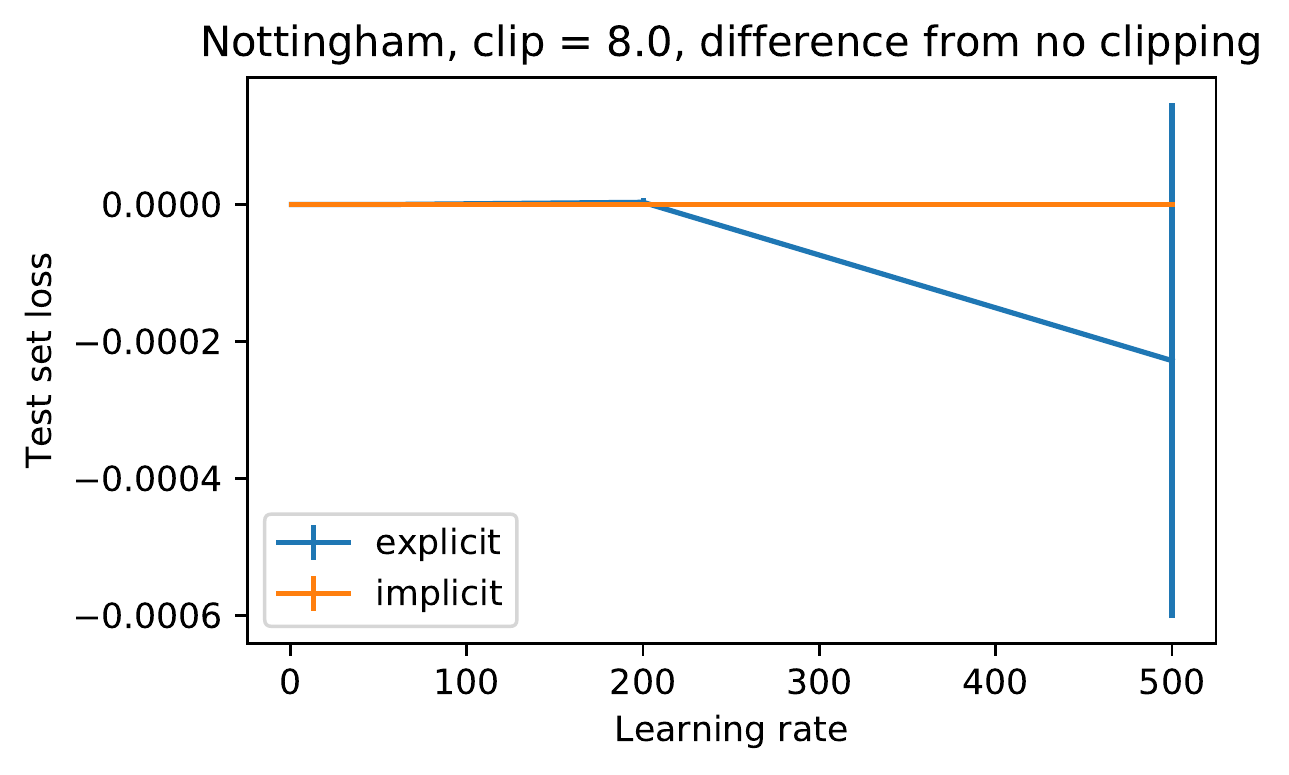}
\end{minipage}%
\hfill
\begin{minipage}{.49\textwidth}
  \centering
  \includegraphics[width=.79\linewidth]{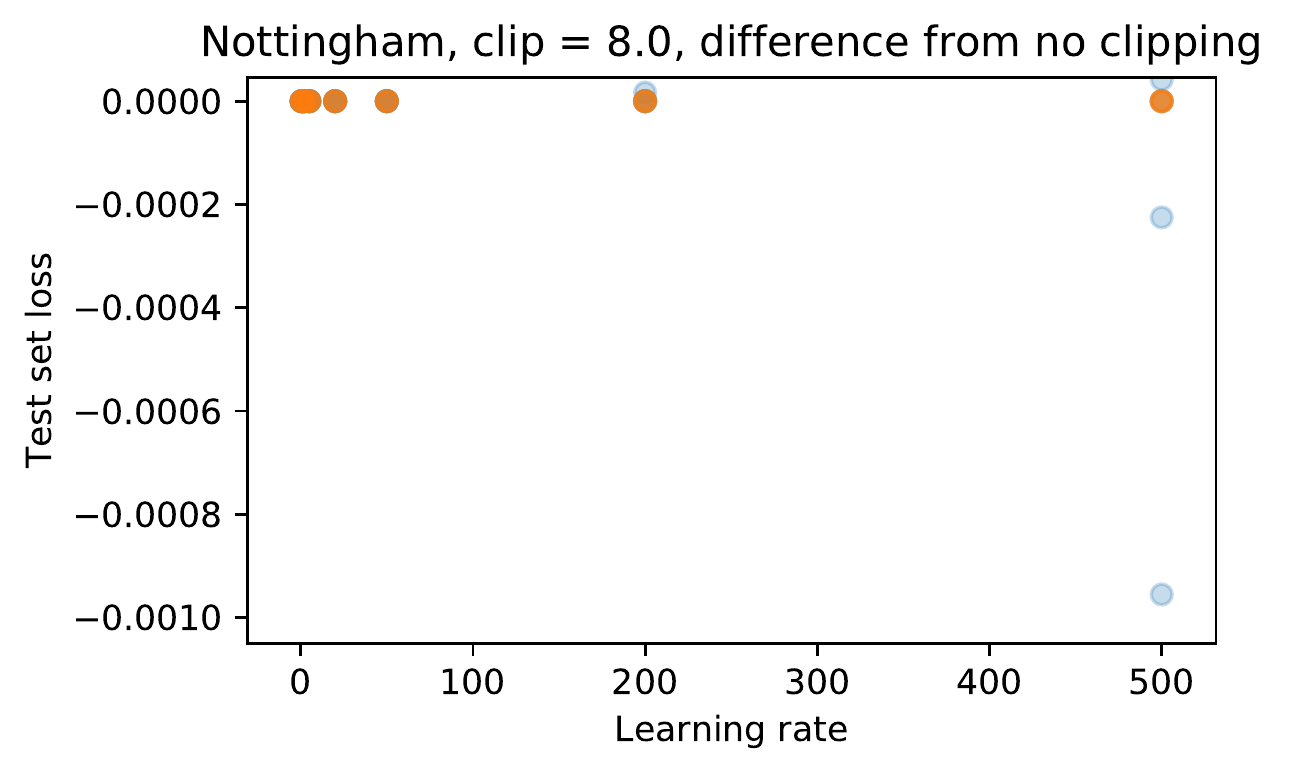}
\end{minipage}
\end{figure}

\clearpage
\subsection{Piano-midi.de}

\begin{figure}[h]
\centering
\begin{minipage}{.49\textwidth}
  \centering
  \includegraphics[width=.79\linewidth]{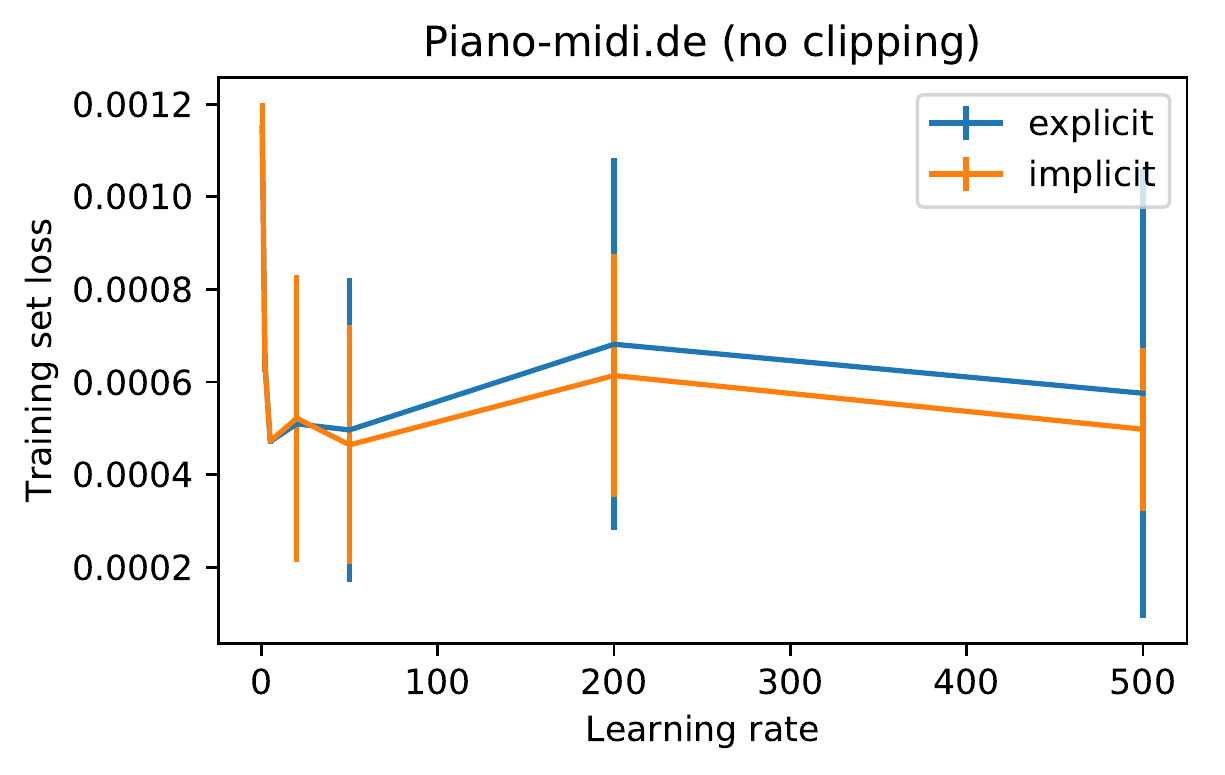}
\end{minipage}%
\hfill
\begin{minipage}{.49\textwidth}
  \centering
  \includegraphics[width=.79\linewidth]{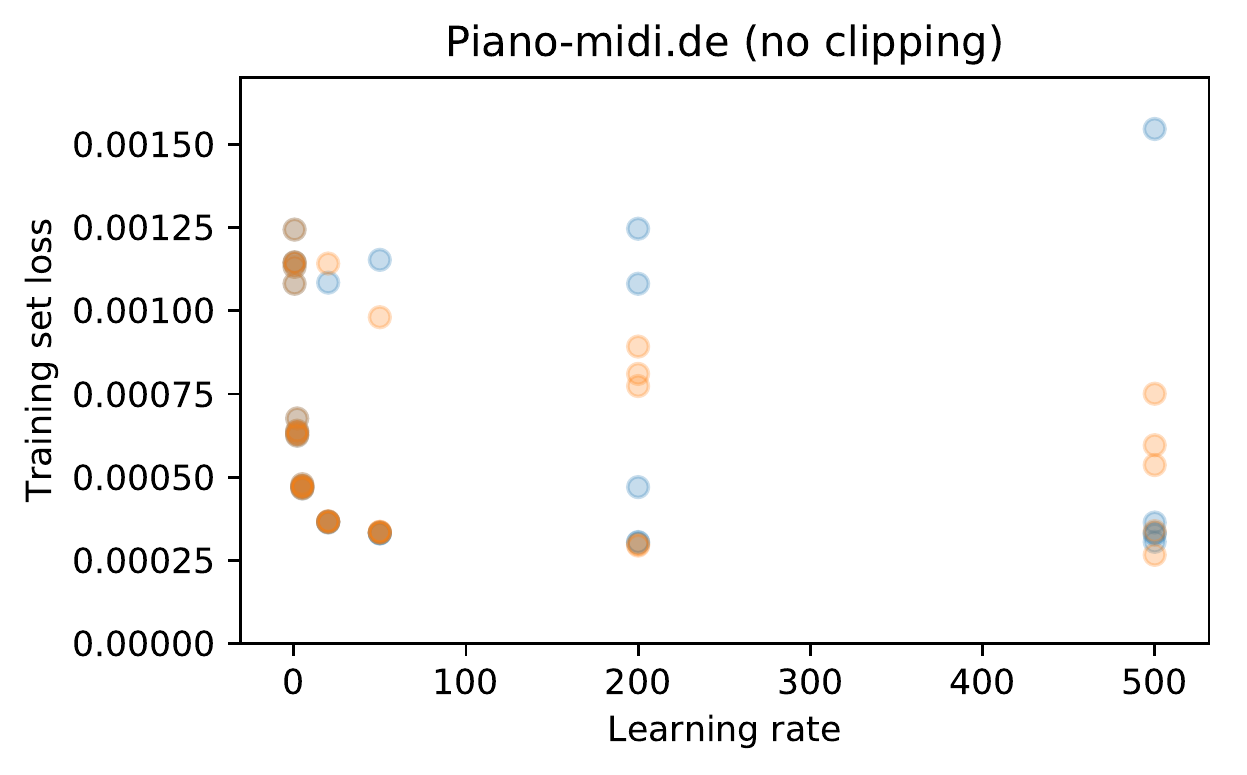}
\end{minipage}
\begin{minipage}{.49\textwidth}
  \centering
  \includegraphics[width=.79\linewidth]{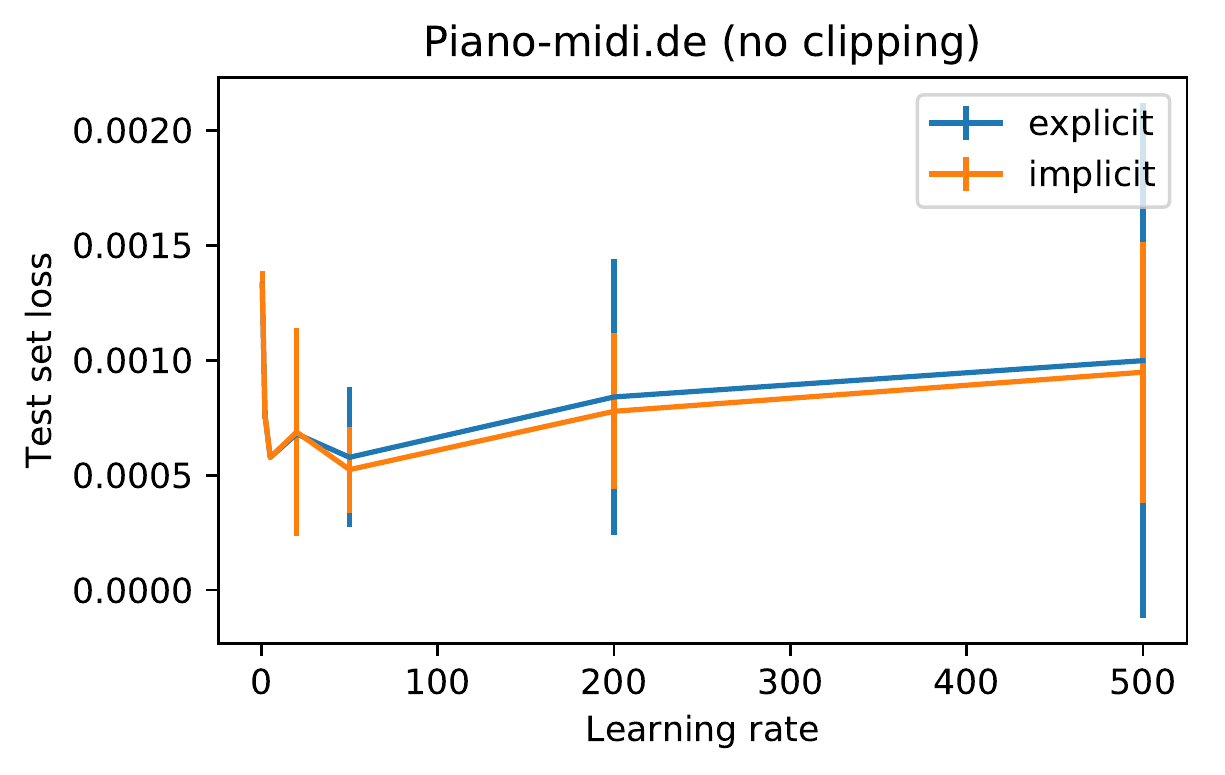}
\end{minipage}%
\hfill
\begin{minipage}{.49\textwidth}
  \centering
  \includegraphics[width=.79\linewidth]{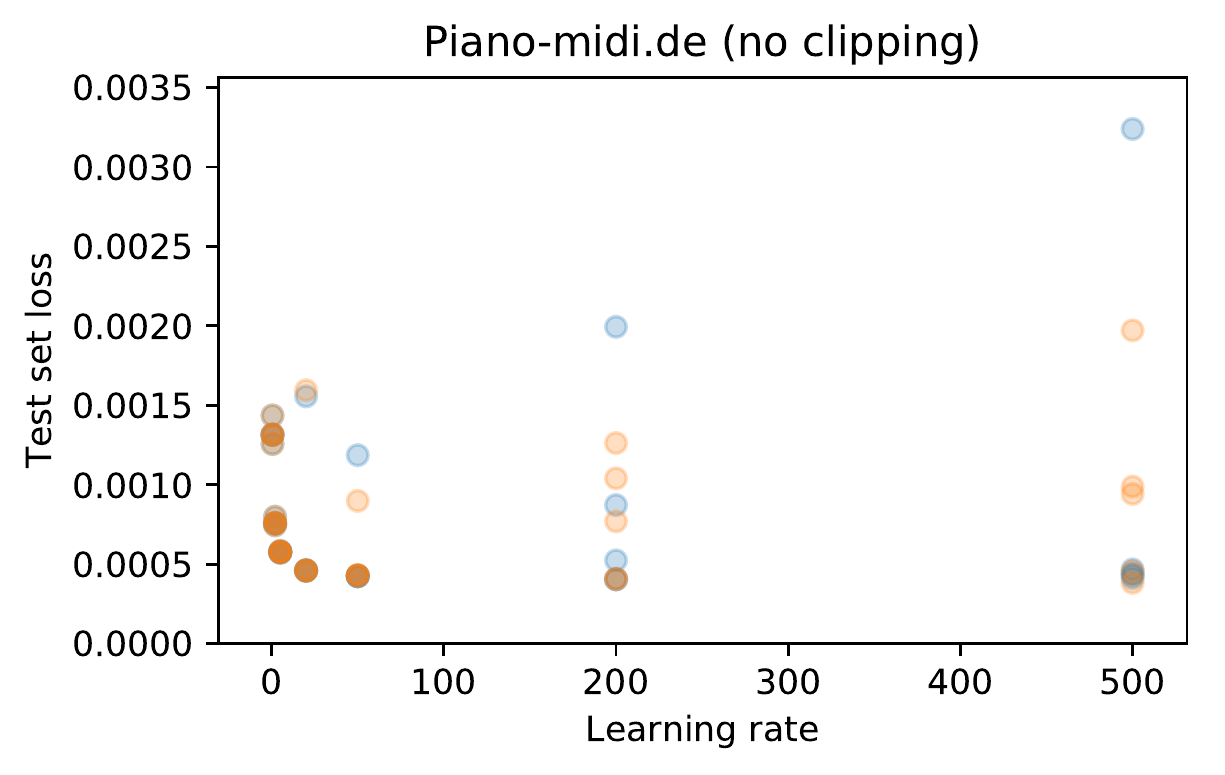}
\end{minipage}
\end{figure}

\begin{figure}[h]
\centering
\begin{minipage}{.49\textwidth}
  \centering
  \includegraphics[width=.79\linewidth]{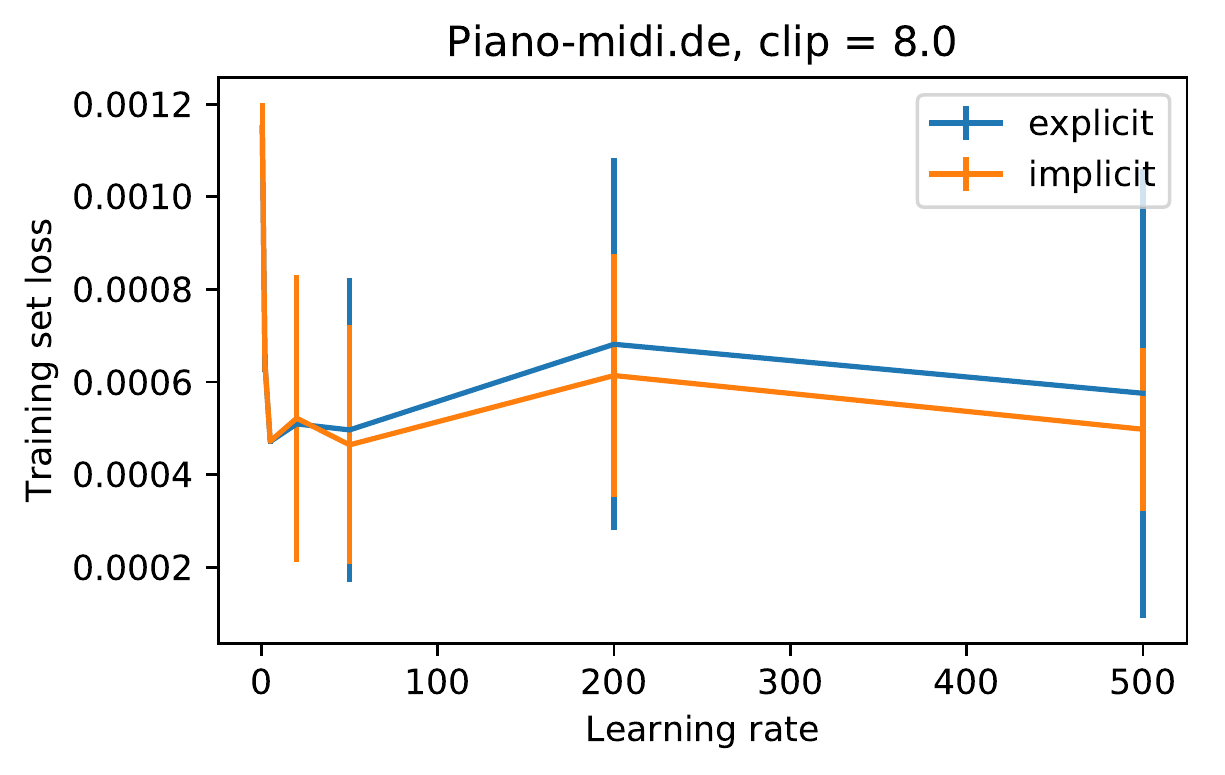}
\end{minipage}%
\hfill
\begin{minipage}{.49\textwidth}
  \centering
  \includegraphics[width=.79\linewidth]{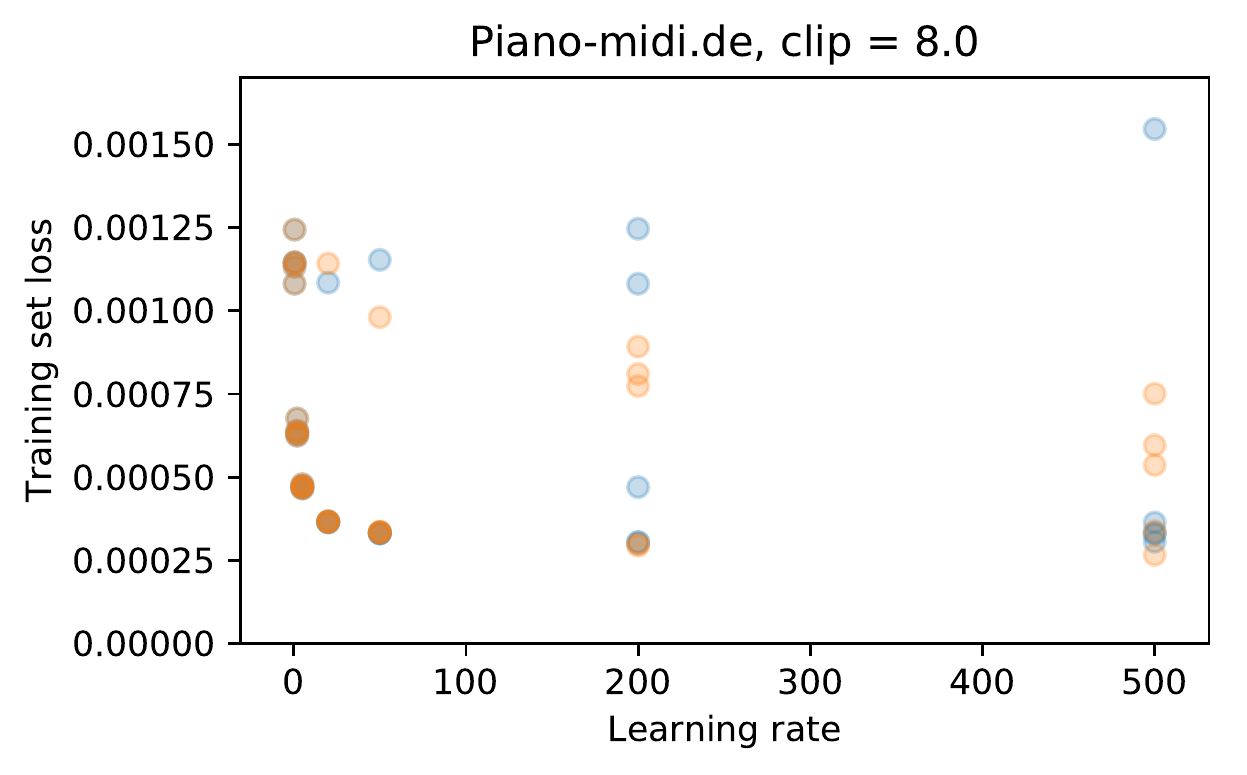}
\end{minipage}
\begin{minipage}{.49\textwidth}
  \centering
  \includegraphics[width=.79\linewidth]{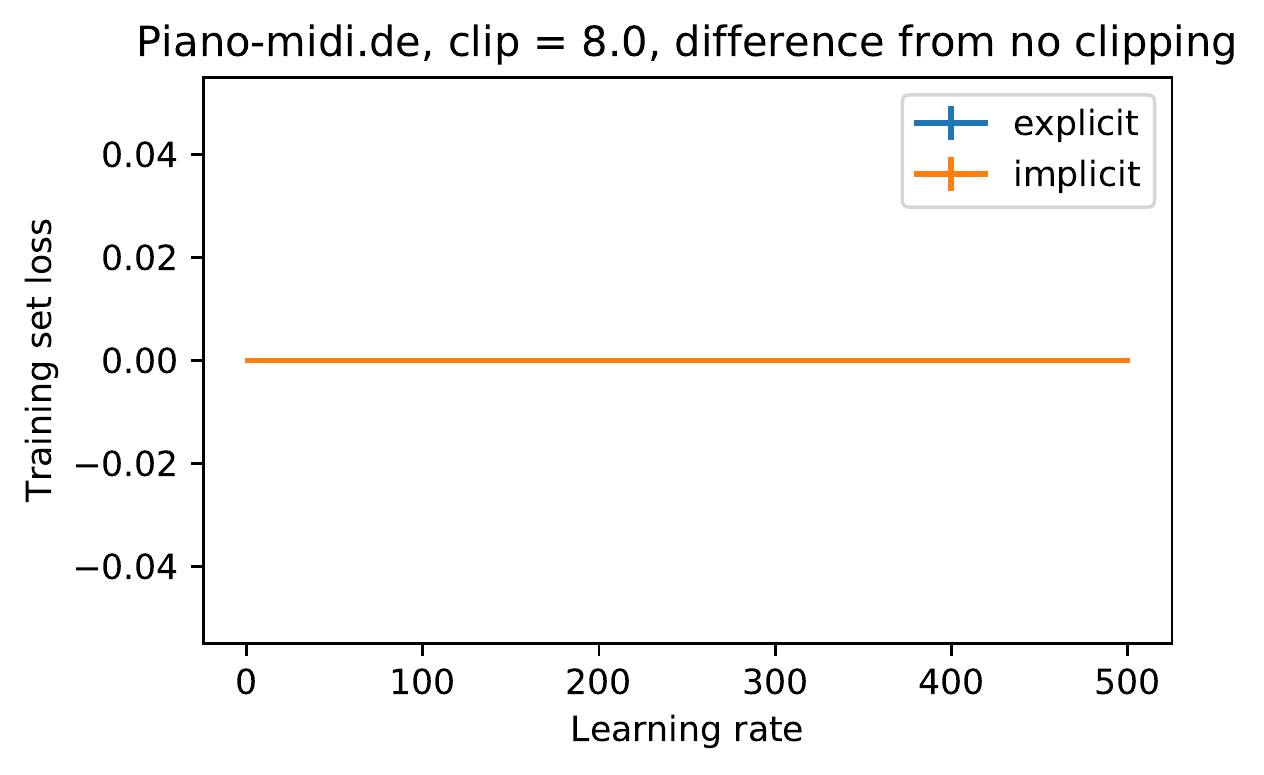}
\end{minipage}%
\hfill
\begin{minipage}{.49\textwidth}
  \centering
  \includegraphics[width=.79\linewidth]{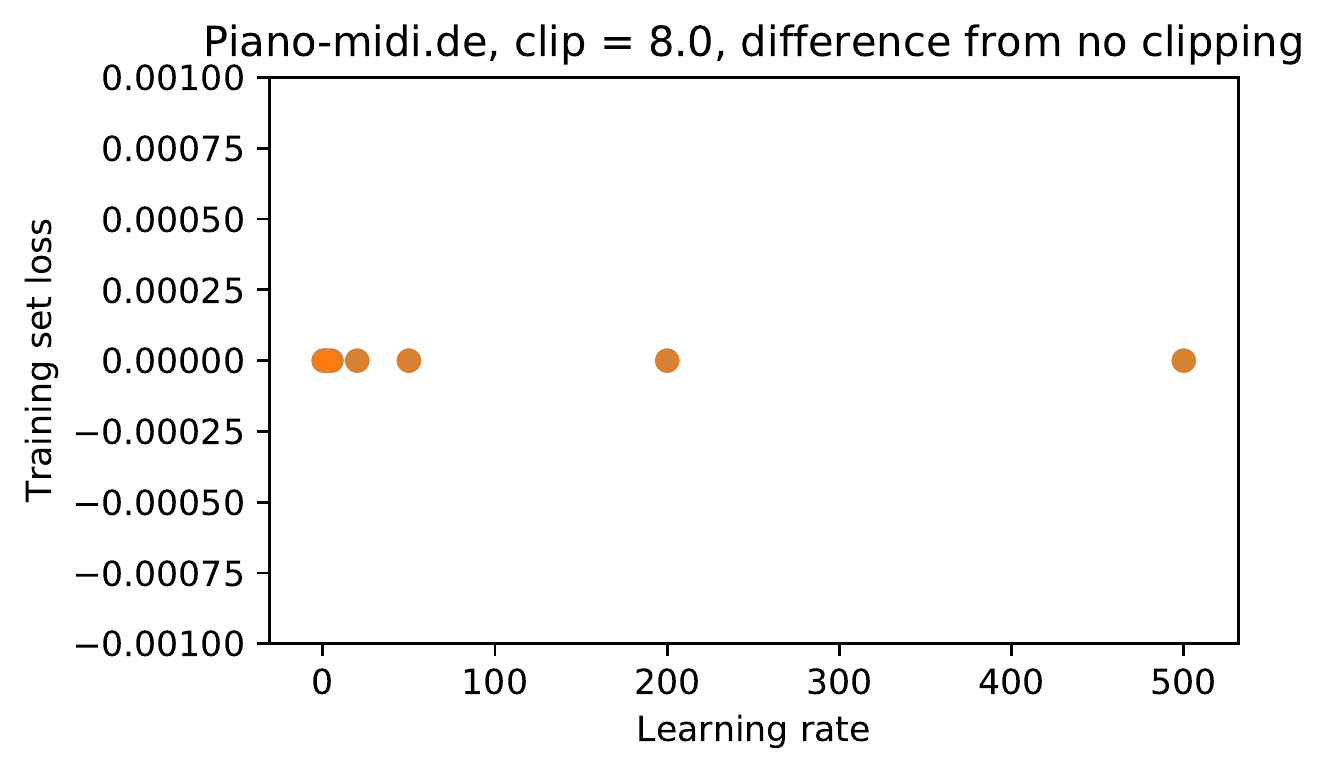}
\end{minipage}
\end{figure}

\vspace{0cm}

\begin{figure}[h]
\begin{minipage}{.49\textwidth}
  \centering
  \includegraphics[width=.79\linewidth]{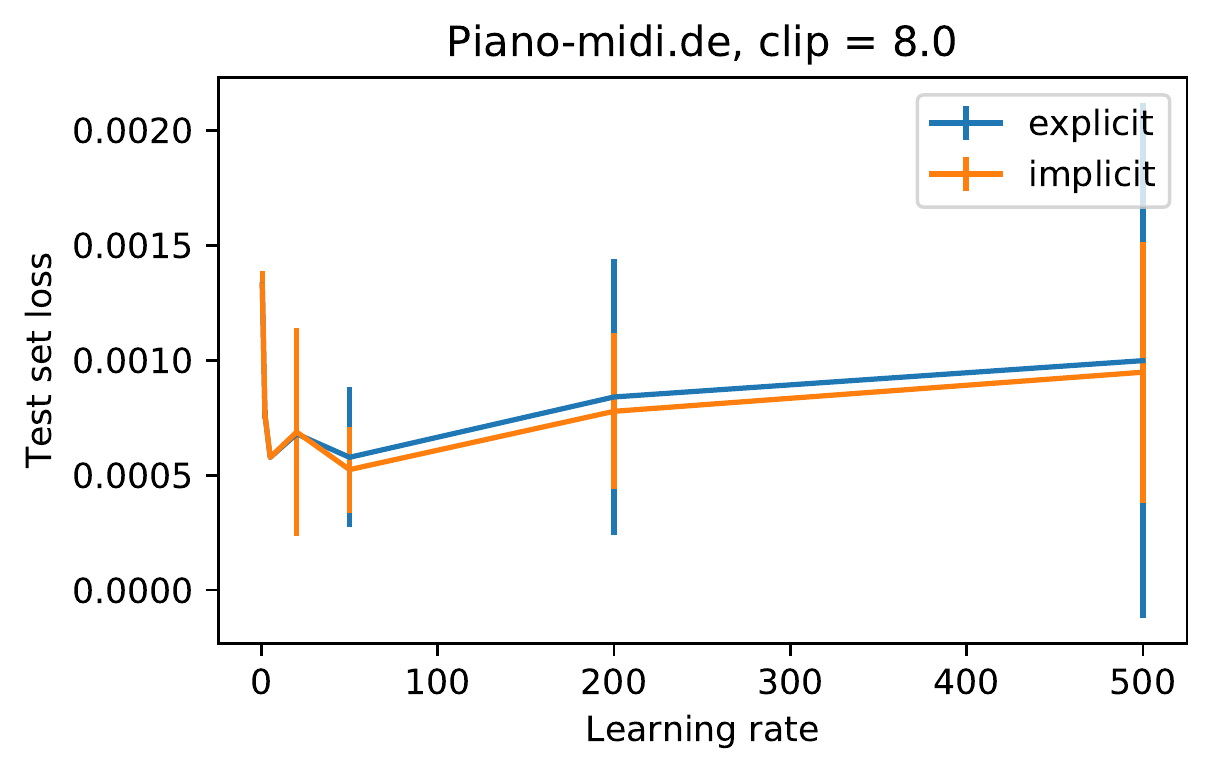}
\end{minipage}%
\hfill
\begin{minipage}{.49\textwidth}
  \centering
  \includegraphics[width=.79\linewidth]{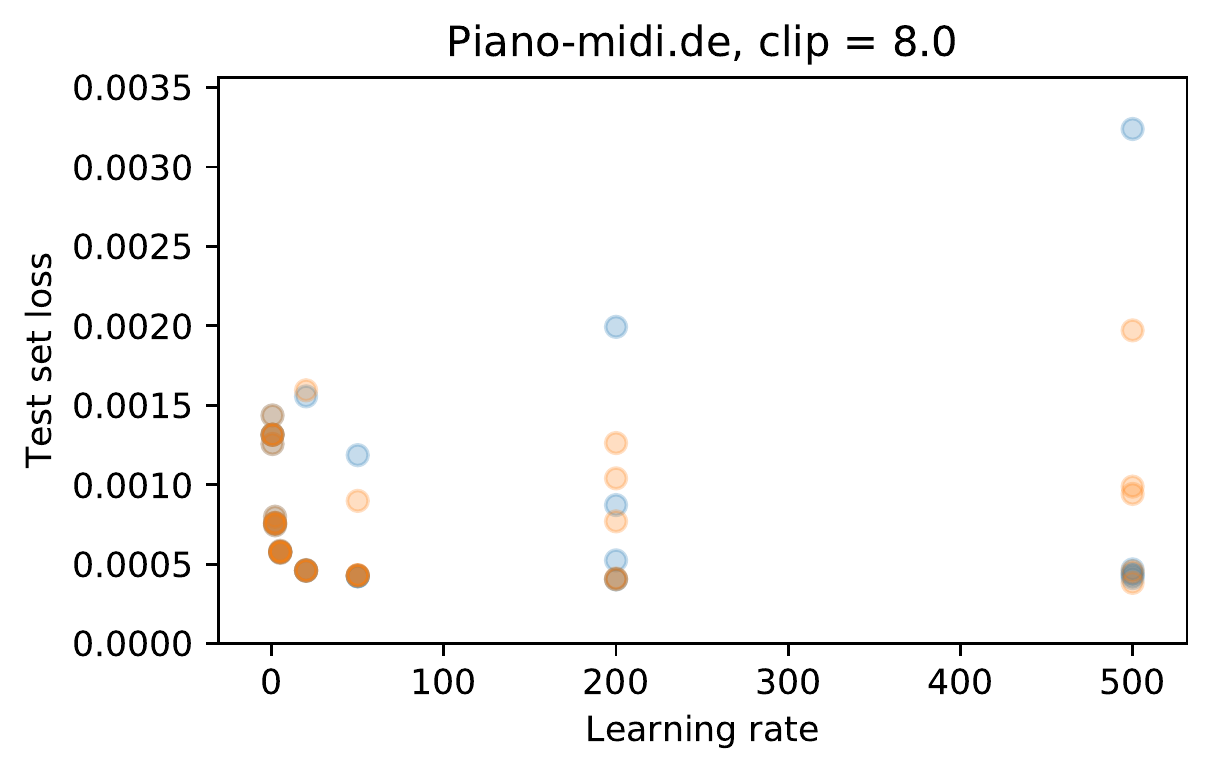}
\end{minipage}
\begin{minipage}{.49\textwidth}
  \centering
  \includegraphics[width=.79\linewidth]{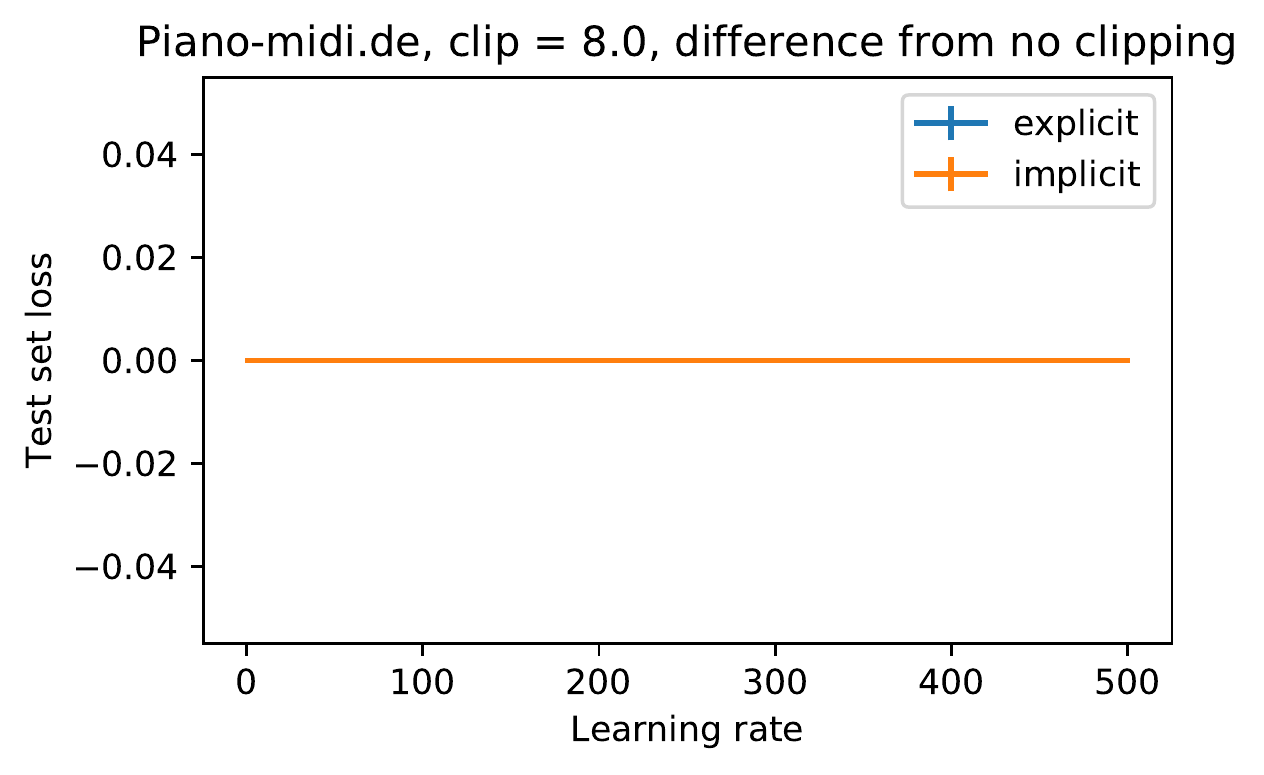}
\end{minipage}%
\hfill
\begin{minipage}{.49\textwidth}
  \centering
  \includegraphics[width=.79\linewidth]{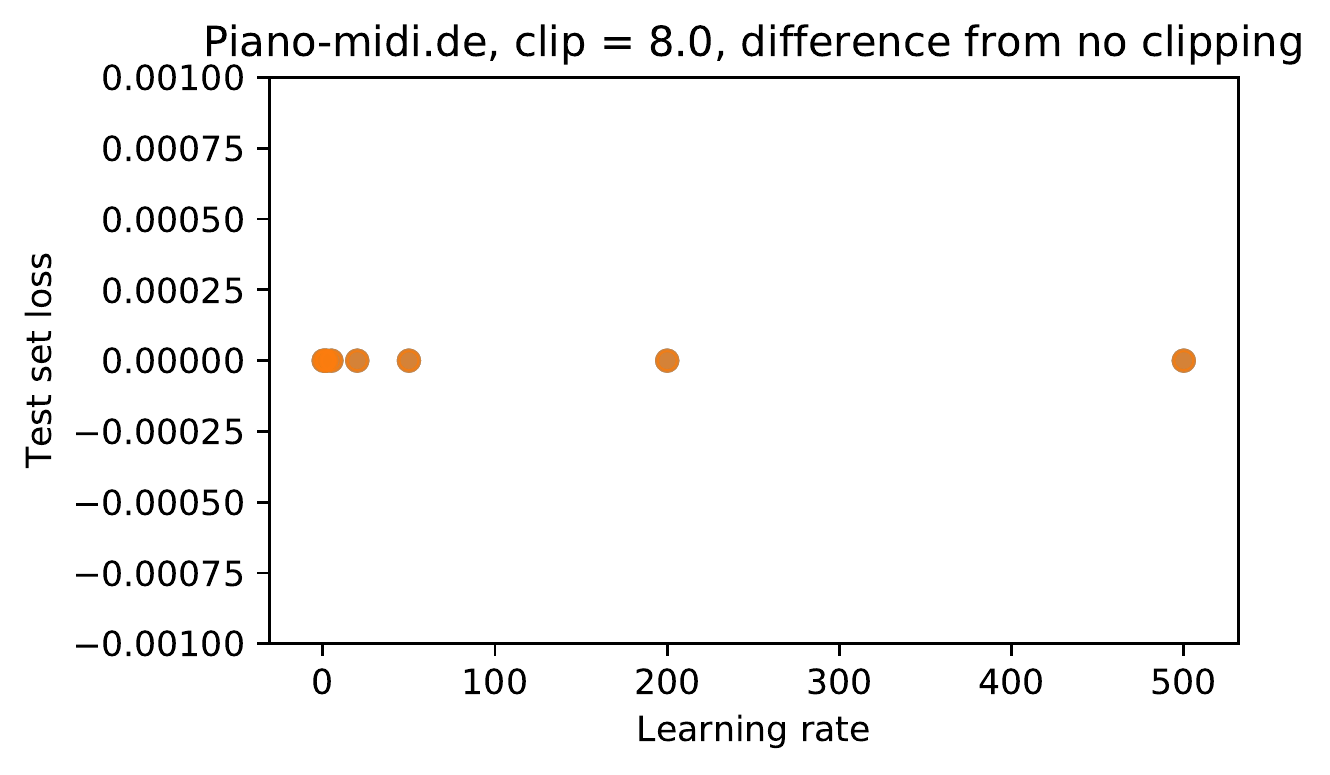}
\end{minipage}
\end{figure}

\begin{figure}[h]
\centering
\begin{minipage}{.49\textwidth}
  \centering
  \includegraphics[width=.79\linewidth]{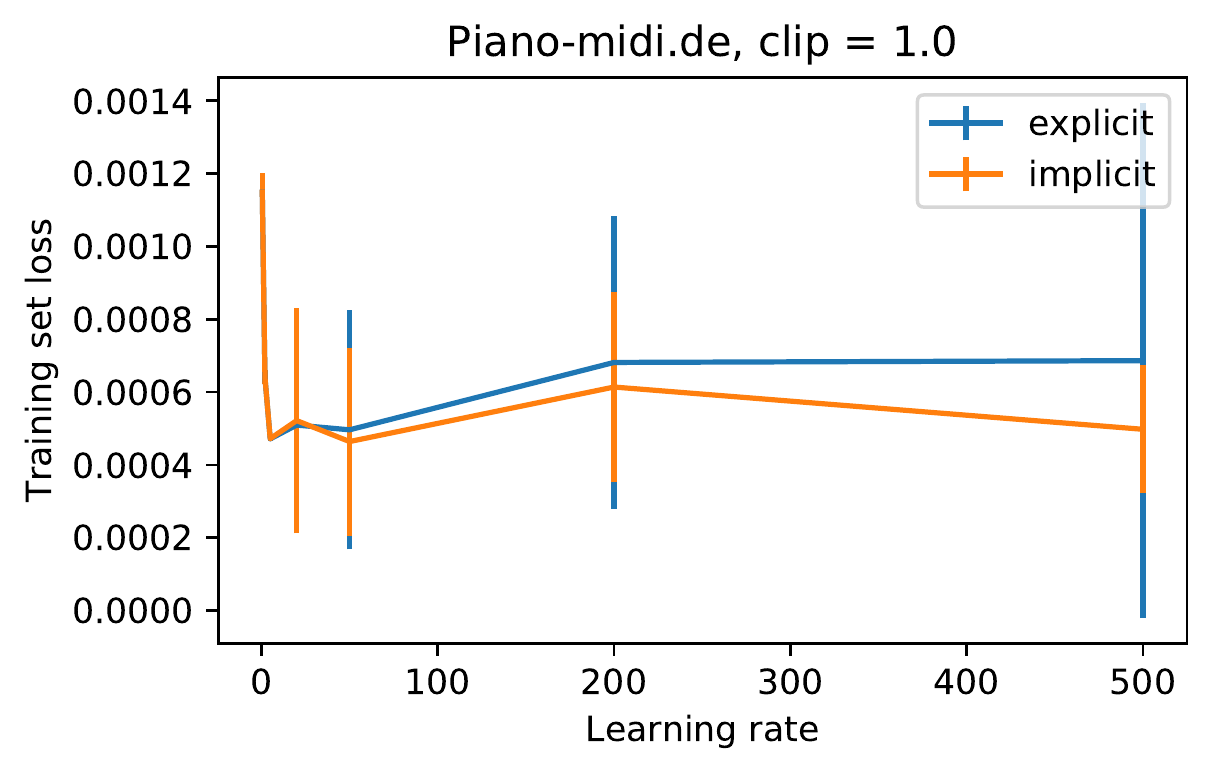}
\end{minipage}%
\hfill
\begin{minipage}{.49\textwidth}
  \centering
  \includegraphics[width=.79\linewidth]{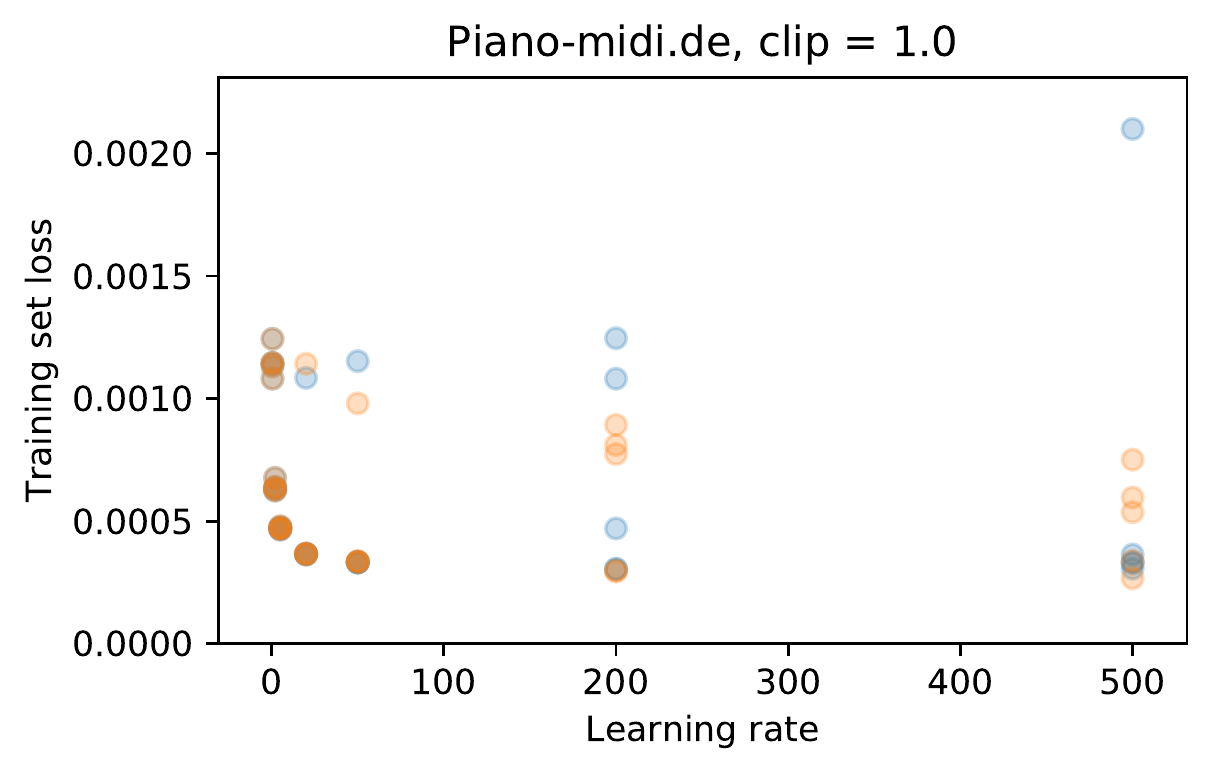}
\end{minipage}
\begin{minipage}{.49\textwidth}
  \centering
  \includegraphics[width=.79\linewidth]{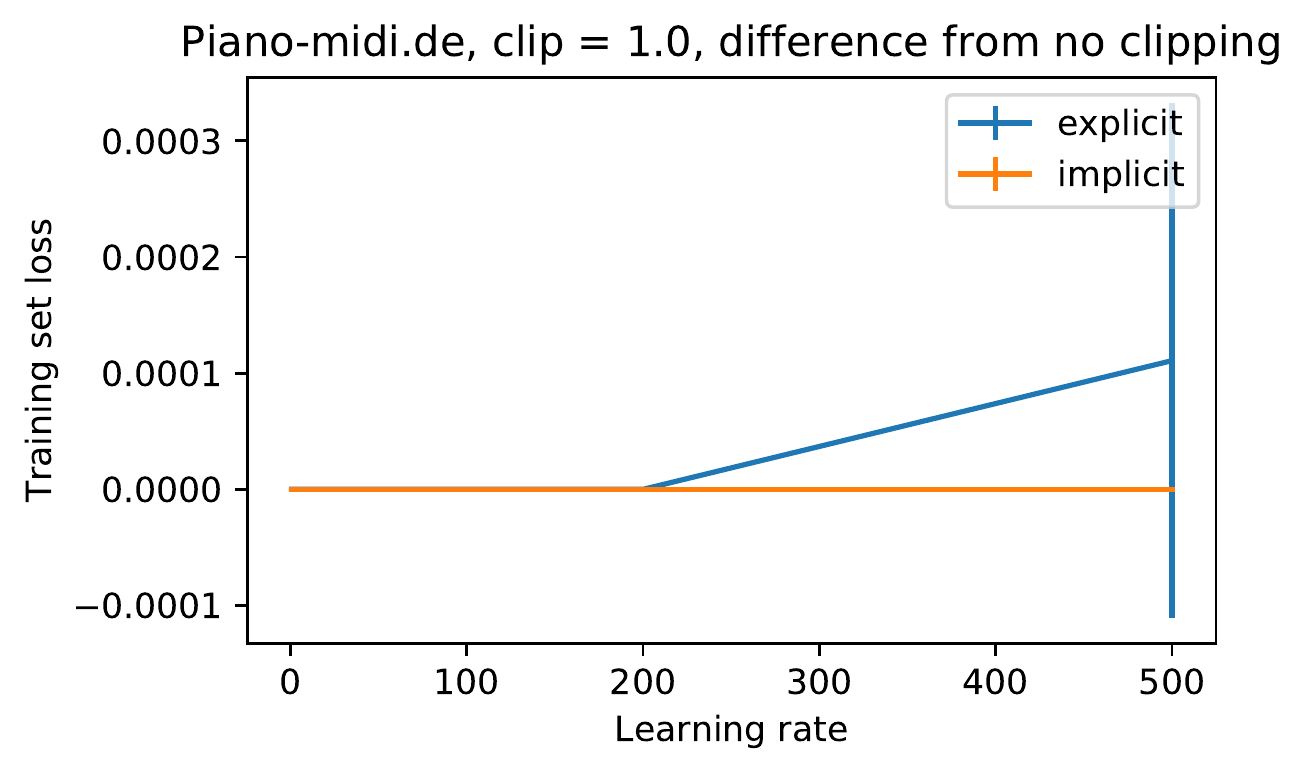}
\end{minipage}%
\hfill
\begin{minipage}{.49\textwidth}
  \centering
  \includegraphics[width=.79\linewidth]{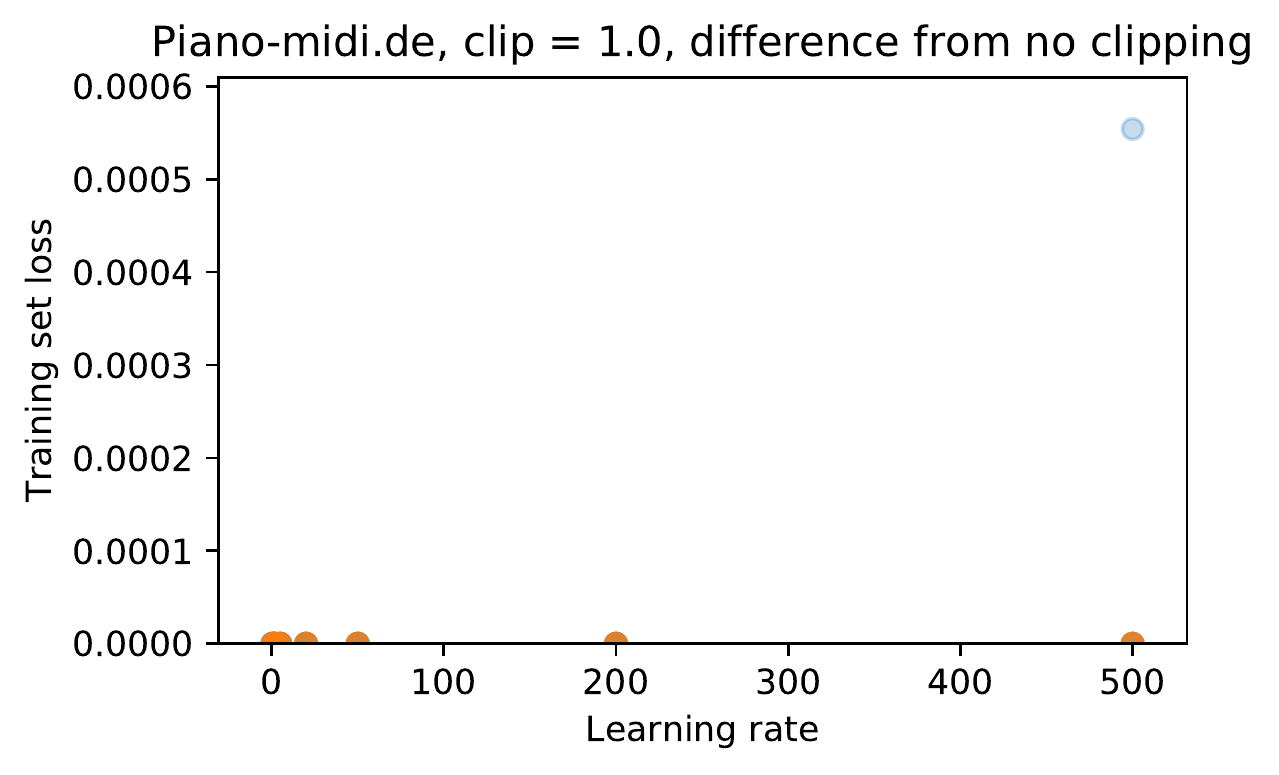}
\end{minipage}
\end{figure}

\vspace{0cm}

\begin{figure}[h]
\begin{minipage}{.49\textwidth}
  \centering
  \includegraphics[width=.79\linewidth]{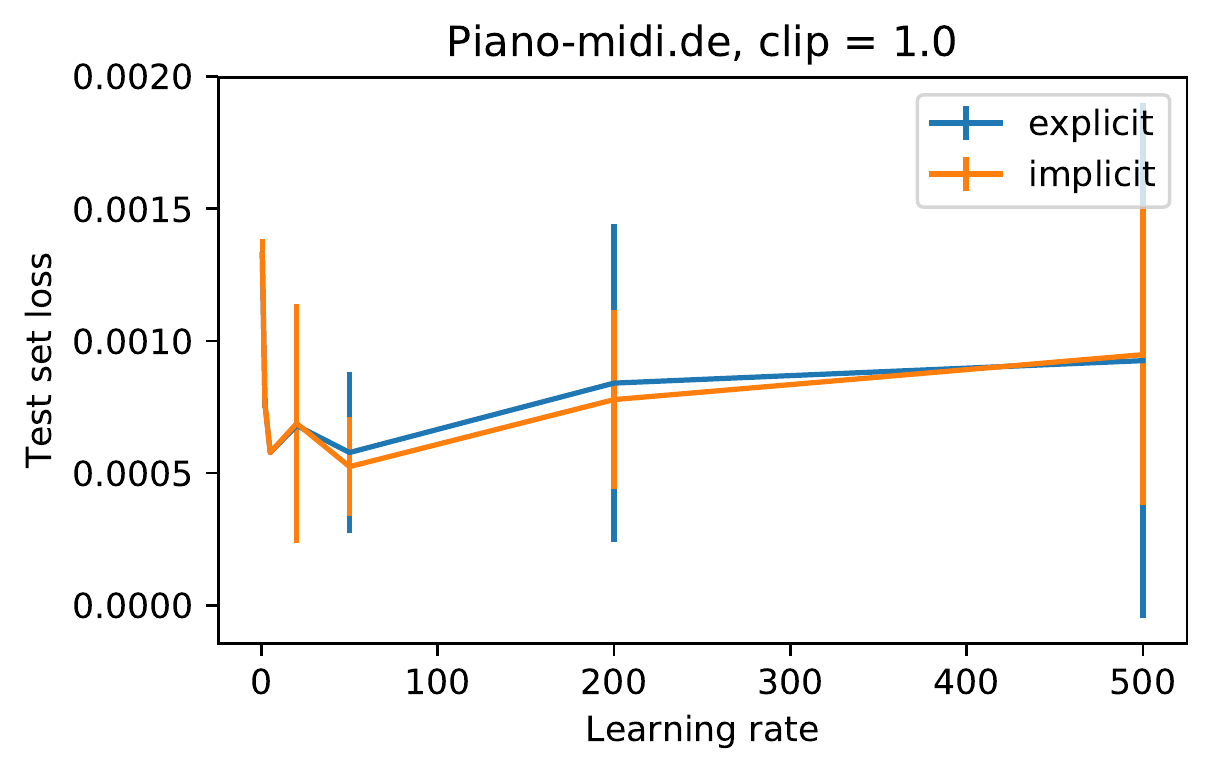}
\end{minipage}%
\hfill
\begin{minipage}{.49\textwidth}
  \centering
  \includegraphics[width=.79\linewidth]{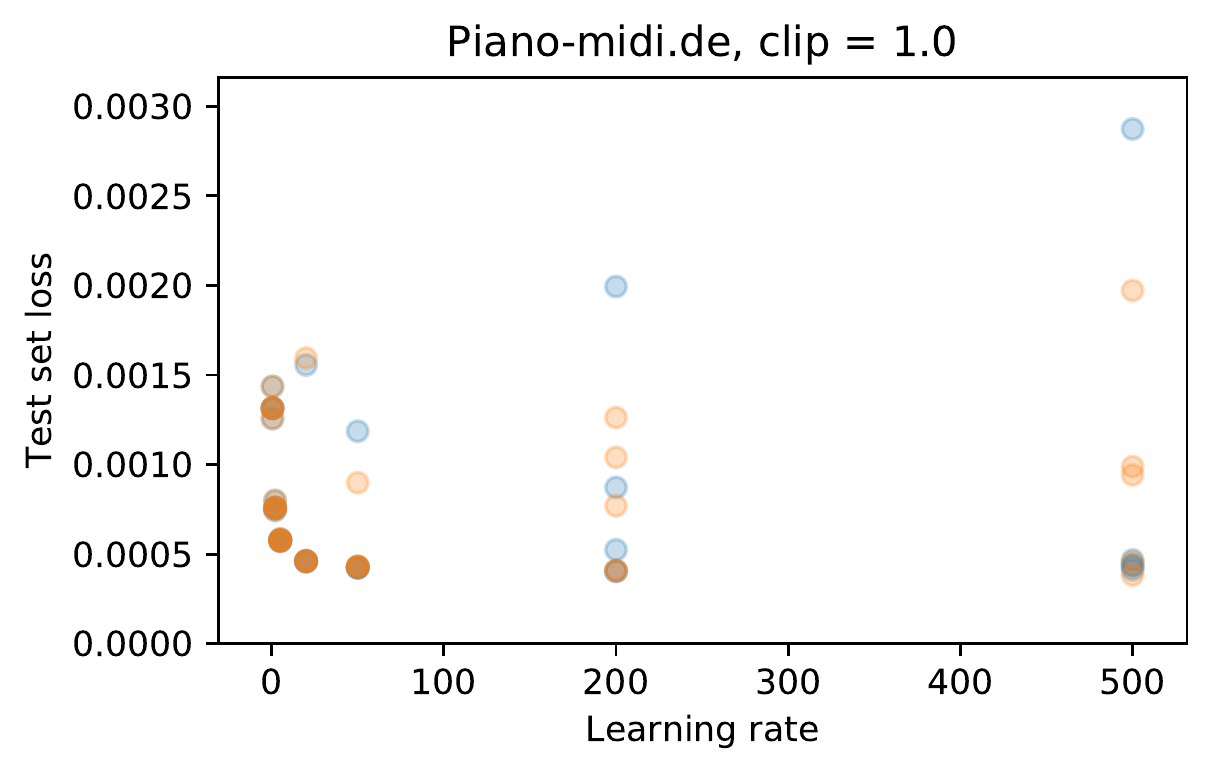}
\end{minipage}
\begin{minipage}{.49\textwidth}
  \centering
  \includegraphics[width=.79\linewidth]{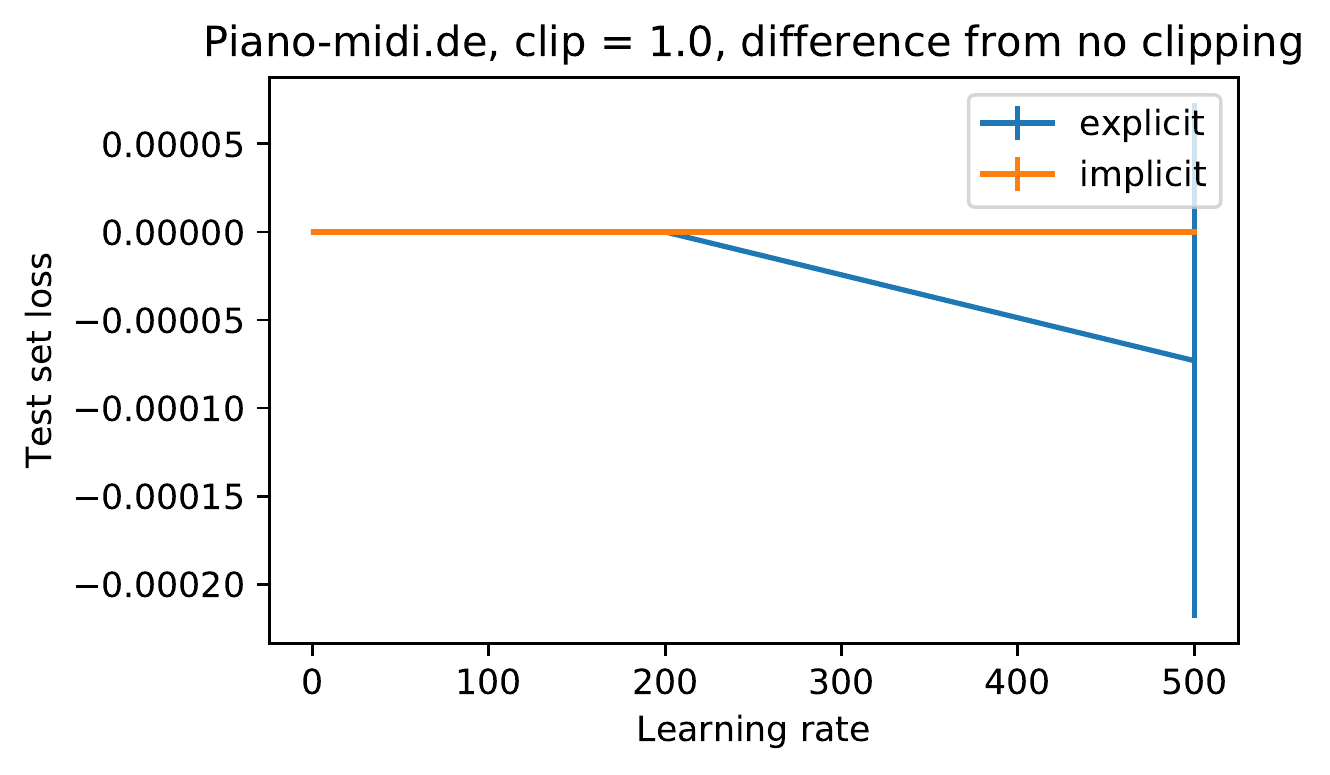}
\end{minipage}%
\hfill
\begin{minipage}{.49\textwidth}
  \centering
  \includegraphics[width=.79\linewidth]{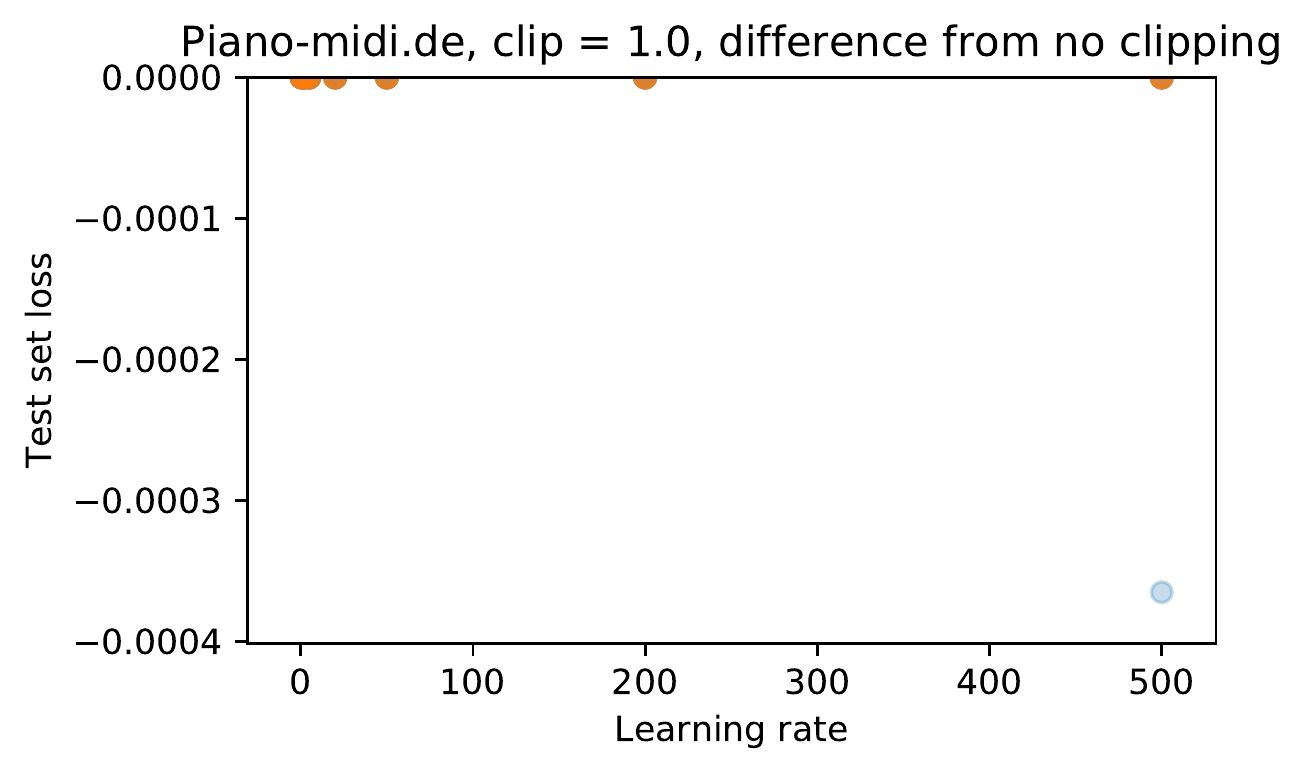}
\end{minipage}
\end{figure}

\begin{figure}[h]
\centering
\begin{minipage}{.49\textwidth}
  \centering
  \includegraphics[width=.79\linewidth]{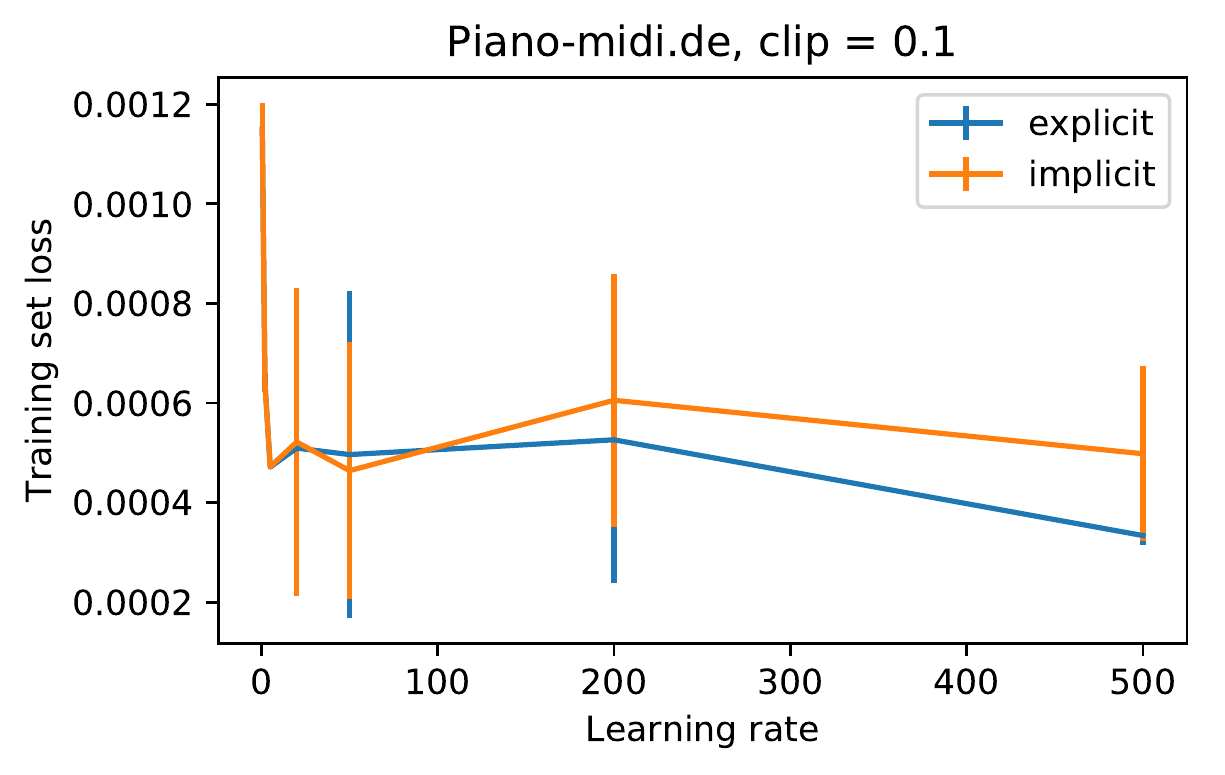}
\end{minipage}%
\hfill
\begin{minipage}{.49\textwidth}
  \centering
  \includegraphics[width=.79\linewidth]{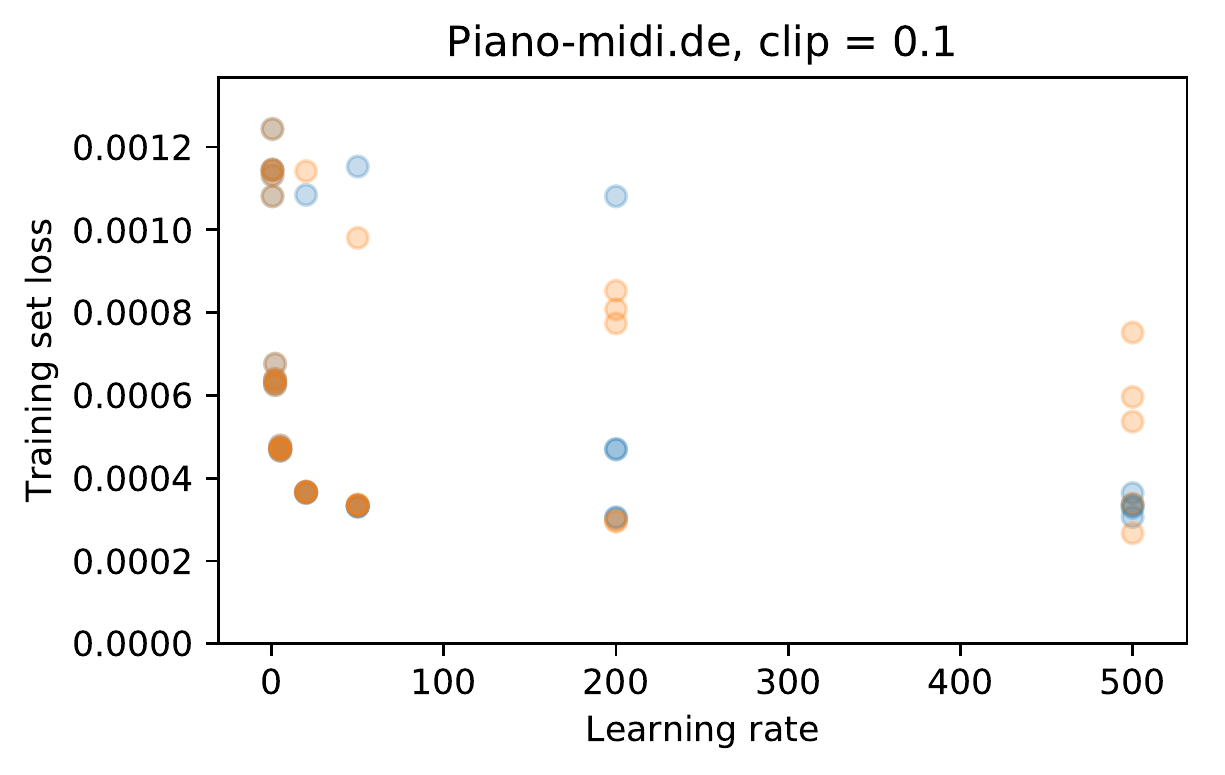}
\end{minipage}
\begin{minipage}{.49\textwidth}
  \centering
  \includegraphics[width=.79\linewidth]{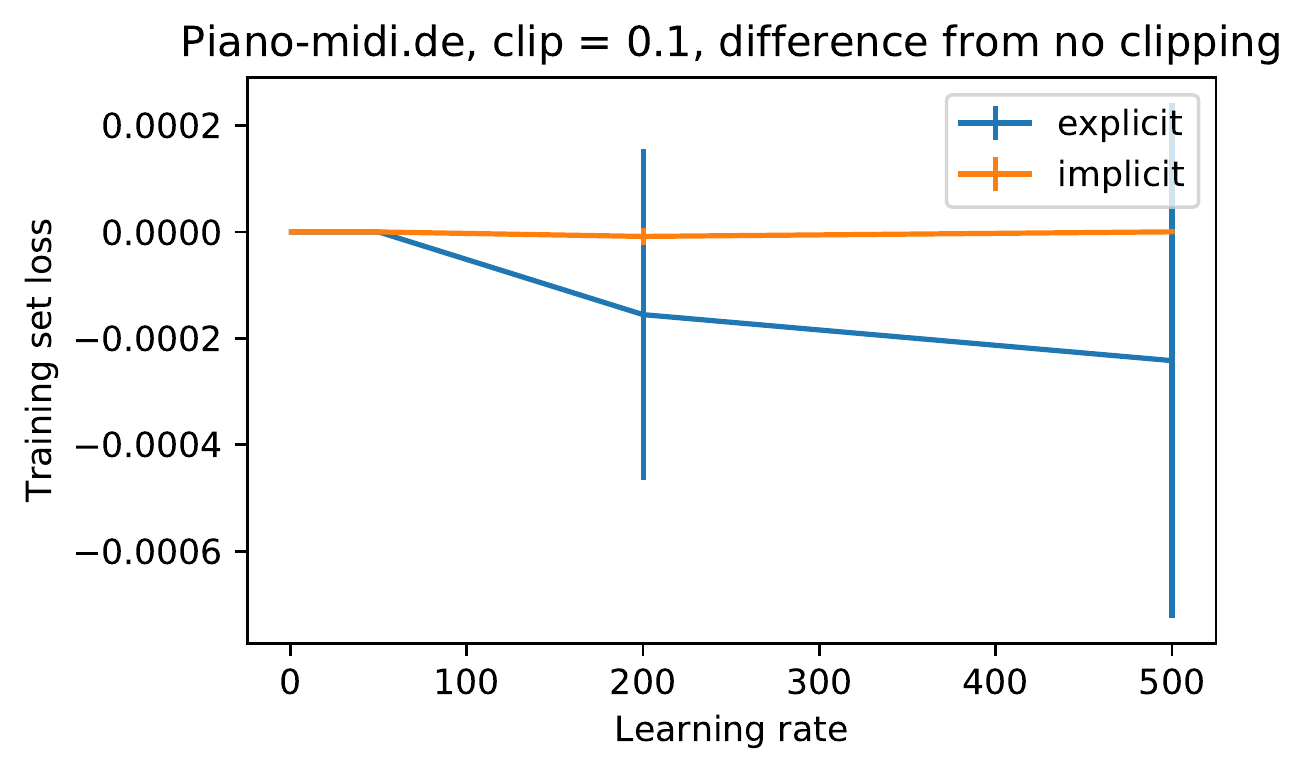}
\end{minipage}%
\hfill
\begin{minipage}{.49\textwidth}
  \centering
  \includegraphics[width=.79\linewidth]{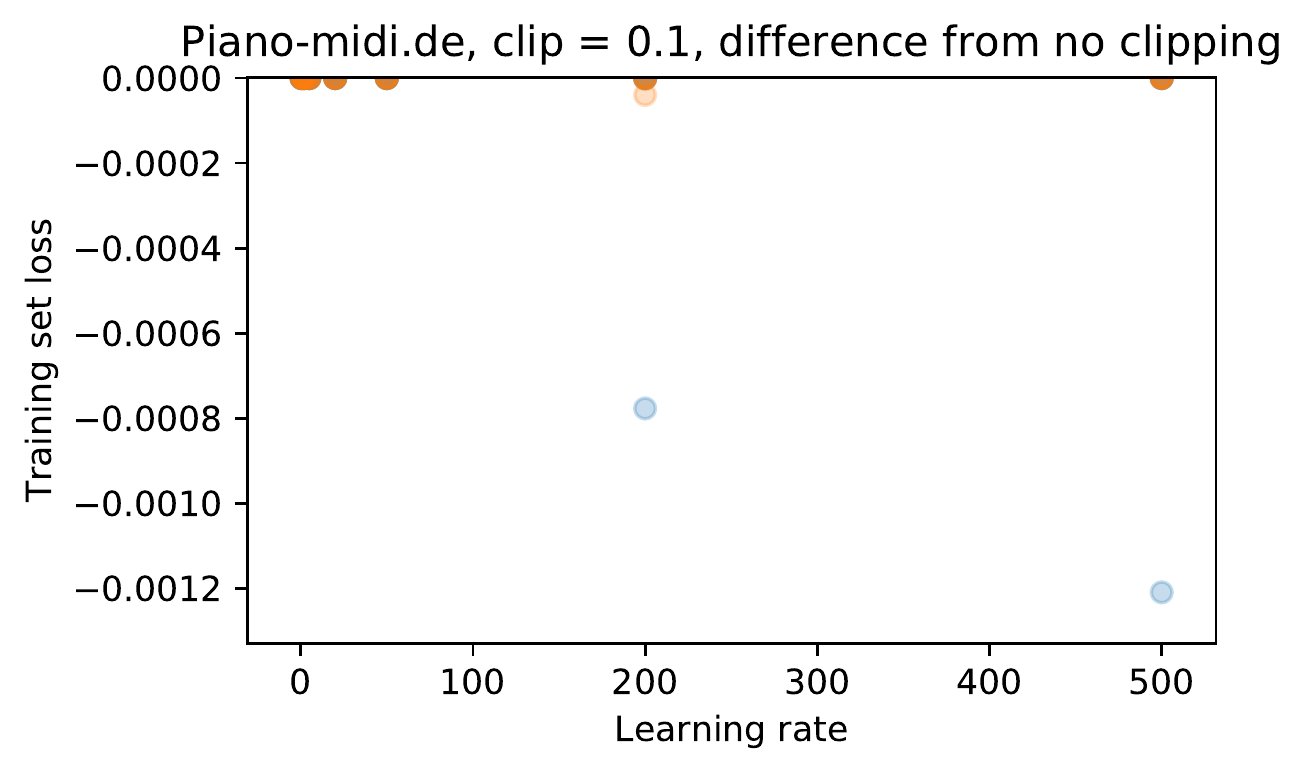}
\end{minipage}
\end{figure}

\vspace{0cm}

\begin{figure}[h]
\begin{minipage}{.49\textwidth}
  \centering
  \includegraphics[width=.79\linewidth]{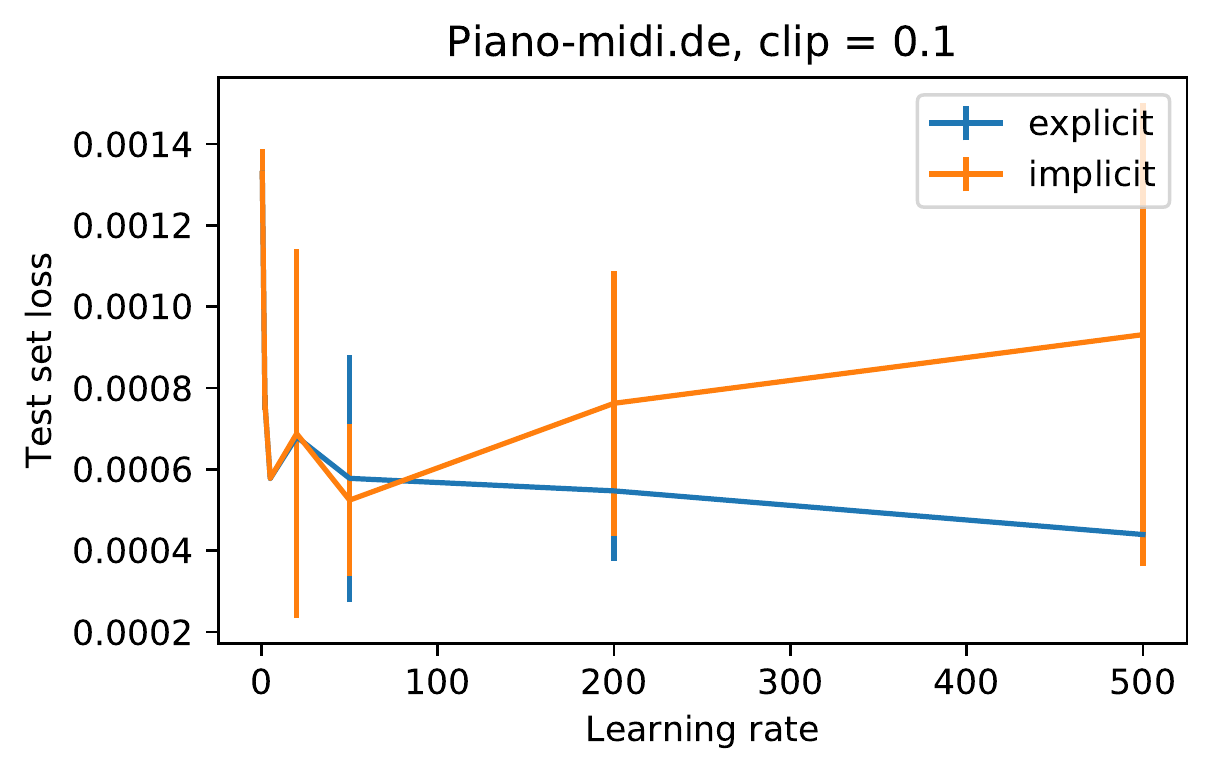}
\end{minipage}%
\hfill
\begin{minipage}{.49\textwidth}
  \centering
  \includegraphics[width=.79\linewidth]{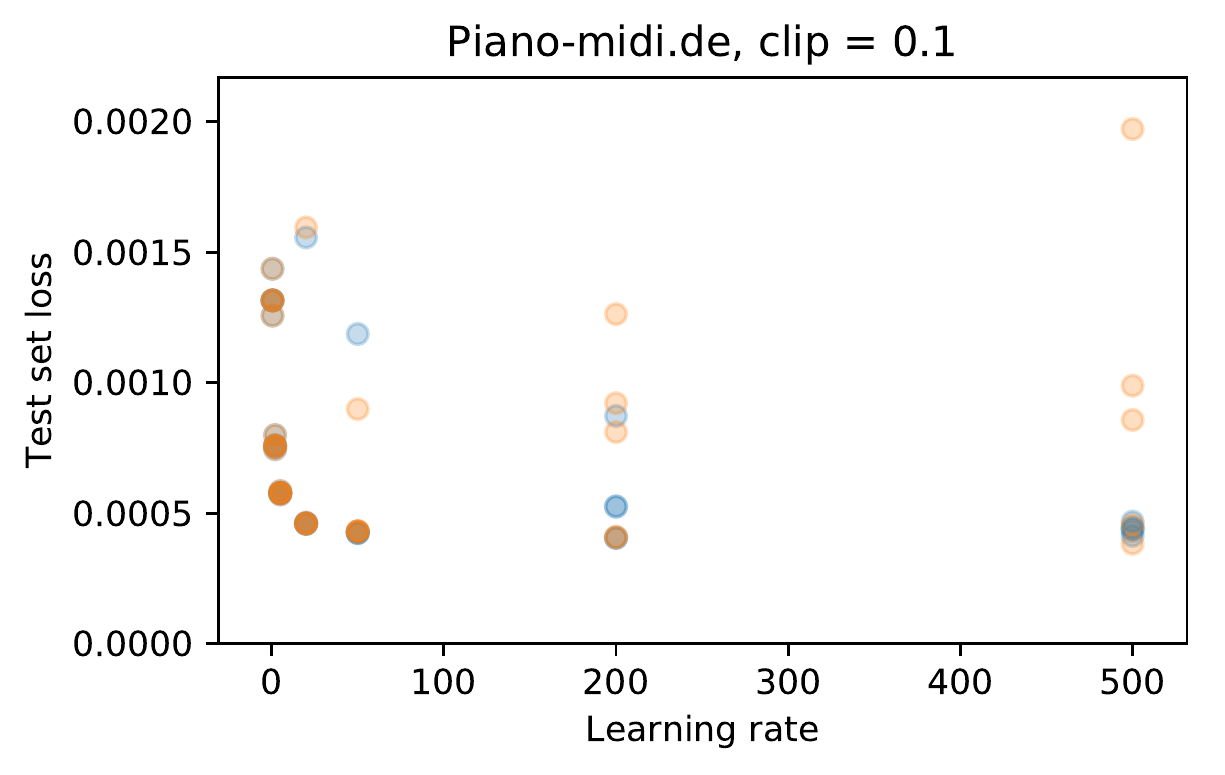}
\end{minipage}
\begin{minipage}{.49\textwidth}
  \centering
  \includegraphics[width=.79\linewidth]{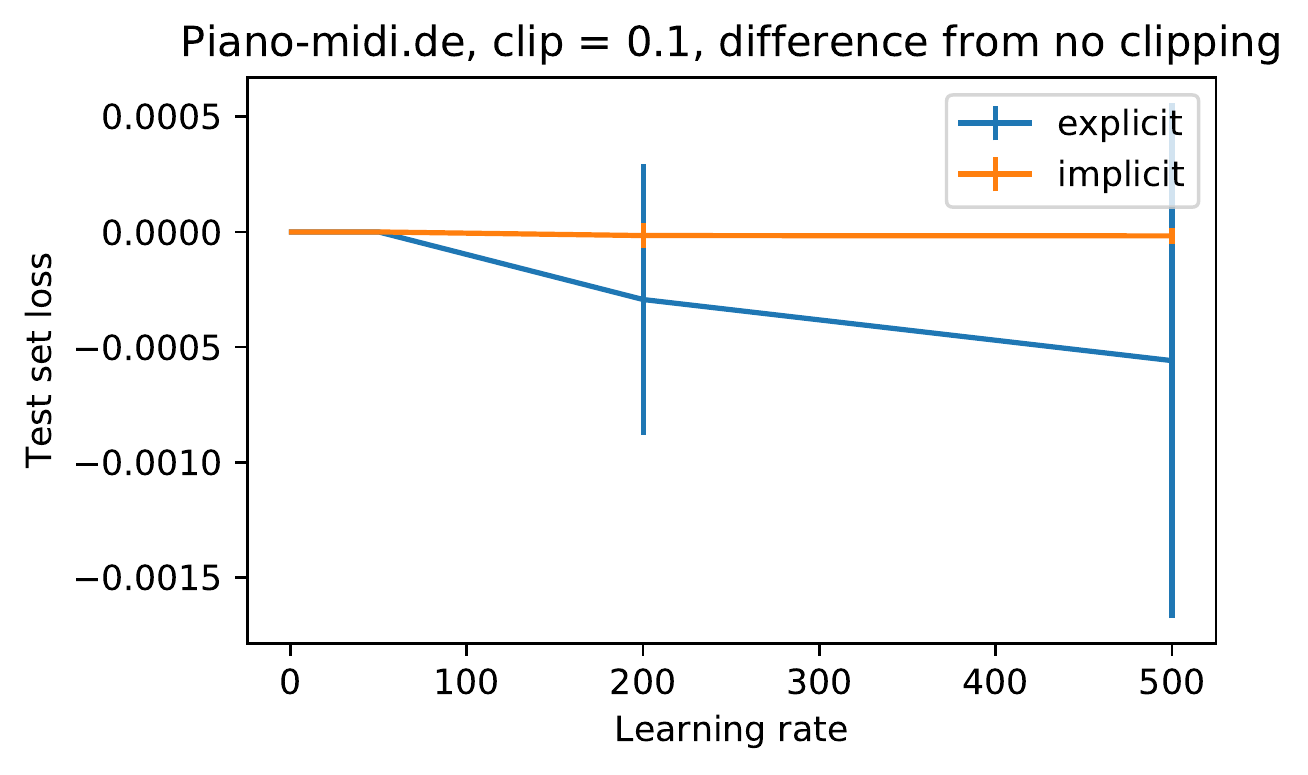}
\end{minipage}%
\hfill
\begin{minipage}{.49\textwidth}
  \centering
  \includegraphics[width=.79\linewidth]{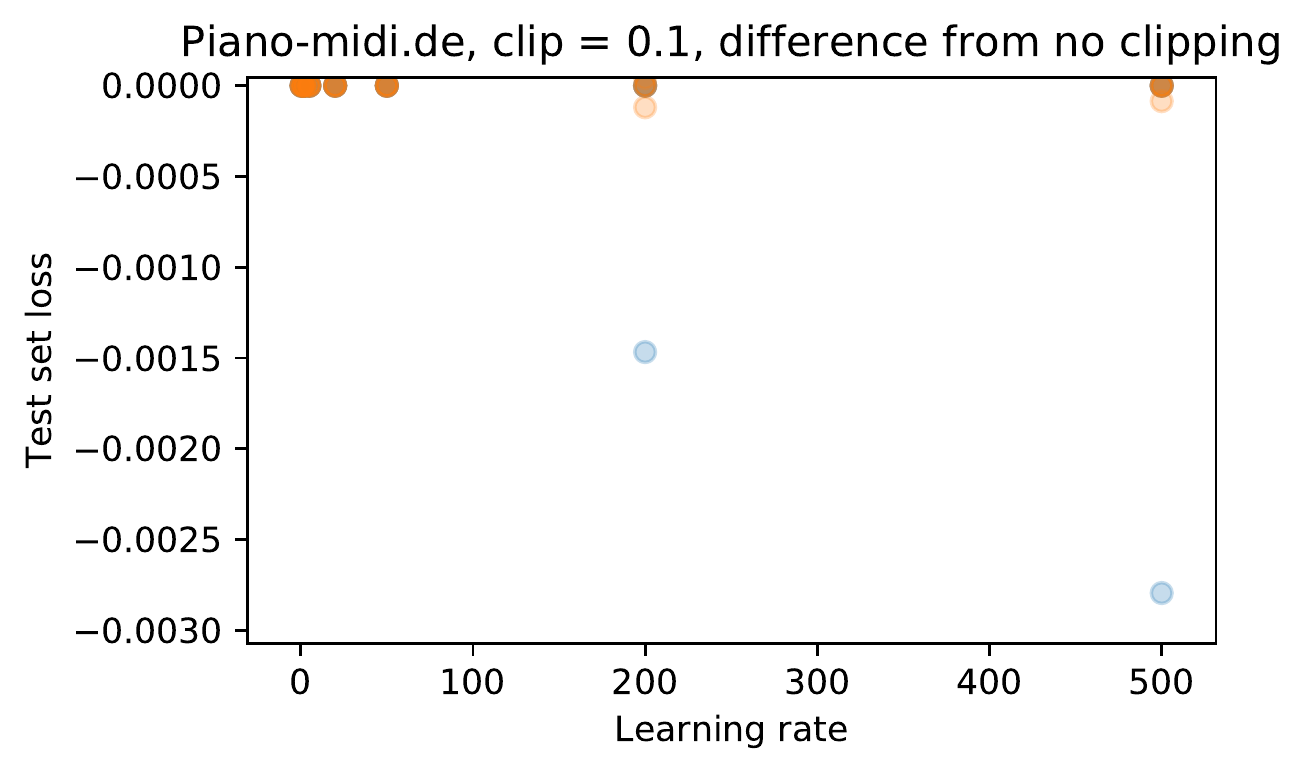}
\end{minipage}
\end{figure}

\begin{figure}[h]
\centering
\begin{minipage}{.49\textwidth}
  \centering
  \includegraphics[width=.79\linewidth]{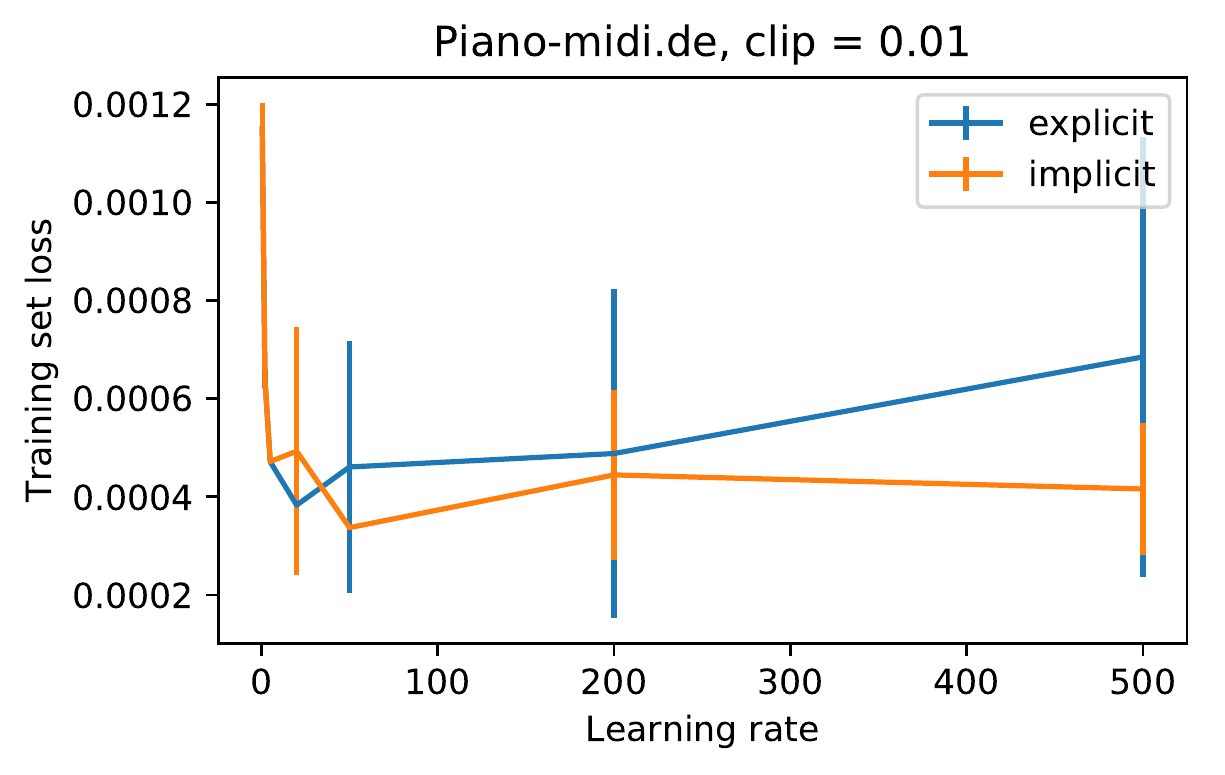}
\end{minipage}%
\hfill
\begin{minipage}{.49\textwidth}
  \centering
  \includegraphics[width=.79\linewidth]{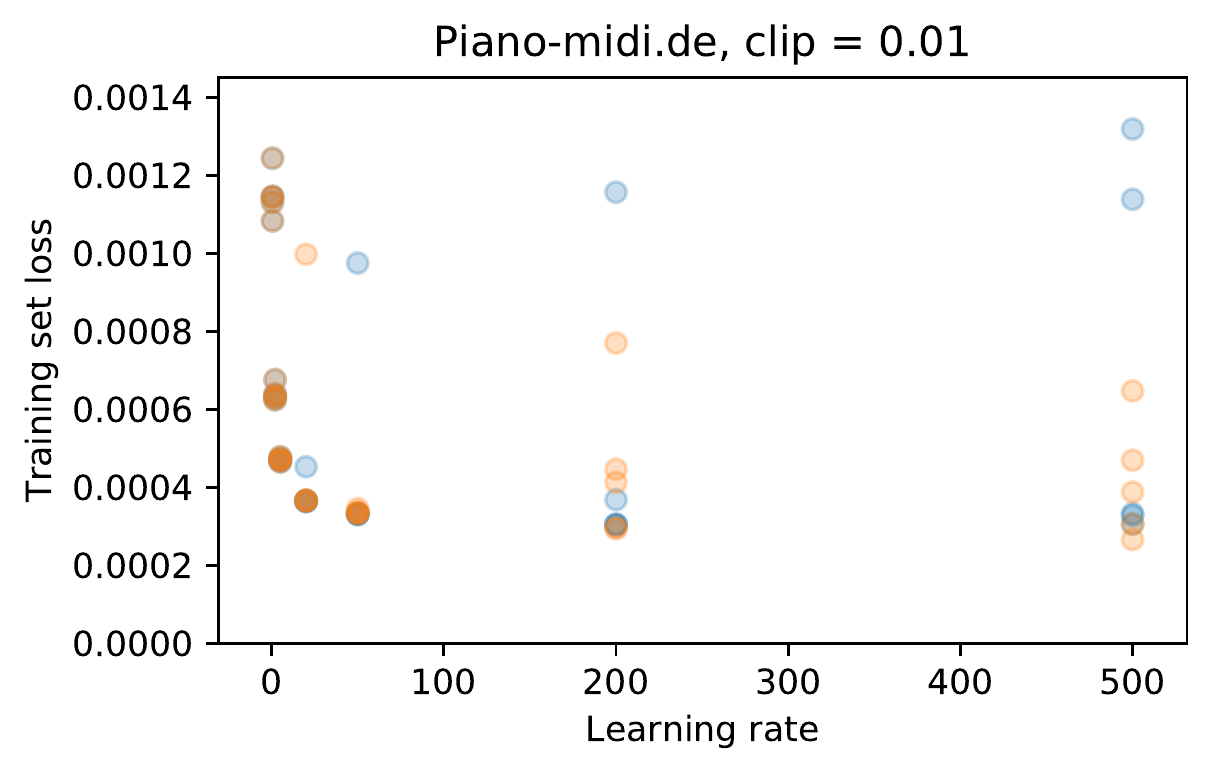}
\end{minipage}
\begin{minipage}{.49\textwidth}
  \centering
  \includegraphics[width=.79\linewidth]{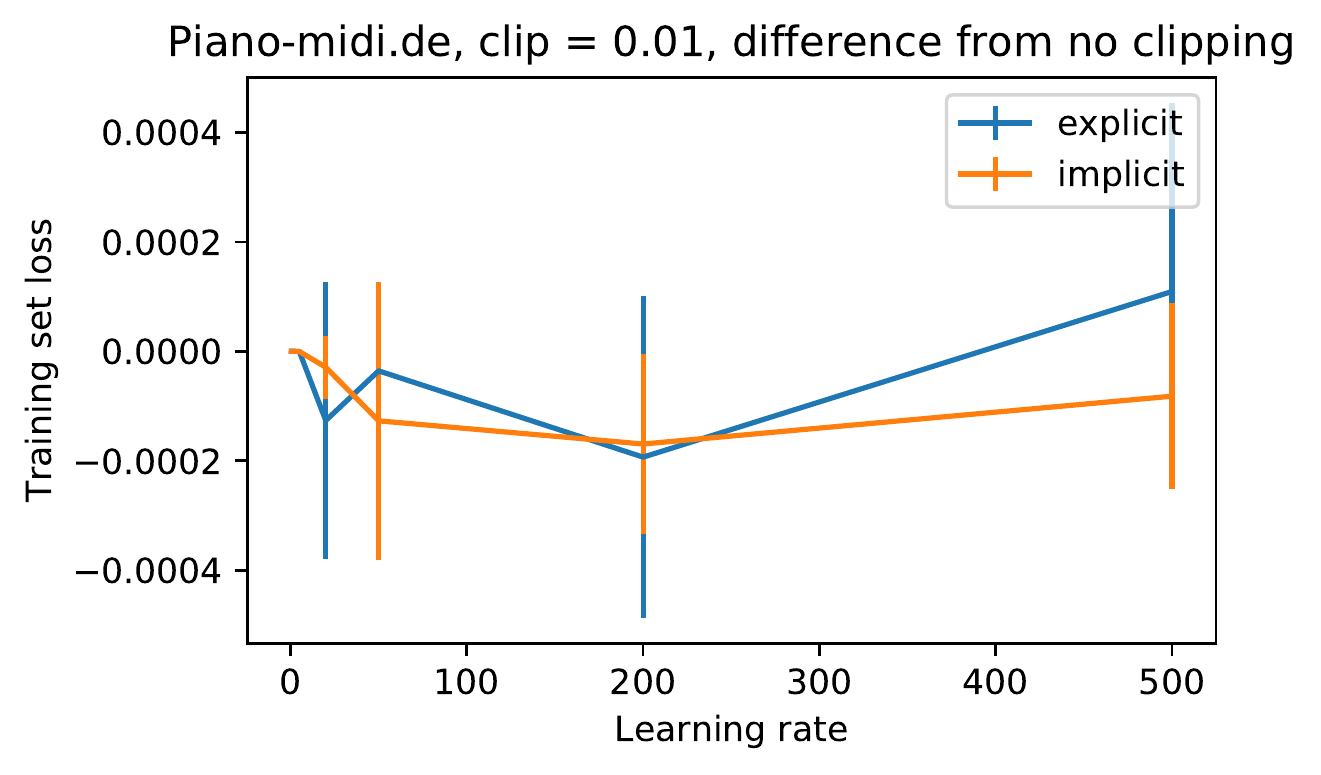}
\end{minipage}%
\hfill
\begin{minipage}{.49\textwidth}
  \centering
  \includegraphics[width=.79\linewidth]{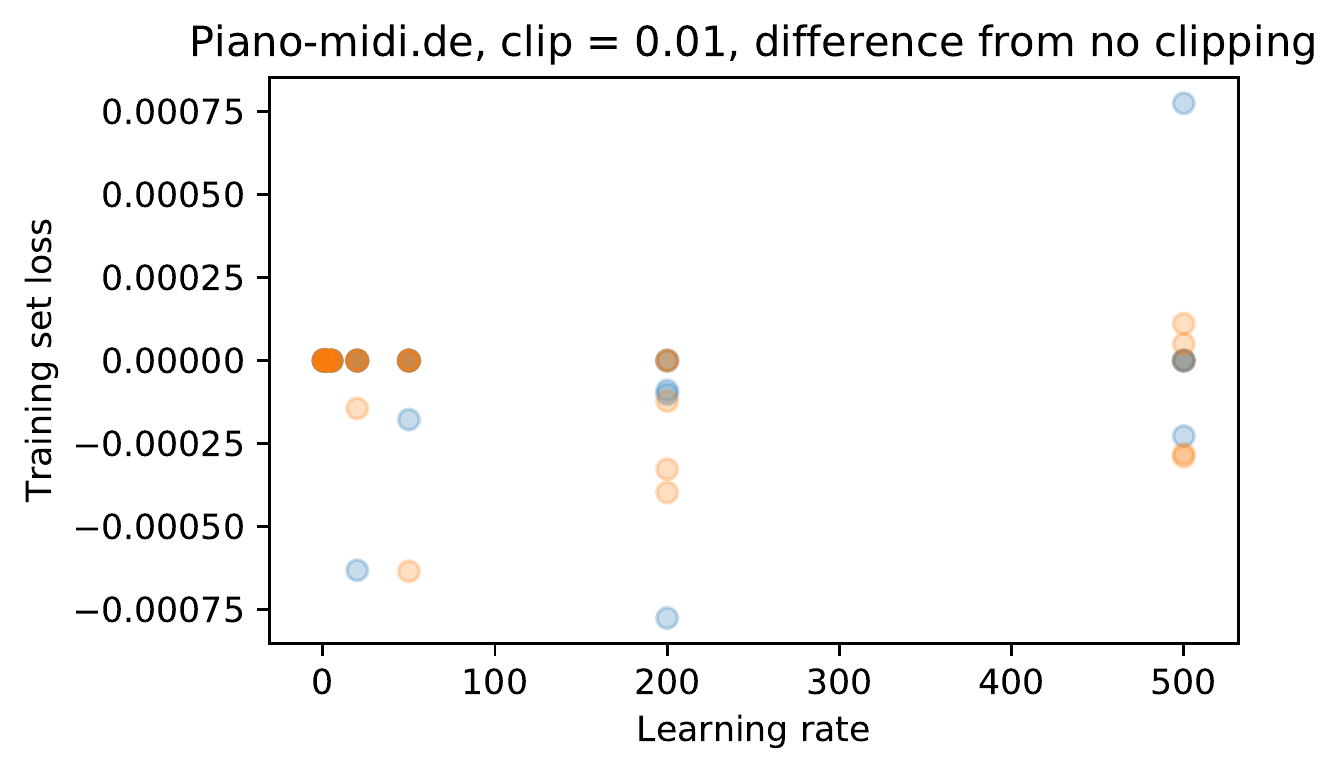}
\end{minipage}
\end{figure}

\vspace{2cm}

\begin{figure}[h]
\begin{minipage}{.49\textwidth}
  \centering
  \includegraphics[width=.79\linewidth]{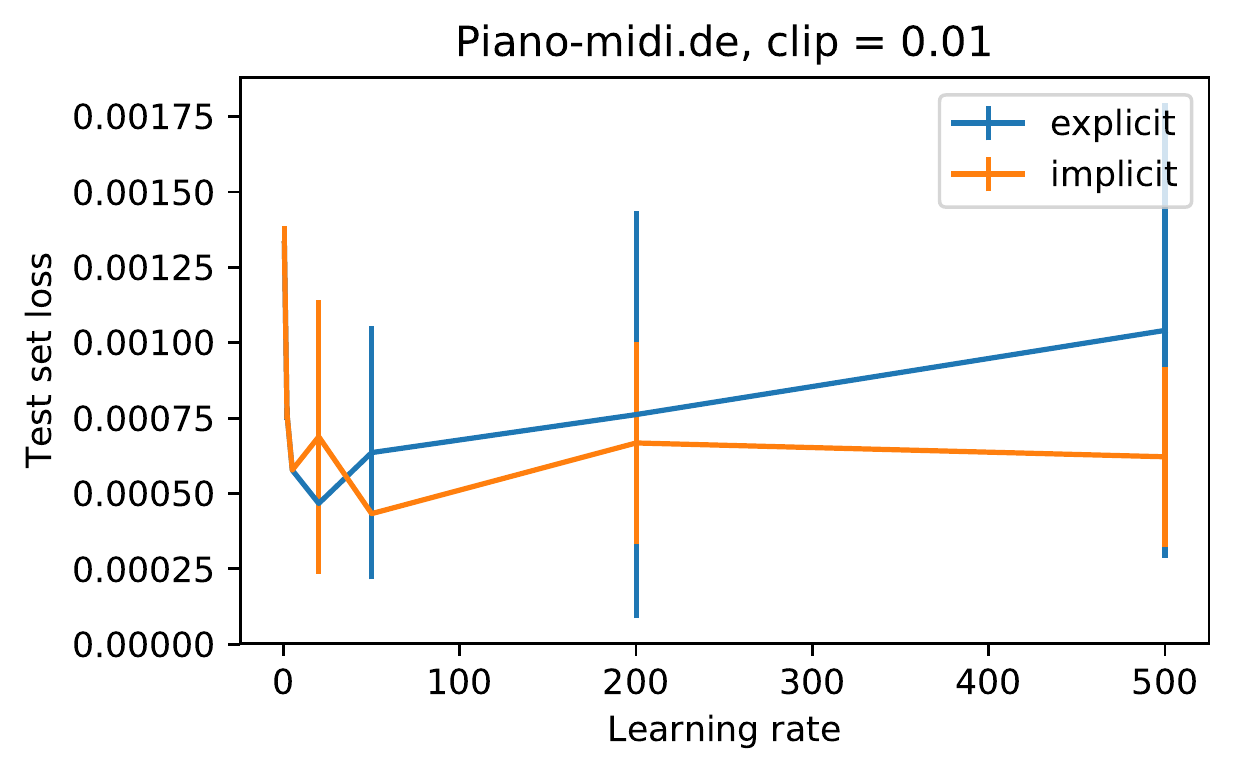}
\end{minipage}%
\hfill
\begin{minipage}{.49\textwidth}
  \centering
  \includegraphics[width=.79\linewidth]{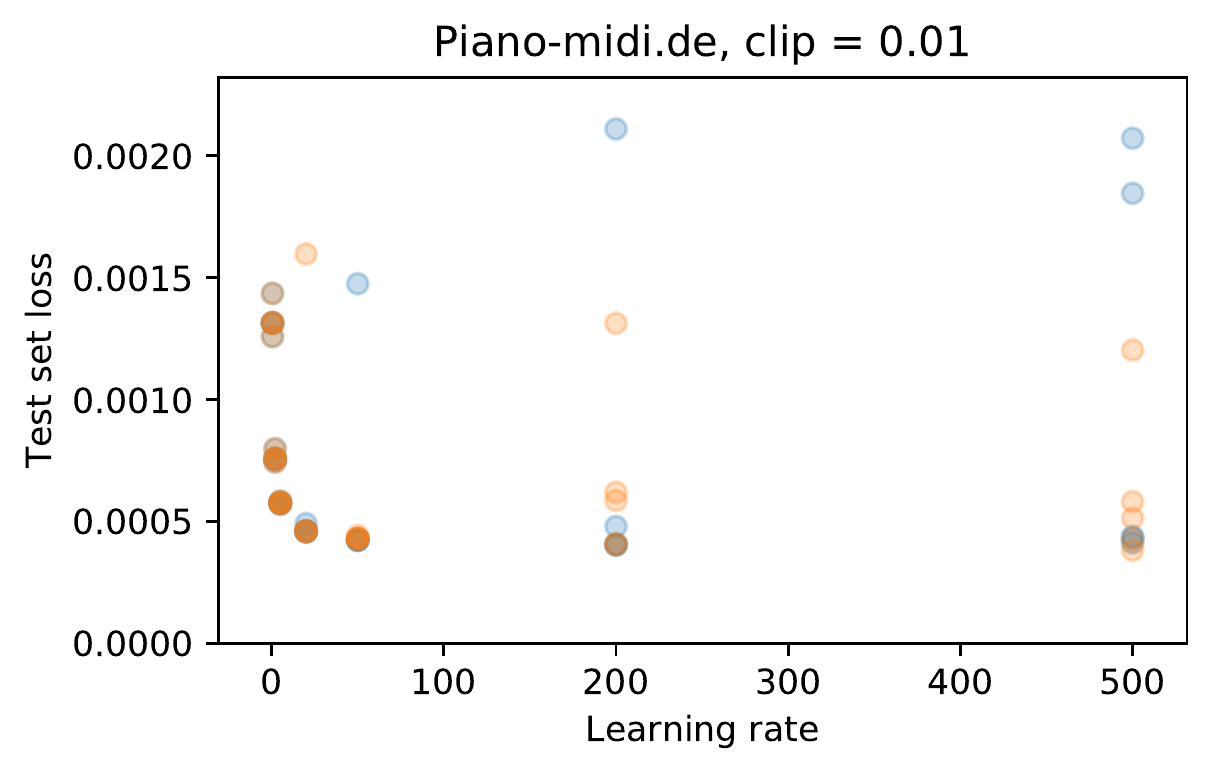}
\end{minipage}
\begin{minipage}{.49\textwidth}
  \centering
  \includegraphics[width=.79\linewidth]{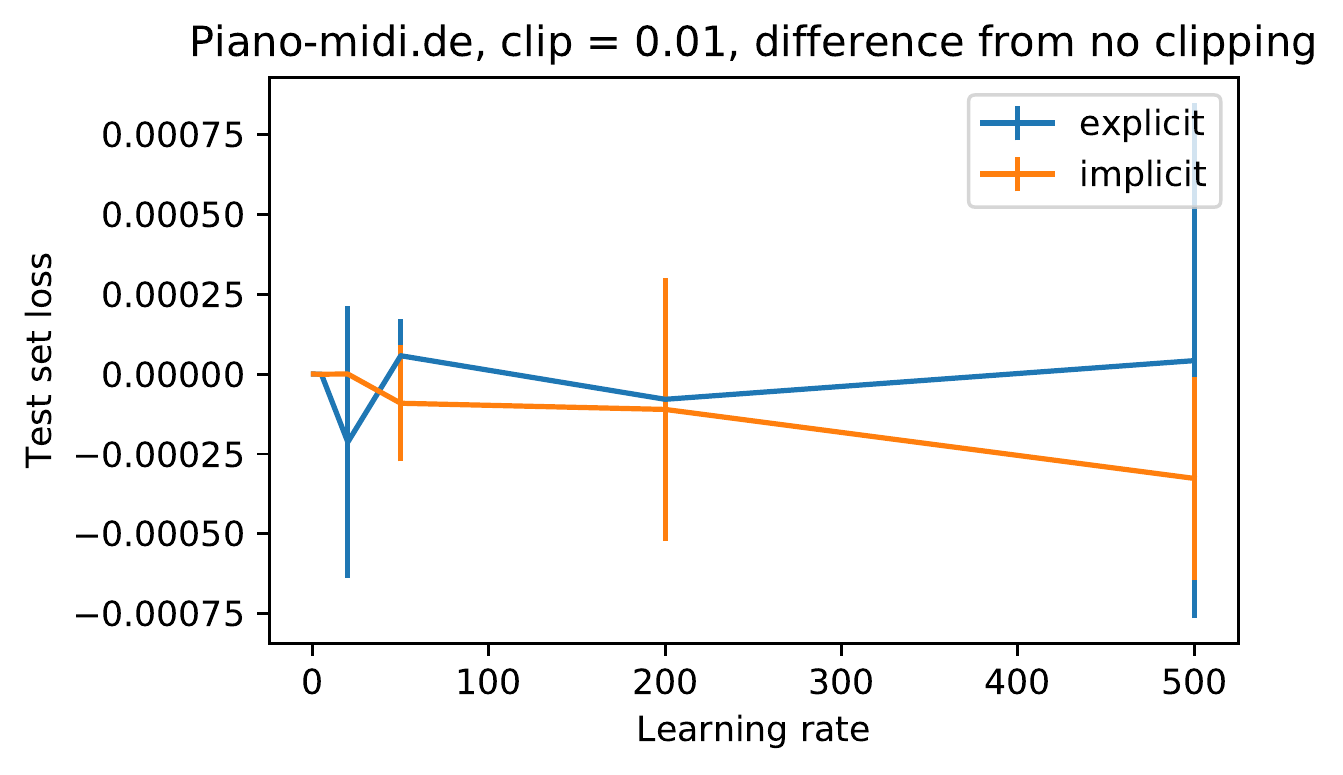}
\end{minipage}%
\hfill
\begin{minipage}{.49\textwidth}
  \centering
  \includegraphics[width=.79\linewidth]{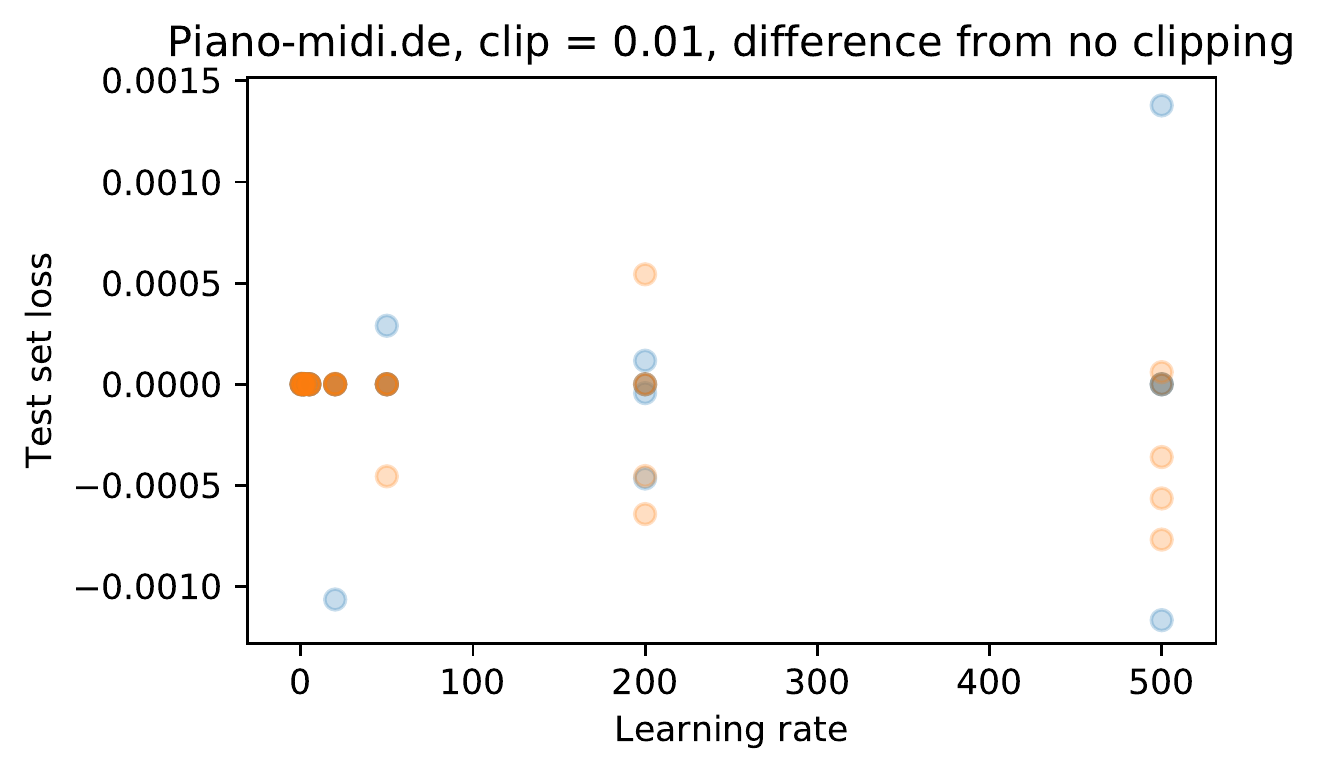}
\end{minipage}
\end{figure}


\clearpage
\subsection{MNIST classification}
\begin{figure}[h]
\centering
\begin{minipage}{.49\textwidth}
  \centering
  \includegraphics[width=.99\linewidth]{images/mnist_classification_average_loss}
\end{minipage}%
\hfill
\begin{minipage}{.49\textwidth}
  \centering
  \includegraphics[width=.99\linewidth]{images/mnist_classification_average_loss_scatter}
\end{minipage}
\begin{minipage}{.49\textwidth}
  \centering
  \includegraphics[width=.99\linewidth]{images/mnist_classification_average_loss_test}
\end{minipage}%
\hfill
\begin{minipage}{.49\textwidth}
  \centering
  \includegraphics[width=.99\linewidth]{images/mnist_classification_average_loss_scatter_test}
\end{minipage}
\begin{minipage}{.49\textwidth}
  \centering
  \includegraphics[width=.99\linewidth]{images/mnist_classification_average_accuracy}
\end{minipage}%
\hfill
\begin{minipage}{.49\textwidth}
  \centering
  \includegraphics[width=.99\linewidth]{images/mnist_classification_average_accuracy_scatter}
\end{minipage}
\begin{minipage}{.49\textwidth}
  \centering
  \includegraphics[width=.99\linewidth]{images/mnist_classification_average_accuracy_test}
\end{minipage}%
\hfill
\begin{minipage}{.49\textwidth}
  \centering
  \includegraphics[width=.99\linewidth]{images/mnist_classification_average_accuracy_scatter_test}
\end{minipage}
\end{figure}

%

\begin{figure}[h]
\centering
\begin{minipage}{.49\textwidth}
  \centering
  \includegraphics[width=.99\linewidth]{images/mnist_classification_clipping_average_loss}
\end{minipage}%
\hfill
\begin{minipage}{.49\textwidth}
  \centering
  \includegraphics[width=.99\linewidth]{images/mnist_classification_clipping_average_loss_scatter}
\end{minipage}
\begin{minipage}{.49\textwidth}
  \centering
  \includegraphics[width=.99\linewidth]{images/mnist_classification_clipping_average_loss_test}
\end{minipage}%
\hfill
\begin{minipage}{.49\textwidth}
  \centering
  \includegraphics[width=.99\linewidth]{images/mnist_classification_clipping_average_loss_scatter_test}
\end{minipage}

\vspace{1cm}

\begin{minipage}{.49\textwidth}
  \centering
  \includegraphics[width=.99\linewidth]{images/mnist_classification_clipping_average_accuracy}
\end{minipage}%
\hfill
\begin{minipage}{.49\textwidth}
  \centering
  \includegraphics[width=.99\linewidth]{images/mnist_classification_clipping_average_accuracy_scatter}
\end{minipage}
\begin{minipage}{.49\textwidth}
  \centering
  \includegraphics[width=.99\linewidth]{images/mnist_classification_clipping_average_accuracy_test}
\end{minipage}%
\hfill
\begin{minipage}{.49\textwidth}
  \centering
  \includegraphics[width=.99\linewidth]{images/mnist_classification_clipping_average_accuracy_scatter_test}
\end{minipage}
\end{figure}

\clearpage
\subsection{MNIST autoencoder}

\begin{figure}[h]
\centering
\begin{minipage}{.49\textwidth}
  \centering
  \includegraphics[width=.99\linewidth]{images/mnist_autoencoder_average_loss_clip_0_0}
\end{minipage}%
\hfill
\begin{minipage}{.49\textwidth}
  \centering
  \includegraphics[width=.99\linewidth]{images/mnist_autoencoder_average_loss_scatter_clip_0_0}
\end{minipage}
\begin{minipage}{.49\textwidth}
  \centering
  \includegraphics[width=.99\linewidth]{images/mnist_autoencoder_average_loss_clip_0_0_test}
\end{minipage}%
\hfill
\begin{minipage}{.49\textwidth}
  \centering
  \includegraphics[width=.99\linewidth]{images/mnist_autoencoder_average_loss_scatter_clip_0_0_test}
\end{minipage}

\vspace{1cm}

\begin{minipage}{.49\textwidth}
  \centering
  \includegraphics[width=.99\linewidth]{images/mnist_autoencoder_average_loss_clip_1_0}
\end{minipage}%
\hfill
\begin{minipage}{.49\textwidth}
  \centering
  \includegraphics[width=.99\linewidth]{images/mnist_autoencoder_average_loss_scatter_clip_1_0}
\end{minipage}
\begin{minipage}{.49\textwidth}
  \centering
  \includegraphics[width=.99\linewidth]{images/mnist_autoencoder_average_loss_clip_1_0_test}
\end{minipage}%
\hfill
\begin{minipage}{.49\textwidth}
  \centering
  \includegraphics[width=.99\linewidth]{images/mnist_autoencoder_average_loss_scatter_clip_1_0_test}
\end{minipage}
\end{figure}

\clearpage
\subsection{JSB Chorales}

\begin{figure}[h]
\centering
\begin{minipage}{.49\textwidth}
  \centering
  \includegraphics[width=.99\linewidth]{images/JSB_Chorales_average_loss_clip_0_0}
\end{minipage}%
\hfill
\begin{minipage}{.49\textwidth}
  \centering
  \includegraphics[width=.99\linewidth]{images/JSB_Chorales_average_loss_scatter_clip_0_0}
\end{minipage}
\begin{minipage}{.49\textwidth}
  \centering
  \includegraphics[width=.99\linewidth]{images/JSB_Chorales_average_loss_test_clip_0_0}
\end{minipage}%
\hfill
\begin{minipage}{.49\textwidth}
  \centering
  \includegraphics[width=.99\linewidth]{images/JSB_Chorales_average_loss_scatter_test_clip_0_0}
\end{minipage}
\end{figure}

\begin{figure}[h]
\centering
\begin{minipage}{.49\textwidth}
  \centering
  \includegraphics[width=.99\linewidth]{images/JSB_Chorales_average_loss_clip_8_0}
\end{minipage}%
\hfill
\begin{minipage}{.49\textwidth}
  \centering
  \includegraphics[width=.99\linewidth]{images/JSB_Chorales_average_loss_scatter_clip_8_0}
\end{minipage}
\begin{minipage}{.49\textwidth}
  \centering
  \includegraphics[width=.99\linewidth]{images/JSB_Chorales_average_loss_clip_8_0_difference}
\end{minipage}%
\hfill
\begin{minipage}{.49\textwidth}
  \centering
  \includegraphics[width=.99\linewidth]{images/JSB_Chorales_average_loss_scatter_clip_8_0_difference}
\end{minipage}

\vspace{1cm}

\begin{minipage}{.49\textwidth}
  \centering
  \includegraphics[width=.99\linewidth]{images/JSB_Chorales_average_loss_test_clip_8_0}
\end{minipage}%
\hfill
\begin{minipage}{.49\textwidth}
  \centering
  \includegraphics[width=.99\linewidth]{images/JSB_Chorales_average_loss_scatter_test_clip_8_0}
\end{minipage}
\begin{minipage}{.49\textwidth}
  \centering
  \includegraphics[width=.99\linewidth]{images/JSB_Chorales_average_loss_test_clip_8_0_difference}
\end{minipage}%
\hfill
\begin{minipage}{.49\textwidth}
  \centering
  \includegraphics[width=.99\linewidth]{images/JSB_Chorales_average_loss_scatter_test_clip_8_0_difference}
\end{minipage}
\end{figure}

\begin{figure}[h]
\centering
\begin{minipage}{.49\textwidth}
  \centering
  \includegraphics[width=.99\linewidth]{images/JSB_Chorales_average_loss_clip_1_0}
\end{minipage}%
\hfill
\begin{minipage}{.49\textwidth}
  \centering
  \includegraphics[width=.99\linewidth]{images/JSB_Chorales_average_loss_scatter_clip_1_0}
\end{minipage}
\begin{minipage}{.49\textwidth}
  \centering
  \includegraphics[width=.99\linewidth]{images/JSB_Chorales_average_loss_clip_1_0_difference}
\end{minipage}%
\hfill
\begin{minipage}{.49\textwidth}
  \centering
  \includegraphics[width=.99\linewidth]{images/JSB_Chorales_average_loss_scatter_clip_1_0_difference}
\end{minipage}

\vspace{1cm}

\begin{minipage}{.49\textwidth}
  \centering
  \includegraphics[width=.99\linewidth]{images/JSB_Chorales_average_loss_test_clip_1_0}
\end{minipage}%
\hfill
\begin{minipage}{.49\textwidth}
  \centering
  \includegraphics[width=.99\linewidth]{images/JSB_Chorales_average_loss_scatter_test_clip_1_0}
\end{minipage}
\begin{minipage}{.49\textwidth}
  \centering
  \includegraphics[width=.99\linewidth]{images/JSB_Chorales_average_loss_test_clip_1_0_difference}
\end{minipage}%
\hfill
\begin{minipage}{.49\textwidth}
  \centering
  \includegraphics[width=.99\linewidth]{images/JSB_Chorales_average_loss_scatter_test_clip_1_0_difference}
\end{minipage}
\end{figure}

\begin{figure}[h]
\centering
\begin{minipage}{.49\textwidth}
  \centering
  \includegraphics[width=.99\linewidth]{images/JSB_Chorales_average_loss_clip_0_1}
\end{minipage}%
\hfill
\begin{minipage}{.49\textwidth}
  \centering
  \includegraphics[width=.99\linewidth]{images/JSB_Chorales_average_loss_scatter_clip_0_1}
\end{minipage}
\begin{minipage}{.49\textwidth}
  \centering
  \includegraphics[width=.99\linewidth]{images/JSB_Chorales_average_loss_clip_0_1_difference}
\end{minipage}%
\hfill
\begin{minipage}{.49\textwidth}
  \centering
  \includegraphics[width=.99\linewidth]{images/JSB_Chorales_average_loss_scatter_clip_0_1_difference}
\end{minipage}

\vspace{1cm}

\begin{minipage}{.49\textwidth}
  \centering
  \includegraphics[width=.99\linewidth]{images/JSB_Chorales_average_loss_test_clip_0_1}
\end{minipage}%
\hfill
\begin{minipage}{.49\textwidth}
  \centering
  \includegraphics[width=.99\linewidth]{images/JSB_Chorales_average_loss_scatter_test_clip_0_1}
\end{minipage}
\begin{minipage}{.49\textwidth}
  \centering
  \includegraphics[width=.99\linewidth]{images/JSB_Chorales_average_loss_test_clip_0_1_difference}
\end{minipage}%
\hfill
\begin{minipage}{.49\textwidth}
  \centering
  \includegraphics[width=.99\linewidth]{images/JSB_Chorales_average_loss_scatter_test_clip_0_1_difference}
\end{minipage}
\end{figure}

\clearpage
\subsection{MuseData}

\begin{figure}[h]
\centering
\begin{minipage}{.49\textwidth}
  \centering
  \includegraphics[width=.99\linewidth]{images/MuseData_average_loss_clip_0_0}
\end{minipage}%
\hfill
\begin{minipage}{.49\textwidth}
  \centering
  \includegraphics[width=.99\linewidth]{images/MuseData_average_loss_scatter_clip_0_0}
\end{minipage}
\begin{minipage}{.49\textwidth}
  \centering
  \includegraphics[width=.99\linewidth]{images/MuseData_average_loss_test_clip_0_0}
\end{minipage}%
\hfill
\begin{minipage}{.49\textwidth}
  \centering
  \includegraphics[width=.99\linewidth]{images/MuseData_average_loss_scatter_test_clip_0_0}
\end{minipage}
\end{figure}

\begin{figure}[h]
\centering
\begin{minipage}{.49\textwidth}
  \centering
  \includegraphics[width=.99\linewidth]{images/MuseData_average_loss_clip_8_0}
\end{minipage}%
\hfill
\begin{minipage}{.49\textwidth}
  \centering
  \includegraphics[width=.99\linewidth]{images/MuseData_average_loss_scatter_clip_8_0}
\end{minipage}
\begin{minipage}{.49\textwidth}
  \centering
  \includegraphics[width=.99\linewidth]{images/MuseData_average_loss_clip_8_0_difference}
\end{minipage}%
\hfill
\begin{minipage}{.49\textwidth}
  \centering
  \includegraphics[width=.99\linewidth]{images/MuseData_average_loss_scatter_clip_8_0_difference}
\end{minipage}

\vspace{1cm}

\begin{minipage}{.49\textwidth}
  \centering
  \includegraphics[width=.99\linewidth]{images/MuseData_average_loss_test_clip_8_0}
\end{minipage}%
\hfill
\begin{minipage}{.49\textwidth}
  \centering
  \includegraphics[width=.99\linewidth]{images/MuseData_average_loss_scatter_test_clip_8_0}
\end{minipage}
\begin{minipage}{.49\textwidth}
  \centering
  \includegraphics[width=.99\linewidth]{images/MuseData_average_loss_test_clip_8_0_difference}
\end{minipage}%
\hfill
\begin{minipage}{.49\textwidth}
  \centering
  \includegraphics[width=.99\linewidth]{images/MuseData_average_loss_scatter_test_clip_8_0_difference}
\end{minipage}
\end{figure}

\clearpage
\subsection{Nottingham}

\begin{figure}[h]
\centering
\begin{minipage}{.49\textwidth}
  \centering
  \includegraphics[width=.99\linewidth]{images/Nottingham_average_loss_clip_0_0}
\end{minipage}%
\hfill
\begin{minipage}{.49\textwidth}
  \centering
  \includegraphics[width=.99\linewidth]{images/Nottingham_average_loss_scatter_clip_0_0}
\end{minipage}
\begin{minipage}{.49\textwidth}
  \centering
  \includegraphics[width=.99\linewidth]{images/Nottingham_average_loss_test_clip_0_0}
\end{minipage}%
\hfill
\begin{minipage}{.49\textwidth}
  \centering
  \includegraphics[width=.99\linewidth]{images/Nottingham_average_loss_scatter_test_clip_0_0}
\end{minipage}
\end{figure}

\begin{figure}[h]
\centering
\begin{minipage}{.49\textwidth}
  \centering
  \includegraphics[width=.99\linewidth]{images/Nottingham_average_loss_clip_8_0}
\end{minipage}%
\hfill
\begin{minipage}{.49\textwidth}
  \centering
  \includegraphics[width=.99\linewidth]{images/Nottingham_average_loss_scatter_clip_8_0}
\end{minipage}
\begin{minipage}{.49\textwidth}
  \centering
  \includegraphics[width=.99\linewidth]{images/Nottingham_average_loss_clip_8_0_difference}
\end{minipage}%
\hfill
\begin{minipage}{.49\textwidth}
  \centering
  \includegraphics[width=.99\linewidth]{images/Nottingham_average_loss_scatter_clip_8_0_difference}
\end{minipage}

\vspace{1cm}

\begin{minipage}{.49\textwidth}
  \centering
  \includegraphics[width=.99\linewidth]{images/Nottingham_average_loss_test_clip_8_0}
\end{minipage}%
\hfill
\begin{minipage}{.49\textwidth}
  \centering
  \includegraphics[width=.99\linewidth]{images/Nottingham_average_loss_scatter_test_clip_8_0}
\end{minipage}
\begin{minipage}{.49\textwidth}
  \centering
  \includegraphics[width=.99\linewidth]{images/Nottingham_average_loss_test_clip_8_0_difference}
\end{minipage}%
\hfill
\begin{minipage}{.49\textwidth}
  \centering
  \includegraphics[width=.99\linewidth]{images/Nottingham_average_loss_scatter_test_clip_8_0_difference}
\end{minipage}
\end{figure}

\clearpage
\subsection{Piano-midi.de}

\begin{figure}[h]
\centering
\begin{minipage}{.49\textwidth}
  \centering
  \includegraphics[width=.99\linewidth]{images/Piano-midi_average_loss_clip_0_0}
\end{minipage}%
\hfill
\begin{minipage}{.49\textwidth}
  \centering
  \includegraphics[width=.99\linewidth]{images/Piano-midi_average_loss_scatter_clip_0_0}
\end{minipage}
\begin{minipage}{.49\textwidth}
  \centering
  \includegraphics[width=.99\linewidth]{images/Piano-midi_average_loss_test_clip_0_0}
\end{minipage}%
\hfill
\begin{minipage}{.49\textwidth}
  \centering
  \includegraphics[width=.99\linewidth]{images/Piano-midi_average_loss_scatter_test_clip_0_0}
\end{minipage}
\end{figure}

\begin{figure}[h]
\centering
\begin{minipage}{.49\textwidth}
  \centering
  \includegraphics[width=.99\linewidth]{images/Piano-midi_average_loss_clip_8_0}
\end{minipage}%
\hfill
\begin{minipage}{.49\textwidth}
  \centering
  \includegraphics[width=.99\linewidth]{images/Piano-midi_average_loss_scatter_clip_8_0}
\end{minipage}
\begin{minipage}{.49\textwidth}
  \centering
  \includegraphics[width=.99\linewidth]{images/Piano-midi_average_loss_clip_8_0_difference}
\end{minipage}%
\hfill
\begin{minipage}{.49\textwidth}
  \centering
  \includegraphics[width=.99\linewidth]{images/Piano-midi_average_loss_scatter_clip_8_0_difference}
\end{minipage}

\vspace{1cm}

\begin{minipage}{.49\textwidth}
  \centering
  \includegraphics[width=.99\linewidth]{images/Piano-midi_average_loss_test_clip_8_0}
\end{minipage}%
\hfill
\begin{minipage}{.49\textwidth}
  \centering
  \includegraphics[width=.99\linewidth]{images/Piano-midi_average_loss_scatter_test_clip_8_0}
\end{minipage}
\begin{minipage}{.49\textwidth}
  \centering
  \includegraphics[width=.99\linewidth]{images/Piano-midi_average_loss_test_clip_8_0_difference}
\end{minipage}%
\hfill
\begin{minipage}{.49\textwidth}
  \centering
  \includegraphics[width=.99\linewidth]{images/Piano-midi_average_loss_scatter_test_clip_8_0_difference}
\end{minipage}
\end{figure}

\begin{figure}[h]
\centering
\begin{minipage}{.49\textwidth}
  \centering
  \includegraphics[width=.99\linewidth]{images/Piano-midi_average_loss_clip_1_0}
\end{minipage}%
\hfill
\begin{minipage}{.49\textwidth}
  \centering
  \includegraphics[width=.99\linewidth]{images/Piano-midi_average_loss_scatter_clip_1_0}
\end{minipage}
\begin{minipage}{.49\textwidth}
  \centering
  \includegraphics[width=.99\linewidth]{images/Piano-midi_average_loss_clip_1_0_difference}
\end{minipage}%
\hfill
\begin{minipage}{.49\textwidth}
  \centering
  \includegraphics[width=.99\linewidth]{images/Piano-midi_average_loss_scatter_clip_1_0_difference}
\end{minipage}

\vspace{1cm}

\begin{minipage}{.49\textwidth}
  \centering
  \includegraphics[width=.99\linewidth]{images/Piano-midi_average_loss_test_clip_1_0}
\end{minipage}%
\hfill
\begin{minipage}{.49\textwidth}
  \centering
  \includegraphics[width=.99\linewidth]{images/Piano-midi_average_loss_scatter_test_clip_1_0}
\end{minipage}
\begin{minipage}{.49\textwidth}
  \centering
  \includegraphics[width=.99\linewidth]{images/Piano-midi_average_loss_test_clip_1_0_difference}
\end{minipage}%
\hfill
\begin{minipage}{.49\textwidth}
  \centering
  \includegraphics[width=.99\linewidth]{images/Piano-midi_average_loss_scatter_test_clip_1_0_difference}
\end{minipage}
\end{figure}

\begin{figure}[h]
\centering
\begin{minipage}{.49\textwidth}
  \centering
  \includegraphics[width=.99\linewidth]{images/Piano-midi_average_loss_clip_0_1}
\end{minipage}%
\hfill
\begin{minipage}{.49\textwidth}
  \centering
  \includegraphics[width=.99\linewidth]{images/Piano-midi_average_loss_scatter_clip_0_1}
\end{minipage}
\begin{minipage}{.49\textwidth}
  \centering
  \includegraphics[width=.99\linewidth]{images/Piano-midi_average_loss_clip_0_1_difference}
\end{minipage}%
\hfill
\begin{minipage}{.49\textwidth}
  \centering
  \includegraphics[width=.99\linewidth]{images/Piano-midi_average_loss_scatter_clip_0_1_difference}
\end{minipage}

\vspace{1cm}

\begin{minipage}{.49\textwidth}
  \centering
  \includegraphics[width=.99\linewidth]{images/Piano-midi_average_loss_test_clip_0_1}
\end{minipage}%
\hfill
\begin{minipage}{.49\textwidth}
  \centering
  \includegraphics[width=.99\linewidth]{images/Piano-midi_average_loss_scatter_test_clip_0_1}
\end{minipage}
\begin{minipage}{.49\textwidth}
  \centering
  \includegraphics[width=.99\linewidth]{images/Piano-midi_average_loss_test_clip_0_1_difference}
\end{minipage}%
\hfill
\begin{minipage}{.49\textwidth}
  \centering
  \includegraphics[width=.99\linewidth]{images/Piano-midi_average_loss_scatter_test_clip_0_1_difference}
\end{minipage}
\end{figure}

\begin{figure}[h]
\centering
\begin{minipage}{.49\textwidth}
  \centering
  \includegraphics[width=.99\linewidth]{images/Piano-midi_average_loss_clip_0_01}
\end{minipage}%
\hfill
\begin{minipage}{.49\textwidth}
  \centering
  \includegraphics[width=.99\linewidth]{images/Piano-midi_average_loss_scatter_clip_0_01}
\end{minipage}
\begin{minipage}{.49\textwidth}
  \centering
  \includegraphics[width=.99\linewidth]{images/Piano-midi_average_loss_clip_0_01_difference}
\end{minipage}%
\hfill
\begin{minipage}{.49\textwidth}
  \centering
  \includegraphics[width=.99\linewidth]{images/Piano-midi_average_loss_scatter_clip_0_01_difference}
\end{minipage}

\vspace{2cm}

\begin{minipage}{.49\textwidth}
  \centering
  \includegraphics[width=.99\linewidth]{images/Piano-midi_average_loss_test_clip_0_01}
\end{minipage}%
\hfill
\begin{minipage}{.49\textwidth}
  \centering
  \includegraphics[width=.99\linewidth]{images/Piano-midi_average_loss_scatter_test_clip_0_01}
\end{minipage}
\begin{minipage}{.49\textwidth}
  \centering
  \includegraphics[width=.99\linewidth]{images/Piano-midi_average_loss_test_clip_0_01_difference}
\end{minipage}%
\hfill
\begin{minipage}{.49\textwidth}
  \centering
  \includegraphics[width=.99\linewidth]{images/Piano-midi_average_loss_scatter_test_clip_0_01_difference}
\end{minipage}
\end{figure}


\clearpage
\subsection{UCI classification training losses}\label{app:classification_training_losses}
Here we display the training loss of IB and EB on each of the 121 UCI datasets at the end of the 10th epoch. We only ran each experiment with one random seed. The experiments are divided up into large (>1000 datapoints) and small ($\leq$1000 datapoints).

The performance of EB and IB are very similar for the smaller learning rates. This is still the case at the optimal learning rate for each dataset. However, once either EB or IB moves one learning rate higher, the loss tends to explode. On average it is 892 times higher than the for the optimal loss. This is typical behavior for learning rate sensitivity, as is shown in \cite[Fig 11.1, p. 425]{Goodfellow-et-al-2016}. The fact that there is such a large explosion in the loss indicates that the learning rate grid is too coarse to pick up the differences in behaviors in learning rates slightly larger than the optimal learning rate. Hence it struggles to systematically distinguish between the performance of EB and IB.

\begin{table}[h]
\scriptsize
\begin{tabular}{cccccccccccc}
\multirow{2}{*}{\textbf{Dataset} (large)} & \multicolumn{10}{c}{\textbf{Learning rate}} & \multirow{2}{*}{\textbf{SGD type}}\tabularnewline
 & 0.001 & 0.01 & 0.1 & 0.5 & 1.0 & 3.0 & 5.0 & 10.0 & 30.0 & 50.0 & \tabularnewline
\midrule
\multirow{2}{*}{abalone} & 1.11 & 1.10 & 0.83 & 0.77 & 0.76 & 12.30 & 6.13 & 25.20 & 57.09 & 212.61 & explicit\tabularnewline
 & 1.11 & 1.10 & 0.83 & 0.77 & 0.77 & 2.65 & 3.18 & 22.95 & 98.31 & 84.63 & implicit\tabularnewline
\hline
\multirow{2}{*}{adult} & 0.55 & 0.35 & 0.32 & 0.32 & 0.31 & 0.36 & 1.52 & 2.97 & 17.99 & 27.19 & explicit\tabularnewline
 & 0.55 & 0.35 & 0.32 & 0.32 & 0.32 & 0.34 & 1.15 & 4.31 & 12.22 & 20.27 & implicit\tabularnewline
\hline
\multirow{2}{*}{bank} & 0.60 & 0.36 & 0.25 & 0.27 & 0.30 & 0.52 & 1.29 & 14.74 & 27.23 & 113.73 & explicit\tabularnewline
 & 0.60 & 0.36 & 0.25 & 0.27 & 0.29 & 0.35 & 1.46 & 21.44 & 65.24 & 19.08 & implicit\tabularnewline
\hline
\multirow{2}{*}{car} & 1.18 & 1.01 & 0.85 & 0.50 & 0.39 & 0.59 & 5.28 & 4.55 & 53.97 & 106.93 & explicit\tabularnewline
 & 1.18 & 1.01 & 0.85 & 0.50 & 0.47 & 2.40 & 2.12 & 10.38 & 213.52 & 122.84 & implicit\tabularnewline
\hline
\multirow{2}{*}{cardiotocography-10clases} & 2.30 & 2.18 & 1.24 & 0.68 & 0.68 & 4.36 & 6.71 & 13.91 & 59.09 & 156.53 & explicit\tabularnewline
 & 2.30 & 2.18 & 1.24 & 0.69 & 0.68 & 5.01 & 21.19 & 12.75 & 119.45 & 113.76 & implicit\tabularnewline
\hline
\multirow{2}{*}{cardiotocography-3clases} & 1.09 & 0.74 & 0.30 & 0.21 & 0.21 & 0.67 & 5.59 & 19.91 & 23.95 & 173.83 & explicit\tabularnewline
 & 1.09 & 0.74 & 0.30 & 0.21 & 0.21 & 3.26 & 6.42 & 25.49 & 17.43 & 156.47 & implicit\tabularnewline
\hline
\multirow{2}{*}{chess-krvk} & 2.81 & 2.45 & 1.96 & 1.70 & 1.75 & 2.13 & 2.22 & 10.26 & 39.35 & 144.99 & explicit\tabularnewline
 & 2.81 & 2.45 & 1.97 & 1.72 & 1.75 & 1.94 & 3.92 & 72.35 & 55.65 & 136.72 & implicit\tabularnewline
\hline
\multirow{2}{*}{chess-krvkp} & 0.69 & 0.68 & 0.08 & 0.04 & 0.02 & 1.38 & 0.77 & 12.53 & 156.29 & 467.88 & explicit\tabularnewline
 & 0.69 & 0.68 & 0.08 & 0.07 & 0.07 & 0.33 & 2.80 & 12.91 & 26.88 & 56.18 & implicit\tabularnewline
\hline
\multirow{2}{*}{connect-4} & 0.56 & 0.54 & 0.48 & 0.40 & 0.38 & 2.36 & 4.59 & 9.25 & 34.85 & 59.43 & explicit\tabularnewline
 & 0.56 & 0.54 & 0.50 & 0.44 & 0.46 & 3.23 & 4.64 & 11.31 & 33.66 & 52.39 & implicit\tabularnewline
\hline
\multirow{2}{*}{contrac} & 1.08 & 1.07 & 1.07 & 0.99 & 0.96 & 1.00 & 5.34 & 17.32 & 149.70 & 133.28 & explicit\tabularnewline
 & 1.08 & 1.07 & 1.07 & 0.99 & 0.96 & 0.99 & 14.08 & 11.72 & 60.05 & 181.86 & implicit\tabularnewline
\hline
\multirow{2}{*}{hill-valley} & 0.70 & 0.69 & 0.69 & 0.69 & 0.69 & 84.43 & 38.06 & 943.16 & 3267.16 & 1005.30 & explicit\tabularnewline
 & 0.70 & 0.69 & 0.69 & 0.69 & 0.70 & 168.00 & 180.79 & 907.54 & 2014.49 & 90.33 & implicit\tabularnewline
\hline
\multirow{2}{*}{image-segmentation} & 1.95 & 1.94 & 0.95 & 0.20 & 0.12 & 9.13 & 11.97 & 13.45 & 95.62 & 72.58 & explicit\tabularnewline
 & 1.95 & 1.94 & 0.95 & 0.22 & 0.15 & 2.77 & 2.10 & 13.21 & 277.97 & 131.41 & implicit\tabularnewline
\hline
\multirow{2}{*}{led-display} & 2.32 & 2.32 & 2.30 & 1.73 & 1.25 & 1.63 & 4.25 & 7.59 & 109.59 & 209.45 & explicit\tabularnewline
 & 2.32 & 2.32 & 2.30 & 1.73 & 1.26 & 3.13 & 5.43 & 30.01 & 93.78 & 240.85 & implicit\tabularnewline
\hline
\multirow{2}{*}{letter} & 3.27 & 3.21 & 1.31 & 0.61 & 0.64 & 0.83 & 2.47 & 5.98 & 35.33 & 60.50 & explicit\tabularnewline
 & 3.27 & 3.21 & 1.31 & 0.59 & 0.68 & 0.84 & 1.85 & 5.92 & 27.24 & 39.06 & implicit\tabularnewline
\hline
\multirow{2}{*}{magic} & 0.65 & 0.53 & 0.47 & 0.41 & 0.41 & 0.46 & 3.59 & 24.93 & 40.47 & 57.64 & explicit\tabularnewline
 & 0.65 & 0.53 & 0.48 & 0.42 & 0.43 & 0.53 & 5.43 & 5.41 & 41.77 & 146.46 & implicit\tabularnewline
\hline
\multirow{2}{*}{miniboone} & 0.37 & 0.29 & 0.23 & 0.21 & 0.22 & 1.40 & 5.60 & 18.28 & 60.37 & 190.87 & explicit\tabularnewline
 & 0.37 & 0.29 & 0.22 & 0.22 & 0.22 & 1.12 & 2.71 & 3.39 & 20.09 & 32.84 & implicit\tabularnewline
\hline
\multirow{2}{*}{molec-biol-splice} & 1.05 & 0.98 & 0.36 & 0.31 & 0.25 & 5.32 & 9.87 & 30.26 & 146.88 & 227.42 & explicit\tabularnewline
 & 1.05 & 0.98 & 0.36 & 0.34 & 0.33 & 27.64 & 8.85 & 29.27 & 149.89 & 188.89 & implicit\tabularnewline
\hline
\multirow{2}{*}{mushroom} & 0.69 & 0.36 & 0.01 & 0.00 & 0.00 & 0.00 & 0.06 & 1.96 & 157.31 & 87.08 & explicit\tabularnewline
 & 0.69 & 0.36 & 0.01 & 0.00 & 0.00 & 0.00 & 0.05 & 4.86 & 11.92 & 42.45 & implicit\tabularnewline
\hline
\multirow{2}{*}{musk-2} & 0.49 & 0.19 & 0.08 & 0.03 & 2.75 & 34.14 & 112.76 & 228.01 & 436.03 & 288.58 & explicit\tabularnewline
 & 0.49 & 0.19 & 0.08 & 0.03 & 2.46 & 63.96 & 109.08 & 36.49 & 263.51 & 105.52 & implicit\tabularnewline
\hline
\multirow{2}{*}{nursery} & 1.47 & 1.21 & 0.51 & 0.12 & 0.14 & 0.31 & 0.87 & 6.99 & 63.94 & 195.34 & explicit\tabularnewline
 & 1.47 & 1.21 & 0.52 & 0.18 & 0.12 & 0.19 & 1.36 & 33.21 & 129.24 & 514.70 & implicit\tabularnewline
\hline
\multirow{2}{*}{oocytes-merluccius-nucleus-4d} & 0.65 & 0.63 & 0.61 & 0.69 & 0.73 & 18.79 & 19.89 & 82.75 & 767.96 & 917.21 & explicit\tabularnewline
 & 0.65 & 0.63 & 0.61 & 0.68 & 0.72 & 30.91 & 91.02 & 108.44 & 291.86 & 181.39 & implicit\tabularnewline
\hline
\multirow{2}{*}{oocytes-merluccius-states-2f} & 1.04 & 0.88 & 0.35 & 0.22 & 0.20 & 1.46 & 3.09 & 34.59 & 255.92 & 35.96 & explicit\tabularnewline
 & 1.04 & 0.88 & 0.35 & 0.22 & 0.21 & 3.73 & 7.49 & 42.19 & 53.97 & 86.39 & implicit\tabularnewline
\bottomrule
\end{tabular}
\end{table}

\begin{table}[h]
\scriptsize
\begin{tabular}{cccccccccccc}
\multirow{2}{*}{\textbf{Dataset} (large)} & \multicolumn{10}{c}{\textbf{Learning rate}} & \multirow{2}{*}{\textbf{SGD type}}\tabularnewline
 & 0.001 & 0.01 & 0.1 & 0.5 & 1.0 & 3.0 & 5.0 & 10.0 & 30.0 & 50.0 & \tabularnewline
\midrule
\multirow{2}{*}{optical} & 2.27 & 1.55 & 0.10 & 0.02 & 0.01 & 0.23 & 1.59 & 10.73 & 51.20 & 121.22 & explicit\tabularnewline
 & 2.27 & 1.55 & 0.10 & 0.03 & 0.01 & 0.22 & 0.42 & 3.97 & 8.87 & 15.60 & implicit\tabularnewline
\hline
\multirow{2}{*}{ozone} & 0.51 & 0.18 & 0.09 & 0.09 & 0.10 & 9.91 & 11.00 & 8.83 & 115.42 & 30.89 & explicit\tabularnewline
 & 0.51 & 0.18 & 0.09 & 0.09 & 0.10 & 2.46 & 2.67 & 57.39 & 13.99 & 79.68 & implicit\tabularnewline
\hline
\multirow{2}{*}{page-blocks} & 1.30 & 0.47 & 0.25 & 0.16 & 0.13 & 0.13 & 1.36 & 6.97 & 7.46 & 23.52 & explicit\tabularnewline
 & 1.30 & 0.47 & 0.25 & 0.16 & 0.15 & 0.13 & 2.79 & 24.66 & 43.27 & 111.66 & implicit\tabularnewline
\hline
\multirow{2}{*}{pendigits} & 2.30 & 2.22 & 0.28 & 0.07 & 0.06 & 0.10 & 0.95 & 4.35 & 20.55 & 66.20 & explicit\tabularnewline
 & 2.30 & 2.22 & 0.28 & 0.05 & 0.05 & 0.18 & 0.26 & 0.72 & 12.03 & 32.69 & implicit\tabularnewline
\hline
\multirow{2}{*}{plant-margin} & 4.60 & 4.59 & 3.91 & 1.39 & 0.82 & 6.24 & 16.50 & 52.11 & 152.54 & 299.85 & explicit\tabularnewline
 & 4.60 & 4.59 & 3.91 & 1.39 & 0.86 & 21.08 & 28.08 & 46.19 & 117.16 & 208.09 & implicit\tabularnewline
\hline
\multirow{2}{*}{plant-shape} & 4.60 & 4.58 & 3.95 & 3.00 & 2.89 & 23.00 & 42.98 & 79.10 & 293.86 & 496.63 & explicit\tabularnewline
 & 4.60 & 4.58 & 3.95 & 3.00 & 2.87 & 19.54 & 43.80 & 85.42 & 234.65 & 572.32 & implicit\tabularnewline
\hline
\multirow{2}{*}{plant-texture} & 4.61 & 4.59 & 4.17 & 1.20 & 0.55 & 0.34 & 16.13 & 39.06 & 123.57 & 193.97 & explicit\tabularnewline
 & 4.61 & 4.59 & 4.17 & 1.20 & 0.57 & 0.84 & 11.28 & 17.27 & 68.12 & 208.77 & implicit\tabularnewline
\hline
\multirow{2}{*}{ringnorm} & 0.69 & 0.69 & 0.46 & 0.19 & 0.24 & 3.51 & 4.33 & 11.12 & 33.76 & 68.65 & explicit\tabularnewline
 & 0.69 & 0.69 & 0.47 & 0.22 & 0.21 & 2.64 & 5.10 & 53.34 & 29.11 & 72.79 & implicit\tabularnewline
\hline
\multirow{2}{*}{semeion} & 2.18 & 1.04 & 0.08 & 0.01 & 0.02 & 5.35 & 21.05 & 92.46 & 978.63 & 1565.21 & explicit\tabularnewline
 & 2.18 & 1.04 & 0.08 & 0.01 & 0.00 & 90.42 & 1.71 & 7.28 & 149.68 & 559.76 & implicit\tabularnewline
\hline
\multirow{2}{*}{spambase} & 0.67 & 0.36 & 0.18 & 0.16 & 0.17 & 8.17 & 2.06 & 20.93 & 67.37 & 102.59 & explicit\tabularnewline
 & 0.67 & 0.36 & 0.18 & 0.16 & 0.17 & 9.62 & 3.12 & 11.46 & 80.37 & 161.66 & implicit\tabularnewline
\hline
\multirow{2}{*}{statlog-german-credit} & 0.73 & 0.65 & 0.59 & 0.46 & 0.47 & 0.54 & 19.04 & 28.39 & 223.84 & 279.24 & explicit\tabularnewline
 & 0.73 & 0.65 & 0.59 & 0.46 & 0.47 & 0.55 & 12.99 & 32.00 & 36.53 & 281.25 & implicit\tabularnewline
\hline
\multirow{2}{*}{statlog-image} & 1.95 & 1.94 & 0.98 & 0.20 & 0.13 & 2.63 & 10.29 & 16.16 & 104.27 & 86.13 & explicit\tabularnewline
 & 1.95 & 1.94 & 0.98 & 0.20 & 0.18 & 3.61 & 7.49 & 62.98 & 65.65 & 62.29 & implicit\tabularnewline
\hline
\multirow{2}{*}{statlog-landsat} & 1.73 & 0.90 & 0.37 & 0.32 & 0.30 & 3.21 & 7.15 & 25.85 & 51.35 & 53.25 & explicit\tabularnewline
 & 1.73 & 0.90 & 0.37 & 0.33 & 0.33 & 5.02 & 8.10 & 6.22 & 34.73 & 126.06 & implicit\tabularnewline
\hline
\multirow{2}{*}{statlog-shuttle} & 0.80 & 0.18 & 0.03 & 0.04 & 0.01 & 0.02 & 0.54 & 0.71 & 40.30 & 4.51 & explicit\tabularnewline
 & 0.80 & 0.18 & 0.03 & 0.02 & 0.02 & 0.02 & 0.12 & 0.26 & 7.60 & 13.13 & implicit\tabularnewline
\hline
\multirow{2}{*}{steel-plates} & 1.91 & 1.80 & 1.22 & 0.72 & 0.69 & 3.44 & 14.42 & 12.85 & 99.23 & 87.18 & explicit\tabularnewline
 & 1.91 & 1.80 & 1.22 & 0.73 & 0.70 & 68.19 & 11.35 & 169.10 & 136.68 & 340.84 & implicit\tabularnewline
\hline
\multirow{2}{*}{thyroid} & 0.68 & 0.30 & 0.13 & 0.07 & 0.06 & 0.06 & 1.37 & 3.02 & 8.48 & 52.41 & explicit\tabularnewline
 & 0.68 & 0.30 & 0.13 & 0.07 & 0.07 & 0.09 & 2.71 & 6.23 & 33.58 & 51.21 & implicit\tabularnewline
\hline
\multirow{2}{*}{titanic} & 0.64 & 0.63 & 0.63 & 0.52 & 0.52 & 0.59 & 0.98 & 10.04 & 61.18 & 87.35 & explicit\tabularnewline
 & 0.64 & 0.63 & 0.63 & 0.52 & 0.53 & 0.61 & 4.44 & 14.18 & 81.66 & 58.33 & implicit\tabularnewline
\hline
\multirow{2}{*}{twonorm} & 0.69 & 0.55 & 0.06 & 0.06 & 0.06 & 0.07 & 0.59 & 5.49 & 23.61 & 26.69 & explicit\tabularnewline
 & 0.69 & 0.55 & 0.06 & 0.06 & 0.07 & 0.08 & 2.58 & 3.33 & 4.87 & 21.40 & implicit\tabularnewline
\hline
\multirow{2}{*}{wall-following} & 1.38 & 1.18 & 0.67 & 0.33 & 0.34 & 1.56 & 6.02 & 19.54 & 63.01 & 149.71 & explicit\tabularnewline
 & 1.38 & 1.18 & 0.68 & 0.34 & 0.32 & 3.83 & 5.84 & 11.73 & 33.44 & 367.02 & implicit\tabularnewline
\hline
\multirow{2}{*}{waveform} & 1.09 & 0.89 & 0.28 & 0.28 & 0.29 & 1.43 & 5.87 & 16.95 & 43.49 & 37.55 & explicit\tabularnewline
 & 1.09 & 0.89 & 0.28 & 0.28 & 0.29 & 1.81 & 6.47 & 9.13 & 32.50 & 30.61 & implicit\tabularnewline
\hline
\multirow{2}{*}{waveform-noise} & 1.09 & 0.82 & 0.29 & 0.33 & 0.29 & 10.64 & 21.44 & 41.78 & 107.99 & 249.30 & explicit\tabularnewline
 & 1.09 & 0.82 & 0.29 & 0.33 & 0.31 & 7.92 & 9.56 & 10.69 & 87.91 & 35.45 & implicit\tabularnewline
\hline
\multirow{2}{*}{wine-quality-red} & 1.78 & 1.55 & 1.20 & 0.96 & 0.95 & 1.01 & 4.72 & 16.11 & 72.08 & 206.43 & explicit\tabularnewline
 & 1.78 & 1.55 & 1.20 & 0.97 & 0.96 & 0.99 & 7.27 & 24.77 & 71.79 & 183.35 & implicit\tabularnewline
\hline
\multirow{2}{*}{wine-quality-white} & 1.96 & 1.42 & 1.18 & 1.12 & 1.14 & 1.36 & 5.40 & 28.62 & 95.23 & 209.29 & explicit\tabularnewline
 & 1.96 & 1.42 & 1.18 & 1.12 & 1.15 & 1.30 & 22.65 & 22.89 & 267.55 & 226.40 & implicit\tabularnewline
\hline
\multirow{2}{*}{yeast} & 2.29 & 2.07 & 1.73 & 1.21 & 1.21 & 2.11 & 5.53 & 16.11 & 40.73 & 139.51 & explicit\tabularnewline
 & 2.29 & 2.07 & 1.73 & 1.21 & 1.29 & 2.06 & 18.38 & 30.82 & 100.81 & 318.28 & implicit\tabularnewline
\bottomrule
\end{tabular}
\end{table}

\begin{table}[h]
\scriptsize
\begin{tabular}{cccccccccccc}
\multirow{2}{*}{\textbf{Dataset} (small)} & \multicolumn{10}{c}{\textbf{Learning rate}} & \multirow{2}{*}{\textbf{SGD type}}\tabularnewline
 & 0.001 & 0.01 & 0.1 & 0.5 & 1.0 & 3.0 & 5.0 & 10.0 & 30.0 & 50.0 & \tabularnewline
\midrule
\multirow{2}{*}{acute-inflammation} & 0.69 & 0.69 & 0.69 & 0.69 & 0.67 & 0.45 & 9.87 & 26.80 & 58.27 & 30.87 & explicit\tabularnewline
 & 0.69 & 0.69 & 0.69 & 0.69 & 0.67 & 0.45 & 28.65 & 19.13 & 11.30 & 7.00 & implicit\tabularnewline
\hline
\multirow{2}{*}{acute-nephritis} & 0.68 & 0.68 & 0.66 & 0.65 & 0.64 & 0.02 & 12.88 & 0.73 & 7.07 & 28.32 & explicit\tabularnewline
 & 0.68 & 0.68 & 0.66 & 0.65 & 0.64 & 0.02 & 9.84 & 0.00 & 13.72 & 21.91 & implicit\tabularnewline
\hline
\multirow{2}{*}{annealing} & 1.52 & 1.20 & 0.73 & 0.49 & 0.57 & 2.24 & 10.88 & 38.62 & 183.52 & 209.44 & explicit\tabularnewline
 & 1.52 & 1.20 & 0.73 & 0.55 & 0.58 & 12.99 & 18.48 & 21.23 & 180.15 & 250.37 & implicit\tabularnewline
\hline
\multirow{2}{*}{arrhythmia} & 2.46 & 1.92 & 0.63 & 0.25 & 24.92 & 57.87 & 162.63 & 256.21 & 507.11 & 1280.03 & explicit\tabularnewline
 & 2.46 & 1.92 & 0.63 & 0.23 & 30.57 & 48.03 & 111.98 & 329.23 & 1401.02 & 2270.51 & implicit\tabularnewline
\hline
\multirow{2}{*}{audiology-std} & 2.89 & 2.85 & 2.55 & 0.93 & 0.42 & 28.32 & 27.10 & 57.28 & 165.67 & 270.77 & explicit\tabularnewline
 & 2.89 & 2.85 & 2.55 & 0.93 & 0.42 & 49.83 & 75.19 & 136.42 & 247.31 & 433.71 & implicit\tabularnewline
\hline
\multirow{2}{*}{balance-scale} & 1.13 & 1.07 & 0.92 & 0.91 & 0.91 & 0.32 & 2.48 & 7.48 & 75.97 & 111.11 & explicit\tabularnewline
 & 1.13 & 1.07 & 0.92 & 0.91 & 0.91 & 0.32 & 19.65 & 30.05 & 151.05 & 267.63 & implicit\tabularnewline
\hline
\multirow{2}{*}{balloons} & 0.70 & 0.69 & 0.68 & 0.68 & 0.68 & 0.68 & 6.96 & 43.68 & 25.14 & 88.85 & explicit\tabularnewline
 & 0.70 & 0.69 & 0.68 & 0.68 & 0.68 & 0.68 & 19.71 & 17.06 & 33.36 & 256.16 & implicit\tabularnewline
\hline
\multirow{2}{*}{blood} & 0.69 & 0.62 & 0.54 & 0.54 & 0.54 & 0.54 & 0.54 & 17.12 & 67.33 & 85.15 & explicit\tabularnewline
 & 0.69 & 0.62 & 0.54 & 0.54 & 0.54 & 0.54 & 0.54 & 28.88 & 49.44 & 143.05 & implicit\tabularnewline
\hline
\multirow{2}{*}{breast-cancer} & 0.71 & 0.68 & 0.63 & 0.62 & 0.59 & 0.61 & 20.42 & 33.63 & 38.61 & 97.66 & explicit\tabularnewline
 & 0.71 & 0.68 & 0.63 & 0.62 & 0.59 & 0.61 & 9.89 & 48.75 & 21.05 & 177.86 & implicit\tabularnewline
\hline
\multirow{2}{*}{breast-cancer-wisc} & 0.70 & 0.67 & 0.64 & 0.10 & 0.09 & 0.11 & 2.13 & 3.60 & 3.59 & 16.08 & explicit\tabularnewline
 & 0.70 & 0.67 & 0.64 & 0.10 & 0.09 & 0.11 & 3.99 & 2.78 & 36.46 & 43.01 & implicit\tabularnewline
\hline
\multirow{2}{*}{breast-cancer-wisc-diag} & 0.71 & 0.67 & 0.11 & 0.05 & 0.05 & 2.33 & 1.64 & 3.06 & 51.97 & 1399.89 & explicit\tabularnewline
 & 0.71 & 0.67 & 0.11 & 0.05 & 0.05 & 6.14 & 6.91 & 15.54 & 44.17 & 95.69 & implicit\tabularnewline
\hline
\multirow{2}{*}{breast-cancer-wisc-prog} & 0.69 & 0.65 & 0.53 & 0.45 & 0.37 & 0.53 & 125.74 & 28.90 & 97.62 & 72.58 & explicit\tabularnewline
 & 0.69 & 0.65 & 0.53 & 0.45 & 0.37 & 0.49 & 25.71 & 36.27 & 54.02 & 1589.74 & implicit\tabularnewline
\hline
\multirow{2}{*}{breast-tissue} & 1.78 & 1.78 & 1.78 & 1.77 & 1.75 & 1.18 & 6.23 & 35.95 & 69.66 & 510.77 & explicit\tabularnewline
 & 1.78 & 1.78 & 1.78 & 1.77 & 1.75 & 1.18 & 6.97 & 28.55 & 38.37 & 179.07 & implicit\tabularnewline
\hline
\multirow{2}{*}{congressional-voting} & 0.71 & 0.69 & 0.67 & 0.66 & 0.63 & 0.68 & 13.56 & 69.97 & 139.57 & 503.67 & explicit\tabularnewline
 & 0.71 & 0.69 & 0.67 & 0.66 & 0.63 & 0.68 & 39.21 & 52.27 & 166.97 & 265.89 & implicit\tabularnewline
\hline
\multirow{2}{*}{conn-bench-sonar-mines-rocks} & 0.69 & 0.69 & 0.59 & 0.30 & 0.32 & 75.14 & 93.29 & 204.05 & 363.93 & 1474.36 & explicit\tabularnewline
 & 0.69 & 0.69 & 0.59 & 0.30 & 0.33 & 33.36 & 51.40 & 62.84 & 447.59 & 387.23 & implicit\tabularnewline
\hline
\multirow{2}{*}{conn-bench-vowel-deterding} & 2.41 & 2.41 & 2.40 & 1.33 & 1.02 & 1.00 & 4.54 & 13.65 & 105.52 & 270.46 & explicit\tabularnewline
 & 2.41 & 2.41 & 2.40 & 1.33 & 1.05 & 1.20 & 10.81 & 30.59 & 244.61 & 175.47 & implicit\tabularnewline
\hline
\multirow{2}{*}{credit-approval} & 0.69 & 0.69 & 0.67 & 0.33 & 0.33 & 16.51 & 7.90 & 30.36 & 46.97 & 58.09 & explicit\tabularnewline
 & 0.69 & 0.69 & 0.67 & 0.34 & 0.33 & 1.66 & 8.54 & 42.96 & 31.86 & 35.54 & implicit\tabularnewline
\hline
\multirow{2}{*}{cylinder-bands} & 0.71 & 0.68 & 0.64 & 0.49 & 0.50 & 5.17 & 20.72 & 75.39 & 299.83 & 175.94 & explicit\tabularnewline
 & 0.71 & 0.68 & 0.64 & 0.50 & 0.52 & 31.31 & 71.16 & 57.63 & 68.50 & 386.99 & implicit\tabularnewline
\hline
\multirow{2}{*}{dermatology} & 1.81 & 1.78 & 1.32 & 0.23 & 0.09 & 5.17 & 1.43 & 28.02 & 62.58 & 374.63 & explicit\tabularnewline
 & 1.81 & 1.78 & 1.32 & 0.23 & 0.08 & 4.62 & 17.01 & 26.38 & 53.13 & 170.26 & implicit\tabularnewline
\hline
\multirow{2}{*}{echocardiogram} & 0.67 & 0.65 & 0.59 & 0.25 & 0.19 & 0.19 & 12.02 & 73.34 & 95.15 & 136.16 & explicit\tabularnewline
 & 0.67 & 0.65 & 0.59 & 0.25 & 0.19 & 0.21 & 11.35 & 9.94 & 80.01 & 34.53 & implicit\tabularnewline
\hline
\multirow{2}{*}{ecoli} & 2.11 & 2.06 & 1.79 & 1.19 & 0.96 & 0.97 & 23.58 & 13.47 & 90.18 & 77.99 & explicit\tabularnewline
 & 2.11 & 2.06 & 1.79 & 1.19 & 0.96 & 1.02 & 9.14 & 38.29 & 107.86 & 64.26 & implicit\tabularnewline
\hline
\multirow{2}{*}{energy-y1} & 1.15 & 1.10 & 1.01 & 0.43 & 0.43 & 5.66 & 9.66 & 16.76 & 63.23 & 115.98 & explicit\tabularnewline
 & 1.15 & 1.10 & 1.01 & 0.45 & 0.49 & 4.34 & 1.61 & 17.47 & 44.28 & 59.35 & implicit\tabularnewline
\hline
\multirow{2}{*}{energy-y2} & 1.18 & 1.11 & 1.02 & 0.40 & 0.39 & 0.44 & 1.36 & 5.68 & 14.01 & 38.33 & explicit\tabularnewline
 & 1.18 & 1.11 & 1.02 & 0.41 & 0.47 & 0.94 & 1.09 & 8.01 & 12.52 & 48.55 & implicit\tabularnewline
\hline
\multirow{2}{*}{fertility} & 0.72 & 0.69 & 0.48 & 0.35 & 0.35 & 0.35 & 0.35 & 6.90 & 34.50 & 131.21 & explicit\tabularnewline
 & 0.72 & 0.69 & 0.48 & 0.35 & 0.35 & 0.35 & 0.35 & 62.82 & 11.22 & 17.92 & implicit\tabularnewline
\hline
\multirow{2}{*}{flags} & 2.09 & 2.07 & 1.92 & 1.62 & 1.40 & 3.40 & 57.37 & 38.55 & 70.61 & 328.58 & explicit\tabularnewline
 & 2.09 & 2.07 & 1.92 & 1.62 & 1.40 & 19.65 & 38.18 & 103.61 & 150.33 & 631.23 & implicit\tabularnewline
\hline
\multirow{2}{*}{glass} & 1.76 & 1.74 & 1.60 & 1.50 & 1.50 & 1.57 & 39.15 & 41.59 & 255.73 & 224.50 & explicit\tabularnewline
 & 1.76 & 1.74 & 1.60 & 1.50 & 1.50 & 1.59 & 11.74 & 27.31 & 140.43 & 378.32 & implicit\tabularnewline
\hline
\multirow{2}{*}{haberman-survival} & 0.62 & 0.60 & 0.58 & 0.58 & 0.58 & 0.59 & 2.22 & 9.43 & 59.62 & 151.06 & explicit\tabularnewline
 & 0.62 & 0.60 & 0.58 & 0.57 & 0.58 & 0.59 & 2.16 & 11.38 & 35.12 & 166.79 & implicit\tabularnewline
\hline
\multirow{2}{*}{hayes-roth} & 1.14 & 1.12 & 1.04 & 1.02 & 1.02 & 1.02 & 7.22 & 10.04 & 104.41 & 229.25 & explicit\tabularnewline
 & 1.14 & 1.12 & 1.04 & 1.02 & 1.02 & 1.02 & 8.16 & 26.48 & 31.05 & 127.38 & implicit\tabularnewline
\hline
\multirow{2}{*}{heart-cleveland} & 1.64 & 1.58 & 1.30 & 0.91 & 0.90 & 0.90 & 7.38 & 16.45 & 108.54 & 99.98 & explicit\tabularnewline
 & 1.64 & 1.58 & 1.30 & 0.91 & 0.90 & 0.91 & 12.00 & 18.19 & 143.55 & 81.04 & implicit\tabularnewline
\hline
\multirow{2}{*}{heart-hungarian} & 0.66 & 0.66 & 0.66 & 0.31 & 0.30 & 0.39 & 20.01 & 39.19 & 49.91 & 151.12 & explicit\tabularnewline
 & 0.66 & 0.66 & 0.66 & 0.31 & 0.30 & 0.37 & 18.31 & 5.64 & 211.14 & 81.71 & implicit\tabularnewline
\hline
\multirow{2}{*}{heart-switzerland} & 1.65 & 1.63 & 1.51 & 1.35 & 1.33 & 1.32 & 42.48 & 47.01 & 296.09 & 295.82 & explicit\tabularnewline
 & 1.65 & 1.63 & 1.51 & 1.35 & 1.33 & 1.32 & 32.38 & 135.06 & 444.06 & 270.40 & implicit\tabularnewline
\hline
\multirow{2}{*}{heart-va} & 1.65 & 1.63 & 1.55 & 1.50 & 1.48 & 1.39 & 24.42 & 59.53 & 434.70 & 186.13 & explicit\tabularnewline
 & 1.65 & 1.63 & 1.55 & 1.50 & 1.48 & 1.38 & 48.08 & 62.68 & 171.65 & 209.72 & implicit\tabularnewline
\hline
\multirow{2}{*}{hepatitis} & 0.72 & 0.68 & 0.53 & 0.39 & 0.24 & 0.32 & 21.86 & 29.30 & 37.44 & 118.80 & explicit\tabularnewline
 & 0.72 & 0.68 & 0.53 & 0.39 & 0.24 & 0.35 & 32.01 & 19.00 & 41.54 & 111.05 & implicit\tabularnewline
\hline
\multirow{2}{*}{horse-colic} & 0.70 & 0.68 & 0.65 & 0.34 & 0.34 & 0.32 & 24.44 & 60.72 & 42.39 & 335.00 & explicit\tabularnewline
 & 0.70 & 0.68 & 0.65 & 0.34 & 0.34 & 0.33 & 8.99 & 118.06 & 347.70 & 60.39 & implicit\tabularnewline
\hline
\multirow{2}{*}{ilpd-indian-liver} & 0.70 & 0.66 & 0.60 & 0.59 & 0.55 & 0.53 & 16.40 & 55.36 & 45.47 & 299.05 & explicit\tabularnewline
 & 0.70 & 0.66 & 0.60 & 0.59 & 0.55 & 0.57 & 11.63 & 85.04 & 125.33 & 158.17 & implicit\tabularnewline
\hline
\multirow{2}{*}{ionosphere} & 0.70 & 0.68 & 0.61 & 0.20 & 0.20 & 6.67 & 49.14 & 21.57 & 101.66 & 233.83 & explicit\tabularnewline
 & 0.70 & 0.68 & 0.61 & 0.21 & 0.22 & 34.03 & 9.97 & 64.38 & 768.47 & 572.70 & implicit\tabularnewline
\hline
\multirow{2}{*}{iris} & 1.11 & 1.11 & 1.10 & 1.09 & 1.09 & 0.90 & 7.90 & 14.65 & 14.59 & 257.50 & explicit\tabularnewline
 & 1.11 & 1.11 & 1.10 & 1.09 & 1.09 & 0.77 & 9.25 & 26.47 & 147.58 & 175.54 & implicit\tabularnewline
\hline
\multirow{2}{*}{lenses} & 1.09 & 1.07 & 0.98 & 0.91 & 0.91 & 0.91 & 10.79 & 13.42 & 38.90 & 30.30 & explicit\tabularnewline
 & 1.09 & 1.07 & 0.98 & 0.91 & 0.91 & 0.91 & 9.38 & 16.82 & 39.64 & 214.31 & implicit\tabularnewline
\hline
\multirow{2}{*}{libras} & 2.72 & 2.68 & 2.09 & 0.93 & 0.72 & 26.84 & 19.45 & 79.46 & 196.99 & 349.34 & explicit\tabularnewline
 & 2.72 & 2.68 & 2.09 & 0.94 & 0.76 & 28.88 & 85.69 & 101.16 & 474.27 & 1377.74 & implicit\tabularnewline
\bottomrule
\end{tabular}
\end{table}

\begin{table}[h]
\scriptsize
\begin{tabular}{cccccccccccc}
\multirow{2}{*}{\textbf{Dataset} (small)} & \multicolumn{10}{c}{\textbf{Learning rate}} & \multirow{2}{*}{\textbf{SGD type}}\tabularnewline
 & 0.001 & 0.01 & 0.1 & 0.5 & 1.0 & 3.0 & 5.0 & 10.0 & 30.0 & 50.0 & \tabularnewline
\midrule
\multirow{2}{*}{low-res-spect} & 2.19 & 1.78 & 0.43 & 0.25 & 11.66 & 14.48 & 53.18 & 222.26 & 605.63 & 449.56 & explicit\tabularnewline
 & 2.19 & 1.78 & 0.43 & 0.26 & 16.57 & 46.29 & 109.37 & 82.74 & 94.69 & 259.85 & implicit\tabularnewline
\hline
\multirow{2}{*}{lung-cancer} & 1.11 & 1.10 & 1.04 & 0.27 & 0.03 & 28.79 & 87.95 & 149.18 & 1071.40 & 298.13 & explicit\tabularnewline
 & 1.11 & 1.10 & 1.04 & 0.27 & 0.03 & 95.18 & 87.43 & 284.11 & 543.03 & 359.73 & implicit\tabularnewline
\hline
\multirow{2}{*}{lymphography} & 1.39 & 1.33 & 1.00 & 0.49 & 0.33 & 13.29 & 15.73 & 5.16 & 112.88 & 181.66 & explicit\tabularnewline
 & 1.39 & 1.33 & 1.00 & 0.49 & 0.34 & 6.11 & 14.23 & 10.87 & 34.13 & 261.76 & implicit\tabularnewline
\hline
\multirow{2}{*}{mammographic} & 0.71 & 0.70 & 0.69 & 0.44 & 0.42 & 0.46 & 12.34 & 4.14 & 145.04 & 36.88 & explicit\tabularnewline
 & 0.71 & 0.70 & 0.69 & 0.44 & 0.42 & 0.44 & 10.77 & 9.93 & 116.31 & 244.93 & implicit\tabularnewline
\hline
\multirow{2}{*}{molec-biol-promoter} & 0.69 & 0.69 & 0.65 & 0.11 & 0.01 & 36.10 & 27.78 & 109.47 & 258.44 & 974.85 & explicit\tabularnewline
 & 0.69 & 0.69 & 0.65 & 0.11 & 0.01 & 97.50 & 201.21 & 551.19 & 642.16 & 863.06 & implicit\tabularnewline
\hline
\multirow{2}{*}{monks-1} & 0.69 & 0.69 & 0.69 & 0.69 & 0.62 & 0.71 & 17.77 & 17.11 & 89.50 & 138.78 & explicit\tabularnewline
 & 0.69 & 0.69 & 0.69 & 0.69 & 0.62 & 0.71 & 15.65 & 14.64 & 52.53 & 179.91 & implicit\tabularnewline
\hline
\multirow{2}{*}{monks-2} & 0.68 & 0.66 & 0.65 & 0.65 & 0.65 & 0.65 & 4.11 & 7.96 & 135.92 & 180.88 & explicit\tabularnewline
 & 0.68 & 0.66 & 0.65 & 0.65 & 0.65 & 0.65 & 11.61 & 6.54 & 25.41 & 200.93 & implicit\tabularnewline
\hline
\multirow{2}{*}{monks-3} & 0.70 & 0.69 & 0.69 & 0.68 & 0.51 & 0.76 & 4.16 & 11.69 & 45.17 & 222.38 & explicit\tabularnewline
 & 0.70 & 0.69 & 0.69 & 0.68 & 0.51 & 0.76 & 4.11 & 63.57 & 268.51 & 85.94 & implicit\tabularnewline
\hline
\multirow{2}{*}{musk-1} & 0.69 & 0.63 & 0.32 & 0.59 & 35.26 & 48.17 & 358.12 & 325.26 & 2522.81 & 5693.28 & explicit\tabularnewline
 & 0.69 & 0.63 & 0.32 & 0.27 & 26.13 & 83.83 & 373.71 & 1592.32 & 5190.91 & 8195.80 & implicit\tabularnewline
\hline
\multirow{2}{*}{oocytes-trisopterus-nucleus-2f} & 0.69 & 0.68 & 0.65 & 0.53 & 0.50 & 16.61 & 24.44 & 108.62 & 133.12 & 854.10 & explicit\tabularnewline
 & 0.69 & 0.68 & 0.65 & 0.53 & 0.52 & 22.13 & 109.88 & 27.77 & 631.40 & 859.68 & implicit\tabularnewline
\hline
\multirow{2}{*}{oocytes-trisopterus-states-5b} & 1.00 & 0.89 & 0.41 & 0.24 & 0.21 & 7.33 & 15.53 & 14.13 & 42.37 & 516.20 & explicit\tabularnewline
 & 1.00 & 0.89 & 0.41 & 0.24 & 0.21 & 49.55 & 19.40 & 24.82 & 154.19 & 116.00 & implicit\tabularnewline
\hline
\multirow{2}{*}{parkinsons} & 0.67 & 0.65 & 0.55 & 0.33 & 0.31 & 2.05 & 3.80 & 24.16 & 51.71 & 52.80 & explicit\tabularnewline
 & 0.67 & 0.65 & 0.55 & 0.33 & 0.31 & 20.58 & 14.53 & 90.82 & 37.16 & 468.79 & implicit\tabularnewline
\hline
\multirow{2}{*}{pima} & 0.66 & 0.66 & 0.66 & 0.64 & 0.51 & 0.53 & 8.14 & 27.66 & 112.66 & 79.41 & explicit\tabularnewline
 & 0.66 & 0.66 & 0.66 & 0.64 & 0.51 & 0.52 & 2.47 & 69.41 & 143.07 & 249.05 & implicit\tabularnewline
\hline
\multirow{2}{*}{pittsburg-bridges-MATERIAL} & 1.19 & 1.15 & 0.90 & 0.77 & 0.77 & 0.74 & 5.41 & 33.57 & 193.72 & 360.10 & explicit\tabularnewline
 & 1.19 & 1.15 & 0.90 & 0.77 & 0.77 & 0.74 & 15.25 & 37.79 & 203.28 & 303.36 & implicit\tabularnewline
\hline
\multirow{2}{*}{pittsburg-bridges-REL-L} & 1.19 & 1.17 & 1.06 & 1.00 & 1.00 & 0.97 & 7.93 & 33.08 & 87.76 & 122.83 & explicit\tabularnewline
 & 1.19 & 1.17 & 1.06 & 1.00 & 1.00 & 0.97 & 11.10 & 25.15 & 124.80 & 330.23 & implicit\tabularnewline
\hline
\multirow{2}{*}{pittsburg-bridges-SPAN} & 1.05 & 1.05 & 1.04 & 1.03 & 1.03 & 0.93 & 14.34 & 36.34 & 172.06 & 135.45 & explicit\tabularnewline
 & 1.05 & 1.05 & 1.04 & 1.03 & 1.03 & 0.93 & 2.40 & 45.68 & 166.73 & 279.46 & implicit\tabularnewline
\hline
\multirow{2}{*}{pittsburg-bridges-T-OR-D} & 0.83 & 0.78 & 0.47 & 0.35 & 0.35 & 0.35 & 0.35 & 5.05 & 27.71 & 124.39 & explicit\tabularnewline
 & 0.83 & 0.78 & 0.47 & 0.35 & 0.35 & 0.35 & 0.35 & 21.00 & 38.70 & 23.87 & implicit\tabularnewline
\hline
\multirow{2}{*}{pittsburg-bridges-TYPE} & 2.00 & 1.98 & 1.80 & 1.59 & 1.58 & 1.56 & 2.51 & 77.31 & 186.86 & 290.00 & explicit\tabularnewline
 & 2.00 & 1.98 & 1.80 & 1.59 & 1.58 & 1.56 & 5.37 & 82.37 & 272.11 & 350.51 & implicit\tabularnewline
\hline
\multirow{2}{*}{planning} & 0.62 & 0.61 & 0.58 & 0.58 & 0.58 & 0.59 & 34.64 & 19.23 & 30.94 & 453.85 & explicit\tabularnewline
 & 0.62 & 0.61 & 0.58 & 0.58 & 0.58 & 0.59 & 19.01 & 15.86 & 166.14 & 261.24 & implicit\tabularnewline
\hline
\multirow{2}{*}{post-operative} & 1.21 & 1.17 & 0.94 & 0.71 & 0.69 & 0.68 & 25.50 & 73.03 & 60.28 & 83.88 & explicit\tabularnewline
 & 1.21 & 1.17 & 0.94 & 0.71 & 0.69 & 0.68 & 25.03 & 50.13 & 90.23 & 71.85 & implicit\tabularnewline
\hline
\multirow{2}{*}{primary-tumor} & 2.70 & 2.68 & 2.50 & 2.15 & 1.83 & 13.21 & 17.81 & 41.57 & 99.73 & 94.32 & explicit\tabularnewline
 & 2.70 & 2.68 & 2.50 & 2.15 & 1.83 & 10.32 & 15.55 & 35.04 & 70.16 & 326.01 & implicit\tabularnewline
\hline
\multirow{2}{*}{seeds} & 1.11 & 1.10 & 1.09 & 0.62 & 0.47 & 0.68 & 8.84 & 13.44 & 28.41 & 32.21 & explicit\tabularnewline
 & 1.11 & 1.10 & 1.09 & 0.62 & 0.47 & 7.20 & 5.44 & 28.78 & 87.84 & 85.50 & implicit\tabularnewline
\hline
\multirow{2}{*}{soybean} & 2.90 & 2.86 & 2.43 & 0.65 & 0.21 & 2.12 & 5.69 & 33.16 & 151.98 & 119.04 & explicit\tabularnewline
 & 2.90 & 2.86 & 2.43 & 0.66 & 0.22 & 9.36 & 20.48 & 32.44 & 161.14 & 288.93 & implicit\tabularnewline
\hline
\multirow{2}{*}{spect} & 0.70 & 0.69 & 0.67 & 0.44 & 0.45 & 0.57 & 54.69 & 105.29 & 263.16 & 159.21 & explicit\tabularnewline
 & 0.70 & 0.69 & 0.67 & 0.44 & 0.45 & 0.47 & 18.18 & 159.16 & 21.86 & 567.59 & implicit\tabularnewline
\hline
\multirow{2}{*}{spectf} & 0.69 & 0.64 & 0.49 & 0.38 & 0.42 & 20.77 & 6.86 & 28.35 & 622.85 & 585.05 & explicit\tabularnewline
 & 0.69 & 0.64 & 0.49 & 0.37 & 0.51 & 80.29 & 132.31 & 27.83 & 130.49 & 1229.86 & implicit\tabularnewline
\hline
\multirow{2}{*}{statlog-australian-credit} & 0.68 & 0.66 & 0.64 & 0.64 & 0.64 & 0.64 & 19.39 & 35.72 & 31.33 & 78.80 & explicit\tabularnewline
 & 0.68 & 0.66 & 0.64 & 0.64 & 0.64 & 0.64 & 73.71 & 98.74 & 228.37 & 448.01 & implicit\tabularnewline
\hline
\multirow{2}{*}{statlog-heart} & 0.69 & 0.69 & 0.69 & 0.32 & 0.33 & 6.74 & 4.19 & 9.82 & 143.11 & 37.86 & explicit\tabularnewline
 & 0.69 & 0.69 & 0.69 & 0.32 & 0.33 & 15.30 & 0.98 & 28.84 & 157.01 & 541.23 & implicit\tabularnewline
\hline
\multirow{2}{*}{statlog-vehicle} & 1.39 & 1.39 & 1.31 & 0.90 & 0.78 & 16.15 & 9.25 & 42.65 & 163.61 & 403.98 & explicit\tabularnewline
 & 1.39 & 1.39 & 1.31 & 0.89 & 0.70 & 32.42 & 20.88 & 93.00 & 74.45 & 656.19 & implicit\tabularnewline
\hline
\multirow{2}{*}{synthetic-control} & 1.80 & 1.72 & 0.70 & 0.06 & 5.96 & 24.28 & 43.88 & 113.24 & 69.17 & 482.32 & explicit\tabularnewline
 & 1.80 & 1.72 & 0.70 & 0.06 & 9.76 & 24.30 & 50.20 & 110.53 & 164.13 & 692.48 & implicit\tabularnewline
\hline
\multirow{2}{*}{teaching} & 1.11 & 1.11 & 1.09 & 1.09 & 1.09 & 1.11 & 16.23 & 47.19 & 43.59 & 92.10 & explicit\tabularnewline
 & 1.11 & 1.11 & 1.09 & 1.09 & 1.09 & 1.11 & 12.49 & 21.53 & 266.23 & 199.40 & implicit\tabularnewline
\hline
\multirow{2}{*}{tic-tac-toe} & 0.68 & 0.66 & 0.64 & 0.52 & 0.10 & 2.05 & 4.13 & 45.16 & 156.90 & 81.64 & explicit\tabularnewline
 & 0.68 & 0.66 & 0.64 & 0.52 & 0.09 & 10.22 & 40.48 & 53.51 & 47.37 & 200.14 & implicit\tabularnewline
\hline
\multirow{2}{*}{trains} & 0.70 & 0.70 & 0.67 & 0.04 & 0.01 & 0.00 & 16.94 & 8.70 & 153.88 & 503.41 & explicit\tabularnewline
 & 0.70 & 0.70 & 0.67 & 0.04 & 0.01 & 0.00 & 55.28 & 83.36 & 0.00 & 0.00 & implicit\tabularnewline
\hline
\multirow{2}{*}{vertebral-column-2clases} & 0.68 & 0.66 & 0.63 & 0.58 & 0.37 & 6.00 & 13.78 & 9.92 & 23.02 & 189.26 & explicit\tabularnewline
 & 0.68 & 0.66 & 0.63 & 0.58 & 0.37 & 0.39 & 3.75 & 13.45 & 106.58 & 52.45 & implicit\tabularnewline
\hline
\multirow{2}{*}{vertebral-column-3clases} & 1.10 & 1.09 & 1.05 & 0.77 & 0.54 & 7.37 & 17.75 & 12.02 & 44.07 & 81.04 & explicit\tabularnewline
 & 1.10 & 1.09 & 1.05 & 0.77 & 0.53 & 2.55 & 9.28 & 7.79 & 42.32 & 40.59 & implicit\tabularnewline
\hline
\multirow{2}{*}{wine} & 1.12 & 1.12 & 1.09 & 0.53 & 0.46 & 5.43 & 3.44 & 9.22 & 43.31 & 62.74 & explicit\tabularnewline
 & 1.12 & 1.12 & 1.09 & 0.53 & 0.47 & 5.78 & 37.86 & 96.27 & 64.82 & 88.60 & implicit\tabularnewline
\hline
\multirow{2}{*}{zoo} & 1.92 & 1.91 & 1.83 & 1.57 & 1.01 & 6.70 & 6.23 & 33.11 & 172.96 & 380.68 & explicit\tabularnewline
 & 1.92 & 1.91 & 1.83 & 1.57 & 1.01 & 6.58 & 22.16 & 56.49 & 198.40 & 189.51 & implicit\tabularnewline
\bottomrule
\end{tabular}
\end{table}

\end{document}